\documentclass{article}

\usepackage{microtype}
\usepackage{graphicx}
\usepackage{subfigure}
\usepackage{booktabs} %

\usepackage{hyperref}

\usepackage[accepted]{icml2023}

\usepackage{amsmath}
\usepackage{amssymb}
\usepackage{mathtools}
\usepackage{amsthm}

\usepackage{nicefrac}       %

\usepackage[capitalize,noabbrev]{cleveref}

\theoremstyle{plain}
\newtheorem{theorem}{Theorem}[section]

\newtheorem{lemma}[theorem]{Lemma}
\newtheorem{corollary}[theorem]{Corollary}
\theoremstyle{definition}

\newtheorem{assumption}[theorem]{Assumption}
\theoremstyle{remark}
\newtheorem{remark}[theorem]{Remark}

\usepackage[textsize=tiny]{todonotes}

\definecolor{darkgreen}{HTML}{02862a}

\usepackage{preamble}

\usepackage{xspace}
\newcommand{\algname}[1]{{\sf \footnotesize #1}\xspace}

\usepackage[shortlabels]{enumitem} 

\usepackage{hyperref}
\usepackage{url}

\icmltitlerunning{Reinforcement Learning with General Utilities: Simpler Variance Reduction and Large State-Action Space}

\begin{document}

\twocolumn[
\icmltitle{Reinforcement Learning with General Utilities:\\
Simpler Variance Reduction and Large State-Action Space}

\icmlsetsymbol{equal}{*}

\begin{icmlauthorlist}
\icmlauthor{Anas Barakat}{ETH_CS}
\icmlauthor{Ilyas Fatkhullin}{ETH_CS}
\icmlauthor{Niao He}{ETH_CS}
\end{icmlauthorlist}

\icmlaffiliation{ETH_CS}{Department of Computer Science, ETH Zurich, Switzerland}
\icmlcorrespondingauthor{A.B.}{anas.barakat@inf.ethz.ch}

\icmlkeywords{Reinforcement Learning with general utility, variance reduction}

\vskip 0.3in
]

\printAffiliationsAndNotice{}  %

\begin{abstract}
We consider the reinforcement learning (RL) problem with general utilities which consists in maximizing a function of the state-action occupancy measure. 
Beyond the standard cumulative reward RL setting,  this problem includes as particular cases constrained~RL, pure exploration and learning from demonstrations among others. For this problem, we propose a simpler single-loop parameter-free normalized policy gradient algorithm. Implementing a recursive momentum variance reduction mechanism, our algorithm achieves~$\tilde{\mathcal{O}}(\epsilon^{-3})$ and~$\tilde{\mathcal{O}}(\epsilon^{-2})$ sample complexities for~$\epsilon$-first-order stationarity and~$\epsilon$-global optimality respectively, under adequate assumptions. We further address the setting of large finite state action spaces via linear function approximation of the occupancy measure and show a~$\tilde{\mathcal{O}}(\epsilon^{-4})$  
sample complexity 
for a simple policy gradient method with a linear regression subroutine.
\end{abstract}

\section{Introduction}

While the classical Reinforcement Learning (RL) problem consists in learning a policy maximizing the expected cumulative sum of rewards through interaction with an environment,
several other problems of practical interest are concerned with objectives involving more general utilities.  
Examples of such problems 
include pure exploration in RL via maximizing the entropy of the state visitation distribution (see for e.g., \citet{hazan-et-al19,mutti-desanti-et-al22}), imitation learning via minimizing an $f$-divergence between state-action occupancy measures of an agent and an expert \citep{ghasemipour-et-al20}, risk-sensitive~\citep{zhang-et-al21cautious} or risk-averse RL maximizing, for instance, the Conditional Value-at-Risk \citep{garcia-fernandez15}, constrained RL (see for e.g.,  \citet{altman99cmdps,borkar05,bhatnagar-lakshmanan12,miryoosefi-et-al19rl-with-convex-constraints,efroni-mannor-pirotta20}), experiment design~\citep{mutny-et-al23} and diverse skill discovery \citep{eysenbach-et-al19} among others. 
We refer the interested reader to Table~1 in~\citet{zahavy-et-al21,mutti-et-al22}, \citet{zhang-et-al20variational} and references therein for further examples and a more comprehensive description of such problems. 

Recently, \citet{zhang-et-al20variational,zhang-et-al21} proposed a unified formulation encapsulating all the aforementioned problems as a maximization of a functional (which may not be concave) over the set of state-action occupancy measures. Interestingly, this formulation generalizes standard RL which corresponds to maximizing a linear functional of the state-action occupancy measure \citep{puterman14}. Subsuming the standard RL problem, the case where the objective functional is convex (concave for maximization) in the occupancy measure is known as Convex RL~\citep{zhang-et-al20variational,zahavy-et-al21,geist-et-al22,mutti-et-al22}. 

Unlike the standard RL problem which enjoys a nice additive structure, the more general nonlinear functional alters the additive structure of the problem, invalidates the classical Bellman equations as a consequence and hence hinders the standard use of the dynamical programming machinery (see for e.g., \citet{bertsekas19book,sutton-barto18}). While value-based methods are not meaningful anymore in this general (nonlinear) utilities setting, \citet{zhang-et-al20variational,zhang-et-al21} proposed a direct policy search method to solve the RL problem with general utilities. This class of methods directly updates a parametrized  policy along the gradient direction of the objective function. More precisely, \citet{zhang-et-al21} propose a double-loop Policy Gradient (PG) method called \algname{TSIVR-PG} implementing a variance reduction mechanism requiring two large batches and checkpoints. Similarly to existing variance-reduced PG methods in the standard RL setting, the algorithm makes use of importance sampling (IS) weights to account for the distribution shift inherent to the RL setting. 
Interestingly, while most existing variance-reduced PG methods make an unrealistic and unverifiable assumption which guarantees that the IS weights variance is bounded at each iteration of the algorithm, \citet{zhang-et-al21} alleviate this issue by using a gradient truncation mechanism. Such a strategy consisting in performing a truncated gradient step can be formulated as solving a trust-region subproblem at each iteration, which is reminiscent of trust-region based algorithms such as TRPO~\citep{schulman-et-al15trpo} and PPO~\citep{schulman-et-al17ppo}. In particular, implementing TSIVR-PG requires tuning a gradient truncation radius depending on problem parameters while also choosing adequate large batches. Besides these algorithmic considerations, a major limitation of recent prior work \citep{zhang-et-al21,zhang-et-al20variational,kumar-et-al22pg-general} 
is the need to estimate the unknown occupancy measure at each state-action pair. In several problems of practical scale, the number of states and/or actions is prohibitively large and renders tabular methods intractable. For instance, the size of a state space grows exponentially with the number of state variables. This is commonly known as the curse of dimensionality. 

In this paper, we consider the RL problem with general utilities. 
Our contributions are as follows: 
\begin{itemize}[leftmargin=4mm,topsep=0pt]
\item We propose a novel single-loop normalized PG algorithm called~\algname{N-VR-PG} using only a single trajectory per iteration. In particular, our %
algorithm does not require 
the knowledge of problem specific parameters, 
large batches nor checkpoints unlike TSIVR-PG in \citet{zhang-et-al21}. Instead of gradient truncation, we propose to use a normalized update rule for which no additional gradient truncation hyperparameter is needed. At the heart of our algorithm design is a recursive double variance reduction mechanism 
implemented with momentum
for both the stochastic policy gradient and the occupancy measure estimator (in the tabular setting), akin to STORM~\citep{cutkosky-orabona19} in stochastic optimization. 

\item We show that using a normalized gradient update guarantees %
bounded IS weights for the softmax parametrization. Unlike in most prior works focusing on the particular case of the standard RL setting, variance of IS weights is automatically bounded 
and no further assumption is needed. We further demonstrate that 
IS weights can also be similarly controlled  
when using a gaussian policy for continuous state-action spaces under mild assumptions. 

\item In the general utilities setting with finite state-action spaces and softmax policy, we show that our algorithm requires~$\tilde{\mathcal{O}}(\varepsilon^{-3})$ samples to reach an~$\varepsilon$-stationary point of the objective function and~$\tilde{\mathcal{O}}(\varepsilon^{-2})$ samples to reach an~$\varepsilon$-globally optimal policy by exploiting the hidden concavity of the problem  when the utility function is concave and the policy is overparametrized. 
In the standard RL setting, %
we further show that such sample complexity results also hold for continuous state-action spaces when using the gaussian policy under adequate assumptions. 

\item Beyond the tabular setting, we consider the case of large finite state and action spaces which has not been previously addressed in this general setting to the best of our knowledge. We consider approximating the unknown state-action occupancy measure itself by a linear combination of pre-selected basis functions via a least-mean-squares solver. This linear function approximation procedure combined with a stochastic policy gradient method results in an algorithm for solving the RL problem with general nonlinear utilities for large state and action spaces. Specifically, we show that our PG method requires~$\tilde{\mathcal{O}}(\varepsilon^{-4})$ samples to guarantee an~$\varepsilon$-first-order stationary point of the objective function up to an error floor due to function approximation.   
\end{itemize}

\noindent\textbf{Related works.}
We briefly discuss standard RL before closely related works for RL with general utility.

\noindent\textbf{Variance-reduced PG for standard RL.}
In the last few years, there has been a vast array of work around %
variance-reduced PG methods for solving the standard RL problem with a cumulative sum of rewards to reduce the high variance of the stochastic policy gradients (see for e.g., \citet{papini-et-al18,xu-et-al20iclr,pham-et-al20,gargiani-et-al22}). \citet{yuan-et-al20,huang-et-al20} proposed momentum-based policy gradient methods.
All the aforementioned works use IS and make an unverifiable assumption stipulating that the IS weights variance is bounded. To relax this unrealistic assumption, \citet{zhang-et-al21} provide a gradient truncation mechanism complementing IS for the specific case of the softmax parameterization whereas \citet{shen-et-al19,salehkaleybar-et-al22} incorporate second-order information for which IS is not needed. 
Even in the special case of standard cumulative reward, our algorithm differs from prior work in that it combines the following features: it is single-loop, runs with a single trajectory per iteration and uses a normalized update rule to control the IS weights without further assumption. In particular, our algorithm does not make use of second order information and thus our analysis does not require second-order smoothness conditions. 
Typically, %
variance-reduced PG methods guarantee a~$\tilde{\mathcal{O}}(\varepsilon^{-3})$ sample complexity to reach a first-order stationary policy, improving over its~$\tilde{\mathcal{O}}(\varepsilon^{-4})$ counterpart for vanilla~PG. Subsequently to the recent work of~\citet{agarwal-et-al21} which provided global optimality guarantees for PG methods despite the non-concavity of the problem, several works \citep{liu-et-al20,zhang-et-al21,ding-et-al21,ding-et-al22,Vanilla_PL_Yuan_21,Masiha_SCRN_KL,yuan-et-al22log-linear} established global optimality guarantees for stochastic PG methods with or without variance reduction under policy parametrization. The best known sample complexity to reach an~$\epsilon$-globally optimal policy is~$\tilde{\mathcal{O}}(\epsilon^{-2})$ and was achieved via policy mirror descent without parametrization \citep{lan22,xiao22}, with log-linear policies recently \citep{yuan-et-al22log-linear} and via variance-reduced PG for softmax parametrization by exploiting hidden convexity \citep{zhang-et-al21}. 
Very recently, \citet{Fatkhullin_SPG_FND_2023} obtained a~$\tilde{\cO}(\epsilon^{-2})$ sample complexity for Fisher-non-degenerate parametrized policies.%
  
\noindent\textbf{RL with General Utility.}
There is a huge literature %
addressing control problems with nonstandard utilities %
that we cannot hope to give justice to. Let us mention though some early examples %
in Operations Research such as inventory problems with constraints on the probability of shortage~\citep{derman-klein65} and variance-penalized MDPs~\citep{filar-et-al89variance-pen,kallenberg94survey} where the problem is formulated as a nonlinear program in the space of state-action frequencies. In the rest of this section, we briefly discuss the most relevant research to the present paper. %
\citet{zhang-et-al20variational} study the policy optimization problem where the objective function is a concave function of the state-action occupancy measure to include several known problems such as constrained MDPs, exploration and learning from demonstrations. To solve this problem for which dynamic programming cannot be employed, \citet{zhang-et-al20variational} investigate policy search methods and first define a variational policy gradient for RL with general utilities as the solution to a stochastic saddle point problem. Exploiting the hidden convexity structure of the problem, they further show global optimality guarantees when having access to exact policy gradients. However, the procedure to estimate even a single policy gradient via the proposed primal-dual stochastic approximation method from sample paths turns out to be complex. Leveraging the formulation of the RL problem as a stochastic composite optimization problem, \citet{zhang-et-al21} later proposed a (variance-reduced) stochastic PG approach for solving general utility RL ensuring a~$\tilde{\mathcal{O}}(\epsilon^{-3})$ sample complexity to find an~$\epsilon$-stationary policy under smoothness of the utility function and the policy parametrization and a~$\tilde{\mathcal{O}}(\epsilon^{-2})$ global optimality sample complexity for a concave utility with an overparametrized policy. When the utility is concave as a function of the occupancy measure, the corresponding RL problem is known as Convex RL or Convex MDPs.  
Using Fenchel duality, \citet{zahavy-et-al21} casted the convex MDP problem as a min-max game between a policy player and a cost player producing rewards that the policy player must maximize. An insightful consequence of this viewpoint is that any algorithm solving the standard RL problem 
can be used for
solving the more general convex MDP problem. 
In the present paper, we adopt the direct policy search approach with policy parametrization proposed in~\citet{zhang-et-al21} instead of the dual viewpoint.
\citet{geist-et-al22} show that Convex RL is a subclass of Mean-Field games. \citet{zhang_bedi_wang_koppel22} consider a decentralized version of the problem with general utilities with a network of agents. %

\vspace{-2mm}
\section{Preliminaries}
\noindent\textbf{Notations.} For a given finite set~$\mathcal{X}$, we use the notation~$|\mathcal{X}|$ for its cardinality  and~$\Delta(\mathcal{X})$ for the space of probability distributions over~$\mathcal{X}$\,. We equip any Euclidean space with its standard inner product denoted by~$\ps{\cdot , \cdot}$\,. The notation~$\|\cdot\|$ refers to both the standard $2$-norm and the spectral norm for vectors and matrices respectively. 

\noindent\textbf{Markov Decision Process with General Utility.}
Consider a discrete-time discounted Markov Decision Process (MDP) with a general utility function~$\mathbb{M}(\mathcal{S}, \mathcal{A}, \mathcal{P}, F, \rho, \gamma)$, where $\mathcal{S}$~and $\mathcal{A}$~are finite state and action spaces respectively, $\mathcal{P}: \mathcal{S}\times\mathcal{A} \to \Delta(\mathcal{S})$ is the state transition probability kernel, $F: \mathcal{M}(\mathcal{S} \times \mathcal{A}) \to \R$ is a general utility function defined over the space of measures~$\mathcal{M}(\mathcal{S} \times \mathcal{A})$ on the %
product space~$\mathcal{S} \times \mathcal{A}$, 
$\rho$~is the initial state distribution and $\gamma \in (0,1)$ is the discount factor. %
A stationary policy~$\pi: \mathcal{S} \to \Delta(\mathcal{A})$ maps each state~$s \in \mathcal{S}$ to a distribution~$\pi(\cdot|s)$ over the action space $\mathcal{A}$. The set of all stationary policies is denoted by~$\Pi$\,.
At each time step~$t \in \mathbb{N}$ in a state~$s_t \in \mathcal{S}$, the RL agent chooses an action~$a_t \in \mathcal{A}$ with probability~$\pi(a_t|s_t)$ and the environment transitions to a state~$s_{t+1}$ with probability~$\mathcal{P}(s_{t+1}|s_t, a_t)\,.$ 
We denote by~$\bb P_{\rho,\pi}$ the probability distribution of the Markov chain~$(s_t,a_t)_{t \in \mathbb{N}}$ induced by the policy~$\pi$ with initial state distribution~$\rho$. 
We use the notation~$\bb E_{\rho,\pi}$ (or often simply $\bb E$ instead) for the associated expectation. 
We define for any policy~$\pi \in \Pi$ the state-action occupancy measure~$\lambda^{\pi} \in \mathcal{M}(\mathcal{S} \times \mathcal{A})$ as: 
\begin{equation}
\label{eq:s-a-occup-measure}
\lambda^{\pi}(s,a) \eqdef \sum_{t=0}^{+\infty} \gamma^t \bb P_{\rho,\pi}(s_t = s, a_t = a)\,. %
\end{equation}
We denote by~$\Lambda$ the set of such %
occupancy measures, i.e., $\Lambda \eqdef \{ \lambda^{\pi} : \pi \in \Pi\}\,.$ 
Then, the general utility function~$F$ assigns a real to each occupancy measure~$\lambda^{\pi}$ induced by a policy~$\pi \in \Pi$\,. 
A state-action occupancy measure~$\lambda^{\pi}$ will also be seen as a vector of the Euclidean space~$\R^{|\mathcal{S}| \times |\mathcal{A}|}\,.$%

\noindent\textbf{Policy parametrization.} 
In this paper, we will consider 
the common softmax policy parametrization defined for every~$\theta \in \R^d, s \in \mathcal{S}, a \in \mathcal{A}$ by: 
\begin{equation}
\label{eq:softmax-param}
\pi_{\theta}(a|s) = \frac{\exp(\psi(s,a;\theta))}{\sum_{a' \in \mathcal{A}} \exp(\psi(s,a';\theta))}\,, 
\end{equation}
where~$\psi: \mathcal{S} \times \mathcal{A} \times \R^d \to \R$ is a smooth function. %
The softmax parametrization will be important for controlling IS weights for variance reduction.
However, some of our results will not require this specific parameterization and we will explicitly indicate it 
when appropriate. 

\noindent\textbf{Problem formulation.} 
The goal of the RL agent is to find a policy $\pi_\theta$ (determined by the vector~$\theta$) solving the problem: 
\begin{equation}
\label{eq:pb-gen-ut}
\max_{\theta \in \R^d} F(\lambda^{\pi_{\theta}})\,,
\end{equation}
where~$F$ is a smooth function supposed to be upper bounded and~$F^{\star}$ is used in the remainder of this paper to denote the maximum in~\eqref{eq:pb-gen-ut}. 
The agent has only access to (a) trajectories of finite length~$H$ generated from the MDP under the initial distribution~$\rho$ and the policy~$\pi_{\theta}$ and (b) the gradient of the utility function~$F$ with respect to (w.r.t.) its variable~$\lambda$. %
In particular, provided a time horizon~$H$ and a policy~$\pi_{\theta}$ 
with~$\theta \in \R^d$,
the learning agent can simulate a
trajectory~$\tau = (s_0, a_0, \cdots, s_{H-1}, a_{H-1})$ from the MDP whereas the
state transition kernel~$\mathcal{P}$ is unknown. 
This general utility problem was described, for instance, in \citet{zhang-et-al21} (see also~\citet{kumar-et-al22pg-general}). 
Recall that the standard RL problem corresponds to the particular case where the general utility function is a linear function, i.e., $F(\lambda^{\pi_{\theta}}) = \ps{r,\lambda^{\pi_{\theta}}}$ for some vector~$r \in \R^{\mathcal{S} \times \mathcal{A}}$ in which case we recover the expected return function as an objective: 
\begin{equation}
    \label{eq:J}
 V^{\pi_{\theta}}(r) \eqdef \mathbb{E}_{\rho, \pi_{\theta}} \left[\sum_{t=0}^{+\infty} \gamma^t r(s_t,a_t)\right]\,.
\end{equation}
In the standard RL case, we shall use the notation~$J(\theta) \eqdef V^{\pi_{\theta}}(r)\,$ %
where~$r$ is the corresponding reward function.

\noindent\textbf{Policy Gradient for General Utilities.} 
Following the exposition in \cite{zhang-et-al21} (see also more recently~\cite{kumar-et-al22pg-general}), we derive the policy gradient for the general utility objective. For convenience, we use the notation~$\lambda(\theta)$ for~$\lambda^{\pi_{\theta}}$\,.
Since the cumulative reward can be rewritten more compactly~$V^{\pi_{\theta}}(r) = \ps{\lambda^{\pi_{\theta}}, r}$, 
it follows from the policy gradient theorem that: 
\begin{multline}
\label{eq:expected-reinforce}
[\nabla_{\theta} \lambda(\theta)]^{T} r
= \nabla_{\theta} V^{\pi_{\theta}}(r)\\  %
= \mathbb{E}_{\rho, \pi_{\theta}}\left[\sum_{t=0}^{+\infty} \gamma^t r(s_t,a_t) \sum_{t'=0}^{t} \nabla \log \pi_{\theta}(a_{t'}|s_{t'})\right]\,,
\end{multline}
where~$\nabla_{\theta} \lambda(\theta)$ is the Jacobian matrix of the vector mapping~$\lambda(\theta)$\,.
Using the chain rule, we have
\begin{align}
\label{eq:policy-grad-general-utility}
\nabla_{\theta} F(\lambda(\theta)) 
&= [\nabla_{\theta} \lambda(\theta)]^{T} \nabla_{\lambda}F(\lambda(\theta))\nonumber\\
&= \nabla_{\theta} V^{\pi_{\theta}}(r)|_{r = \nabla_{\lambda}F(\lambda(\theta))}\,.
\end{align}

\noindent\textbf{Stochastic Policy Gradient.} 
In light of~\eqref{eq:policy-grad-general-utility}, in order to estimate the policy gradient~$\nabla_{\theta} F(\lambda(\theta))$ for general utilities, we can use the standard reinforce estimator suggested by Eq.~\eqref{eq:expected-reinforce} but we also need to estimate the state-action occupancy measure~$\lambda(\theta)$ (when~$F$ is nonlinear)\footnote{In the cumulative reward setting, notice that the general utility function~$F$ is linear and~$\nabla_{\lambda} F(\lambda(\theta))$ is independent of~$\lambda(\theta)$\,.}. 
Define for every reward function~$r$ (which is also seen as a vector in~$\R^{|\mathcal{S}| \times |\mathcal{A}|}$), every~$\theta \in \R^d$ and every $H$-length trajectory~$\tau$ simulated from the MDP with policy~$\pi_{\theta}$ and initial distribution~$\rho$  the (truncated) policy gradient estimate: 
\begin{equation}
\label{eq:pg-estimate}
g(\tau, \theta,r) = \sum_{t=0}^{H-1} \left( \sum_{h=t}^{H-1} \gamma^h r(s_h,a_h) \right) \nabla \log \pi_\theta(a_t|s_t)\,.
\end{equation}
 We also define an estimator for the state-action occupancy measure~$\lambda^{\pi_{\theta}} = \lambda(\theta)$ (see~\eqref{eq:s-a-occup-measure}) truncated at the horizon~$H$~by: 
 \begin{equation}
    \label{eq:lambda-tau}
        \lambda(\tau) = \sum_{h=0}^{H-1} \gamma^h 
        \delta_{s_h,a_h}\,,
 \end{equation}
 where for every~$(s,a) \in \mathcal{S} \times \mathcal{A}$, $\delta_{s,a} \in \R^{|\mathcal{S}| \times |\mathcal{A}|}$ is a vector of the canonical basis of~$\R^{|\mathcal{S}| \times |\mathcal{A}|}$, i.e.,  the vector whose only non-zero entry is the~$(s,a)$-th entry which is equal to~$1$.%

\textbf{Importance Sampling.} Given a trajectory~$\tau = (s_0, a_0, s_1, a_1, \cdots, s_{H-1}, a_{H-1})$ of length~$H$ generated under the initial distribution~$\rho$ and the policy~$\pi_{\theta}$ for some~$\theta \in \R^d$, we define for every~$\theta' \in \R^d$ the IS weight: 
\begin{equation}
\label{eq:IS-weights}
w(\tau|\theta', \theta) %
\eqdef \prod_{h=0}^{H-1}\frac{\pi_{\theta'}(a_h|s_h)}{\pi_{\theta}(a_h|s_h)}\,.
\end{equation}
Since the problem is nonstationary in the sense that updating the parameter~$\theta$ shifts the distribution over trajectories, it follows that for any~$r \in \R^{|\mathcal{S}| \times |\mathcal{A}|}, \mathbb{E}_{\rho, \pi_{\theta}}[g(\tau,\theta,r) - g(\tau,\theta',r)] \neq \nabla_{\theta} V^{\pi_{\theta}}(r) - \nabla_{\theta} V^{\pi_{\theta'}}(r)\,.$ Using the IS weights, we correct this bias to obtain
\begin{multline*}
\mathbb{E}_{\rho, \pi_{\theta}}[g(\tau,\theta,r) - w(\tau|\theta',\theta)g(\tau,\theta',r)] \\
= \nabla_{\theta} V^{\pi_{\theta}}(r) - \nabla_{\theta} V^{\pi_{\theta'}}(r)\,.
\end{multline*}
The use of IS weights is standard in variance-reduced PG.%

\section{Normalized Variance-Reduced Policy Gradient Algorithm}
\label{sec:norm-vr-pg}

In this section, we present our~\algname{N-VR-PG} algorithm (see Algorithm~\ref{algo-gen-ut}) to solve the RL problem with general utilities. This algorithm has two main distinctive features compared to vanilla PG and existing algorithms~\citep{zhang-et-al21}: 
(i) recursive variance reduction: instead of using the stochastic PG and occupancy measure estimators respectively reported in~\eqref{eq:pg-estimate} and~\eqref{eq:lambda-tau}, we use recursive variance-reduced estimators for both the PG and the state-action occupancy measure akin to STORM in stochastic optimization~\citep{cutkosky-orabona19}. 
This leads to a simple single-loop algorithm using a single trajectory per iteration and for which no checkpoints nor any second order information are needed; 
(ii) normalized PG update rule: normalization will be crucial to control the IS weights used in the estimators. We elaborate more on the motivation for using it in Section~\ref{subsec:normalization-boundedness-is}. 
\begin{algorithm}[h]
   \caption{\algname{N-VR-PG}(General Utilities)}
   \label{algo-gen-ut}
\begin{algorithmic}
   \STATE {\bfseries Input:} $\theta_0$, $T$, $H$, $\{\eta_t\}_{t\geq 0}$, $\{\al_t\}_{t\geq 0}\,.$
   \STATE Sample $\tau_0$ of length~$H$ from~$\mathbb{M}$ and~$\pi_{\theta_0}$
   \STATE $\lambda_0 = \lambda(\tau_0,\theta_0); r_0 = \nabla_{\lambda} F(\lambda_0); r_{-1} = r_0$ 
   \STATE $d_0 = g(\tau_0, \theta_0, r_0)$
   \STATE $\theta_1 = \theta_0 + \al_0 \frac{d_0}{\|d_0\|}$
   \FOR{$t=1, \ldots, T -  1$}
        \STATE Sample $\tau_t$ of length~$H$ from MDP~$\mathbb{M}$ and~$\pi_{\theta_t}$
        \STATE $u_t = \lambda(\tau_t) (1 - w(\tau_t|\theta_{t-1},\theta_t))$
        \STATE $\lambda_t = \eta_t \lambda(\tau_t) + (1-\eta_t) (\lambda_{t-1} + u_t)$
        \STATE $r_t = \nabla_{\lambda} F(\lambda_t)$
        \STATE $v_t = g(\tau_t, \theta_t, r_{t-1}) - w(\tau_t|\theta_{t-1},\theta_t)g(\tau_t, \theta_{t-1},r_{t-2})$
        \STATE $d_t = \eta_t g(\tau_t, \theta_t,r_{t-1}) + (1 - \eta_t) (d_{t-1} +  v_t)$
        \STATE $\theta_{t+1} = \theta_t + \al_t \frac{d_t}{\|d_t\|}$
    \ENDFOR
\end{algorithmic}
\end{algorithm}
\begin{remark}
In Algorithm~\ref{algo-gen-ut}, note that~$g(\tau_t, \theta_t, r_{t-1})$ and~$g(\tau_t, \theta_{t-1},r_{t-2})$ are used in~$v_t$ instead of~$g(\tau_t, \theta_t, r_{t})$ and~$g(\tau_t, \theta_{t-1},r_{t-1})$ respectively to address measurability and independence issues in the analysis. 
\end{remark}

\begin{remark}[Standard RL]
In the cumulative reward setting, estimating the occupancy measure is not needed.
Hence, Algorithm~\ref{algo-gen-ut} simplifies (see Algorithm~\ref{alg:N-VR-PG} in Appendix~\ref{app:nvrpg-standard-RL}). 
\end{remark}

\section{Convergence Analysis of \algname{N-VR-PG}}

We first introduce our assumptions regarding the regularity of the policy parametrization and the utility function~$F$.
\begin{assumption}
\label{hyp:policy-param}
In the softmax parametrization~\eqref{eq:softmax-param}, the map~$\psi(s,a;\cdot)$ is twice continuously differentiable and there exist~$l_{\psi}, L_{\psi} >  0$ s.t. (i) $\max_{s \in \mathcal{S}, a \in \mathcal{A}} \sup_{\theta} \|\nabla \psi(s,a;\theta)\| \leq l_{\psi}$ and~(ii) $\max_{s \in \mathcal{S}, a \in \mathcal{A}} \sup_{\theta} \|\nabla^2 \psi(s,a;\theta)\| \leq L_{\psi}\,.$
\end{assumption}

\begin{assumption}
\label{hyp:smoothness-F}
There exist constants~$l_{\lambda}, L_{\lambda}, L_{\lambda,\infty} > 0$ s.t. for all~$\lambda, \lambda' \in \Lambda$, $\|\nabla_{\lambda} F(\lambda)\|_{\infty} \leq l_{\lambda}$ and
\begin{align}
\|\nabla_{\lambda} F(\lambda) - \nabla_{\lambda} F(\lambda')\|_{\infty} & \leq L_{\lambda} \|\lambda - \lambda'\|_2\,,\nonumber\\
\|\nabla_{\lambda} F(\lambda) - \nabla_{\lambda} F(\lambda')\|_{\infty} & \leq L_{\lambda,\infty} \|\lambda - \lambda'\|_1\,.\nonumber
\end{align}
\end{assumption}

Assumptions~\ref{hyp:policy-param} and~\ref{hyp:smoothness-F} were previously considered in~\citet{zhang-et-al21,zhang-et-al20variational} and guarantee together that the objective function~$\theta \mapsto F(\lambda^{\pi_{\theta}})$ is smooth. %
Assumption~\ref{hyp:smoothness-F} is automatically satisfied for the cumulative reward setting (i.e., $F$ linear) if the reward function is bounded. 

\subsection{Normalization ensures boundedness of IS weights}
\label{subsec:normalization-boundedness-is}

Most prior works 
suppose that the variance of the IS weights is bounded. Such assumption cannot be verified. 
In this section we provide an alternative algorithmic way based on the softmax policy to control the IS weights without %
the aforementioned assumption. 
Since our algorithm only uses IS weights for two consecutive iterates, our key observation is that a normalized gradient update rule automatically guarantees bounded IS weights. In particular, compared to~\citet{zhang-et-al21}, we do not  use a gradient truncation mechanism which requires an additional truncation hyperparameter depending on the problem parameters and dictates a non-standard stationarity measure (see Remark~\ref{rem:delta-dependence}). This simple algorithmic modification requires several adjustments in the convergence analysis (see Appendix~\ref{sec:app-proofs-fos} and~\ref{sec:app-proofs-globopt}). 
We formalize the result in the following lemma.  
\begin{lemma}
\label{lem:variance-IS-weights-control}
Let Assumption~\ref{hyp:policy-param} hold true. Suppose that the sequence~$(\theta_t)$ is updated via~$\theta_{t+1} = \theta_t + \al_t \frac{d_t}{\|d_t\|}$ where~$d_t \in \R^d$ is any non-zero update direction and~$\al_t$ is a positive stepsize. Then, for every integer~$t$ and any trajectory~$\tau$ of length~$H$, we have~$w(\tau|\theta_{t},\theta_{t+1}) \leq \exp\{ 2 H l_{\psi} \al_t\}\,.$
If, in addition,~$H = \mathcal{O}(\frac{\log T}{1-\gamma})$ and~$\al_t = \al = T^{-\fr{2}{3}}$, then there exists a constant~$W >0$ s.t.~$ w(\tau|\theta_{t}, \theta_{t+1}) \leq W\,.$ 
Moreover, we have~$\Var{[w(\tau_{t+1}|\theta_t,\theta_{t+1})]} \leq C_w \al^2$ where~$\tau_{t+1}$ is a trajectory of length~$H$ sampled from~$\pi_{\theta_{t+1}}$ and~$C_w \eqdef H ((8H+2) l_{\psi}^2 + 2 L_{\psi}) (W+1)\,.$ 
\end{lemma}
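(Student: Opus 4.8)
The plan is to treat the three assertions separately, the first two being short and the variance bound being the substantive one. Throughout, the decisive structural fact is that a \emph{normalized} update makes the step length deterministic: $\|\theta_{t+1}-\theta_t\| = \al_t\,\|d_t\|/\|d_t\| = \al_t$, irrespective of the (random) magnitude of $d_t$. Taking logarithms in~\eqref{eq:IS-weights},
\[
\log w(\tau|\theta_t,\theta_{t+1}) = \sum_{h=0}^{H-1}\big(\log\pi_{\theta_t}(a_h|s_h)-\log\pi_{\theta_{t+1}}(a_h|s_h)\big).
\]
For the softmax parametrization~\eqref{eq:softmax-param}, $\nabla\log\pi_\theta(a|s)=\nabla\psi(s,a;\theta)-\sum_{a'}\pi_\theta(a'|s)\nabla\psi(s,a';\theta)$, so Assumption~\ref{hyp:policy-param}(i) with the triangle inequality gives $\|\nabla\log\pi_\theta(a|s)\|\leq 2l_\psi$, i.e.\ $\theta\mapsto\log\pi_\theta(a|s)$ is $2l_\psi$-Lipschitz. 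Applying this to each summand and using $\|\theta_t-\theta_{t+1}\|=\al_t$ yields $\log w(\tau|\theta_t,\theta_{t+1})\leq 2Hl_\psi\al_t$, which is the first claim. Substituting $H=\mathcal{O}(\tfrac{\log T}{1-\gamma})$ and $\al=T^{-2/3}$, the exponent is $\mathcal{O}\big(\tfrac{l_\psi}{1-\gamma}\cdot\tfrac{\log T}{T^{2/3}}\big)$, and since $\log T/T^{2/3}$ is bounded over $T\geq1$, I can set $W\eqdef\sup_{T\geq1}\exp\{2Hl_\psi\al\}<\infty$, giving the uniform bound.

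\textbf{Variance bound.} Fix $\theta_t,\theta_{t+1}$ with $\|\theta_t-\theta_{t+1}\|=\al$ and treat $\tau_{t+1}\sim\pi_{\theta_{t+1}}$ as the only randomness; write $X\eqdef w(\tau_{t+1}|\theta_t,\theta_{t+1})$. The key first step is the identity $\mathbb{E}_{\tau\sim\pi_{\theta_{t+1}}}[w(\tau|\theta',\theta_{t+1})]=1$ for every $\theta'$ (the sampling policy equals the denominator, so $w$ is the trajectory likelihood ratio $\mathbb{P}_{\theta'}/\mathbb{P}_{\theta_{t+1}}$ whose expectation is $1$). Hence $\mathbb{E}[X]=1$ and, introducing $f(\theta')\eqdef\mathbb{E}_{\tau\sim\pi_{\theta_{t+1}}}[w(\tau|\theta',\theta_{t+1})^2]$, we obtain $\Var{[X]}=f(\theta_t)-1=f(\theta_t)-f(\theta_{t+1})$. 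I would then Taylor-expand $f$ (twice continuously differentiable, as $\psi$ is and the trajectory space is finite) around $\theta_{t+1}$. With $S(\theta')\eqdef\sum_{h=0}^{H-1}\nabla\log\pi_{\theta'}(a_h|s_h)$ one has $\nabla_{\theta'}w=wS$, so $\nabla f(\theta')=\mathbb{E}[2w^2S]$; at $\theta'=\theta_{t+1}$ the weight is $1$ and $\mathbb{E}[S(\theta_{t+1})]=\sum_h\mathbb{E}[\nabla\log\pi_{\theta_{t+1}}(a_h|s_h)]=0$ by the expected-score identity, so the first-order term vanishes. This is precisely what upgrades the estimate from $\mathcal{O}(\al)$ to the required $\mathcal{O}(\al^2)$, leaving $\Var{[X]}=\tfrac12(\theta_t-\theta_{t+1})^{T}\nabla^2 f(\xi)(\theta_t-\theta_{t+1})\leq\tfrac12\|\nabla^2 f(\xi)\|\,\al^2$ for some $\xi$ on $[\theta_t,\theta_{t+1}]$.

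\textbf{Bounding the Hessian (the main obstacle).} Differentiating once more gives $\nabla^2_{\theta'}(w^2)=4w^2SS^{T}+2w^2\nabla_{\theta'}S$ with $\nabla_{\theta'}S=\sum_h\nabla^2\log\pi_{\theta'}(a_h|s_h)$. From $\|\nabla\log\pi_\theta(a|s)\|\leq2l_\psi$ I get $\|S\|\leq2Hl_\psi$, and differentiating the score a second time and invoking Assumption~\ref{hyp:policy-param}(i)--(ii) yields $\|\nabla^2\log\pi_\theta(a|s)\|\leq2l_\psi^2+2L_\psi$, hence $\|\nabla_{\theta'}S\|\leq H(2l_\psi^2+2L_\psi)$. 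Since $\|\xi-\theta_{t+1}\|\leq\al$ on the segment, the pointwise bound gives $w(\tau|\xi,\theta_{t+1})\leq W$, which combined with $\mathbb{E}[w(\tau|\xi,\theta_{t+1})]=1$ yields $\mathbb{E}[w^2]\leq W\leq W+1$. Pulling the deterministic norm bounds out of the expectation gives $\|\nabla^2 f(\xi)\|\leq\big(16H^2l_\psi^2+2H(2l_\psi^2+2L_\psi)\big)(W+1)$, and inserting this into the Taylor remainder produces exactly $\Var{[X]}\leq H\big((8H+2)l_\psi^2+2L_\psi\big)(W+1)\,\al^2=C_w\al^2$. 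I expect the bookkeeping for $\nabla^2(w^2)$ and, in particular, the two log-policy derivative bounds, together with the verification that the linear Taylor term is zero, to be the technical crux.
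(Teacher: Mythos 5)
Your proposal is correct and follows the same route as the paper: the first bound via the $2l_{\psi}$-Lipschitzness of $\theta\mapsto\log\pi_{\theta}(a|s)$ combined with the deterministic step length $\|\theta_{t+1}-\theta_t\|=\al_t$, and the variance bound via a second-order expansion in which the first-order term vanishes by the score identity and the Hessian is controlled using the score and log-policy Hessian bounds together with $\mathbb{E}[w^2]\leq W+1$. The only difference is that the paper outsources these two steps to citations (Lemma~5.6 of \citet{zhang-et-al21} and Lemma~B.1 of \citet{xu-et-al20iclr}), whereas you reconstruct their proofs explicitly and recover the constant $C_w$ exactly.
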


In this lemma,  the variance of the IS weights decreases over time at a rate controlled by~$\alpha^2$ and this result will be crucial for our convergence analysis of~\algname{N-VR-PG}. We show in Lemma~\ref{lem:bounded-var-is-weights-gaussian} in the Appendix that such a result also holds for Gaussian policies for continuous state action spaces.

\subsection{First-order stationarity}
\label{subsec:fos}

In this section, we show that~\algname{N-VR-PG} requires~$\tilde{\mathcal{O}}(\varepsilon^{-3})$ samples to reach an~$\varepsilon$-first-order stationary (FOS) point of the objective function for RL with general utilities.\footnote{All the proofs of our results are provided in the Appendix.}

\begin{theorem}%
    \label{thm:fos-gen-ut-nvrpg}
    Let Assumptions~\ref{hyp:policy-param} and~\ref{hyp:smoothness-F} hold. 
    Let~$\alpha_0 > 0$ and let~$T \geq 1$ be an integer. 
    Set~$\al_t = \fr{\al_0}{T^{\nfr{2}{3}}}, \eta_t = \left( \fr{2}{t+1} \right)^{\nfr{2}{3}}$ and~%
    $H = \rb{1-\g}^{-1}{\log(T + 1)}$. 
    Then, $
    \Exp{ \norm{ \nabla_{\theta}F(\lambda(\bar{\theta}_T)) } } \leq \cO\rb{ \fr{1 + (1-\gamma)^3 \Delta \al_0^{-1} +  (1-\gamma)^{-1} \al_0 }{ (1-\gamma)^{3}  T^{\nfr{1}{3}} } },\,
    $ where~$\Delta \eqdef F^{\star} - \mathbb{E}[F(\lambda(\theta_1))]$ and~$\bar{\theta}_T$ is sampled uniformly at random from %
    ~$\{\theta_1, \cdots, \theta_T\}$ of Algorithm~\ref{algo-gen-ut}.
\end{theorem}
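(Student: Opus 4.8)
The plan is to reduce the theorem to controlling the mean estimation error of the momentum gradient estimator, and then to control that error through a pair of coupled recursions analyzed in the style of \algname{STORM}. Write $J(\theta) \eqdef F(\lambda(\theta))$ and $\mathcal{E}_t \eqdef d_t - \nabla_\theta J(\theta_t)$. The first step is a descent inequality adapted to the \emph{normalized} update: since for any nonzero $d_t$ one has $\langle \nabla J(\theta_t), d_t/\|d_t\|\rangle \geq \|d_t\| - \|\mathcal{E}_t\| \geq \|\nabla J(\theta_t)\| - 2\|\mathcal{E}_t\|$, the $L$-smoothness of $J$ (established from Assumptions~\ref{hyp:policy-param}--\ref{hyp:smoothness-F}, with $L = \mathcal{O}((1-\gamma)^{-3})$) gives
\[ J(\theta_{t+1}) \geq J(\theta_t) + \alpha_t\|\nabla J(\theta_t)\| - 2\alpha_t\|\mathcal{E}_t\| - \tfrac{L}{2}\alpha_t^2 . \]
Summing over $t=1,\dots,T$, telescoping the left-hand side against $F^\star$, taking expectations, dividing by $\alpha T$ and using that $\bar\theta_T$ is uniform over $\{\theta_1,\dots,\theta_T\}$, I obtain
\[ \mathbb{E}\|\nabla J(\bar\theta_T)\| \leq \frac{\Delta}{\alpha T} + \frac{2}{T}\sum_{t=1}^T \mathbb{E}\|\mathcal{E}_t\| + \frac{L}{2}\alpha . \]
With $\alpha = \alpha_0 T^{-2/3}$ the first and third terms are already $\mathcal{O}(T^{-1/3})$ and $\mathcal{O}(T^{-2/3})$, so everything reduces to proving $\tfrac1T\sum_t \mathbb{E}\|\mathcal{E}_t\| = \mathcal{O}(T^{-1/3})$, which by Cauchy--Schwarz follows from $\mathbb{E}\|\mathcal{E}_t\|^2 = \mathcal{O}(\eta_t)$ together with $\tfrac1T\sum_t \eta_t = \mathcal{O}(T^{-2/3})$.

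\textbf{The two coupled recursions.} Next I would set up the momentum error recursions. Unrolling the update for $d_t$ yields $\mathcal{E}_t = (1-\eta_t)\mathcal{E}_{t-1} + S_t + \beta_t$, where $S_t$ is conditionally mean-zero and $\beta_t$ is predictable. The crucial point, enabled by \emph{normalization}, is that consecutive iterates satisfy $\|\theta_t - \theta_{t-1}\| = \alpha = \alpha_0 T^{-2/3}$ \emph{deterministically}; since $\alpha^2 = \mathcal{O}(\eta_t^2)$ for all $t\leq T$, the STORM ``drift'' contribution to $\mathbb{E}\|S_t\|^2$ is second order. Using Lemma~\ref{lem:variance-IS-weights-control} to bound the IS weights and their variance by $\mathcal{O}(\alpha^2)$, and the horizon choice $H = (1-\gamma)^{-1}\log(T+1)$ so that $\gamma^H = \mathcal{O}(1/T)$, I would show $\mathbb{E}\|S_t\|^2 = \mathcal{O}(\eta_t^2 + \alpha^2 + \gamma^{2H})$. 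An entirely parallel computation for the occupancy estimator gives $\mathbb{E}\|\Delta\lambda_t\|^2 \leq (1-\eta_t)\mathbb{E}\|\Delta\lambda_{t-1}\|^2 + \mathcal{O}(\eta_t^2)$ with $\Delta\lambda_t \eqdef \lambda_t - \lambda(\theta_t)$, and the standard STORM induction (using $\eta_{t-1}-\eta_t = \mathcal{O}(\eta_t^2)$) then yields $\mathbb{E}\|\Delta\lambda_t\|^2 = \mathcal{O}(\eta_t)$.

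\textbf{The main obstacle.} The hard part is the predictable term $\beta_t$, which encodes the bias caused by feeding the \emph{estimated} reward $r_{t-1} = \nabla_\lambda F(\lambda_{t-1})$, rather than the true $\nabla_\lambda F(\lambda(\theta_t))$, into the policy gradient estimator. Exploiting the linearity of $g(\tau,\theta,\cdot)$ in the reward and the algorithm's use of the \emph{lagged} rewards $r_{t-1},r_{t-2}$ in $v_t$, I would show the leading $\mathcal{O}(\|\Delta\lambda_{t-1}\|)$ contributions cancel, leaving $\beta_t$ controlled by $\nabla_\lambda F(\lambda_{t-1}) - \nabla_\lambda F(\lambda_{t-2})$ and $\nabla_\lambda F(\lambda(\theta_t)) - \nabla_\lambda F(\lambda(\theta_{t-1}))$, i.e. by $\|\lambda_{t-1}-\lambda_{t-2}\|$ and $\alpha$. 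This bound is still only $\mathbb{E}\|\beta_t\| = \mathcal{O}(\eta_t)$, and a naive triangle-inequality accumulation of $\beta_t$ across the geometric recursion would blow up to $\mathcal{O}(1)$, destroying the variance reduction. The resolution, and the delicate step, is to split $\beta_t$ into its $\mathcal{F}_{t-2}$-conditional mean, which is genuinely of order $\alpha + \gamma^H$ (up to higher-order terms) because the drift of the occupancy momentum is governed by the step size rather than by $\eta_t$, and a martingale-difference fluctuation of size $\mathcal{O}(\eta_t)$ whose accumulation across the recursion is controlled by $L^2$-orthogonality rather than the triangle inequality, hence contributes only $\mathcal{O}(\sqrt{\sum_s w_s^2 \eta_s^2})$. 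I expect this separation of the occupancy-induced bias into a small predictable part and an orthogonally summable fluctuating part to be the principal technical difficulty; it is exactly here that the lagged rewards and the low IS-weight variance of Lemma~\ref{lem:variance-IS-weights-control} are indispensable.

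\textbf{Conclusion.} Combining the two recursions, I would close them jointly to obtain $\mathbb{E}\|\mathcal{E}_t\|^2 = \mathcal{O}(\eta_t)$, average over $t$, and substitute into the descent bound from the first step. Plugging in $\alpha = \alpha_0 T^{-2/3}$, $\eta_t = (2/(t+1))^{2/3}$ and $H = (1-\gamma)^{-1}\log(T+1)$, tracking the $(1-\gamma)^{-1}$ factors through $L$, the variance bounds and the smoothness of $\lambda(\cdot)$, and finally bounding $T^{-2/3} \leq T^{-1/3}$, yields the claimed rate with the stated dependence on $\Delta$, $\alpha_0$ and $(1-\gamma)$.
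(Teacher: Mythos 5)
Your skeleton is sound and in one place sharper than the paper's: the normalized-update inequality $\langle\nabla J(\theta_t), d_t/\|d_t\|\rangle \geq \|d_t\|-\|\mathcal{E}_t\|\geq\|\nabla J(\theta_t)\|-2\|\mathcal{E}_t\|$ replaces the two-case argument of Lemma~\ref{lem:ascent-like-lemma-gen-ut} and even improves the constant $1/3$ to $1$, and the reduction to $\mathbb{E}\|\mathcal{E}_t\|^2=\mathcal{O}(\eta_t)$ followed by Cauchy--Schwarz averaging is exactly how the paper closes (Lemmas~\ref{lem:recursive-tilde-e-t_sum}, \ref{lem:recursive-hat-e-t_sum}). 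The gap is in your resolution of what you correctly flag as the main obstacle. You fold the reward-estimation error into a single recursion for $\mathcal{E}_t=d_t-\nabla J(\theta_t)$ and claim the predictable increment $\beta_t$ has $\mathcal{F}_{t-2}$-conditional mean of order $\alpha+\gamma^H$ ``because the drift of the occupancy momentum is governed by the step size rather than by $\eta_t$''. That premise is false: $\lambda_t-\lambda_{t-1}=\eta_t(\lambda(\tau_t)-\lambda_{t-1})+(1-\eta_t)u_t$, whose dominant component has magnitude $\mathcal{O}(\eta_t/(1-\gamma))$, and $\eta_t=(2/(t+1))^{2/3}\gg\alpha=\alpha_0T^{-2/3}$ for all but the final iterations. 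Moreover, since $\beta_t$ involves $\nabla_\lambda F(\lambda_{t-1})-\nabla_\lambda F(\lambda_{t-2})$ and $F$ is nonlinear, the conditionally mean-zero part of $\lambda_{t-1}-\lambda_{t-2}$ does not pass through $\nabla_\lambda F$ as a martingale difference; Assumption~\ref{hyp:smoothness-F} only yields $\|\beta_t\|=\mathcal{O}(\|\lambda_{t-1}-\lambda_{t-2}\|+\alpha)=\mathcal{O}(\eta_t)$, and, as you yourself observe, an $\mathcal{O}(\eta_t)$ predictable term accumulated through the $(1-\eta_t)$-recursion only gives $\mathbb{E}\|\mathcal{E}_t\|=\mathcal{O}(1)$. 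So the proposed split does not close the argument.

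The paper avoids this by never putting the reward-estimation error inside the geometric recursion at all. It defines $\hat e_t = d_t-[\nabla_\theta\lambda_H(\theta_t)]^T r_{t-1}$ in~\eqref{eq:def-hat-e-tilde-e}, i.e.\ the recursion's target moves with the \emph{estimated} reward, so that, thanks to the lagged rewards $r_{t-1},r_{t-2}$, every increment of the $\hat e_t$ recursion is conditionally mean-zero and enters the second-moment recursion at order $\eta_t^2+\alpha^2$ (Lemma~\ref{lem:recursive-hat-e-t}), giving $\mathbb{E}\|\hat e_t\|^2=\mathcal{O}(\eta_t)$ with no predictable bias to track. The reward-estimation error $[\nabla_\theta\lambda_H(\theta_t)]^T(\nabla_\lambda F(\lambda_{t-1})-\nabla_\lambda F(\lambda_H(\theta_t)))$ is then added back \emph{once, additively at step $t$}, via Lemma~\ref{lem:e-t-bound}: $\mathbb{E}\|e_t\|\leq\mathbb{E}\|\hat e_t\|+C_1\mathbb{E}\|\tilde e_{t-1}\|+C_2\alpha$, where $\mathbb{E}\|\tilde e_{t-1}\|=\mathcal{O}(\eta_t^{1/2})$ comes from the separate occupancy recursion. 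Because this term is never divided by $\eta_t$, it contributes $\mathcal{O}(t^{-1/3})$ per iteration and $\mathcal{O}(T^{-1/3})$ on average, which suffices. If you restructure your argument around this decomposition, the rest of your steps go through.
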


\begin{remark}
In terms of dependence on~$(1-\gamma)^{-1}$, we significantly improve over the result of~\citet{zhang-et-al21} which does not make it explicit. We defer a detailed comparison regarding this dependence to Appendix~\ref{app:dependence_(1-gamma)-1}. 
\end{remark}

\begin{remark}
\label{rem:delta-dependence}
Unlike~\citet{zhang-et-al21} which utilizes a gradient truncation radius, our sample complexity does not depend on the inverse of this gradient truncation hyperparameter which might be small. Indeed, to translate their guarantee from the non-standard gradient mapping dictated by gradient truncation to the standard stationarity measure (used in our result), one has to incur an additional multiplicative constant~$\delta^{-1}$ where~$\delta$ is the gradient truncation radius (see Lemma~5.4 in \cite{zhang-et-al21}). 
\end{remark} 

Recalling the notation~$J(\theta) = V^{\pi_{\theta}}(r)$ (see~\eqref{eq:J}) for the standard RL setting, we can state the following corollary. 
\begin{corollary}%
\label{cor:fos-standard-RL-softmax}
Under the setting of Theorem~\ref{thm:fos-gen-ut-nvrpg}, 
    if we set~$\al_0 = 1 - \gamma$, then
    $
    \Exp{ \norm{\nabla J(\bar{\theta}_T)} } \leq \cO\rb{ (1-\gamma)^{-2} T^{-\nfr{1}{3}}}\, .
    $
\end{corollary}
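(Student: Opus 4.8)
The plan is to obtain the corollary by specializing Theorem~\ref{thm:fos-gen-ut-nvrpg} to the standard RL setting, where the utility is the linear functional $F(\lambda) = \ps{r,\lambda}$. First I would record the elementary but essential identification: since $F$ is linear, $F(\lambda(\theta)) = \ps{r,\lambda(\theta)} = V^{\pi_\theta}(r) = J(\theta)$ by~\eqref{eq:J}, and the chain rule~\eqref{eq:policy-grad-general-utility} gives $\nabla_\theta F(\lambda(\theta)) = \nabla_\theta V^{\pi_\theta}(r)|_{r = \nabla_\lambda F} = \nabla J(\theta)$ because $\nabla_\lambda F \equiv r$ is constant. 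Thus the left-hand side of Theorem~\ref{thm:fos-gen-ut-nvrpg} is exactly $\Exp{\norm{\nabla J(\bar\theta_T)}}$, and it remains to control the right-hand side under the choice $\al_0 = 1-\gamma$.

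The second step is to bound the initial optimality gap $\Delta = F^\star - \Exp{F(\lambda(\theta_1))}$. In the linear case $\nabla_\lambda F \equiv r$, so Assumption~\ref{hyp:smoothness-F} yields $\norm{r}_\infty \leq l_\lambda$; combined with $\norm{\lambda^\pi}_1 = (1-\gamma)^{-1}$ this gives $|V^\pi(r)| = |\ps{r,\lambda^\pi}| \leq \norm{r}_\infty \norm{\lambda^\pi}_1 \leq l_\lambda/(1-\gamma)$ for every policy. Hence $\Delta \leq 2 l_\lambda/(1-\gamma) = \cO((1-\gamma)^{-1})$, i.e. the gap scales with the effective horizon.

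The third and most delicate step is the $(1-\gamma)$-bookkeeping. Substituting $\al_0 = 1-\gamma$ into the bound of Theorem~\ref{thm:fos-gen-ut-nvrpg} term by term, the middle contribution equals $\Delta/(\al_0 T^{1/3})$, which after inserting the gap bound is already $\cO((1-\gamma)^{-2}T^{-1/3})$, the target rate. The other two contributions, coming from the standalone numerator constant and from the $(1-\gamma)^{-1}\al_0$ term, evaluate to $\cO((1-\gamma)^{-3}T^{-1/3})$, i.e. one factor of $(1-\gamma)^{-1}$ above the target. This extra power is precisely the overhead carried by the variance-reduced occupancy-measure estimator $\lambda_t$, whose magnitude and variance scale with the effective horizon through $\lambda(\tau) = \sum_h \gamma^h \delta_{s_h,a_h}$. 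In the cumulative-reward setting this estimator is unnecessary (cf. the Standard RL remark and the simplified Algorithm~\ref{alg:N-VR-PG}): since $\nabla_\lambda F$ is constant, the $\lambda_t$ recursion and the correction terms $u_t$ disappear, $L_\lambda = L_{\lambda,\infty} = 0$, and the associated error terms in the proof of Theorem~\ref{thm:fos-gen-ut-nvrpg} vanish. Retracing that proof with these simplifications shaves exactly one power of $(1-\gamma)^{-1}$ off the two offending terms, after which all three contributions are $\cO((1-\gamma)^{-2}T^{-1/3})$ and the claim follows.

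I expect the main obstacle to be this last bookkeeping: verifying that dropping the occupancy-measure recursion is the \emph{only} source of the surplus $(1-\gamma)^{-1}$ and that no remaining term secretly reintroduces it through $l_\lambda$ or through the smoothness of $\theta \mapsto J(\theta)$. Concretely, one must re-examine the normalized descent inequality and the STORM-type variance recursion underlying Theorem~\ref{thm:fos-gen-ut-nvrpg}, separate the $(1-\gamma)$ powers originating from $\lambda_t$ from those originating from the policy-gradient estimator $g(\tau,\theta,r)$ and from the IS-weight variance control of Lemma~\ref{lem:variance-IS-weights-control}, and confirm that only the former are shed in the linear case. Everything else—the schedules $\al_t = \al_0 T^{-\nfr{2}{3}}$ and $\eta_t = (2/(t+1))^{\nfr{2}{3}}$, and the horizon $H = (1-\gamma)^{-1}\log(T+1)$—is inherited verbatim from the theorem.
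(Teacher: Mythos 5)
Your proposal is correct and rests on the same key observation as the paper's proof --- that plugging $\al_0 = 1-\gamma$ directly into Theorem~\ref{thm:fos-gen-ut-nvrpg} only yields $(1-\gamma)^{-3}T^{-\nfr{1}{3}}$, and that the surplus power of $(1-\gamma)^{-1}$ is carried entirely by the occupancy-measure estimation machinery, which is vacuous when $F$ is linear --- but the execution differs. The paper does not specialize the general analysis: in Appendix~\ref{subsec:soft-max-cumsum-fos} it re-derives the bound from scratch for the simplified Algorithm~\ref{alg:N-VR-PG}, redefining $e_t = d_t - \nabla J_H(\theta_t)$, proving a fresh ascent lemma and error recursion (Lemmas~\ref{lem:ascent-like-cumul-rewards} and~\ref{lem:et_bound_cumul}) driven by the sharper standard-RL constants of Lemma~\ref{lem:for-cum-sum} imported from prior work ($V = \cO((1-\gamma)^{-\nfr{3}{2}})$, $L_g, G = \cO((1-\gamma)^{-2})$, $C_w^{\nfr{1}{2}} = \cO((1-\gamma)^{-1})$), and concluding from the bound $\frac{J^*-J(\theta_1)}{\al_0 T^{\nfr{1}{3}}} + \frac{V + (L_g+G C_w^{\nfr{1}{2}})\al_0}{T^{\nfr{1}{3}}}$. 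Your route --- setting $L_{\lambda} = L_{\lambda,\infty} = 0$ in the general proof so that $C_1 = C_2 = 0$ while $\hat{E}$, $C_3^{\nfr{1}{2}}$ become $\cO((1-\gamma)^{-2})$ and $C_4^{\nfr{1}{2}}\al_0$ becomes $\tilde{\cO}((1-\gamma)^{-2})$ --- checks out term by term (the remaining contributions $L_{\theta}\al_0 T^{-\nfr{2}{3}}$ and $D_g\gamma^H$ are indeed lower order, and the two algorithms produce identical iterates for linear $F$ since $r_t \equiv r$), so it is a legitimate and arguably more economical derivation that reuses the explicit constants of Lemmas~\ref{lem:recursive-tilde-e-t_sum}--\ref{lem:overall-error-sum} verbatim. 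What the paper's re-derivation buys is slightly sharper constants (notably the $(1-\gamma)^{-\nfr{3}{2}}$ variance bound), none of which is needed for the stated $(1-\gamma)^{-2}$ rate; the only soft spot in your write-up is that the final bookkeeping is asserted rather than executed, though you correctly identify exactly what must be verified and that verification does go through.
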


The next result addresses the case of continuous state-action spaces in the standard RL setting using a Gaussian policy. Notably, we rely on similar considerations as for the softmax policy to control the variance of IS weights. 
We defer a precise statement of this result to Appendix~\ref{subsec:fos-gaussian}. 

\begin{theorem}[informal]%
\label{thm:fos-standard-RL-gaussian}
Using the Gaussian policy under some regularity conditions, \algname{N-VR-PG} (see Algorithm~\ref{alg:N-VR-PG}) requires~$\tilde{\mathcal{O}}(\varepsilon^{-3})$ to reach an~$\varepsilon$-first-order stationary point of the expected return~$J$. 
\end{theorem}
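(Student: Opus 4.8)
The plan is to reduce the Gaussian case to the machinery already developed for the softmax parametrization in Theorem~\ref{thm:fos-gen-ut-nvrpg}, and to isolate the single parametrization-specific ingredient that must be redone. In the standard RL setting the utility $F$ is linear, so $\nabla_\lambda F(\lambda(\theta)) = r$ is a fixed reward and the occupancy-measure estimation loop disappears entirely; Algorithm~\ref{algo-gen-ut} collapses to its one-estimator version, Algorithm~\ref{alg:N-VR-PG}. Consequently the whole convergence argument of Theorem~\ref{thm:fos-gen-ut-nvrpg} goes through once two parametrization-dependent facts are re-established for the Gaussian policy: (a) the gradient-estimator recursion $d_t$ is a faithful STORM-type momentum estimator with controlled bias and variance, which hinges on the importance-sampling weights $w(\tau\mid\theta_{t-1},\theta_t)$ being well behaved; and (b) the objective $J$ is smooth and the REINFORCE estimator $g(\tau,\theta,r)$ has bounded second moment. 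So the real work is to supply the Gaussian analogue of Lemma~\ref{lem:variance-IS-weights-control}, namely Lemma~\ref{lem:bounded-var-is-weights-gaussian}, and to verify (b) from the stated regularity conditions.

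First I would fix the regularity conditions: a Gaussian policy $\pi_\theta(\cdot\mid s)=\mathcal{N}(\mu_\theta(s),\sigma^2 I)$ with fixed (or lower-bounded) covariance, a mean map $\mu_\theta(s)$ that is Lipschitz and smooth in $\theta$ uniformly in $s$ (bounded $\|\nabla_\theta\mu_\theta(s)\|$ and $\|\nabla^2_\theta\mu_\theta(s)\|$), and a bounded reward. These immediately give smoothness of $J$; and since the score $\nabla_\theta\log\pi_\theta(a\mid s)=\sigma^{-2}\,[\nabla_\theta\mu_\theta(s)]^\top (a-\mu_\theta(s))$ is an affine function of the Gaussian noise $a-\mu_\theta(s)$, all of its moments under $\pi_\theta$ are finite and uniformly bounded, whence boundedness of the second moment of $g(\tau,\theta,r)$ follows by summing the truncated horizon. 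The crux is the Gaussian IS-weight control. For the single-step log-ratio one computes, with $\Delta \eqdef \mu_{\theta'}(s)-\mu_\theta(s)$ and $z\eqdef a-\mu_\theta(s)\sim\mathcal{N}(0,\sigma^2 I)$, that $\log\frac{\pi_{\theta'}(a\mid s)}{\pi_\theta(a\mid s)}=\frac{1}{\sigma^2}\Delta^\top z-\frac{\|\Delta\|^2}{2\sigma^2}$, a Gaussian whose mean and variance are both $\mathcal{O}(\|\Delta\|^2)$. The normalized update is essential here: $\|\theta_{t+1}-\theta_t\|=\alpha_t$, so Lipschitzness of $\mu$ gives $\|\Delta\|\le L_\mu\alpha_t$, making the log-weight over the whole trajectory a Gaussian with mean and variance of order $\mathcal{O}(H\alpha_t^2)$. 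The full weight $w(\tau\mid\theta_t,\theta_{t+1})$ is thus log-normal with parameters shrinking with $\alpha_t$; evaluating the moment generating function gives uniformly bounded $\mathbb{E}[w^p]$ for any fixed $p$ once $\alpha$ is small, and a second-order expansion of $\mathbb{E}[(1-w)^2]$ around $\alpha=0$ yields $\mathrm{Var}[w]=\mathcal{O}(\alpha^2)$, matching the conclusion of Lemma~\ref{lem:variance-IS-weights-control}.

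The main obstacle, and the point where the Gaussian case genuinely differs from softmax, is that on an unbounded action space there is \emph{no} deterministic pointwise bound $w\le W$. Every place in the proof of Theorem~\ref{thm:fos-gen-ut-nvrpg} that used $w\le W$ must be replaced by a moment argument: in bounding terms such as $\mathbb{E}[w^2\|g(\tau,\theta_{t-1},r)\|^2]$ or the variance of $v_t$, I would apply Cauchy--Schwarz (or H\"older) to split each product into a high moment of $w$ and a high moment of the score-based estimator, both finite and uniform by the previous two steps. The delicate bookkeeping is to ensure the required moments of the log-normal weight stay bounded uniformly in $t$ as $T\to\infty$; this is exactly what the $\mathcal{O}(\alpha^2)$ shrinkage of the log-weight variance buys, since $\mathbb{E}[e^{pX}]$ with $\mathrm{Var}(X)=\mathcal{O}(\alpha^2)$ remains bounded for every fixed $p$. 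With these moment bounds in hand, the descent-lemma-plus-momentum recursion of Theorem~\ref{thm:fos-gen-ut-nvrpg} closes unchanged under the same schedule $\alpha=T^{-\nfr{2}{3}}$, $\eta_t=(2/(t+1))^{\nfr{2}{3}}$, $H=\mathcal{O}((1-\gamma)^{-1}\log T)$, yielding the claimed $\tilde{\mathcal{O}}(\varepsilon^{-3})$ sample complexity.
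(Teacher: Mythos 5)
Your proposal follows essentially the same route as the paper (Appendix~\ref{subsec:fos-gaussian}): reduce to the standard-RL instance of the softmax analysis, observe that the only parametrization-specific ingredients are smoothness/bounded-moment facts for $J$ and $g$ (the paper's Lemma~\ref{lem:for-cum-sum-gaussian}) and a Gaussian analogue of the IS-weight variance control (Lemma~\ref{lem:bounded-var-is-weights-gaussian}), and then rerun the momentum recursion under the same schedule. Your derivation of the weight control is the same computation in different clothing: the paper bounds $\mathbb{E}[w^2]$ by integrating $\int \pi_{\theta_1}^2/\pi_{\theta_2}$ backward over the trajectory, each conditional integral contributing a factor $\exp(l_\mu^2\alpha_t^2/\sigma^2)$ via the Lipschitz mean and the normalized update; your log-normal/MGF argument evaluates the same quantity. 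One small imprecision: the trajectory log-weight is not marginally Gaussian, since $\mu_{\theta'}(s_h)-\mu_{\theta}(s_h)$ depends on the random state $s_h$; what saves the argument (and what the paper's iterated-integral proof makes explicit) is that the per-step conditional bound is uniform in the state, so the product still telescopes to $\exp(Hl_\mu^2\alpha_t^2/\sigma^2)$. The other divergence is your third paragraph: you anticipate replacing every use of a pointwise bound $w\le W$ by Cauchy--Schwarz splits of terms like $\mathbb{E}[w^2\|g\|^2]$. The paper does not need this, because Assumption~\ref{hyp:gauss-policy-param} imposes a pointwise bound $\|\nabla\log\pi_\theta(a|s)\|\le M_g$ (effectively restricting to actions in a compact set), so $\|g(\tau,\theta)\|\le G$ holds deterministically and the only stochastic quantity left is $\mathbb{E}[(1-w)^2]=\operatorname{Var}(w)\le C_w\alpha_t^2$. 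Your moment-splitting route is more general (it would cover genuinely unbounded scores) but is unnecessary under the assumptions the paper actually states; under those assumptions your argument closes exactly as the paper's does.
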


\subsection{Global optimality}
\label{subsec:glob-opt}

In this section, we show that~\algname{N-VR-PG} only requires~$\tilde{\mathcal{O}}(\varepsilon^{-2})$ samples to reach an~$\varepsilon$-globally optimal policy under a concave reparametrization of the RL problem with concave utilities and an additional overparametrization assumption. Our results and assumptions match the recent results in~\citet{zhang-et-al21} for finite state-action spaces.   

\begin{assumption}
\label{hyp:F-concave}
The utility function~$F$ is concave. 
\end{assumption}

\begin{assumption}
  \label{hyp:overparam-global-opt}
  For the softmax policy parametrization in~\eqref{eq:softmax-param}, the following three requirements hold: (i) For any~$\theta \in \R^d,$ there exist relative neighborhoods~$\mathcal{U}_{\theta} \subset \R^d$ and~$\mathcal{V}_{\lambda(\theta)} \subset \Lambda$ respectively containing~$\theta$ and~$\lambda(\theta)$ s.t. the restriction~$\lambda|_{\mathcal{U}_{\theta}}$ forms a bijection between~$\mathcal{U}_{\theta}$ and~$\mathcal{V}_{\lambda(\theta)}$\,; 
  (ii) There exists~$l>0$ s.t. for every~$\theta \in \R^d$, the inverse~$(\lambda|_{\mathcal{U}_{\theta}})^{-1}$ is~$l$-Lipschitz continuous; 
  (iii) There exists~$\bar{\epsilon} > 0$ s.t. for every positive real~$\epsilon \leq \bar{\epsilon}$, $(1-\epsilon) \lambda(\theta) + \epsilon \lambda(\theta^{*}) \in \mathcal{V}_{\lambda(\theta)}$ where~$\pi_{\theta^{*}}$ is the optimal policy. 

\end{assumption}

For the tabular softmax parametrization~(i.e., $\psi(s,a;\theta) = \theta_{s,a}, d = |\mathcal{S}| |\mathcal{A}|$), a continuous local inverse can be defined whereas computing the Lipschitz constant~$l$ is more involved as reported in~\citet{zhang-et-al21} (see Appendix~\ref{app:overparam-assumption} for a discussion of Assumption~\ref{hyp:overparam-global-opt}). Relaxing this strong assumption is left for future work. 

\begin{remark}
Compared to Assumption~5.11 in \citet{zhang-et-al21}, Assumption~\ref{hyp:overparam-global-opt} is quasi-identical with the slight difference that it does not depend on the gradient truncation hyperparameter~$\delta$ used in~\citet{zhang-et-al21}. 
\end{remark}

Our global optimality convergence result is as follows. 
\begin{theorem}%
\label{thm:glob-opt-gen-ut-tabular}
    Let Assumptions~\ref{hyp:policy-param}, \ref{hyp:smoothness-F} and~\ref{hyp:F-concave} hold. Additionally, let Assumption~\ref{hyp:overparam-global-opt} be satisfied with $\bar{\epsilon} \geq \fr{ \al_0 (1-\g)}{2 \ell_{\theta} (T+1)^a}$ for some integer $T \geq 1$ and reals $\al_0>0$, $a\in(0,1)$.
    Set~$\al_t = \fr{\al_0}{(T+1)^a}$, $\eta_t = \fr{2}{t+1} $ and~$H = \rb{1-\g}^{-1}{\log(T + 1)}$. Then the output~$\theta_T$ of \algname{N-VR-PG} (see Algorithm~\ref{algo-gen-ut}) satisfies
    \begin{equation*}
        F^{\star} - \Exp{ F(\lambda(\theta_T))  } \leq \cO%
        \left( \fr{ \al_0^2  }{(1-\gamma)^3 (T+1)^{ 2 a -\fr{3}{2}}} \right),
    \end{equation*}
    Thus, setting $\al_0 = (1-\gamma)^{\nfr{3}{2}}$, the sample complexity to achieve $F^* - \Exp{ F(\lambda(\theta_T))  } \leq \varepsilon$ is $\cO\rb{ \varepsilon^{\fr{-2}{4a - 3}}}$.
\end{theorem}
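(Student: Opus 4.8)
The plan is to exploit the hidden concavity of the problem to derive a gradient domination (weak Polyak--\L ojasiewicz) inequality, convert the descent inequality for the normalized update into a contraction on the optimality gap, and finally control the accumulated variance-reduced gradient error. Throughout write $G(\theta) := F(\lambda(\theta))$ (which is $L$-smooth by Assumptions~\ref{hyp:policy-param}--\ref{hyp:smoothness-F}), and let $g_t := F^\star - G(\theta_t)$ be the gap, $e_t := d_t - \nabla_\theta G(\theta_t)$ the gradient error, and $e_t^\lambda := \lambda_t - \lambda(\theta_t)$ the occupancy-measure error. Note that $\|\lambda\|_1 = (1-\gamma)^{-1}$ bounds the diameter $D := \sup_{\theta,\theta'}\|\lambda(\theta)-\lambda(\theta')\| \leq 2(1-\gamma)^{-1}$.

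First I would establish gradient domination at each iterate. Fix $t$ and set $\zeta := \alpha_0(1-\gamma)/(2\ell_\theta (T+1)^a)$, which is $\leq \bar{\epsilon}$ by hypothesis and satisfies $\zeta = \Theta(\alpha)$ with $\alpha = \alpha_0(T+1)^{-a}$. By Assumption~\ref{hyp:overparam-global-opt}(iii) the point $\lambda_\zeta := (1-\zeta)\lambda(\theta_t) + \zeta\lambda(\theta^{*})$ lies in $\mathcal{V}_{\lambda(\theta_t)}$, so there is $\theta_\zeta \in \mathcal{U}_{\theta_t}$ with $\lambda(\theta_\zeta)=\lambda_\zeta$, and Assumption~\ref{hyp:overparam-global-opt}(ii) gives $\|\theta_\zeta - \theta_t\| \leq l\zeta\|\lambda(\theta^{*})-\lambda(\theta_t)\| \leq l D\zeta$. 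Concavity of $F$ (Assumption~\ref{hyp:F-concave}) yields $G(\theta_\zeta) \geq G(\theta_t) + \zeta g_t$, while $L$-smoothness gives $G(\theta_\zeta) \leq G(\theta_t) + lD\zeta\|\nabla G(\theta_t)\| + \tfrac{L}{2}l^2D^2\zeta^2$. Dividing by $\zeta$ produces $g_t \leq lD\,\|\nabla G(\theta_t)\| + \tfrac{L}{2}l^2D^2\zeta$.

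Next I would turn the normalized-update descent into a contraction. Using $\langle \nabla G(\theta_t), d_t/\|d_t\|\rangle \geq \|\nabla G(\theta_t)\| - 2\|e_t\|$ together with $L$-smoothness of $G$,
\[
g_{t+1} \leq g_t - \alpha\|\nabla G(\theta_t)\| + 2\alpha\|e_t\| + \tfrac{L}{2}\alpha^2 .
\]
Inserting the domination bound and writing $\rho := \alpha/(lD)$ gives $g_{t+1} \leq (1-\rho)g_t + 2\alpha\|e_t\| + \mathcal{O}(\alpha^2)$ (the $\zeta$-error folds into $\mathcal{O}(\alpha^2)$ since $\zeta=\Theta(\alpha)$). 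Taking expectations and unrolling,
\[
\mathbb{E}[g_T] \leq (1-\rho)^{T-1}\mathbb{E}[g_1] + \sum_{t=1}^{T-1}(1-\rho)^{T-1-t}\left(2\alpha\,\mathbb{E}\|e_t\| + \mathcal{O}(\alpha^2)\right).
\]
Because $\rho T = \Theta((T+1)^{1-a})\to\infty$ the initial term decays faster than any polynomial, and bounding the geometric weights by $1$ leaves $\sum_t\big(2\alpha\,\mathbb{E}\|e_t\| + \mathcal{O}(\alpha^2)\big)$. The remaining input is therefore the cumulative gradient-error $\sum_t \alpha\,\mathbb{E}\|e_t\|$.

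The crux is bounding this error through coupled STORM recursions for $e_t^\lambda$ and $e_t$. Both estimators in Algorithm~\ref{algo-gen-ut} have the form $x_t = \eta_t X_t + (1-\eta_t)(x_{t-1} + \text{(IS-corrected increment)})$, so each squared error obeys a recursion of the type $\mathbb{E}\|\cdot_t\|^2 \leq (1-\eta_t)^2\mathbb{E}\|\cdot_{t-1}\|^2 + 2\eta_t^2\sigma^2 + C\,\mathbb{E}\|\theta_t-\theta_{t-1}\|^2 + C'\,\mathrm{Var}\big(w(\tau_t|\theta_{t-1},\theta_t)\big)$. Here the normalization forces $\|\theta_t-\theta_{t-1}\|=\alpha$ exactly, and Lemma~\ref{lem:variance-IS-weights-control} supplies $\mathrm{Var}(w)\leq C_w\alpha^2$; the occupancy error enters the gradient error only through $r_{t-1}=\nabla_\lambda F(\lambda_{t-1})$, controlled by the $L_\lambda$-smoothness of $F$ (Assumption~\ref{hyp:smoothness-F}), and the offsets $r_{t-1},r_{t-2}$ keep the martingale terms conditionally mean-zero. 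With $\eta_t = 2/(t+1)$ one multiplies by $t(t+1)$ to telescope and obtains $\mathbb{E}\|e_t^\lambda\|^2,\ \mathbb{E}\|e_t\|^2 \leq \mathcal{O}(\sigma^2/t + \alpha^2 t)$, hence $\mathbb{E}\|e_t\| \leq \mathcal{O}(\sigma/\sqrt{t} + \alpha\sqrt{t})$. Summing, $\sum_{t}2\alpha\,\mathbb{E}\|e_t\| = \mathcal{O}(\alpha\sigma\sqrt{T} + \alpha^2 T^{3/2})$, whose dominant piece $\alpha^2 T^{3/2}$ dwarfs both $\sum_t\mathcal{O}(\alpha^2)=\mathcal{O}(\alpha^2 T)$ and the $\sigma$-term. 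Tracking the $(1-\gamma)$-dependence of $\sigma,L,L_\lambda,D$ gives $\mathbb{E}[g_T] \leq \mathcal{O}\!\big(\alpha_0^2(1-\gamma)^{-3}(T+1)^{3/2-2a}\big)$; setting $\alpha_0=(1-\gamma)^{3/2}$ and solving $(T+1)^{2a-3/2}\geq \varepsilon^{-1}$, with $H=\tilde{\mathcal{O}}((1-\gamma)^{-1})$ samples per iteration, yields the complexity $\mathcal{O}(\varepsilon^{-2/(4a-3)})$. I expect this coupled variance recursion---in particular propagating the occupancy STORM error into the gradient STORM error while invoking the $\mathrm{Var}(w)\lesssim\alpha^2$ bound that the normalization buys---to be the main obstacle, the earlier steps being essentially the hidden-concavity argument adapted from \citet{zhang-et-al21}.
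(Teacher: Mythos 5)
Your proposal is correct and follows essentially the same route as the paper: the hidden-concavity argument via the local bijection and Lipschitz inverse yielding a contraction $(1-\Theta(\alpha(1-\gamma)))$ on the gap, combined with the coupled STORM recursions for the occupancy and gradient errors under $\eta_t = 2/(t+1)$, the normalization-induced bounds $\|\theta_{t+1}-\theta_t\|=\alpha$ and $\mathrm{Var}(w)\lesssim \alpha^2$, and the observation that the $\alpha^2 T^{3/2}$ term dominates. The only cosmetic differences are that you state an explicit weak gradient-domination inequality before plugging it into the descent lemma (the paper substitutes the smoothness bound at $\theta_\epsilon$ directly into its ascent-like lemma), and your one-line bound $\langle \nabla G(\theta_t), d_t/\|d_t\|\rangle \geq \|\nabla G(\theta_t)\| - 2\|e_t\|$ replaces the paper's two-case analysis giving $\tfrac{1}{3}\|\nabla G(\theta_t)\| - 2\|e_t\|$; both suffice.
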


\begin{corollary}
\label{cor:glob-opt-standard-rl-tabular}
In the setting of Theorem~\ref{thm:glob-opt-gen-ut-tabular}, \algname{N-VR-PG} (see Algorithm~\ref{alg:N-VR-PG}) requires~$\tilde{\cO}\rb{ \varepsilon^{\fr{-2}{2a - 1}}}$ samples to achieve $J^{\star} - \Exp{ J(\theta_T) } \leq \varepsilon$ where~$J^{\star}$ is the optimal expected return. 
\end{corollary}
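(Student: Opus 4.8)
The plan is to specialize the proof of Theorem~\ref{thm:glob-opt-gen-ut-tabular} to the standard RL setting, in which $F(\lambda) = \ps{r,\lambda}$ is linear. The single structural fact that drives the improved rate is that $\nabla_{\lambda} F \equiv r$ is constant and exactly known, so that: (i) Assumption~\ref{hyp:F-concave} holds automatically, since a linear map is concave; (ii) the occupancy-measure estimate $\lambda_t$ and the induced reward $r_t = \nabla_{\lambda} F(\lambda_t)$ are never needed, and Algorithm~\ref{algo-gen-ut} collapses to its standard-RL form (Algorithm~\ref{alg:N-VR-PG}); and (iii) the estimator $g(\tau_t,\theta_t,r)$ is an unbiased estimator of $\nabla J(\theta_t)$ up to the horizon-truncation error $\cO(\g^H) = \cO(1/T)$, rather than being additionally biased by the occupancy-estimation error $\norm{r_{t-1} - \nabla_{\lambda} F(\lambda(\theta_t))} \lesssim L_\lambda\norm{\lambda_{t-1} - \lambda(\theta_t)}$ that is present in the general case.

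First I would reinstate, now for the objective $J$, the three ingredients of the Theorem~\ref{thm:glob-opt-gen-ut-tabular} argument. The hidden-concavity step combines Assumption~\ref{hyp:overparam-global-opt} with concavity of $F$ to produce a degree-one gradient-domination inequality $J^\star - J(\theta_t) \leq \ell_\theta\norm{\nabla J(\theta_t)}$, the condition $\bar\epsilon \geq \fr{\al_0(1-\g)}{2\ell_\theta(T+1)^a}$ guaranteeing that the overparametrization neighborhood stays large enough along the optimization path. Combining this with smoothness of $J$ (from Assumptions~\ref{hyp:policy-param}--\ref{hyp:smoothness-F}, the latter being automatic for linear $F$), the normalized update $\theta_{t+1} = \theta_t + \al_t d_t/\norm{d_t}$, and the elementary bound $\ps{\nabla J(\theta_t), d_t/\norm{d_t}} \geq \norm{\nabla J(\theta_t)} - 2\norm{d_t - \nabla J(\theta_t)}$, I would obtain the scalar recursion $\Exp{\Phi_{t+1}} \leq \rb{1 - \ell_\theta^{-1}\al_t}\Exp{\Phi_t} + 2\al_t e_t + \tfrac{L}{2}\al_t^2$, where $\Phi_t \eqdef J^\star - J(\theta_t)$ and $e_t \eqdef \Exp{\norm{d_t - \nabla J(\theta_t)}}$.

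The crucial step is bounding the variance-reduced gradient error $e_t$. Unrolling the STORM-type recursion for $d_t$, I would control the importance-sampling fluctuations via Lemma~\ref{lem:variance-IS-weights-control} (uniform bound $W$ and $\Var{[w]} = \cO(\al^2)$) and the per-step displacement $\norm{\theta_t - \theta_{t-1}} = \al_{t-1} \equiv \al$. With $\eta_t = \fr{2}{t+1}$ the momentum weights telescope through $\prod_{s'=s+1}^t(1-\eta_{s'}) = \fr{s(s+1)}{t(t+1)}$, giving $\Exp{\norm{d_t - \nabla J(\theta_t)}^2} = \cO\rb{\tfrac{\sigma^2}{t} + C\al^2 t}$ for the bounded oracle variance $\sigma^2$, and hence $e_t = \cO\rb{t^{-\nfr{1}{2}} + \al\, t^{\nfr{1}{2}}}$. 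This is exactly where the linear case outperforms the general one: the additive bias proportional to $\norm{\lambda_{t-1} - \lambda(\theta_t)}$, which in Theorem~\ref{thm:glob-opt-gen-ut-tabular} accumulates through the momentum and degrades the rate exponent from $a - \nfr{1}{2}$ to $2a - \nfr{3}{2}$, is identically zero here.

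Finally I would solve the scalar recursion for $\Exp{\Phi_t}$. Summing against the weights $(1 - \ell_\theta^{-1}\al)^{T-1-t}$, whose effective window has length $\cO(\ell_\theta/\al)$, renders the geometric term $(1 - \ell_\theta^{-1}\al)^T\Phi_0$ superpolynomially small and leaves a step-size contribution $\cO(\al) = \cO(T^{-a})$ together with a gradient-error contribution $\cO(e_T) = \cO\rb{T^{-\nfr{1}{2}} + T^{\nfr{1}{2} - a}}$; for $a \in (\tfrac12,1)$ the dominant term is $\cO\rb{T^{-(a - \nfr{1}{2})}}$. Since each iteration draws a single trajectory of length $H = \tilde{\cO}((1-\g)^{-1})$, the total sample budget is $\tilde{\cO}(T)$, and inverting $T^{-(a-\nfr{1}{2})} \leq \varepsilon$ yields $T = \tilde{\cO}\rb{\varepsilon^{\fr{-2}{2a-1}}}$, as claimed (with $\al_0$ chosen, e.g.\ $\al_0 = 1-\g$, to absorb the $(1-\g)$ factors hidden by the $\tilde{\cO}$). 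The main obstacle lies in the third step: with the normalized update in place of gradient truncation, one must verify that the momentum-weighted sum of the STORM error terms collapses to $\cO(\al\, t^{\nfr{1}{2}})$ and that no residual occupancy-estimation bias survives, which is precisely the mechanism sharpening the general-utility rate $(T+1)^{-(2a - \nfr{3}{2})}$ to $(T+1)^{-(a - \nfr{1}{2})}$.
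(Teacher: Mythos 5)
Your proposal is correct and takes essentially the same route as the paper's proof: specialize to linear $F$ so that Algorithm~\ref{alg:N-VR-PG} applies with no occupancy-measure estimation, turn Assumptions~\ref{hyp:F-concave} and~\ref{hyp:overparam-global-opt} into a contraction $\delta_{t+1}\leq\bigl(1-\Theta(\al_t(1-\gamma)/\ell_{\theta})\bigr)\delta_t+2\al_t\Exp{\norm{e_t}}+\cO(\al_t^2)$ via the ascent-like Lemma~\ref{lem:ascent_like_hidden_conv}, bound $\Exp{\norm{e_t}}=\cO(t^{-\nfr{1}{2}}+\al\, t^{\nfr{1}{2}})$ through the STORM recursion combined with the normalized-update control of the IS weights (Lemmas~\ref{lem:variance-IS-weights-control} and~\ref{lem:et_bound_cumul}), and unroll to obtain $\delta_T=\cO((T+1)^{-(a-\nfr{1}{2})})$ and hence $T=\tilde{\cO}(\varepsilon^{\nfr{-2}{(2a-1)}})$ with $H=\tilde{\cO}((1-\gamma)^{-1})$. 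The only differences are cosmetic: the paper keeps the hidden-concavity interpolation inside the recursion rather than extracting an explicit weak gradient-domination inequality (which there carries an extra $(1-\gamma)^{-1}$ factor and an $\cO(\epsilon)$ slack, both harmless at the target rate), and your geometric-window summation of the weighted error terms is a slightly tidier way to reach the same conclusion that the $\al^2 t^{\nfr{1}{2}}$ contribution stays subdominant.
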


\begin{remark}
We refer the reader to Appendix~\ref{subsec:app-globopt-standard-rl} for a precise statement of Corollary~\ref{cor:glob-opt-standard-rl-tabular}. 
If we know problem parameters and choose time varying step-sizes $\al_t = \fr{\al_0}{t}$, then we can obtain exactly~$\tilde{\mathcal{O}}(\varepsilon^{-2})$ sample complexity. 
\end{remark}

We can state a similar global optimality result to Corollary~\ref{cor:glob-opt-standard-rl-tabular} 
for continuous state-action spaces (see Appendix~\ref{subsec:app-globopt-gaussian}). 

\section{Large State-Action Space Setting}
\label{sec:large-state-action-space}

An important limitation of Algorithm~\ref{algo-gen-ut} and the prior work~\citep{zhang-et-al21} 
is the need to estimate the occupancy measure for each state-action pair in the case of general nonlinear utilities. 
This procedure is intractable if the state and/or action spaces are prohibitively large and finite or even worse infinite/continuous. 
In the case of infinite or continuous state-action spaces, the occupancy measure~$\lambda^{\pi_{\theta}}$ induced by a policy~$\pi_{\theta}$ cannot be represented by a vector in finite dimensions. Thus, the derivative of the utility function~$F$ w.r.t. its variable~$\lambda$ is not well defined in the chain rule in~\eqref{eq:policy-grad-general-utility} for the policy gradient. Therefore, %
more adequate  notions of derivative for optimization on the space of measures are probably needed and this would require different methodological and algorithmic tools which go beyond the scope of this work. In this paper, we propose to do a first step 
by considering the setting of large \textit{finite} state and action spaces which is already of practical interest.  

\subsection{PG for RL with General Utilities via linear function approximation of the occupancy measure}

Similarly to the classical linear function approximation of the (action-)value function in standard RL, we propose to approximate the (truncated) state-action occupancy measure by a linear combination of pre-selected basis functions in order to break the so-called curse of dimensionality. 
Our exposition is similar in spirit to the compatible function approximation framework~\citep{PGM_Sutton_1999} which was recently extended in~\citet{agarwal-et-al21} (see also~\citet{yuan-et-al22log-linear} for a recent example). However, we are not concerned here by the approximation of the action-value function nor are we considering the NPG (or Q-NPG) method but we are rather interested in approximating the discounted occupancy measure. Recall that we are considering the more general problem of RL with general utilities. 
Beyond this connection with existing work, we shall precise that our approach mostly shares the use of standard least squares regression for estimating an unknown function which is the state-action occupancy measure in our case.

Let~$m$ be a positive integer and let~$\phi: \mathcal{S} \times \mathcal{A} \to \R^m$ be a feature map. We shall approximate the truncated\footnote{We could use the non-truncated occupancy measure (see Appendix~\ref{appendix:large-sa-setting}). For simplicity of exposition, we use the truncated version, the difference between both quantities is of the order of~$\gamma^H$.} state-action occupancy measure for a given policy~$\pi_{\theta}$ ($\theta \in \R^d$ fixed) by a linear combination of feature vectors from the feature map, i.e., for every state-action pair~$(s,a) \in \mathcal{S} \times \mathcal{A}$, 
\begin{equation}
\lambda_H^{\pi_{\theta}}(s,a) \approx \ps{\phi(s,a), \omega_{\theta}}\,,
\end{equation}
for some~$\omega_{\theta} \in \R^m$ that we shall compute. 
Typically, the dimension~$m$ is much smaller than~$|\mathcal{S}| \times |\mathcal{A}|\,.$ The feature map summarizes the most important characteristics of state-action pairs. Typically, this map is designed based on experience and domain-specific knowledge or intuition regarding the MDP. Standard examples of basis functions for the feature map include radial basis functions, wavelet networks or polynomials. Nevertheless, designing such a feature map is an important practical question that is often problem-specific and we will not address it in this work.

In order to compute such a vector~$\omega_{\theta}$, we will use linear regression. Accordingly, we 
define the expected regression loss measuring the estimation quality of any parameter~$\omega$ for every~$\theta \in \R^d, \omega \in \R^{m}$ by: 
\begin{equation}
\label{eq:reg-loss1}
L_{\theta}(\omega) \eqdef \mathbb{E}_{s \sim \rho, a \sim \mathcal{U}(\mathcal{A})}[(\lambda_H^{\pi_{\theta}}(s,a) - \ps{\phi(s,a), \omega})^2]\,,
\end{equation}
where~$\rho$ is the initial distribution in the MDP and~$\mathcal{U}(\mathcal{A})$ is the uniform distribution over the action space~$\mathcal{A}\,.$\footnote{Other exploratory sampling distributions for~$s$ and~$a$ can be considered, we choose~$\rho$ and~$\mathcal{U}(\mathcal{A})$ for simplicity.}
In practice, we cannot minimize~$L_{\theta}$ exactly since this would require having access to the true state-action occupancy measure and averaging over all state-action pairs~$s \sim \rho, a \sim \mathcal{U}(\mathcal{A})\,.$ 
Therefore, we compute an approximate solution~$\hat{\omega}_{\theta} \approx \argmin_{\omega} L_{\theta}(\omega)\,.$ For this procedure, we need: (i) unbiased estimates of the true truncated state-action occupancy measure~$\lambda_H^{\pi_{\theta}}(s,a)$ (or the non-truncated one~$\lambda^{\pi_{\theta}}(s,a)$) for~$s \sim \rho, a \sim \mathcal{U}(\mathcal{A})$ and (ii) a regression solver based on samples to minimize~$L_{\theta}$ as defined in~\eqref{eq:reg-loss1}.
As for item~(i), we use a Monte-Carlo estimate~$\hat{\lambda}_H^{\pi_{\theta}}(s,a)$ of the truncated occupancy measure computed from a single rollout (see Algorithm~\ref{algo:MC-estimate-lambda-pi-theta} for details).\footnote{We can also compute an unbiased estimator of the true occupancy measure~$\lambda^{\pi_{\theta}}(s,a)$ %
via a standard procedure with a random horizon~$H$ following a geometric distribution %
(see Algorithm~\ref{algo:geom-rollout-estimate-true-occup-measure}).} 
An unbiased stochastic gradient of the function~$L_{\theta}$ in~\eqref{eq:reg-loss1} is then given by 
\begin{equation} 
\label{eq:stoch-grad-lin-reg}
\hat{\nabla}_{\omega} L_{\theta}(\omega) \eqdef 2 (\ps{\phi(s,a),\omega} - \hat{\lambda}_H^{\pi_{\theta}}(s,a))\, \phi(s,a)\,.
\end{equation}
We can then solve the regression problem consisting in minimizing~$L_{\theta}$ in~\eqref{eq:reg-loss1} via the averaged SGD algorithm (see Algorithm~\ref{algo:sgd-subroutine2}) as proposed in~\citet{bach-moulines13}. %
\begin{algorithm}[h]
   \caption{(averaged) SGD for Occupancy Measure Estimation via Linear Function Approximation}
   \label{algo:sgd-subroutine2}
\begin{algorithmic}
   \STATE {\bfseries Input:} $\omega_0 \in \R^{m}, K \geq 1, \beta > 0, \rho, \pi_{\theta}\,.$ 
   \FOR{$k = 0, \ldots, K-1$}
        \STATE Sample~$s \sim \rho; a \sim \mathcal{U}(\mathcal{A})$ 
        \STATE Compute an estimator~$\hat{\lambda}_H^{\pi_{\theta}}(s,a)$ via Algorithm~\ref{algo:MC-estimate-lambda-pi-theta}
        \STATE $\hat{\nabla}_{\omega} L_{\theta}(\omega_k) \eqdef 2 (\ps{\phi(s,a),\omega_k} - \hat{\lambda}_H^{\pi_{\theta}}(s,a))\, \phi(s,a)$
        \STATE $\omega_{k+1} = \omega_k - \beta\, \hat{\nabla}_{\omega} L_{\theta}(\omega_k)$ 
    \ENDFOR
	\STATE {\bfseries Return:} $\hat{\omega}_{\theta} = \frac{1}{K} \sum_{k=1}^K \omega_k$
\end{algorithmic}
\end{algorithm}

Using this procedure, we propose a simple stochastic PG algorithm for solving the RL problem with general utilities for large state action spaces. 
Since this large-scale setting has not been priorly addressed for general utilities to the best of our knowledge, we focus on a simpler PG algorithm without the variance reduction and normalization features of our algorithm in Section~\ref{sec:norm-vr-pg}. Incorporating variance reduction to occupancy measure estimates seems more involved with our linear regression procedure for function approximation. 
We leave it for future work to design a method with improved sample complexity using variance reduction. 

\begin{algorithm}[h]
   \caption{Stochastic PG for RL with General Utilities via Linear Function Approximation of the Occupancy Measure}
   \label{algo-gen-ut-func-approx2}
\begin{algorithmic}
   \STATE {\bfseries Input:} $\theta_0 \in \R^d, T, N \geq 1, \alpha > 0, K \geq 1, \beta > 0, H\,.$ 
    \STATE Run Algorithm~\ref{algo:sgd-subroutine2} with policy~$\pi_{\theta_0}$ and define from its output~$\hat{\lambda}_0(\cdot, \cdot) = \ps{\phi(\cdot, \cdot),\hat{\omega}_{\theta_0}}\,.$
    \STATE $r_{-1} = \nabla_{\lambda} F(\hat{\lambda}_0)$
    \FOR{$t=0, \ldots, T -  1$}
        \STATE Run Algorithm~\ref{algo:sgd-subroutine2} with policy~$\pi_{\theta_t}$ and define from its output~$\hat{\lambda}_t(\cdot, \cdot) = \ps{\phi(\cdot, \cdot),\hat{\omega}_{\theta_t}}\,.$
        \STATE $r_t = \nabla_{\lambda} F(\hat{\lambda}_t)$
        \STATE Sample a batch of~$N$ independent trajectories~$(\tau_t^{(i)})_{1 \leq i \leq N}$ of length~$H$ from~$\mathbb{M}$ and~$\pi_{\theta_t}$
        \STATE $\theta_{t+1} = \theta_t + \frac{\alpha}{N} \sum_{i = 1}^N g(\tau_t^{(i)}, \theta_t,r_{t-1})$
    \ENDFOR
	\STATE {\bfseries Return:} $\theta_T$
\end{algorithmic}
\end{algorithm}
\begin{remark}
When running Algorithm~\ref{algo-gen-ut-func-approx2}, notice that the vector~$\hat{\lambda}_t \in \R^{|\mathcal{S}| \times |\mathcal{A}|}$ (and hence the vector~$r_t$) does not need to be computed for all state-action pairs as this would be unrealistic and even impossible in the large state-action setting we are considering. Indeed, at each iteration, one does only need to compute~$(r_t(s_h^{(t)},a_h^{(t)}))_{0 \leq h \leq H-1}$ where~$\tau_t = (s_h^{(t)}, a_h^{(t)})_{0 \leq h \leq H-1}$ to obtain the stochastic policy gradient~$g(\tau_t, \theta_t, r_{t-1})$ as defined in~\eqref{eq:pg-estimate}.
\end{remark}

\subsection{Convergence and sample complexity analysis}

In this section, we provide a convergence analysis of Algorithm~\ref{algo-gen-ut-func-approx2}. 
For every integer~$t$, let~$\omega_*(\theta_t) \in \argmin_{\omega} L_{\theta_t}(\omega)$.
We decompose the regression loss into the statistical error measuring the accuracy of our approximate solution %
and the approximation error measuring the distance between the true occupancy measure and its best linear approximation using the feature map~$\phi$:
\begin{equation*}
L_{\theta_t}(\hat{\omega}_t) =  \underbrace{L_{\theta_t}(\hat{\omega}_t) - L_{\theta_t}(\omega_*(\theta_t))}_{\text{statistical error}} + \underbrace{L_{\theta_t}(\omega_*(\theta_t))}_{\text{approximation error}}\,,
\end{equation*}
where we use the shorthand notation~$\hat{\omega}_t = \hat{\omega}_{\theta_t}$ and~$\hat{\omega}_{\theta_t}$ is the output of Algorithm~\ref{algo:sgd-subroutine2} after~$K$ iterations. We assume that both the statistical and approximation errors are uniformly bounded along the iterates of our algorithm. Such assumptions have been considered for instance in a different context in the compatible function approximation framework (see Assumptions~6.1.1 and Corollary~21 in \citet{agarwal-et-al21}, also Assumptions~1 and~5 in \citet{yuan-et-al22log-linear}).  

\begin{assumption}[Bounded statistical error]
\label{hyp:bounded-stat-error}
There exists~$\epsilon_{\text{stat}} > 0$ s.t. for all iterations~$t \geq 0$ of Algorithm~\ref{algo-gen-ut-func-approx2}, we have
$\mathbb{E}[L_{\theta_t}(\hat{\omega}_{\theta_t}) - L_{\theta_t}(\omega_*(\theta_t)) ] \leq \epsilon_{\text{stat}}\,.$
\end{assumption}

We will see in the next section that we can guarantee~$\epsilon_{\text{stat}} = \mathcal{O}(1/K)$ where~$K$ is the number of iterations of SGD (Algorithm~\ref{algo:sgd-subroutine2}) to find the approximate solution~$\hat{\omega}_t$ at each iteration~$t$ of Algorithm~\ref{algo-gen-ut-func-approx2}.   

\begin{assumption}[Bounded approximation error]
\label{hyp:bounded-approx-error}
There exists~$\epsilon_{\text{approx}} > 0$ s.t. for all iterations~$t \geq 0$ of Algorithm~\ref{algo-gen-ut-func-approx2}, we have $\mathbb{E}[ L_{\theta_t}(\omega_*(\theta_t)) ] \leq \epsilon_{\text{approx}}\,.$
\end{assumption}
This error is due to function approximation and depends on the expressiveness of the approximating function class. 
The true state-action occupancy measure to be estimated may not lie in the function approximation class under consideration.

\begin{theorem}
\label{thm:fosp-gen-ut-lin-fa2-eps-stat-approx}
Let Assumptions~\ref{hyp:policy-param}, \ref{hyp:smoothness-F}, \ref{hyp:bounded-stat-error} and~\ref{hyp:bounded-approx-error} hold true. In addition, suppose that there exists~$\rho_{\min} > 0$ s.t. 
~$\rho(s) \geq \rho_{\min}$ for all~$s \in \mathcal{S}\,.$ Let~$T \geq 1$ be an integer and let~$(\theta_t)$ be the sequence generated by Algorithm~\ref{algo-gen-ut-func-approx2} with a positive step size~$\alpha = \mathcal{O}(1)$ and batch size~$N \geq 1$.
Then, %
\begin{multline}
\mathbb{E}[\|\nabla_{\theta} F(\lambda(\bar{\theta}_T))\|^2] \leq 
\mathcal{O}\left(\frac{1}{T}\right) + \mathcal{O}\left(\frac{1}{N}\right) 
+ \mathcal{O}(\gamma^{2H})\\
+ \mathcal{O}(\epsilon_{\text{stat}} + \epsilon_{\text{approx}})\,,
\end{multline}
where~$\bar{\theta}_T \in \{\theta_1, \cdots, \theta_T\}$ uniformly at random.
\end{theorem}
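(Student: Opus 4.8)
The plan is to treat Algorithm~\ref{algo-gen-ut-func-approx2} as vanilla stochastic gradient ascent on the smooth surrogate $J_F(\theta) \eqdef F(\lambda(\theta))$ and run the standard descent-lemma argument for nonconvex optimization, while carefully tracking the three sources of bias that separate the update direction from the exact policy gradient: horizon truncation, the linear-regression estimate of the occupancy measure, and the one-step staleness of the reward $r_{t-1}$. First I would invoke the smoothness of $\theta \mapsto F(\lambda(\theta))$ guaranteed by Assumptions~\ref{hyp:policy-param}--\ref{hyp:smoothness-F} (with constant $L$) to write, for the update $\theta_{t+1} = \theta_t + \al \hat{g}_t$ with $\hat{g}_t \eqdef \fr{1}{N}\sum_{i=1}^N g(\tau_t^{(i)}, \theta_t, r_{t-1})$, the ascent inequality $J_F(\theta_{t+1}) \geq J_F(\theta_t) + \al \ps{\nabla J_F(\theta_t), \hat{g}_t} - \fr{L\al^2}{2}\norm{\hat{g}_t}^2$. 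Conditioning on the filtration $\mathcal{F}_t$ generated before the batch $(\tau_t^{(i)})_i$ is drawn, the key point is that $r_{t-1}$ is $\mathcal{F}_t$-measurable (this is exactly why the algorithm uses $r_{t-1}$ rather than $r_t$), so $\bar{g}_t \eqdef \Exp{\hat{g}_t \mid \mathcal{F}_t}$ is, up to an $\cO(\g^H)$ horizon-truncation error, the exact policy gradient $\nabla_\theta V^{\pi_{\theta_t}}(r_{t-1})$ associated with reward $r_{t-1}$ (cf.~\eqref{eq:policy-grad-general-utility}). Writing $e_t \eqdef \bar{g}_t - \nabla J_F(\theta_t)$ and applying Young's inequality to $\ps{\nabla J_F(\theta_t), \bar{g}_t}$ isolates the useful descent term $\fr{1}{2}\norm{\nabla J_F(\theta_t)}^2 - \fr{1}{2}\norm{e_t}^2$.

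The bulk of the work is then bounding $\norm{e_t}$ and the conditional variance of $\hat{g}_t$. From $\norm{\nabla_\lambda F}_\infty \leq l_\lambda$ and $\norm{\nabla \log \pi_\theta}\leq 2 l_\psi$ one gets $\norm{g(\tau,\theta,r)} \leq \fr{2 l_\psi l_\lambda}{(1-\g)^2}$, so the single-trajectory estimator is bounded and batch averaging yields $\Exp{\norm{\hat{g}_t - \bar{g}_t}^2 \mid \mathcal{F}_t} \leq \fr{\sigma^2}{N}$, which will produce the $\cO(1/N)$ term. For the bias I would split $e_t = \rb{\bar{g}_t - \nabla_\theta V^{\pi_{\theta_t}}(r_{t-1})} + \nabla_\theta V^{\pi_{\theta_t}}(r_{t-1} - r_t^\star)$, where $r_t^\star \eqdef \nabla_\lambda F(\lambda(\theta_t))$. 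The first bracket is the truncation error of estimator~\eqref{eq:pg-estimate}, of order $\cO(\g^H)$; the second, using linearity of $r \mapsto \nabla_\theta V^{\pi_\theta}(r)$ and $\norm{\nabla_\theta V^{\pi_\theta}(r)} \leq \fr{l_\psi}{(1-\g)^2}\norm{r}_\infty$, is controlled by $\norm{r_{t-1} - r_t^\star}_\infty \leq L_\lambda \norm{\hat{\lambda}_{t-1} - \lambda(\theta_t)}_2$ (Assumption~\ref{hyp:smoothness-F}). I would further split this occupancy error as $\norm{\hat{\lambda}_{t-1} - \lambda_H^{\pi_{\theta_{t-1}}}}_2 + \norm{\lambda_H^{\pi_{\theta_{t-1}}} - \lambda(\theta_{t-1})}_2 + \norm{\lambda(\theta_{t-1}) - \lambda(\theta_t)}_2$: the middle occupancy-truncation term is $\cO(\g^H)$; the first is the regression error, converted from the loss $L_{\theta_{t-1}}$ to an unweighted $\ell_2$ error via $\norm{\hat{\lambda}_{t-1} - \lambda_H^{\pi_{\theta_{t-1}}}}_2^2 \leq \fr{|\mathcal{A}|}{\rho_{\min}} L_{\theta_{t-1}}(\hat{\omega}_{t-1})$ (this is precisely where $\rho(s) \geq \rho_{\min}$ and uniform action sampling enter), whose expectation is bounded by $\epsilon_{\text{stat}} + \epsilon_{\text{approx}}$ under Assumptions~\ref{hyp:bounded-stat-error}--\ref{hyp:bounded-approx-error}.

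The leftover staleness term $\norm{\lambda(\theta_{t-1}) - \lambda(\theta_t)}_2$ I would bound by $C_\lambda \norm{\theta_t - \theta_{t-1}} = C_\lambda \al \norm{\hat{g}_{t-1}}$ using Lipschitz continuity of $\theta \mapsto \lambda(\theta)$ (a consequence of Assumption~\ref{hyp:policy-param}). Summing the conditional ascent inequality over $t$, telescoping $J_F$ and using $J_F \leq F^\star$ leaves a constant multiple of $\fr{\al}{T}\sum_t \Exp{\norm{\nabla J_F(\theta_t)}^2}$ on the left and $\fr{F^\star - J_F(\theta_0)}{T}$, the squared biases, and a $\cO(\al^2\sigma^2/N)$ variance term on the right, once $\al \leq \fr{1}{4L}$ so that $-\fr{L\al^2}{2}\norm{\hat{g}_t}^2$ absorbs the $\norm{\nabla J_F}^2$ part of $\norm{\bar{g}_t}^2$. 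The one delicate point—and the step I expect to be the main obstacle—is that the staleness contributes a term proportional to $\al^2 \Exp{\norm{\hat{g}_{t-1}}^2}$, which naively is only $\cO(\al^2)$ and would leave a \emph{constant} error floor that no amount of sampling could remove. To avoid this I would use the self-bounding estimate $\Exp{\norm{\hat{g}_{t-1}}^2} \leq 2\Exp{\norm{\nabla J_F(\theta_{t-1})}^2} + \cO\rb{\Exp{\norm{e_{t-1}}^2} + \sigma^2/N}$, so that for $\al$ a sufficiently small constant the gradient-norm contribution can be moved back to the left-hand side (shrinking its coefficient, e.g. from $\fr{\al}{4}$ to $\fr{\al}{8}$) and the residual $\al^2$ multiples of the bias and variance terms are dominated by the already-present terms. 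Dividing through by $\al T$ and identifying $\bar{\theta}_T$ with the uniform sample then yields $\Exp{\norm{\nabla_\theta F(\lambda(\bar{\theta}_T))}^2} \leq \cO(1/T) + \cO(1/N) + \cO(\g^{2H}) + \cO(\epsilon_{\text{stat}} + \epsilon_{\text{approx}})$, as claimed.
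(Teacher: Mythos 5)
Your proposal is correct and follows essentially the same route as the paper's proof: the same smoothness-based ascent inequality, the same decomposition of the gradient-estimation error into horizon truncation, regression error (statistical plus approximation, converted to an unweighted $\ell_2$ bound via the $|\mathcal{A}|/\rho_{\min}$ factor), one-step staleness of $r_{t-1}$, and batch variance of order $1/N$. The only minor technical difference is how the $\alpha^2\,\mathbb{E}[\|\hat{g}_{t-1}\|^2]$ staleness contribution is absorbed: you self-bound it by the true gradient norm plus error terms and move the gradient part to the left-hand side, whereas the paper telescopes it against the retained $+\tfrac{\alpha}{8}\|u_t\|^2$ term from the ascent inequality; both devices are standard and succeed under the same constant step-size restriction.
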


A few comments are in order regarding Theorem~\ref{thm:fosp-gen-ut-lin-fa2-eps-stat-approx} :
(1)~The specific structure of the softmax parametrization is not needed for Theorem~\ref{thm:fosp-gen-ut-lin-fa2-eps-stat-approx}. Indeed, this softmax parametrization is only useful to control IS weights used %
for variance reduction in Algorithm~\ref{algo-gen-ut}. Assumption~\ref{hyp:policy-param} can be replaced by any smooth policy parametrization satisfying the same standard conditions with~$\nabla \log \pi_{\theta}$ instead of~$\psi$\,; 
(2)~If the true (truncated) occupancy measure does not lie in the class of linear functions described, a positive function approximation error~$\epsilon_{\text{approx}}$ is incurred due to the bias induced by the limited expressiveness of the linear function approximation. A possible natural alternative is to consider richer classes such as neural networks to approximate the state-action occupancy measure and reduce the approximation bias. In this more involved case, the expected least squares (or other metrics) regression loss would likely become nonconvex and introduce further complications in our analysis. Such an extension would require other technical tools that are beyond the scope of the present paper and we leave it for future work.

In order to establish the total sample complexity of our algorithm, we need to compute the number of samples needed in the occupancy measure estimation subroutine of Algorithm~\ref{algo:sgd-subroutine2}. To do so, we now specify the number of SGD iterations required in Algorithm~\ref{algo:sgd-subroutine2} to approximately solve our regression problem. In particular, we will show that we can achieve~$\epsilon_{\text{stat}} = \mathcal{O}(1/K)$ where~$K$ is the number of iterations of the SGD subroutine using Theorem~1 in \citet{bach-moulines13}. Before stating our result, we make an additional standard assumption on the feature map~$\phi\,.$
\begin{assumption}
\label{hyp:feature-map}
The feature map~$\phi: \mathcal{S} \times \mathcal{A} \to \R^m$ satisfies: 
(i) There exists~$B > 0$ s.t. for all~$s \in \mathcal{S}, a \in \mathcal{A}$, $\|\phi(s,a)\| \leq B$\, and (ii) There exists~$\mu > 0$ s.t. $\mathbb{E}_{s \sim \rho, a \sim \mathcal{U}(\mathcal{A})}[\phi(s,a) \phi(s,a)^T] \succcurlyeq \mu I_m$ where~$I_m \in \R^{m \times m}$ is the identity matrix.
\end{assumption}
Assumption~\ref{hyp:feature-map} guarantees that the covariance matrix of the feature map is invertible. 
Similar standard assumptions have been commonly considered for linear function approximation settings \citep{tsitsiklis-vanroy97}. 

We are now ready to state a corollary of Theorem~\ref{thm:fosp-gen-ut-lin-fa2-eps-stat-approx} establishing the total sample complexity of Algorithm~\ref{algo-gen-ut-func-approx2} to achieve an $\epsilon$-stationary point of the objective function. 

\begin{corollary}%
\label{cor:fosp-gen-ut-lin-fa2}
Let Assumptions~\ref{hyp:policy-param}, \ref{hyp:smoothness-F}, \ref{hyp:bounded-approx-error} and~\ref{hyp:feature-map} hold in the setting of Theorem~\ref{thm:fosp-gen-ut-lin-fa2-eps-stat-approx} where we run the SGD subroutine of Algorithm~\ref{algo:sgd-subroutine2} with step size~$\beta = 1/8 B^2$ and~$\omega_0 = 0$ for~$K$ iterations at each timestep~$t$ of Algorithm~\ref{algo-gen-ut-func-approx2}. Then, for every~$\epsilon > 0$, setting~$T = \mathcal{O}(\epsilon^{-2})$, $N = \mathcal{O}(\epsilon^{-2})$, $K = \mathcal{O}(\epsilon^{-2})$ and~$H = \mathcal{O}(\log(\frac{1}{\epsilon}))$ guarantees that~$\mathbb{E}[\|\nabla_{\theta} F(\lambda(\bar{\theta}_T))\|] \leq \mathcal{O}(\epsilon) + \mathcal{O}(\sqrt{\epsilon_{\text{approx}}})$ where~$\bar{\theta}_T \in \{\theta_1, \cdots, \theta_T\}$ uniformly at random. 
The total sample complexity to reach an~$\epsilon$-stationary point %
(up to the~$\mathcal{O}(\sqrt{\epsilon_{\text{approx}}})$ error floor)
is given by~$T \times (K + M) \times H = \tilde{\mathcal{O}}(\epsilon^{-4})\,.$
\end{corollary}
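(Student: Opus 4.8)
The plan is to combine the per-iteration bound of Theorem~\ref{thm:fosp-gen-ut-lin-fa2-eps-stat-approx} with an explicit quantitative control of the statistical error $\epsilon_{\text{stat}}$ produced by the SGD subroutine, and then to balance the four resulting error terms through the prescribed choices of $T$, $N$, $K$ and $H$. Concretely, Theorem~\ref{thm:fosp-gen-ut-lin-fa2-eps-stat-approx} already gives $\mathbb{E}[\|\nabla_{\theta} F(\lambda(\bar{\theta}_T))\|^2] \leq \mathcal{O}(1/T) + \mathcal{O}(1/N) + \mathcal{O}(\gamma^{2H}) + \mathcal{O}(\epsilon_{\text{stat}} + \epsilon_{\text{approx}})$, so the only ingredient missing relative to that statement is to replace the abstract Assumption~\ref{hyp:bounded-stat-error} by the rate $\epsilon_{\text{stat}} = \mathcal{O}(1/K)$; everything else is substitution, a passage from a squared to an unsquared gradient norm, and a sample count.

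The technical heart is therefore establishing $\epsilon_{\text{stat}} = \mathcal{O}(1/K)$ uniformly over the iterates via the averaged-SGD analysis of \citet{bach-moulines13}. For a fixed $\theta$, minimizing $L_{\theta}(\omega) = \mathbb{E}_{s\sim\rho,\,a\sim\mathcal{U}(\mathcal{A})}[(\lambda_H^{\pi_\theta}(s,a) - \langle \phi(s,a),\omega\rangle)^2]$ is exactly a least-mean-squares problem: the stochastic gradient in~\eqref{eq:stoch-grad-lin-reg} is unbiased because the Monte Carlo estimator $\hat{\lambda}_H^{\pi_\theta}(s,a)$ produced by Algorithm~\ref{algo:MC-estimate-lambda-pi-theta} is an unbiased estimate of $\lambda_H^{\pi_\theta}(s,a)$, and its variance is bounded since $\hat{\lambda}_H^{\pi_\theta}(s,a) \leq 1/(1-\gamma)$ and $\|\phi(s,a)\| \leq B$ by Assumption~\ref{hyp:feature-map}(i). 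Assumption~\ref{hyp:feature-map}(ii), i.e. $\mathbb{E}[\phi\phi^{T}] \succcurlyeq \mu I_m$, makes the population Hessian nondegenerate, so $\omega_*(\theta)$ is the unique minimizer and is uniformly bounded, $\|\omega_*(\theta)\| \leq B/(\mu(1-\gamma))$ up to constants. With the step size $\beta = 1/(8B^2) \leq 1/(4\sup\|\phi\|^2)$ and $\omega_0 = 0$, Theorem~1 of \citet{bach-moulines13} then yields $\mathbb{E}[L_\theta(\hat\omega_\theta) - L_\theta(\omega_*(\theta))] \leq \mathcal{O}(1/K)$, with a constant depending only on $B$, $\mu$, $1/(1-\gamma)$ and $\|\omega_0 - \omega_*(\theta)\|$, all bounded uniformly in $\theta$. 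The one subtlety I would handle carefully is measurability: since $\theta_t$ is random, I would apply the bound conditionally on $\theta_t$ (the subroutine draws fresh samples given $\theta_t$) and then take the outer expectation, which preserves the $\mathcal{O}(1/K)$ rate and hence verifies Assumption~\ref{hyp:bounded-stat-error} with $\epsilon_{\text{stat}} = \mathcal{O}(1/K)$ for every $t$.

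With this in hand I would substitute the parameter choices into Theorem~\ref{thm:fosp-gen-ut-lin-fa2-eps-stat-approx}. Taking $T = N = K = \mathcal{O}(\epsilon^{-2})$ makes the $1/T$, $1/N$ and $\epsilon_{\text{stat}} = \mathcal{O}(1/K)$ terms each $\mathcal{O}(\epsilon^2)$, while $H = \mathcal{O}(\log(1/\epsilon))$ makes $\gamma^{2H} = \mathcal{O}(\epsilon^2)$, leaving $\mathbb{E}[\|\nabla_\theta F(\lambda(\bar\theta_T))\|^2] \leq \mathcal{O}(\epsilon^2) + \mathcal{O}(\epsilon_{\text{approx}})$. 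Applying Jensen's inequality together with subadditivity of the square root, $\mathbb{E}[\|\nabla_\theta F(\lambda(\bar\theta_T))\|] \leq \sqrt{\mathbb{E}[\|\nabla_\theta F(\lambda(\bar\theta_T))\|^2]} \leq \mathcal{O}(\epsilon) + \mathcal{O}(\sqrt{\epsilon_{\text{approx}}})$, which is exactly the claimed accuracy up to the approximation-error floor.

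Finally I would count samples. Each of the $T$ outer iterations of Algorithm~\ref{algo-gen-ut-func-approx2} spends $K$ rollouts of length $H$ in the regression subroutine (one per SGD step of Algorithm~\ref{algo:sgd-subroutine2}) plus $N$ trajectories of length $H$ for the policy-gradient batch, i.e. $\mathcal{O}((K+N)H)$ transitions per iteration; summing over $T$ iterations gives $T \times (K+N) \times H = \mathcal{O}(\epsilon^{-2} \cdot \epsilon^{-2} \cdot \log(1/\epsilon)) = \tilde{\mathcal{O}}(\epsilon^{-4})$. I expect the main obstacle to be the uniform application of the \citet{bach-moulines13} bound across the random iterates $\theta_t$ --- in particular verifying that the variance of the Monte Carlo occupancy estimates and the norm $\|\omega_*(\theta)\|$ are controlled by constants independent of $\theta$, so that $\epsilon_{\text{stat}} = \mathcal{O}(1/K)$ genuinely holds for all $t$ simultaneously; the remaining steps are routine substitution and the square-root passage.
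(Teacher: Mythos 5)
Your proposal matches the paper's proof in all essentials: both reduce the problem to verifying the hypotheses of Theorem~1 of \citet{bach-moulines13} for the least-squares loss $L_{\theta}$ (unbiasedness of the Monte-Carlo occupancy estimate, bounded features, the covariance lower bound $\mu I_m$ yielding $\|\omega_*(\theta)\| \leq B/(\mu(1-\gamma))$ uniformly in $\theta$, hence a uniform $\epsilon_{\text{stat}} = \mathcal{O}(1/K)$), then substitute into Theorem~\ref{thm:fosp-gen-ut-lin-fa2-eps-stat-approx}, pass to the unsquared norm via Jensen, and count $T \times (K+N) \times H$ samples. Your explicit handling of the conditioning on the random iterate $\theta_t$ is a point the paper leaves implicit, but it does not change the argument.
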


In terms of the target accuracy~$\epsilon$, this result matches the optimal sample complexity to obtain an~$\epsilon$-FOSP for nonconvex smooth stochastic optimization via SGD (without variance reduction) up to a log factor.

\section{Numerical Simulations} 

\begin{figure*}[ht]
       \begin{minipage}{.5\textwidth}
        \centering
        \includegraphics[width=.9\textwidth]{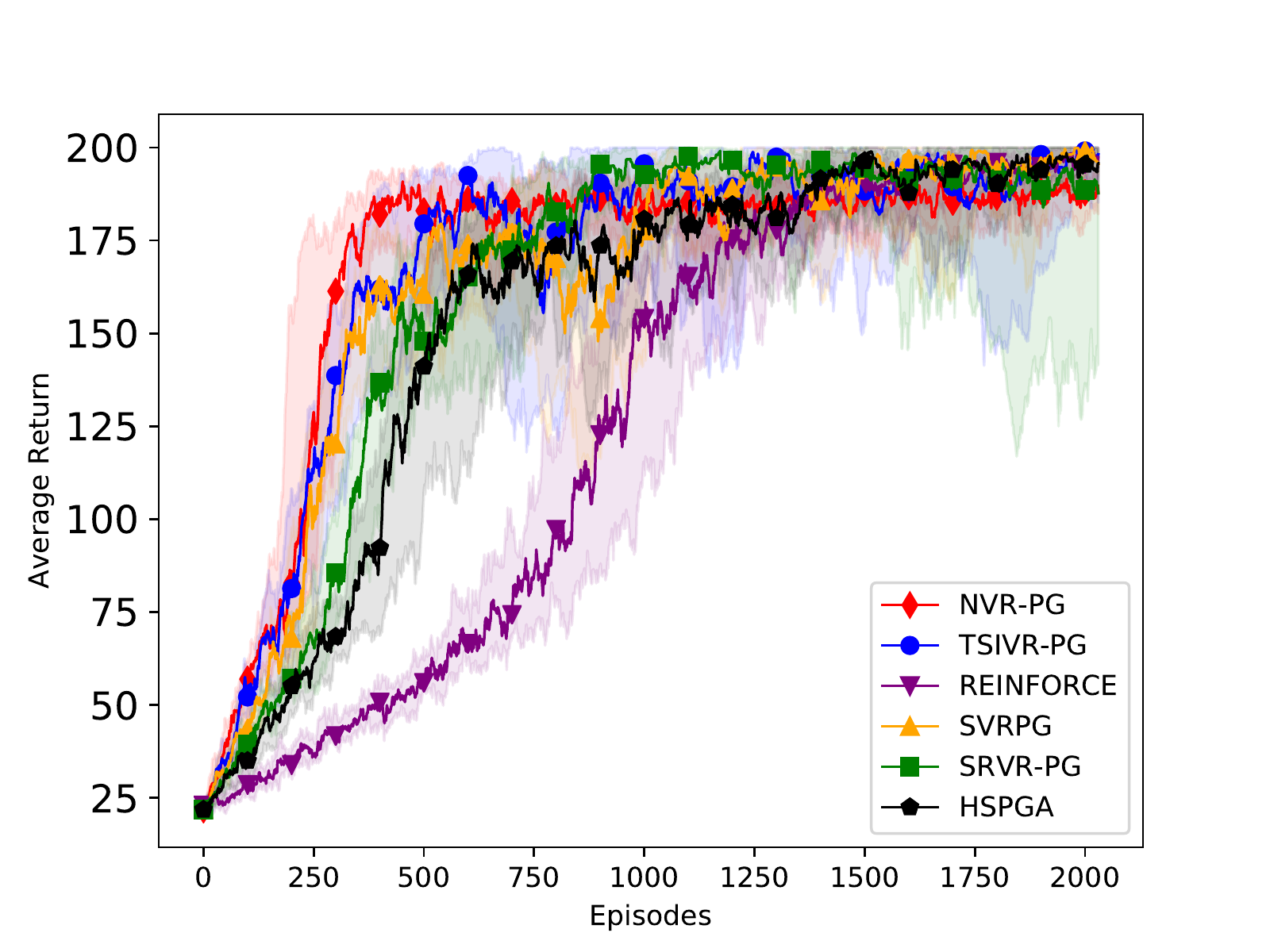}
        \end{minipage}
        \begin{minipage}{.5\textwidth}
        \includegraphics[width=.9\textwidth]{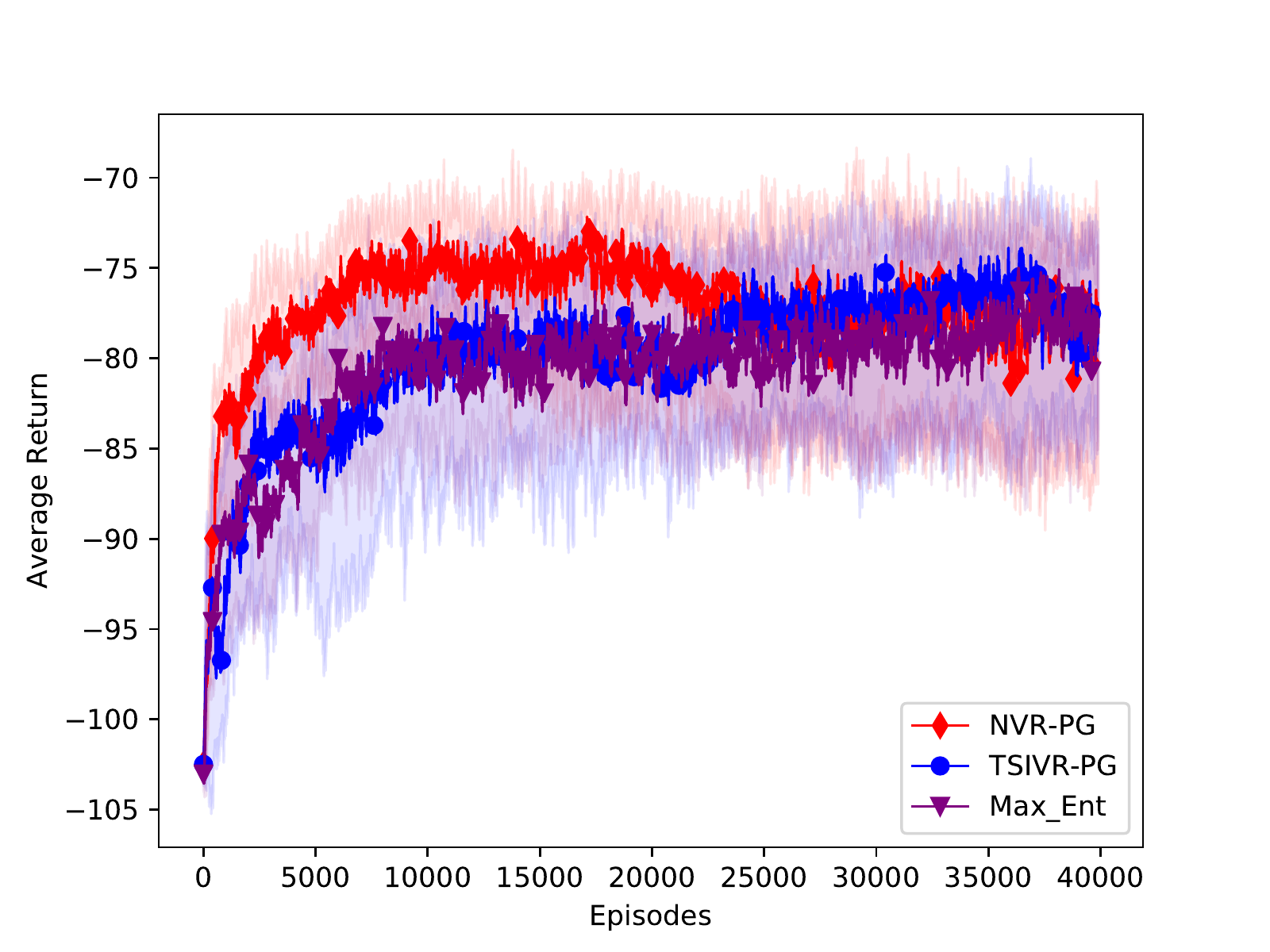}
       \end{minipage}
        \caption{(right) Nonlinear objective maximization in the FrozenLake environment and (left) Standard RL in the CartPole environment. In both cases, the performance curves represent the median return over 20 runs of the algorithms (with 20 seeds) and the shaded colored areas are computed with the 1/4 and 3/4 quantiles of the outcomes.}
        \label{fig:plots}
\end{figure*}

In this section, we present two simple numerical experiments to illustrate the performance of our algorithm compared to prior work and complement our theoretical contributions. Our implementation is based on the code provided in \citet{zhang-et-al21}.\footnote{Available in OpenReview (\url{https://openreview.net/forum?id=Re_VXFOyyO}).} Our goal is to show that our algorithm can be competitive compared to existing algorithms while gaining simplicity. We leave further experimental investigations in larger scale problems for future work.   

\noindent\textbf{(a) Nonlinear objective function maximization.} We consider a general utility RL problem where the objective function~$F: \R_{+}^{|\mathcal{S}| \times |\mathcal{A}|} \to \R$ is a nonlinear function of the occupancy measure defined for every~$\lambda \in \R_{+}^{|\mathcal{S}| \times |\mathcal{A}|}$ by: 
\begin{equation*}
F(\lambda) \eqdef \sum_{s \in \mathcal{S}} \log\left( \sum_{a \in \mathcal{A}} \lambda_{s,a} + \sigma \right)\,, 
\end{equation*}
where~$\sigma$ is a small constant which we set to~$\sigma = 0.125$. We test our algorithm in the FrozenLake8x8 benchmark environment available in OpenAI gym~\cite{brockman-et-al16openaigym}. 
The result of the experiment is illustrated in Figure~\ref{fig:plots}~(right). The performance curves show that our \algname{NVR-PG} algorithm shows a relatively faster convergence compared to the \algname{TSIVR-PG} algorithm~\cite{zhang-et-al21} and the \algname{MaxEnt} algorithm which is specific to the maximum entropy exploration problem (by~\citet{hazan-et-al19}) while the final performances are comparable (see also the overlapping shaded areas). We refer the reader to Section~6.3 in~\citet{zhang-et-al21} for further details regarding our setting. 

\noindent\textbf{(b) Standard RL.} While the focus of our work is on the general utility case beyond the standard RL setting, we also perform simulations for the particular case where the objective is a linear function of the state action occupancy measure (i.e., the standard cumulative reward setting) in the CartPole benchmark environment~\cite{brockman-et-al16openaigym}. Figure~\ref{fig:plots} (left) shows that our algorithm is competitive with \algname{TSIVR-PG} (actually even slightly faster, see between 250-500 episodes and see also the shaded areas) and all other algorithms which are not designed for the general utility case (\algname{REINFORCE}~\cite{REINFORCE_Williams_1992}, \algname{SVRPG}~\cite{xu-et-al20uai}, \algname{SRVR-PG}~\cite{xu-et-al20iclr}, \algname{HSPGA}~\cite{pham-et-al20}) 
while gaining simplicity compared to existing variance-reduced methods. Indeed, our algorithm is single-loop and does not require two distinct batch sizes and checkpoints nor does it require bounded importance sampling weights. Hyperparameters of the algorithms are tuned.

\section{Perspectives}%

Compared to the standard RL setting, the general utilities setting is much less studied. A better understanding of the hidden convexity structure of the problem and its interplay with general policy parametrization would be interesting to derive global optimality guarantees under milder assumptions which would accommodate more practical and expressive policy parametrizations such as neural networks. Regarding the case of large state action spaces, 
future avenues of research include designing more efficient procedures and guarantees for approximating and estimating the occupancy measure to better address the curse of dimensionality as well as investigating the dual point of view for designing more efficient algorithms. Addressing the case of continuous state-action spaces is also an interesting research direction.

\section*{Acknowledgements}
This work was supported by ETH AI Center doctoral fellowship, ETH Foundations of Data Science (ETH-FDS), and ETH Research Grant funded via ETH Zurich Foundation.

\clearpage

\bibliography{biblio}
\bibliographystyle{icml2023}

\newpage

\appendix
\onecolumn

\tableofcontents

\begin{center}
    \bfseries\Large Appendix
\end{center}

\section{\algname{N-VR-PG} algorithm for standard RL setting with cumulative reward}
\label{app:nvrpg-standard-RL}

In this section, we report the special case of Algorithm~\ref{algo-gen-ut-func-approx2} for the standard cumulative sum of rewards setting. 

\begin{algorithm}[h]
   \caption{\algname{N-VR-PG} (Standard Cumulative Reward)}
   \label{alg:N-VR-PG}
\begin{algorithmic}
   \STATE {\bfseries Input:} $\theta_0$, $T$, $H$, $\{\eta_t\}_{t\geq 0}$, $\{\al_t\}_{t\geq 0}\,.$
   \STATE Sample $\tau_0$ of length~$H$ from~$\mathbb{M}$
   \STATE $d_0 = g(\tau_0, \theta_0)$
   \STATE $\theta_1 = \theta_0 + \al_0 \frac{d_0}{\|d_0\|}$
   \FOR{$t=1, \ldots, T -  1$}
        \STATE Sample $\tau_t$ of length~$H$ from~$\mathbb{M}$ and~$\pi_{\theta_t}$
        \STATE $v_t = g(\tau_t, \theta_t) - w(\tau_t|\theta_{t-1},\theta_t)g(\tau_t, \theta_{t-1})$
        \STATE $d_t = \eta_t g(\tau_t, \theta_t) + (1 - \eta_t) (d_{t-1} +  v_t)$
        \STATE $\theta_{t+1} = \theta_t + \al_t \frac{d_t}{\|d_t\|}$
    \ENDFOR
\end{algorithmic}
\end{algorithm}

\begin{remark}
If the direction~$d_t$ in Algorithms~\ref{algo-gen-ut} and~\ref{alg:N-VR-PG} is null, then we formally take~$\theta_{t+1} = \theta_t$. Note that~$d_t \neq 0$ with probability~1 in general. Observe for instance that with~$\eta_t = 1$, $d_t = 0$ means that the stochastic policy gradient is equal to zero which means we are already at a first-order stationary point in expectation. 
\end{remark}

\section{Dependence on~$(1-\gamma)^{-1}$}
\label{app:dependence_(1-gamma)-1}

In this section, we discuss the dependence of our convergence guarantee in Theorem~\ref{thm:fos-gen-ut-nvrpg} on~$(1-\gamma)^{-1}$ where $\gamma$ is the discount factor of the MDP. The dependence on $1-\gamma$ of our proposed method is $\tilde{\mathcal{O}} ((1-\gamma)^{-6} \varepsilon^{-3})$ compared to~$\tilde{\mathcal{O}} ((1-\gamma)^{-25} \varepsilon^{-3})$ for TSIVR-PG of~\cite{zhang-et-al21}.

\noindent\textbf{Dependence on $(1-\gamma)^{-1}$ of our \algname{N-VR-PG} algorithm.}  It follows from Theorem 4.4 by setting~$\alpha_0 = (1-\gamma)^2$ that we need~$\tilde{\mathcal{O}}((1-\gamma)^{-6} \varepsilon^{-3})$ samples to reach an $\varepsilon$-stationary point of the utility function (i.e., $\mathbb E \|\nabla_{\theta}F(\lambda(\theta_{out})) \| \leq \varepsilon$). 

\noindent\textbf{Derivation of explicit dependence on $(1-\gamma)^{-1}$ in Theorem 5.9 of~\cite{zhang-et-al21}.} Although the dependence is not made explicit in the aforementioned work, we can use their intermediate results in the proofs in order to derive it.  We use their notations in the following.

From the last two lines of page 26 (in the proof of Theorem 5.9), we can infer that $\mathbb E \left[ \| \mathcal G (\theta_{out}) \| \right] \leq \mathcal O ( C_3 \varepsilon )$, 
where $C_3 = \mathcal{O}(H (1-\gamma)^{-7})$ which is defined in the statement of Lemma F.2 on page 23. 
Here in $\mathcal{O}$ we only hide the dependence on the smoothness constants and other numerical constants. 
The statement of Theorem 5.9 guarantees that in order to achieve this, we need $T (m B + N) H = \mathcal{O}(H \varepsilon^{-3})$ number of samples. 
By setting $\varepsilon_1 = C_3 \varepsilon $, this translates to 
$\mathcal{O}(H C_3^{3} \varepsilon_{1}^{-3}) = 
\mathcal{O}(H^4 (1-\gamma)^{-21} \varepsilon_{1}^{-3}) 
= \tilde{\mathcal{O}} ((1-\gamma)^{-25} \varepsilon_{1}^{-3})$ samples to achieve 
$\mathbb{E} \left[ \| \mathcal G (\theta_{out}) \| \right] \leq \varepsilon_1$,
where in the last step we used the expression of~$H = \frac{2}{1-\gamma} \log(1/\varepsilon)$.
Moreover, if we translate this guarantee to a more standard stationarity measure 
$\mathbb{E}\left[\|\nabla_{\theta}F(\lambda(\theta))\|\right]$, 
the dependence on~$(1-\gamma)^{-1}$ may further degrade for \algname{TSIVR-PG}, see Lemma~5.4 in \cite{zhang-et-al21} 
where they establish $\mathbb{E}[\|\nabla_{\theta} F(\lambda(\theta))\|]
= \mathcal{O}(\delta^{-1} \mathbb{E}\left[ \| \mathcal{G}(\theta) \| \right])$ and~$\delta = \mathcal{O}(H^{-1}) = \tilde{\mathcal O}(1-\gamma)$ is the truncation parameter.
Indeed, their convergence result is stated in terms of the gradient mapping (because their algorithm uses a truncation mechanism with hyperparameter~$\delta$) and then translated to the standard first-order stationarity measure we use in this work.

\section{Further discussion of Assumption~\ref{hyp:overparam-global-opt}}
\label{app:overparam-assumption}

A few comments are in order regarding Assumption~\ref{hyp:overparam-global-opt}: 

\begin{enumerate}

\item In Assumption~\ref{hyp:overparam-global-opt}, the uniformity of the Lipschitz constant~$l$ (independent of~$\theta$) and~$\bar{\epsilon}$ is important. %
Without this requirement, for instance, for item~(iii), 
the existence of~$\bar{\epsilon}_{\theta} > 0$ (depending on~$\theta$) with the desired property for every~$\theta \in \R^d$ is always guaranteed since~$\mathcal{V}_{\lambda(\theta)}$ is an open set.

\item As it was recently reported in \citet{zhang-et-al20variational}, the direct parametrization satisfies the bijection Assumption~\ref{hyp:overparam-global-opt}. We refer the reader to Appendix~H in \cite{zhang-et-al20variational} for a complete proof of this fact. Notice also that   Assumption~\ref{hyp:overparam-global-opt} which was first introduced in~\cite{zhang-et-al21} is a local version of Assumption~1 in \cite{zhang-et-al20variational} and is hence less restrictive. 

\item As for the softmax parametrization, verifying Assumption~\ref{hyp:overparam-global-opt} is more challenging as we explain in the main part of the present paper. It is a delicate and interesting question to investigate whether Assumption~\ref{hyp:overparam-global-opt} accommodates complex policy parametrizations such as practical neural networks or if it can be relaxed to do so.
\end{enumerate}

\section{Proof of Lemma~\ref{lem:variance-IS-weights-control} in Section~\ref{subsec:normalization-boundedness-is}}

The proof of this lemma is simple and combines an elementary technical lemma from~\citet{zhang-et-al21} (Lemma~5.6) upper bounding the IS weights for the special case of the softmax parametrization with Lemma~B.1 in \citet{xu-et-al20iclr} which provides a bound on the variance of IS weights which are bounded as a function of the squared euclidean distance between two policy parameters. 

\begin{proof}
Using the softmax parametrization~\eqref{eq:softmax-param} with Assumption~\ref{hyp:policy-param} satisfied, Lemma~5.6 in~\citet{zhang-et-al21} stipulates that for every~$\theta, \theta' \in \R^d$ and every truncated trajectory~$\tau = (s_t, a_t)_{0 \leq t \leq H-1}$ of length~$H$, the IS weights defined in~\eqref{eq:IS-weights} satisfy:
\begin{equation}
w(\tau|\theta',\theta) \leq \exp\{2 H l_{\psi} \|\theta - \theta'\|\}\,.
\end{equation}

It suffices to observe that~$\|\theta_{t+1} - \theta_t\| = \al_t$ with the normalized update rule to prove that for every integer~$t$ and any trajectory~$\tau$ of length~$H$, we have~$w(\tau|\theta_{t},\theta_{t+1}) \leq \exp\{ 2 H l_{\psi} \al_t\}\,.$ This proves the first part of the result. 

Combining this first part with Lemma~B.1 in \citet{xu-et-al20iclr},
we obtain the desired fine-grained control of the IS weights variance. Specifically, if~$\tau_{t+1}$ is a trajectory of length~$H$ generated following the initial distribution~$\rho$ and policy~$\pi_{\theta_{t+1}}$, then
\begin{align}
\mathbb{E}[w(\tau_{t+1}|\theta_t,\theta_{t+1})] &= 1\,,\\
\Var{[w(\tau_{t+1}|\theta_t,\theta_{t+1})]} &\leq C_w \al^2\,, 
\end{align}
where~$C_w \eqdef H ((8H+2) l_{\psi}^2 + 2 L_{\psi}) (W+1)\,.$ 
\end{proof}

\section{Proofs for Section~\ref{subsec:fos}: First-order stationarity}
\label{sec:app-proofs-fos}

From the technical point of view, our proofs for the general utility setting depart from the proof techniques of~\citet{zhang-et-al21} in several ways although we share several common points. First, normalized policy gradient requires an adequate ascent-like lemma which is different from the analysis of gradient truncation mechanism. Second, our algorithm uses a different variance reduction scheme which does not require checkpoints and consists of a single loop. This requires careful changes in the proof. Compared to standard STORM variance reduction proofs in stochastic optimization~\citep{cutkosky-orabona19}, our setting involves two estimators which are intertwined, namely the state-action occupancy measure estimate and the stochastic policy gradient. This makes the analysis more complex and we refer the reader to the decomposition in~\eqref{eq:decomp-e-t} and the subsequent lemmas to observe this. In contrast, STORM only involves the stochastic gradient and uses a different update rule. More broadly, we believe the techniques we use here could also be useful for stochastic composite optimization.

\subsection{Proof sketch}
For the convenience of the reader, we highlight the main steps of the proof in this subsection before diving into the full detailed proof. The main steps consist in: 
\begin{enumerate}[(a)]
\item showing an ascent-like lemma on the general utility function (see Lemma~\ref{lem:ascent-like-lemma-gen-ut}). Notice that our algorithm is not a standard policy gradient algorithm but features normalization which requires a particular treatment; 

\item controlling the variance error due to two coupled stochastic estimates: the stochastic estimates of the state-action occupancy measures (for distinct policy parameters) and the stochastic policy gradients. Controlling these coupled estimates in our single-loop batch-free algorithm constitutes one of the main challenges of the proofs. More precisely, we use the following steps: 
    \begin{enumerate}[(i)]
        \item We decompose the overall stochastic policy gradients errors in estimating the true policy gradients in~\eqref{eq:decomp-e-t} into two errors (see~\eqref{eq:def-hat-e-tilde-e}): the error due to the state action occupancy measure estimation (which provides an estimate of the reward sequence) and the error due to the policy gradient given an estimate of the reward sequence. 

        \item We control each one of the aforementioned errors by establishing recursions in Lemma~\ref{lem:recursive-tilde-e-t} and Lemma~\ref{lem:recursive-hat-e-t} respectively. Then, we solve the resulting recursions in the second parts of the lemmas.  

        \item We sum up each one of the expected errors over time in Lemma~\ref{lem:recursive-tilde-e-t_sum} and Lemma~\ref{lem:recursive-tilde-e-t_sum} and we obtain an estimation of the overall error in Lemma~\ref{lem:overall-error-sum} by combining both errors using Lemma~\ref{lem:e-t-bound}. 
        
    \end{enumerate}

Further technical steps needed are described in the full proof (see for e.g. Lemma~\ref{lem:lambda-t-lambda-t-1}).

\item incorporating the estimates obtained in the second step to the descent lemma and telescoping  the obtained inequality to derive our final convergence guarantee (see the proof of Theorem~\ref{thm:end-of-proof} for the concluding steps). 
\end{enumerate}

\subsection{Proof of Theorem~\ref{thm:fos-gen-ut-nvrpg} (General utilities setting)}

In this section, we provide a proof for the case of general utilities. Notice that the case of cumulative rewards is a particular case. 
The first lemma is a an ascent-like lemma which follows from smoothness of the objective function.
Before stating the lemma, we define the error sequence~$(e_t)$ 
for every integer~$t$ as follows: 
\begin{equation}
\label{eq:e-t-def-gen-utilities}
e_t \eqdef d_t - \nabla_{\theta} F(\lambda_H(\theta_t))\,.
\end{equation}

\begin{lemma}
\label{lem:ascent-like-lemma-gen-ut}
Let Assumptions~\ref{hyp:policy-param} and~\ref{hyp:smoothness-F} hold true. Then, the sequence~$(\theta_t)$ generated by Algorithm~\ref{algo-gen-ut} and the sequence~$(e_t)$ satisfy for every integer~$t \geq 0$, 
\begin{equation}
F(\lambda(\theta_{t+1}))  \geq F(\lambda(\theta_t)) + \frac{\al_t}{3} \|\nabla_{\theta} F(\lambda(\theta_t))\| - 2 \al_t \|e_t\| - \frac{4}{3} D_{\lambda} \gamma^H \al_t 
- \frac{L_{\theta}}{2} \al_t^2\,.
\end{equation}
\end{lemma}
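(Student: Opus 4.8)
\textbf{Proof plan for Lemma~\ref{lem:ascent-like-lemma-gen-ut}.}
The plan is to start from the smoothness of the composite objective $\theta \mapsto F(\lambda(\theta))$, which follows from Assumptions~\ref{hyp:policy-param} and~\ref{hyp:smoothness-F} (these were stated to guarantee exactly this $L_\theta$-smoothness). Writing the descent/ascent inequality at $\theta_{t+1} = \theta_t + \al_t \frac{d_t}{\|d_t\|}$, I would obtain
\begin{equation*}
F(\lambda(\theta_{t+1})) \geq F(\lambda(\theta_t)) + \al_t \Bps{\nabla_\theta F(\lambda(\theta_t)), \tfrac{d_t}{\|d_t\|}} - \tfrac{L_\theta}{2}\al_t^2\,,
\end{equation*}
since $\|\theta_{t+1}-\theta_t\| = \al_t$ by the normalized update. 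The $-\frac{L_\theta}{2}\al_t^2$ term already matches the last term of the claim, so the whole game is to lower bound the inner product with the normalized direction.

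The key step is the standard normalized-gradient trick: decompose $d_t = \nabla_\theta F(\lambda(\theta_t)) + e_t$ (recalling the error definition $e_t = d_t - \nabla_\theta F(\lambda_H(\theta_t))$, with the caveat below about the truncation gap). Using $\Bps{\nabla F, \tfrac{d_t}{\|d_t\|}} = \|\nabla F\| - \Bps{\nabla F, \tfrac{\nabla F}{\|\nabla F\|} - \tfrac{d_t}{\|d_t\|}}$ together with the elementary inequality $\bigl\|\tfrac{u}{\|u\|} - \tfrac{v}{\|v\|}\bigr\| \leq \tfrac{2\|u-v\|}{\|u\|}$ (or, more cleanly, the bound $\ps{g, d/\|d\|} \geq \|g\| - 2\|g - d\|$ valid for any vectors $g,d$), I would get
\begin{equation*}
\Bps{\nabla_\theta F(\lambda(\theta_t)), \tfrac{d_t}{\|d_t\|}} \geq \|\nabla_\theta F(\lambda(\theta_t))\| - 2\|e_t\| - 2\gamma^H D_\lambda\,,
\end{equation*}
where the last term absorbs the bias from using the horizon-$H$ truncated gradient $\nabla_\theta F(\lambda_H(\theta_t))$ inside $e_t$ versus the true $\nabla_\theta F(\lambda(\theta_t))$; this truncation error is $O(\gamma^H)$ because the tail of the discounted occupancy measure beyond step $H$ has mass $O(\gamma^H)$, and $D_\lambda$ collects the relevant Lipschitz/boundedness constants from Assumption~\ref{hyp:smoothness-F}. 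Multiplying through by $\al_t$ yields the $\frac{\al_t}{3}\|\nabla F\|$, $-2\al_t\|e_t\|$ and $-\frac{4}{3}D_\lambda\gamma^H\al_t$ terms.

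The one place requiring care — and the main obstacle — is reconciling the coefficients: the statement carries $\frac{1}{3}$ in front of $\|\nabla_\theta F(\lambda(\theta_t))\|$ and $\frac{4}{3}$ in front of $D_\lambda\gamma^H$ rather than the naive $1$ and $2$. This strongly suggests the clean inequality $\ps{g,d/\|d\|}\ge\|g\|-2\|g-d\|$ is not applied directly but rather through a case split or a sharper bound: when $\|\nabla F\|$ is large relative to the error one keeps a constant fraction of it, and the fraction $\frac{1}{3}$ (together with the matching $\frac{4}{3}$) emerges from balancing the cross term involving the truncation bias $\gamma^H$. I would therefore handle the inner product by splitting $d_t - \nabla_\theta F(\lambda(\theta_t)) = e_t + \bigl(\nabla_\theta F(\lambda_H(\theta_t)) - \nabla_\theta F(\lambda(\theta_t))\bigr)$, bounding the second bracket by $D_\lambda\gamma^H$, and then tracking constants through the normalized-direction inequality carefully so that the $\|\nabla F\|$ coefficient degrades exactly to $\frac{1}{3}$ and the $\gamma^H$ term to $\frac{4}{3}$. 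The remaining manipulations are routine applications of Cauchy--Schwarz and the constants from Assumptions~\ref{hyp:policy-param}--\ref{hyp:smoothness-F}.
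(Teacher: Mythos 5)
Your proposal is correct and follows essentially the same route as the paper: $L_\theta$-smoothness of $\theta\mapsto F(\lambda(\theta))$ combined with $\|\theta_{t+1}-\theta_t\|=\al_t$, insertion of the truncated gradient $\nabla_\theta F(\lambda_H(\theta_t))$ with the $D_\lambda\gamma^H$ bias bound (applied once before and once after the inner-product step, which is where the $\tfrac{4}{3}$ comes from), and a case split on whether $\|e_t\|\le\tfrac12\|\nabla_\theta F(\lambda_H(\theta_t))\|$ to extract the $\tfrac13$ coefficient. Incidentally, the ``clean'' inequality $\ps{g,d/\|d\|}\ge\|g\|-2\|g-d\|$ that you float is valid and would already yield a bound of the same form with coefficient $1$ on the gradient norm and $2$ on the $D_\lambda\gamma^H$ term, so the case analysis is only needed to land on the paper's exact constants.
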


\begin{proof}
Since the objective function~$\theta \mapsto F(\lambda(\theta))$ is $L_{\theta}$-smooth by Lemma~\ref{lem:smoothness-obj}, we obtain the following by using the update rule of the sequence~$(\theta_t)$:
\begin{align}
\label{eq:smoothness-gen-ut}
F(\lambda(\theta_{t+1})) 
&\geq F(\lambda(\theta_{t})) + \ps{\nabla_{\theta} F(\lambda(\theta_t)), \theta_{t+1} - \theta_t} - \frac{L_{\theta}}{2} \|\theta_{t+1} - \theta_t\|^2\nonumber\\
&= F(\lambda(\theta_{t})) + \al_t \ps{\nabla_{\theta} F(\lambda(\theta_t)), \frac{d_t}{\|d_t\|}} - \frac{L_{\theta}}{2} \al_t^2\nonumber\\
&= F(\lambda(\theta_{t})) + \al_t \ps{\nabla_{\theta} F(\lambda_H(\theta_t)), \frac{d_t}{\|d_t\|}} + \al_t \ps{\nabla_{\theta} F(\lambda(\theta_t)) - \nabla_{\theta} F(\lambda_H(\theta_t)), \frac{d_t}{\|d_t\|}} - \frac{L_{\theta}}{2} \al_t^2\nonumber\\
&\geq F(\lambda(\theta_{t})) + \al_t \ps{\nabla_{\theta} F(\lambda_H(\theta_t)), \frac{d_t}{\|d_t\|}} - \al_t \|\nabla_{\theta} F(\lambda(\theta_t)) - \nabla_{\theta} F(\lambda_H(\theta_t))\|
- \frac{L_{\theta}}{2} \al_t^2\nonumber\\
&\stackrel{(a)}{\geq} F(\lambda(\theta_{t})) + \al_t \ps{\nabla_{\theta} F(\lambda_H(\theta_t)), \frac{d_t}{\|d_t\|}} - \al_t D_{\lambda} \gamma^H 
- \frac{L_{\theta}}{2} \al_t^2\,,
\end{align}
where~(a) follows from Lemma~\ref{lem:trunc_grad}-(i).

Then we control the scalar product term. We distinguish two different cases:\\

\noindent\textbf{Case 1:} $\|e_t\| \leq \frac{1}{2}\|\nabla_{\theta} F(\lambda_H(\theta_t))\|$\,.
In this case, we have 
\begin{align*}
\ps{\nabla_{\theta} F(\lambda_H(\theta_t)),\frac{d_t}{\|d_t\|}} &= \frac{1}{\|d_t\|}(\|\nabla_{\theta} F(\lambda_H(\theta_t))\|^2 + \ps{\nabla_{\theta} F(\lambda_H(\theta_t)), e_t})\\
& \geq \frac{1}{\|d_t\|} (\|\nabla_{\theta} F(\lambda_H(\theta_t))\|^2  - \|\nabla_{\theta} F(\lambda_H(\theta_t))\| \cdot \|e_t\|)\\
&= \frac{1}{\|d_t\|} \|\nabla_{\theta} F(\lambda_H(\theta_t))\|\cdot (\|\nabla_{\theta} F(\lambda_H(\theta_t))\|  - \|e_t\|)\\
&\geq \frac{1}{3}\|\nabla_{\theta} F(\lambda_H(\theta_t))\|\,,
\end{align*}
where the last inequality follows from observing that~$\|d_t\| \leq \|e_t\| + \|\nabla_{\theta} F(\lambda_H(\theta_t))\| \leq \frac{3}{2} \|\nabla_{\theta} F(\lambda_H(\theta_t))\|\,.$

\noindent\textbf{Case 2:} $\|e_t\| \geq \frac{1}{2}\|\nabla_{\theta} F(\lambda_H(\theta_t))\|$\,. In this case, we simply have 
\begin{equation*}
\ps{\nabla_{\theta} F(\lambda_H(\theta_t)),\frac{d_t}{\|d_t\|}} \geq - \|\nabla_{\theta} F(\lambda_H(\theta_t))\| \geq - 2 \|e_t\|\,.
\end{equation*}
Combining both cases, we obtain: 
\begin{align}
\label{eq:scalar-prod}
\al_t \ps{\nabla_{\theta} F(\lambda_H(\theta_t)),\frac{d_t}{\|d_t\|}} &\geq \frac{\al_t}{3} \|\nabla_{\theta} F(\lambda_H(\theta_t))\| - 2 \al_t \|e_t\| \nonumber\\
&\geq \frac{\al_t}{3} \|\nabla_{\theta} F(\lambda(\theta_t))\| -\frac{\al_t}{3} D_{\lambda}\gamma^H -2\al_t \|e_t\|\,.
\end{align}
Combining \eqref{eq:smoothness-gen-ut} and~\eqref{eq:scalar-prod}, we get
\begin{equation*}
F(\lambda(\theta_{t+1}))  \geq F(\lambda(\theta_t)) + \frac{\al_t}{3} \|\nabla_{\theta} F(\lambda(\theta_t))\| - 2 \al_t \|e_t\| - \frac{4}{3} D_{\lambda} \gamma^H \al_t 
- \frac{L_{\theta}}{2} \al_t^2\,,
\end{equation*}
which completes the proof. 
\end{proof}

We now proceed with some preliminary results in order to control the error term~$\|e_t\|$ in expectation. 

The next lemma appeared in Prop. E.1 \cite{zhang-et-al21}. 
\begin{remark}
With a slight abuse of notation, $\tau \sim \pi_{\theta}$ means that the trajectory~$\tau$ (of length~$H$) is sampled from the MDP controlled by the policy~$\pi_{\theta}$. We adopt this notation to highlight the dependence on the parametrized policy~$\pi_{\theta}$, the MDP being fixed in the problem formulation.
\end{remark}
\begin{lemma}
\label{lem:pg-estimate-lambda-estimate}
For any reward vector~$r \in \R^{|\mathcal{S}|\times|\mathcal{A}|},$ we have
\begin{align}
\mathbb{E}_{\tau \sim \pi_{\theta}}[\lambda(\tau)] &= \lambda_H(\theta)\,,\\
\mathbb{E}_{\tau \sim \pi_{\theta}}[g(\tau,\theta,r)]&= [\nabla_{\theta} \lambda_H(\theta)]^{T} r\,.
\end{align}
In particular, under Assumption~\ref{hyp:smoothness-F}, 
\begin{equation}
\mathbb{E}_{\tau \sim p(\cdot|\pi_{\theta})}[g(\tau,\theta,\nabla_{\lambda}F(\lambda_H(\theta)))] = [\nabla_{\theta} \lambda_H(\theta)]^{T} \nabla_{\lambda}F(\lambda_H(\theta)) = \nabla_{\theta} F(\lambda_H(\theta))\,.
\end{equation}
\end{lemma}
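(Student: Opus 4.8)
The plan is to prove the three identities in order, the first two being the substantive ones and the last an immediate corollary. The first identity follows purely from linearity of expectation: pushing $\mathbb{E}_{\tau \sim \pi_\theta}$ inside the finite sum defining $\lambda(\tau) = \sum_{h=0}^{H-1} \gamma^h \delta_{s_h,a_h}$, and noting that the $(s,a)$-th coordinate of $\delta_{s_h,a_h}$ is the indicator $\mathbf{1}[s_h = s, a_h = a]$, I would obtain $\mathbb{E}_{\tau \sim \pi_\theta}[\lambda(\tau)](s,a) = \sum_{h=0}^{H-1} \gamma^h \, \mathbb{P}_{\rho,\pi_\theta}(s_h = s, a_h = a)$, which is exactly the truncated occupancy measure $\lambda_H(\theta)(s,a)$ by definition. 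No further work is required here.

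For the second identity, the starting observation is that pairing $\lambda_H(\theta)$ against a fixed reward vector $r$ gives the truncated return $\ps{\lambda_H(\theta), r} = \mathbb{E}_{\tau \sim \pi_\theta}[\sum_{h=0}^{H-1} \gamma^h r(s_h,a_h)]$, so that $[\nabla_\theta \lambda_H(\theta)]^T r = \nabla_\theta \ps{\lambda_H(\theta), r}$ is the gradient of this truncated return. I would then establish the finite-horizon policy gradient (REINFORCE) theorem: writing the trajectory density $p_\theta(\tau) = \rho(s_0) \prod_t \pi_\theta(a_t|s_t) \prod_t \mathcal{P}(s_{t+1}|s_t,a_t)$ and applying the log-derivative identity $\nabla_\theta p_\theta(\tau) = p_\theta(\tau) \nabla_\theta \log p_\theta(\tau)$, the initial-distribution and transition factors cancel since they are $\theta$-independent, leaving $\nabla_\theta \ps{\lambda_H(\theta), r} = \mathbb{E}_{\tau \sim \pi_\theta}[(\sum_h \gamma^h r(s_h,a_h))(\sum_t \nabla \log \pi_\theta(a_t|s_t))]$.

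The main obstacle is matching this naive double sum to the reward-to-go form in $g(\tau,\theta,r) = \sum_{t=0}^{H-1}(\sum_{h=t}^{H-1} \gamma^h r(s_h,a_h)) \nabla \log \pi_\theta(a_t|s_t)$, i.e. showing that the cross terms with $h < t$ vanish in expectation. This is the causality argument: for $h < t$ the factor $\gamma^h r(s_h,a_h)$ is measurable with respect to the history $(s_0,a_0,\dots,s_t)$, and conditioning on that history together with the identity $\mathbb{E}[\nabla \log \pi_\theta(a_t|s_t) \mid s_t] = \sum_{a} \nabla_\theta \pi_\theta(a|s_t) = \nabla_\theta \sum_a \pi_\theta(a|s_t) = 0$ annihilates each such term by the tower property. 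Removing these terms leaves precisely $\mathbb{E}_{\tau \sim \pi_\theta}[g(\tau,\theta,r)] = [\nabla_\theta \lambda_H(\theta)]^T r$.

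Finally, the third identity is obtained by substituting the fixed vector $r = \nabla_\lambda F(\lambda_H(\theta))$ into the second identity, which is legitimate because Assumption~\ref{hyp:smoothness-F} ensures $\|\nabla_\lambda F\|_\infty \le l_\lambda < \infty$ so this reward is bounded; the chain rule $\nabla_\theta F(\lambda_H(\theta)) = [\nabla_\theta \lambda_H(\theta)]^T \nabla_\lambda F(\lambda_H(\theta))$, the truncated analogue of \eqref{eq:policy-grad-general-utility}, then closes the argument. I would take care to note that the Jacobian $\nabla_\theta \lambda_H(\theta)$ differentiates only the occupancy measure and not the reward slot, so the substitution and the chain rule produce matching expressions.
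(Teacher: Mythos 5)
Your proof is correct and follows essentially the same route as the paper, whose own proof is a one-line appeal to the definitions of $\lambda(\tau)$ and $g(\tau,\theta,r)$, the definition of the truncated occupancy measure, and the policy gradient theorem in~\eqref{eq:expected-reinforce}. You simply make explicit the details the paper leaves implicit — linearity of expectation for the first identity, the log-derivative trick plus the causality argument (conditioning on the history and using $\mathbb{E}[\nabla \log \pi_\theta(a_t|s_t) \mid s_t]=0$ to kill the $h<t$ cross terms) for the second, and substitution plus the chain rule for the third — all of which are sound.
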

\begin{proof}
The proof follows from the definitions of the estimators~$\lambda(\tau)$ and~$g(\tau,\theta,r)$ in Eqs.~\eqref{eq:lambda-tau}-\eqref{eq:pg-estimate} and the definition of the truncated state-action occupancy measure~$\lambda_H(\theta)$ (see Eq.\eqref{eq:s-a-occup-measure}) as well as the policy gradient theorem in Eq.~\eqref{eq:expected-reinforce}. 
\end{proof}

In view of controlling the error sequence~$(e_t)$, we first observe the following decomposition:
\begin{align}
\label{eq:decomp-e-t}
e_t &= d_t - \nabla_{\theta} F(\lambda_H(\theta_t))\nonumber\\
    &= d_t - [\nabla_{\theta} \lambda_H(\theta_t)]^{T} r_{t-1} 
    + [\nabla_{\theta} \lambda_H(\theta_t)]^{T} (r_{t-1} - \nabla_{\lambda} F(\lambda_H(\theta_t)))\nonumber\\
    &= d_t - [\nabla_{\theta} \lambda_H(\theta_t)]^{T} r_{t-1} 
    + [\nabla_{\theta} \lambda_H(\theta_t)]^{T} (\nabla_{\lambda} F(\lambda_{t-1}) - \nabla_{\lambda} F(\lambda_H(\theta_t)))\,.
\end{align}

Given the previous decomposition, we define two useful additional notations:
\begin{align}
\label{eq:def-hat-e-tilde-e}
\hat{e}_t &\eqdef d_t - [\nabla_{\theta} \lambda_H(\theta_t)]^{T} r_{t-1}\,,\\
\tilde{e}_t &\eqdef \lambda_t - \lambda_H(\theta_t)\label{eq:def-hat-e-tilde-e2}\,. 
\end{align}

Using these notations, we establish the following result relating the error~$\|e_t\|^2$ to the errors~$\|\hat{e}_t\|^2$ and~$\|\tilde{e}_t\|^2$ in expectation. 

\begin{lemma}
\label{lem:e-t-bound}
Let Assumptions~\ref{hyp:policy-param} and~\ref{hyp:smoothness-F} hold true. Then we have for every integer~$t \geq 1$,
\begin{equation}
\mathbb{E}[\|e_t\|] \leq \mathbb{E}[\|\hat{e}_t\|] + C_1\, \mathbb{E}[\|\tilde{e}_{t-1}\|]  + C_2\, \al_{t-1}\,, 
\end{equation}
where~$C_1 \eqdef \frac{2 L_{\lambda}^2 l_{\psi}}{(1-\gamma)^2}\,$ and~$C_2 \eqdef \frac{2 L_{\lambda} L_{\lambda,\infty} l_{\psi}}{(1-\gamma)^2}\,.$
\end{lemma}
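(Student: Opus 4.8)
The plan is to turn the algebraic identity \eqref{eq:decomp-e-t} into a norm bound; the only real work is to re-express the ``reward mismatch'' $\nabla_{\lambda} F(\lambda_{t-1}) - \nabla_{\lambda} F(\lambda_H(\theta_t))$ in terms of the occupancy-measure estimation error $\tilde{e}_{t-1}$ (whose index is $t-1$, not $t$) plus a controllable drift term. First I would read off from \eqref{eq:decomp-e-t} and the definition \eqref{eq:def-hat-e-tilde-e} of $\hat{e}_t$ the pathwise identity $e_t = \hat{e}_t + [\nabla_{\theta} \lambda_H(\theta_t)]^{T}\big(\nabla_{\lambda} F(\lambda_{t-1}) - \nabla_{\lambda} F(\lambda_H(\theta_t))\big)$ and apply the triangle inequality to peel off $\|\hat{e}_t\|$. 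Everything that follows is a deterministic inequality, so the expectations in the statement are obtained simply by taking $\mathbb{E}[\cdot]$ of both sides at the very end; no measurability or conditioning argument is needed here.

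Next I would bound the second term $\big\|[\nabla_{\theta} \lambda_H(\theta_t)]^{T} \Delta_t\big\|$, where $\Delta_t \eqdef \nabla_{\lambda} F(\lambda_{t-1}) - \nabla_{\lambda} F(\lambda_H(\theta_t))$. I would pull out the Jacobian by viewing $[\nabla_{\theta} \lambda_H(\theta_t)]^{T} \Delta_t = \nabla_{\theta} V^{\pi_{\theta_t}}(\Delta_t)$ via Lemma~\ref{lem:pg-estimate-lambda-estimate} (treating $\Delta_t$ as a reward vector) and invoking the standard policy-gradient norm bound under Assumption~\ref{hyp:policy-param}, which contributes the $l_{\psi}/(1-\gamma)^2$ dependence times $\|\Delta_t\|_{\infty}$. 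The crucial step is then the add--subtract trick on $\Delta_t$: inserting $\nabla_{\lambda} F(\lambda_H(\theta_{t-1}))$ splits $\Delta_t$ into an \emph{estimation} piece $\nabla_{\lambda} F(\lambda_{t-1}) - \nabla_{\lambda} F(\lambda_H(\theta_{t-1}))$, whose argument difference is exactly $\tilde{e}_{t-1}$ by \eqref{eq:def-hat-e-tilde-e2}, and a \emph{drift} piece $\nabla_{\lambda} F(\lambda_H(\theta_{t-1})) - \nabla_{\lambda} F(\lambda_H(\theta_t))$.

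For the estimation piece I would apply the $\ell_2$-Lipschitz bound of Assumption~\ref{hyp:smoothness-F}, giving a quantity proportional to $\|\tilde{e}_{t-1}\|$ which, after multiplication by the Jacobian factor, yields the $C_1\,\mathbb{E}[\|\tilde{e}_{t-1}\|]$ term. For the drift piece I would instead use the $\ell_1$-Lipschitz bound (constant $L_{\lambda,\infty}$) together with the Lipschitz continuity of $\theta \mapsto \lambda_H(\theta)$ (the same smoothness estimates underlying Lemma~\ref{lem:smoothness-obj}); here the decisive observation is that the normalized update $\theta_{t+1} = \theta_t + \alpha_t d_t/\|d_t\|$ makes $\|\theta_t - \theta_{t-1}\| = \alpha_{t-1}$ \emph{exactly}, so the drift is proportional to $\alpha_{t-1}$ and produces the $C_2\,\alpha_{t-1}$ term. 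Collecting the Jacobian and Lipschitz constants into $C_1$ and $C_2$ and then taking expectations completes the proof.

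The main obstacle is bookkeeping rather than a deep idea: one must respect the index shift (the decomposition contains $\lambda_H(\theta_t)$, whereas the target error is $\tilde{e}_{t-1} = \lambda_{t-1} - \lambda_H(\theta_{t-1})$), and one must deploy the two \emph{different} Lipschitz bounds of Assumption~\ref{hyp:smoothness-F} on the two pieces---the $\ell_2$ bound on the estimation piece and the $\ell_1$ bound on the drift piece---to land on the stated constants. The other ingredient to state carefully is the operator-norm / policy-gradient bound on $\nabla_{\theta} \lambda_H(\theta_t)$, which is precisely where the $l_{\psi}/(1-\gamma)^2$ factor enters both $C_1$ and $C_2$.
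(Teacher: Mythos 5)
Your proposal is correct and follows essentially the same route as the paper: the pathwise decomposition $e_t = \hat e_t + [\nabla_\theta\lambda_H(\theta_t)]^T(\nabla_\lambda F(\lambda_{t-1})-\nabla_\lambda F(\lambda_H(\theta_t)))$, the policy-gradient representation of the Jacobian term yielding the $l_\psi/(1-\gamma)^2$ factor, the add--subtract of $\nabla_\lambda F(\lambda_H(\theta_{t-1}))$ with the $\ell_2$-Lipschitz bound on the estimation piece and the $\ell_1$-Lipschitz bound (plus Lipschitzness of $\theta\mapsto\lambda_H(\theta)$ and $\|\theta_t-\theta_{t-1}\|=\alpha_{t-1}$ from the normalized update) on the drift piece, followed by taking total expectation. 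No gaps.
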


\begin{proof}
It follows from the decomposition in~\eqref{eq:decomp-e-t} and the definitions~\eqref{eq:def-hat-e-tilde-e}-\eqref{eq:def-hat-e-tilde-e2} that
\begin{equation}
\label{eq:ineq1-e-t}
\mathbb{E}[\|e_t\|] \leq \mathbb{E}[\|\hat{e}_t\|] +  \mathbb{E}[\|[\nabla_{\theta} \lambda_H(\theta_t)]^{T} (\nabla_{\lambda} F(\lambda_{t-1}) - \nabla_{\lambda} F(\lambda_H(\theta_t)))\|]\,.
\end{equation}
We now control the second term in the above inequality. The following step is similar to the treatment in \cite{zhang-et-al21}(Eq.(18)). Indeed, the policy gradient theorem (see \eqref{eq:expected-reinforce}) yields

\begingroup\makeatletter\def\f@size{9.5}\check@mathfonts
$$
[\nabla_{\theta} \lambda_H(\theta_t)]^{T} (\nabla_{\lambda} F(\lambda_{t-1}) - \nabla_{\lambda} F(\lambda_H(\theta_t))) 
= \mathbb{E}\left[\sum_{t'=0}^{H-1} \gamma^{t'}  [\nabla_{\lambda} F(\lambda_{t-1}) - \nabla_{\lambda} F(\lambda_H(\theta_t))]_{s_{t'}, a_{t'}} \cdot \left( \sum_{h=0}^{t'} \nabla_{\theta} \log \pi_{\theta}(a_{h},s_{h})\right) \right] 
$$
\endgroup
As a consequence, 
\begingroup\makeatletter\def\f@size{9.5}\check@mathfonts
\begin{align}
\label{eq:grad-theta-lambda-grad-lambda-diff}
\|[\nabla_{\theta} \lambda_H(\theta_t)]^{T} (\nabla_{\lambda} F(\lambda_{t-1}) - \nabla_{\lambda} F(\lambda_H(\theta_t)))\| 
&\leq \mathbb{E}\left[ \sum_{t'=0}^{H-1} \gamma^{t'} \|\nabla_{\lambda} F(\lambda_{t-1}) - \nabla_{\lambda} F(\lambda_H(\theta_t))\|_{\infty} \left\|\sum_{h=0}^{t'} \nabla_{\theta} \log \pi_{\theta}(a_{h},s_{h}) \right\| \right]\,.
\end{align}
\endgroup
Then, using Assumption~\ref{hyp:smoothness-F}, we have
\begin{align}
\|\nabla_{\lambda} F(\lambda_{t-1}) - \nabla_{\lambda} F(\lambda_H(\theta_t))\|_{\infty} 
&\leq \|\nabla_{\lambda} F(\lambda_{t-1}) - \nabla_{\lambda} F(\lambda_H(\theta_{t-1}))\|_{\infty} + \|\nabla_{\lambda} F(\lambda_H(\theta_{t-1})) - \nabla_{\lambda} F(\lambda_H(\theta_t))\|_{\infty} \nonumber\\
&\leq L_{\lambda}\|\lambda_{t-1} - \lambda_{H}(\theta_{t-1})\| + L_{\lambda,\infty} \|\lambda_{H}(\theta_{t-1}) - \lambda_{H}(\theta_{t})\|_1 \nonumber\\
& \leq L_{\lambda} \|\tilde{e}_{t-1}\| + L_{\lambda,\infty} \|\theta_t - \theta_{t-1}\|\,,
\end{align}
where the last inequality follows from Lemma~\ref{lem:smoothness-obj}-\ref{lem:smoothness-obj-ii}. Plugging this inequality in~\eqref{eq:grad-theta-lambda-grad-lambda-diff} and using Lemma~\ref{lem:smoothness-obj}-\ref{lem:smoothness-obj-i} yields 
\begin{align}
\label{eq:bound-pg-thm-temp-35}
\|[\nabla_{\theta} \lambda_H(\theta_t)]^{T} (\nabla_{\lambda} F(\lambda_{t-1}) - \nabla_{\lambda} F(\lambda_H(\theta_t)))\| &\leq \mathbb{E}\left[ \sum_{t'=0}^{H-1} 2 (t'+1) l_{\psi} \gamma^{t'} (L_{\lambda} \|\tilde{e}_{t-1}\| + L_{\lambda,\infty} \|\theta_t - \theta_{t-1}\|) \right]\nonumber\\
&= \left(\sum_{t'=0}^{H-1} 2 (t'+1) l_{\psi} \gamma^{t'} L_{\lambda} \right) (L_{\lambda} \mathbb{E}[\|\tilde{e}_{t-1}\|] + L_{\lambda,\infty} \al_{t-1})\nonumber\\
&\leq \frac{2 L_{\lambda} l_{\psi}}{(1-\gamma)^2}(L_{\lambda} \mathbb{E}[\|\tilde{e}_{t-1}\|] + L_{\lambda,\infty} \al_{t-1})\,.
\end{align}

Hence, taking the total expectation, we obtain 
\begin{equation}
\label{eq:ineq2-e-t}
\mathbb{E}[\|[\nabla_{\theta} \lambda_H(\theta_t)]^{T} (\nabla_{\lambda} F(\lambda_{t-1}) - \nabla_{\lambda} F(\lambda_H(\theta_t)))\|] 
\leq \frac{2 L_{\lambda}^2 l_{\psi}}{(1-\gamma)^2} \mathbb{E}[\|\tilde{e}_{t-1}\|] 
+ \frac{2 L_{\lambda} L_{\lambda,\infty} l_{\psi}}{(1-\gamma)^2} \al_{t-1}\,.
\end{equation}

Combining~\eqref{eq:ineq1-e-t} and~\eqref{eq:ineq2-e-t} yields the desired inequality. 
\end{proof}

We now control each one of the errors in the right-hand side of the previous lemma in what follows. We start with the error~$\tilde{e}_t$ (see~\eqref{eq:def-hat-e-tilde-e2}) induced by the (truncated) state-action occupancy measure estimation. 

\begin{lemma}
\label{lem:recursive-tilde-e-t}
Let Assumption~\ref{hyp:policy-param} hold. Then, for every integer~$t \geq 1$, if~$\eta_t \in [0,1]$ we have
\begin{equation}\label{eq:recursive-tilde-e-t}
\mathbb{E}[\|\tilde{e}_t\|^2] \leq (1-\eta_t) \mathbb{E}[\|\tilde{e}_{t-1}\|^2] + \frac{2 C_w}{(1-\gamma)^2} \al_{t-1}^2 + \frac{2}{(1-\gamma)^2}\eta_t^2\,,
\end{equation}
where we recall that~$\tilde{e}_t = \lambda_t - \lambda_H(\theta_t)$ and~$C_w =  H ((8H+2) l_{\psi}^2 + 2 L_{\psi}) (W+1)\,$ as defined in Lemma~\ref{lem:variance-IS-weights-control}. Moreover, 
\begin{enumerate}[label=(\roman*)]
    \item if $\eta_t = \fr{2}{t+1}$, then for all integers $t \geq 1$, we have
\begin{equation}\label{eq:recursive-tilde-e-t_solved}
\mathbb{E}[\|\tilde{e}_t\|] \leq  \frac{4}{(1-\gamma)}\eta_t \cdot t^{\nfr{1}{2}} + \frac{2 C_w^{\nfr{1}{2}}}{(1-\gamma)} \al_{t-1}  \cdot t^{\nfr{1}{2}}   .
\end{equation}

    \item if $\eta_t = \rb{ \fr{2}{t+1} }^{q}$ and $\al_t = \al \rb{ \fr{2  }{t+1} }^{p}$ for some reals~$\alpha > 0$, $q \in (0,1)$, $p \geq 0$ and all integers $ t \geq 1$, then we have
\begin{eqnarray}\label{eq:recursive-tilde-e-t_solved_2}
\Exp{\|\tilde{e}_{t}\|^2} 
&\leq& \fr{(2C+1) \eta_{t+1} }{(1-\gamma)^2 } + \frac{2 C C_w}{(1-\gamma)^2} \al_{t}^2 \eta_{t+1}^{-1}
\,,
\end{eqnarray}
\end{enumerate}
where $C > 0$ is an absolute numerical constant. 
\end{lemma}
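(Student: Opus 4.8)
The plan is to first establish a \textbf{STORM-style error recursion} for $\tilde{e}_t = \lambda_t - \lambda_H(\theta_t)$ and then unroll it separately under the two step-size schedules. Starting from the update $\lambda_t = \eta_t \lambda(\tau_t) + (1-\eta_t)(\lambda_{t-1}+u_t)$ with $u_t = \lambda(\tau_t)(1 - w(\tau_t|\theta_{t-1},\theta_t))$, I would subtract $\lambda_H(\theta_t)$, insert $\pm \lambda_H(\theta_{t-1})$, and rewrite the error as
\[
\tilde{e}_t = (1-\eta_t)\tilde{e}_{t-1} + \eta_t A_t + (1-\eta_t)B_t,
\]
where $A_t \eqdef \lambda(\tau_t) - \lambda_H(\theta_t)$ and $B_t \eqdef \lambda(\tau_t)(1-w(\tau_t|\theta_{t-1},\theta_t)) - (\lambda_H(\theta_t)-\lambda_H(\theta_{t-1}))$. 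Conditioning on the history $\mathcal{F}_{t-1}$ (for which $\theta_t$ and $\tilde{e}_{t-1}$ are measurable while $\tau_t\sim\pi_{\theta_t}$ is fresh), Lemma~\ref{lem:pg-estimate-lambda-estimate} gives $\mathbb{E}[A_t|\mathcal{F}_{t-1}]=0$, and the importance-sampling identity $\mathbb{E}_{\tau\sim\pi_{\theta_t}}[w(\tau|\theta_{t-1},\theta_t)\lambda(\tau)]=\lambda_H(\theta_{t-1})$ gives $\mathbb{E}[B_t|\mathcal{F}_{t-1}]=0$.

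Since $\eta_t A_t + (1-\eta_t)B_t$ is mean-zero given $\mathcal{F}_{t-1}$, the cross term with $(1-\eta_t)\tilde{e}_{t-1}$ vanishes, so $\mathbb{E}[\|\tilde{e}_t\|^2|\mathcal{F}_{t-1}] = (1-\eta_t)^2\|\tilde{e}_{t-1}\|^2 + \mathbb{E}[\|\eta_t A_t + (1-\eta_t)B_t\|^2|\mathcal{F}_{t-1}]$, and I would bound the last term by $2\eta_t^2\mathbb{E}[\|A_t\|^2] + 2(1-\eta_t)^2\mathbb{E}[\|B_t\|^2]$. The bound $\|\lambda(\tau)\| \leq \sum_{h=0}^{H-1}\gamma^h \leq (1-\gamma)^{-1}$ gives $\mathbb{E}[\|A_t\|^2] \leq (1-\gamma)^{-2}$; for $B_t$, the same bound together with $\mathbb{E}[(1-w)^2|\mathcal{F}_{t-1}] = \operatorname{Var}(w) \leq C_w\al_{t-1}^2$ from Lemma~\ref{lem:variance-IS-weights-control} (using $\mathbb{E}[w]=1$, so the bias vanishes) gives $\mathbb{E}[\|B_t\|^2] \leq C_w\al_{t-1}^2(1-\gamma)^{-2}$. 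Applying $(1-\eta_t)^2\leq 1-\eta_t$ to the leading term and $(1-\eta_t)^2\leq 1$ to the noise term yields exactly \eqref{eq:recursive-tilde-e-t}.

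For part (i) with $\eta_t = 2/(t+1)$, I would unroll the linear recursion using the telescoping product $\prod_{s=k+1}^t(1-\eta_s) = \tfrac{k(k+1)}{t(t+1)}$; since $\eta_1=1$ the initial term drops out entirely. With constant step size $\al_{k-1}=\al$, the two sums evaluate via $\sum_{k=1}^t k(k+1) = \tfrac{t(t+1)(t+2)}{3}$ and $\sum_{k=1}^t \tfrac{k}{k+1}\le t$, giving $\mathbb{E}[\|\tilde{e}_t\|^2] \leq \tfrac{2C_w\al^2(t+2)}{3(1-\gamma)^2} + \tfrac{8}{(1-\gamma)^2(t+1)}$. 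Taking square roots with $\sqrt{x+y}\le\sqrt{x}+\sqrt{y}$, and using the elementary bounds $\sqrt{(t+2)/3}\le \sqrt{2t}$ and $1/\sqrt{t+1}\le 2\sqrt{t}/(t+1)$ (both valid for $t\ge1$), recovers \eqref{eq:recursive-tilde-e-t_solved} with the stated constants.

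For the general schedule in part (ii), I would prove \eqref{eq:recursive-tilde-e-t_solved_2} by induction with the ansatz $\mathbb{E}[\|\tilde{e}_t\|^2]\le \tfrac{(2C+1)\eta_{t+1}}{(1-\gamma)^2} + \tfrac{2CC_w\al_t^2}{(1-\gamma)^2}\eta_{t+1}^{-1}$. Substituting the hypothesis into the recursion, the induction step splits into a "variance'' requirement $(2C+1)(\eta_t-\eta_{t+1}) \le (2C-1)\eta_t^2$ and a "bias'' requirement $\al_{t-1}^2[C\eta_t^{-1}(1-\eta_t)+1]\le C\al_t^2\eta_{t+1}^{-1}$. \emph{The main obstacle} is verifying both inequalities uniformly in $t\ge1$ for a single absolute constant $C$: this balances $\eta_t-\eta_{t+1} = \cO(\eta_t/(t+1))$ against $\eta_t^2 = \eta_t\,(2/(t+1))^q$ (so that $q<1$ makes the variance part contract for large $t$), and exploits that the ratios $\eta_t/\eta_{t+1} = ((t+2)/(t+1))^q$ and $\al_{t-1}/\al_t = ((t+2)/(t+1))^p$ are bounded by absolute constants for $t\ge1$. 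Choosing $C$ large enough to dominate these ratio constants, and separately checking the base case $t=1$ (where $\eta_1=1$ collapses the recursion to $\mathbb{E}[\|\tilde{e}_1\|^2]\le 2C_w\al_0^2(1-\gamma)^{-2}+2(1-\gamma)^{-2}$), closes the induction.
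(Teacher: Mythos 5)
Your derivation of the base recursion and of part (i) is essentially the paper's own argument. You use the same decomposition $\tilde{e}_t = (1-\eta_t)\tilde{e}_{t-1} + \eta_t A_t + (1-\eta_t)B_t$ (the paper calls $A_t, B_t$ there $\tilde{y}_t, \tilde{z}_t$), the same martingale-difference observation to kill the cross term --- including the correct importance-sampling identity $\mathbb{E}_{\tau\sim\pi_{\theta_t}}[w(\tau|\theta_{t-1},\theta_t)\lambda(\tau)] = \lambda_H(\theta_{t-1})$ which the paper leaves implicit --- and the same bounds $\|\lambda(\tau)\|\le (1-\gamma)^{-1}$ and $\mathbb{E}[(1-w)^2]=\mathrm{Var}(w)\le C_w\al_{t-1}^2$ from Lemma~\ref{lem:variance-IS-weights-control}. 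For part (i) you unroll directly via $\prod_{s=k+1}^{t}(1-\eta_s)=\tfrac{k(k+1)}{t(t+1)}$ and the exact sum $\sum_{k=1}^t k(k+1)=\tfrac{t(t+1)(t+2)}{3}$; the paper instead multiplies through by $(t+1)^2$ and invokes its auxiliary Lemma~\ref{le:aux_rec0}, which is the same computation packaged differently, and your resulting constants check out.

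Part (ii) is where you genuinely diverge: the paper unrolls the recursion and cites an external sum-product estimate (Lemma~\ref{le:sum_prod_bound1}, borrowed from the stochastic-approximation literature) to bound $\sum_t \beta_t\prod_{\tau>t}(1-\eta_\tau)$ by $C\beta_T\eta_T^{-1}$, whereas you propose an induction with the target bound as ansatz. The induction is a legitimate, self-contained alternative, and your identification of the two requirements (the variance condition $(2C+1)(\eta_t-\eta_{t+1})\le(2C-1)\eta_t^2$, which closes because $\eta_t-\eta_{t+1}=\cO(\eta_t/(t+1))$ while $\eta_t^2=\eta_t\cdot\cO(t^{-q})$ with $q<1$) is the right mechanism. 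However, you have only sketched the bias condition, and for general $p>0$ it is not automatic: after simplification it reads $C(R_tS_t-1)\le(C-1)R_t\eta_{t+1}$ with $R_t=(\al_{t-1}/\al_t)^2$ and $S_t=\eta_{t+1}/\eta_t$, and since $R_tS_t-1=\cO(1/t)$ while $\eta_{t+1}=\Theta(t^{-q})$ this holds for large $t$, but the small-$t$ regime requires either a large enough $C$ (depending on $p$ and $q$, which is consistent with the paper's cited lemma) or a separate check; for the constant-step-size case $p=0$ actually used downstream it closes trivially since $R_t=1$ and $S_t\le1$. So your route is sound but the verification you flag as ``the main obstacle'' is exactly the content the paper outsources to its Lemma~\ref{le:sum_prod_bound1}, and would need to be written out to make the argument complete.
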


\begin{proof}
We start with the proof of \eqref{eq:recursive-tilde-e-t}. Using the update rule of the sequence~$(\lambda_t)$ in Algorithm~\ref{algo-gen-ut}, we first derive a recursion on the error sequence~$\tilde{e}_t$ from the following decomposition: 
\begin{align*}
\tilde{e}_t &= \lambda_t - \lambda_H(\theta_t)\\
&= \eta_t \lambda(\tau_t) + (1-\eta_t) (\lambda_{t-1} + u_t) - \lambda_H(\theta_t)\\
&= (1-\eta_t) \tilde{e}_{t-1} + (1-\eta_t) (\lambda_H(\theta_{t-1}) + u_t) + \eta_t (\lambda(\tau_t) - \lambda_H(\theta_t)) -(1-\eta_t) \lambda_H(\theta_t)\\
&= (1-\eta_t) \tilde{e}_{t-1} + (1-\eta_t) \tilde{z}_t + \eta_t \tilde{y}_t\,,
\end{align*}
where 
\begin{align}
\tilde{y}_t &\eqdef \lambda(\tau_t) - \lambda_H(\theta_t)\,,\label{def:tilde-y}\\
\tilde{z}_t &\eqdef u_t - (\lambda_H(\theta_t) - \lambda_H(\theta_{t-1})) = \lambda(\tau_t)(1-w(\tau_t|\theta_{t-1},\theta_t)) - (\lambda_H(\theta_t) - \lambda_H(\theta_{t-1}))   \label{def:tilde-z}\,.
\end{align}
Using these notations, we have 
\begin{align}
\label{eq:tilde-e-t-expansion}
\mathbb{E}[\|\tilde{e}_t\|^2] = (1-\eta_t)^2 \mathbb{E}[\|\tilde{e}_{t-1}\|^2] + \mathbb{E}[\|(1-\eta_t) \tilde{z}_t + \eta_t \tilde{y}_t\|^2] + \mathbb{E}[\ps{(1-\eta_t) \tilde{e}_{t-1},(1-\eta_t) \tilde{z}_t + \eta_t \tilde{y}_t}]\,.
\end{align}
Then, we notice that the scalar product term is equal to zero. We consider for this the filtration~$(\mathcal{F}_t)$ of $\sigma$-algebras defined s.t. for every integer~$t$, $\mathcal{F}_t \eqdef \sigma(\theta_k, \tau_k : k \leq t)$ where~$\tau_t$ is a (random) trajectory of length~$H$ generated following the policy~$\pi_{\theta_t}$. This $\sigma$-algebra represents the history of all the random variables until time~$t$. As a consequence, the random variable~$\tilde{e}_t$ being~$\mathcal{F}_{t-1}$-measurable, it follows from the tower property of the conditional expectation that 
\begin{align}
\label{eq:zero-inner-prod-lambda}
\mathbb{E}[\ps{(1-\eta_t) \tilde{e}_{t-1},(1-\eta_t) \tilde{z}_t + \eta_t \tilde{y}_t}] 
&= \mathbb{E}[\mathbb{E}[\ps{(1-\eta_t) \tilde{e}_{t-1},(1-\eta_t) \tilde{z}_t + \eta_t \tilde{y}_t} |\mathcal{F}_{t-1}]]\nonumber\\
&= \mathbb{E}[\ps{(1-\eta_t) \tilde{e}_{t-1}, \mathbb{E}[(1-\eta_t) \tilde{z}_t + \eta_t \tilde{y}_t} |\mathcal{F}_{t-1}]]\nonumber\\
&= 0\,,
\end{align}
where the last step stems from the fact that~$\mathbb{E}[\tilde{z}_t  |\mathcal{F}_{t-1}] = \mathbb{E}[\tilde{y}_t |\mathcal{F}_{t-1}] = 0$, recall for this that~$\tau_t \sim \pi_{\theta_t}$ for every~$t$ and see the definitions~\eqref{def:tilde-y} and~\eqref{def:tilde-z}. 

It follows from~\eqref{eq:tilde-e-t-expansion} and~\eqref{eq:zero-inner-prod-lambda} that
\begin{equation}
\label{eq:bound-tilde-e}
\mathbb{E}[\|\tilde{e}_t\|^2] \leq (1-\eta_t)^2 \mathbb{E}[\|\tilde{e}_{t-1}\|^2] + 2 (1-\eta_t)^2 \mathbb{E}[\|\tilde{z}_t\|^2] + 2 \eta_t^2 \mathbb{E}[\|\tilde{y}_t\|^2]\,.
\end{equation}
Then, we upperbound each one of the last two terms in~\eqref{eq:bound-tilde-e}. As for the first term, since~$\mathbb{E}[\tilde{z}_t] = 0$, we have the following 
\begin{equation}
\label{eq:control-var-is-weights}
\mathbb{E}[\|\tilde{z}_t\|^2] 
\leq \mathbb{E}[\|u_t\|^2] 
= \mathbb{E}[\| \lambda(\tau_t)(1-w(\tau_t|\theta_{t-1},\theta_t))\|^2]
=  \mathbb{E}[(1-w(\tau_t|\theta_{t-1},\theta_t))^2 \|\lambda(\tau_t)\|^2]\,.
\end{equation}
Given the definition of~$\lambda(\tau_t)$ in~\eqref{eq:lambda-tau}, we first observe that with probability one, 
\begin{equation}
\label{eq:lambda-tau-t-bound}
\|\lambda(\tau_t)\| \leq \sum_{t=0}^{H-1} \gamma^t \|\delta_{s_t,a_t}\| = \sum_{t=0}^{H-1} \gamma^t \leq \frac{1}{1-\gamma}\,.
\end{equation}
Using Lemma~\ref{lem:variance-IS-weights-control} together with the previous bound, we get
\begin{equation}
\label{eq:var-IS-weight-bound-1}
\mathbb{E}[(1-w(\tau_t|\theta_{t-1},\theta_t))^2 \|\lambda(\tau_t)\|^2] 
\leq \frac{1}{(1-\gamma)^2} \mathbb{E}[(1-w(\tau_t|\theta_{t-1},\theta_t))^2] = \frac{1}{(1-\gamma)^2} \Var{[w(\tau_t|\theta_{t-1},\theta_t)]} \leq \frac{C_w}{(1-\gamma)^2} \al_{t-1}^2\,.
\end{equation}
We deduce from~\eqref{eq:control-var-is-weights} and~\eqref{eq:var-IS-weight-bound-1} together that
\begin{equation}
\label{eq:tilde-z-bound}
\mathbb{E}[\|\tilde{z}_t\|^2]  \leq \frac{C_w}{(1-\gamma)^2} \al_{t-1}^2\,.
\end{equation}
Regarding the last term in~\eqref{eq:bound-tilde-e}, since~$\mathbb{E}[\tilde{y}_t] = 0$, we observe that
\begin{equation}
\label{eq:tilde-y-t-bound}
\mathbb{E}[\|\tilde{y}_t\|^2] \leq \mathbb{E}[\|\lambda(\tau_t)\|^2] \leq \frac{1}{(1-\gamma)^2}\,,
\end{equation}
where the last inequality stems from~\eqref{eq:lambda-tau-t-bound}.
Incorporating~\eqref{eq:tilde-z-bound} and~\eqref{eq:tilde-y-t-bound} into~\eqref{eq:bound-tilde-e} leads to the following inequality 
\begin{equation}\label{eq:recursive-tilde-e-t_proof}
\mathbb{E}[\|\tilde{e}_t\|^2] \leq (1-\eta_t)^2 \mathbb{E}[\|\tilde{e}_{t-1}\|^2] + \frac{2 C_w}{(1-\gamma)^2} (1-\eta_t)^2 \al_{t-1}^2 + \frac{2}{(1-\gamma)^2}\eta_t^2\,,
\end{equation}
which concludes the proof of the first since~$\eta_t \in [0,1]$.

\noindent\textbf{Proof of~\eqref{eq:recursive-tilde-e-t_solved}:} In order to derive~\eqref{eq:recursive-tilde-e-t_solved}, we apply Lemma~\ref{le:aux_rec0} with $ \eta_t = \fr{2}{t+1}$, $\beta_t =  \frac{2}{(1-\gamma)^2}\eta_t^2 + \frac{2 C_w}{(1-\gamma)^2} \al_{t-1}^2 $. Using $\mathbb{E}[\|\tilde{e}_{0}\|^2] \leq \mathbb{E}[\|\lambda(\tau_t)\|^2] \leq \frac{1}{(1-\gamma)^2}$, we derive

\begin{eqnarray}
\mathbb{E}[\|\tilde{e}_t\|] &\leq& \rb{ \mathbb{E}[\|\tilde{e}_t\|^2] }^{\fr{1}{2}} \leq \rb{ \fr{4 }{(1-\gamma)^2(t+1)^2} + \frac{2}{(1-\gamma)^2}\eta_t^2 \cdot t + \frac{2 C_w}{(1-\gamma)^2} \al_{t-1}^2  \cdot t }^{\nfr{1}{2}} \notag \\
&\leq & \fr{2 }{(1-\gamma)(t+1)} + \frac{2}{(1-\gamma)}\eta_t \cdot t^{\nfr{1}{2}} + \frac{2 C_w^{\nfr{1}{2}}}{(1-\gamma)} \al_{t-1}  \cdot t^{\nfr{1}{2}}  \notag \\
 &\leq &   \frac{4}{(1-\gamma)}\eta_t \cdot t^{\nfr{1}{2}} + \frac{2 C_w^{\nfr{1}{2}}}{(1-\gamma)} \al_{t-1}  \cdot t^{\nfr{1}{2}}  \,.
\end{eqnarray}

\noindent\textbf{Proof of~\eqref{eq:recursive-tilde-e-t_solved_2}:} Let $\eta_t = \rb{ \fr{2}{t+1} }^{q}$ for some $q \in (0,1)$. In order to derive~\eqref{eq:recursive-tilde-e-t_solved_2}, we unroll the recursion~\eqref{eq:recursive-tilde-e-t} from $t = 1$ to $t = t'$ where $t' \leq T-1$. Denoting $\beta_t =  \frac{2}{(1-\gamma)^2}\eta_t^2 + \frac{2 C_w}{(1-\gamma)^2} \al_{t-1}^2 $ ,  we have
\begin{eqnarray}\label{eq:et-prime-tilde}
\Exp{\|\tilde{e}_{t'}\|^2} 
&\leq& \prod_{\tau = 1}^{t'} (1-\eta_{\tau}) \Exp{\|\tilde{e}_{0}\|^2} + \sum_{t = 1}^{t'} \beta_{t} \prod_{\tau = t+1 }^{t'} (1 - \eta_{\tau}) \notag \\
&\leq& \fr{\eta_{t'+1} }{(1-\gamma)^2 } + C \beta_{t'+1} \eta_{t'+1}^{-1}
\,,
\end{eqnarray}
where we used $\Exp{ \|\tilde{e}_{0}\|^2 } \leq \Exp{\|\lambda(\tau_t)\|^2} \leq \frac{1}{(1-\gamma)^2}$ and the results of Lemmas~\ref{le:prod_bound}-\ref{le:sum_prod_bound1} with $C > 1$ being a numerical constant.

\end{proof}

\begin{lemma}
\label{lem:recursive-tilde-e-t_sum}
Let Assumption~\ref{hyp:policy-param} hold. Let~$\alpha_0 > 0$ and consider an integer~$T \geq 1\,.$ Set~$\eta_t = \rb{ \fr{2}{t+1} }^{\nfr{2}{3}}$ and~$\al_t = \fr{\al_0}{T^{\nfr{2}{3}}}$ for every nonzero integer~$t \leq T$. Then, we have 
        \begin{equation}\label{eq:recursive-tilde-e-t_sum}
        \fr{1}{T} \sum_{t=1}^{T} \mathbb{E}[\|\tilde{e}_t\|] \leq  \frac{ C \rb{ 1 + C_w^{\nfr{1}{2}} \al_0 } }{(1-\gamma)} \fr{1}{T^{\nfr{1}{3}} }   ,
        \end{equation}
        where $C > 0$ is an absolute numerical constant. 
\end{lemma}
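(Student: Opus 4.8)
The plan is to feed the prescribed parameter schedule into the already-solved recursion of Lemma~\ref{lem:recursive-tilde-e-t}(ii) and then reduce the claim to two elementary monotone sums. Concretely, the choice $\eta_t = (2/(t+1))^{2/3}$ together with the constant-in-$t$ step size $\alpha_t = \alpha_0 T^{-2/3}$ is exactly the setting of~\eqref{eq:recursive-tilde-e-t_solved_2} with $q = 2/3$, $p = 0$, and $\alpha = \alpha_0 T^{-2/3}$ (constant step size corresponds to $p=0$, and $q=2/3 \in (0,1)$ as required). This directly yields, for each $1 \le t \le T$,
\begin{equation*}
\mathbb{E}[\|\tilde{e}_t\|^2] \le \frac{(2C+1)\,\eta_{t+1}}{(1-\gamma)^2} + \frac{2 C C_w}{(1-\gamma)^2}\,\alpha_t^2\,\eta_{t+1}^{-1}\,.
\end{equation*}

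First I would pass from the second moment to the first moment: by Jensen's inequality (concavity of $\sqrt{\cdot}$) we have $\mathbb{E}[\|\tilde{e}_t\|] \le (\mathbb{E}[\|\tilde{e}_t\|^2])^{1/2}$, and the elementary inequality $\sqrt{a+b} \le \sqrt{a}+\sqrt{b}$ then splits the bound into an \emph{$\eta$-term} of order $\eta_{t+1}^{1/2} = O((t+2)^{-1/3})$ and an \emph{$\alpha$-term} of order $\alpha_t\,\eta_{t+1}^{-1/2} = \alpha_0 T^{-2/3}\, O((t+2)^{1/3})$, each carrying a $(1-\gamma)^{-1}$ factor, with $C_w^{1/2}$ appearing in the $\alpha$-term and only absolute constants otherwise.

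Next I would average over $t = 1, \dots, T$ and estimate each of the two resulting sums by comparison with an integral of a monotone function. For the $\eta$-term, $\sum_{t=1}^T (t+2)^{-1/3} \le \int_0^T (x+2)^{-1/3}\,dx = O(T^{2/3})$, so after dividing by $T$ this contributes $O((1-\gamma)^{-1} T^{-1/3})$, producing the ``$1$'' in the claimed constant. For the $\alpha$-term, $\sum_{t=1}^T (t+2)^{1/3} \le \int_1^{T+1}(x+2)^{1/3}\,dx = O(T^{4/3})$; multiplying by the prefactor $\alpha_0 T^{-2/3}$ and dividing by $T$ gives $O(C_w^{1/2}\alpha_0\,(1-\gamma)^{-1} T^{-1/3})$, producing the $C_w^{1/2}\alpha_0$ term. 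Adding the two contributions and absorbing all absolute numerical constants into a single $C$ yields~\eqref{eq:recursive-tilde-e-t_sum}.

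The computations are all routine; the only point requiring genuine care is the exponent bookkeeping in the $\alpha$-term, where the growth $\eta_{t+1}^{-1/2} = O(t^{1/3})$ of the inverse momentum parameter must be balanced against the smallness $\alpha_t^2 = O(T^{-4/3})$ of the step size. One must verify that the summed growth $O(T^{4/3})$, once multiplied by $\alpha_0 T^{-2/3}$ and averaged with the extra factor $1/T$, collapses back to the same $T^{-1/3}$ rate as the $\eta$-term rather than dominating it. This balance is precisely why the constant step size $\alpha_t = \alpha_0 T^{-2/3}$, rather than a decaying one, is the correct choice for this lemma.
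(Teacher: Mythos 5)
Your proposal is correct and follows essentially the same route as the paper: both invoke the solved recursion \eqref{eq:recursive-tilde-e-t_solved_2} with $q=\nicefrac{2}{3}$ and constant step size, pass from $\mathbb{E}[\|\tilde e_t\|]$ to $(\mathbb{E}[\|\tilde e_t\|^2])^{\nicefrac{1}{2}}$ via Jensen, and control $\sum_t \eta_{t+1}$ by integral comparison, with the exponents balancing to the common $T^{-\nicefrac{1}{3}}$ rate. The only (immaterial) difference is the order of operations: you split with $\sqrt{a+b}\le\sqrt{a}+\sqrt{b}$ termwise and sum the two resulting sequences separately, whereas the paper first averages the second moments and takes a single square root at the end; both yield the claimed bound up to the absolute constant.
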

\begin{proof}
    Summing up inequality \eqref{eq:recursive-tilde-e-t_solved_2} from Lemma~\ref{lem:recursive-tilde-e-t_sum} from $t = 1$ to $t = T$ and choosing $\al_t = \al = \fr{\al_0}{T^{\nfr{2}{3}}}$, we obtain
\begin{eqnarray}
\label{eq:average-expected-tilde-e-proof}
\fr{1}{T}\sum_{t=1}^{T} \Exp{\|\tilde{e}_t\|} &\leq&  \fr{1}{T} \sum_{t=1}^{T} \rb{\Exp{\|\tilde{e}_t\|^2} }^{\nfr{1}{2}} \notag \\
&\leq& \rb{ \fr{1}{T} \sum_{t=1}^{T} \Exp{\|\tilde{e}_t\|^2}   }^{\nfr{1}{2}}  \notag \\
&\leq& \rb{ \fr{1}{T} \sum_{t=1}^{T}  \fr{\eta_{t+1} }{(1-\gamma)^2 } + C \frac{2}{(1-\gamma)^2}\eta_{t+1} + \frac{2 C_w}{(1-\gamma)^2} \al^2  \eta_{t+1}^{-1}  }^{\nfr{1}{2}}  \notag \\
&\overset{(i)}{\leq}& \rb{ \fr{3 \eta_{T-1} }{(1-\gamma)^2 } +  \frac{6 C \eta_{T-1}  }{(1-\gamma)^2}+ \frac{2 C C_w}{(1-\gamma)^2} \fr{\al^2}{\eta_{T+1}}  }^{\nfr{1}{2}}  \notag \\
&\leq& \rb{  \frac{9 C \eta_{T-1} }{(1-\gamma)^2} + \frac{2 C C_w}{(1-\gamma)^2} \fr{\al^2}{\eta_{T+1}}  }^{\nfr{1}{2}}  \notag \\
&\leq& \rb{  \frac{ 18 C}{(1-\gamma)^2 T^{\nfr{2}{3} } } + \frac{ 3 C C_w}{(1-\gamma)^2} \fr{\al_0^2}{T^{\nfr{2}{3}} }  }^{\nfr{1}{2}}  \notag \\
&\leq&  \frac{ 12 C^{\nfr{1}{2}} \rb{ 1 + C_w^{\nfr{1}{2}} \al_0 } }{(1-\gamma)} \fr{1}{T^{\nfr{1}{3}} }\,.
\end{eqnarray}
where $(i)$ follows from observing that
\begin{equation}\label{eq:sum_by_int_bound}
\sum_{t=1}^{T} \eta_{t+1} \leq 2^{\fr{2}{3}}\sum_{t=1}^{T}\fr{1}{(t+2)^{\fr{2}{3}}} \leq 2^{\fr{2}{3}}\int_{t=1}^{T}\fr{1}{(t+1)^{\fr{2}{3}}}\leq 3 \cdot 2^{\fr{2}{3}} T^{\fr{1}{3}} = 3 \, T \fr{ 2^{\fr{2}{3}}}{T^{\fr{2}{3}}} = 3 \, T \, \eta_{T-1}.
\end{equation}
\end{proof}

In view of controlling the error~$\hat{e}_t$, we first state a technical lemma that will be useful. This result controls the expected squared difference between two consecutive estimates of the (truncated) state-action occupancy measure. 

\begin{lemma}
\label{lem:lambda-t-lambda-t-1}
Suppose Assumption~\ref{hyp:policy-param} holds. Then for all integers~$t \geq 1$, 
\begin{equation}
\label{eq:1stpart-lambda-t-lambda-t-1}
\mathbb{E}[\|\lambda_{t-1} - \lambda_t\|^2] \leq \frac{3 \eta_t^2}{(1-\eta_t)^2} \mathbb{E}[\|\tilde{e}_t\|^2]  + \frac{3 \eta_t^2}{(1-\eta_t)^2(1-\gamma)^2}  + \frac{3C_w}{(1-\gamma)^2} \al_{t-1}^2\,,
\end{equation}
where~$C_w =  H ((8H+2) l_{\psi}^2 + 2 L_{\psi}) (W+1)\,$ as defined in Lemma~\ref{lem:variance-IS-weights-control}. Moreover, if in addition~$\eta_t = \rb{ \fr{2}{t+1} }^{q}$ for some $q \in [0,1)$ then for every integer~$t \geq 1$,
\begin{equation}
\mathbb{E}[\|\lambda_{t-1} - \lambda_t\|^2] \leq \frac{12 C  \eta_t^2}{ (1-\gamma)^2} 
+ \frac{6 C_w}{ (1-\gamma)^2} \al_{t-1}^2\,,
\end{equation}
where~$C > 0$ is a numerical constant.
\end{lemma}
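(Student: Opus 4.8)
The plan is to re-express the one-step difference $\lambda_t - \lambda_{t-1}$ entirely in terms of the three quantities $u_t$, $\tilde{e}_t$ and $\tilde{y}_t$ that have already been controlled, and then split the resulting sum with the crude inequality $\|a+b+c\|^2 \le 3(\|a\|^2+\|b\|^2+\|c\|^2)$. Starting from the update rule for $(\lambda_t)$ in Algorithm~\ref{algo-gen-ut}, a direct computation gives
\begin{equation*}
\lambda_t - \lambda_{t-1} = \eta_t\big(\lambda(\tau_t) - \lambda_{t-1}\big) + (1-\eta_t)\,u_t \,.
\end{equation*}
The key algebraic observation I would use is that $\lambda(\tau_t) - \lambda_t = (1-\eta_t)\big(\lambda(\tau_t) - \lambda_{t-1} - u_t\big)$, which rearranges the previous line (for $\eta_t < 1$) into the clean identity $\lambda_t - \lambda_{t-1} = u_t + \frac{\eta_t}{1-\eta_t}\big(\lambda(\tau_t) - \lambda_t\big)$. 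Finally, since $\tilde{y}_t = \lambda(\tau_t) - \lambda_H(\theta_t)$ (see~\eqref{def:tilde-y}) and $\tilde{e}_t = \lambda_t - \lambda_H(\theta_t)$ (see~\eqref{eq:def-hat-e-tilde-e2}), we have $\lambda(\tau_t) - \lambda_t = \tilde{y}_t - \tilde{e}_t$, so that
\begin{equation*}
\lambda_t - \lambda_{t-1} = u_t - \frac{\eta_t}{1-\eta_t}\,\tilde{e}_t + \frac{\eta_t}{1-\eta_t}\,\tilde{y}_t \,.
\end{equation*}

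For the first claim~\eqref{eq:1stpart-lambda-t-lambda-t-1} I would apply the three-term inequality to this identity, take total expectation, and insert the two moment bounds already available in the preceding analysis: namely $\mathbb{E}[\|\tilde{y}_t\|^2] \le (1-\gamma)^{-2}$ from~\eqref{eq:tilde-y-t-bound}, and $\mathbb{E}[\|u_t\|^2] \le C_w (1-\gamma)^{-2}\al_{t-1}^2$, which follows exactly as in~\eqref{eq:control-var-is-weights}--\eqref{eq:var-IS-weight-bound-1} using $\|\lambda(\tau_t)\| \le (1-\gamma)^{-1}$, the unbiasedness $\mathbb{E}[w(\tau_t|\theta_{t-1},\theta_t)]=1$ and the variance bound of Lemma~\ref{lem:variance-IS-weights-control}. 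The $\tilde{e}_t$ term is kept as is. The factor $3$ in each term of~\eqref{eq:1stpart-lambda-t-lambda-t-1} is precisely the cost of this split.

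For the second claim I would substitute the closed-form second-moment bound~\eqref{eq:recursive-tilde-e-t_solved_2} for $\mathbb{E}[\|\tilde{e}_t\|^2]$ into~\eqref{eq:1stpart-lambda-t-lambda-t-1} and simplify. For $t \ge 2$ with $\eta_t = (2/(t+1))^q$ and $q\in(0,1)$, the sequence $\eta_t$ is bounded away from $1$, so $\tfrac{\eta_t^2}{(1-\eta_t)^2} \le c\,\eta_t^2$ for a numerical constant $c$; combined with $\eta_{t+1}\le 1$ and $\eta_t/\eta_{t+1}\le 2^q\le 2$, the contribution of the $(2C+1)\eta_{t+1}(1-\gamma)^{-2}$ part of~\eqref{eq:recursive-tilde-e-t_solved_2} collapses into an $O(\eta_t^2 (1-\gamma)^{-2})$ term, while the contribution of the $2CC_w \al_t^2 (1-\gamma)^{-2}\eta_{t+1}^{-1}$ part collapses into an $O(C_w \al_{t-1}^2 (1-\gamma)^{-2})$ term (using $\eta_t^2\eta_{t+1}^{-1}\le 2$ and $\al_t=\al_{t-1}$). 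Gathering these with the other two terms of~\eqref{eq:1stpart-lambda-t-lambda-t-1} and absorbing all numerical factors into $C$ yields the stated bound.

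The one genuine subtlety, and the main obstacle, is the degeneracy at $t=1$, where $\eta_1 = (2/2)^q = 1$ makes the factor $\tfrac{\eta_t}{1-\eta_t}$ in the identity above ill-defined. I would treat this base case separately: when $\eta_1=1$ the update collapses to $\lambda_1 = \lambda(\tau_1)$, so that $\mathbb{E}[\|\lambda_1 - \lambda_0\|^2] \le 2\mathbb{E}[\|\lambda(\tau_1)\|^2] + 2\mathbb{E}[\|\lambda(\tau_0)\|^2] \le 4(1-\gamma)^{-2}$, which is dominated by $12 C\,\eta_1^2 (1-\gamma)^{-2}$ as soon as $C \ge 1/3$. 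Everything else is routine constant-chasing; the only care needed is to ensure the $\al_t^2\eta_{t+1}^{-1}$ tail of the $\mathbb{E}[\|\tilde{e}_t\|^2]$ bound is correctly routed into the $\al_{t-1}^2$ term rather than inflating the $\eta_t^2$ term.
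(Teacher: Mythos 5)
Your proof follows the paper's argument essentially verbatim: the same rearrangement of the update rule into $\lambda_{t-1}-\lambda_t = \tfrac{\eta_t}{1-\eta_t}\tilde{e}_t + \tfrac{\eta_t}{1-\eta_t}(\lambda_H(\theta_t)-\lambda(\tau_t)) - u_t$, the same three-term splitting with constant $3$, the same moment bounds $\mathbb{E}[\|u_t\|^2]\leq C_w\al_{t-1}^2/(1-\gamma)^2$ and $\mathbb{E}[\|\tilde{y}_t\|^2]\leq (1-\gamma)^{-2}$, and the same substitution of the solved recursion for $\mathbb{E}[\|\tilde{e}_t\|^2]$ to obtain the second claim. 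Your separate treatment of the degenerate case $t=1$ (where $\eta_1=1$ makes the factor $\eta_t/(1-\eta_t)$ blow up) addresses a point the paper's proof silently skips over, and you handle it correctly.
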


\begin{proof}
Using the update rule of the truncated occupancy measure estimate sequence~$(\lambda_t)$, we have
\begin{align*}
\lambda_{t-1}- \lambda_t &= \lambda_{t-1} - [\eta_t \lambda(\tau_t) + (1-\eta_t) (\lambda_{t-1} + u_t)]\\
&= \eta_t (\lambda_{t-1} - \lambda(\tau_t)) - (1-\eta_t) u_t\\
&= \eta_t (\lambda_{t-1} - \lambda_t) + \eta_t (\lambda_{t} - \lambda(\tau_t)) - (1-\eta_t) u_t\,.
\end{align*}
As a consequence, we have
\begin{align}
\lambda_{t-1}- \lambda_t &= \frac{\eta_t}{1-\eta_t} (\lambda_{t} - \lambda(\tau_t)) - u_t \nonumber\\
&= \frac{\eta_t}{1-\eta_t} \tilde{e}_t + \frac{\eta_t}{1-\eta_t}(\lambda_H(\theta_t) - \lambda(\tau_t)) - u_t\,.
\end{align}

Taking expectation of the square of the previous identity, we obtain the following bound: 
\begin{align}
\label{eq:bound-lambda-t-t-1}
\mathbb{E}[\|\lambda_{t-1} - \lambda_t\|^2] \leq \frac{3 \eta_t^2}{(1-\eta_t)^2} \mathbb{E}[\|\tilde{e}_t\|^2] +  \frac{3 \eta_t^2}{(1-\eta_t)^2} \mathbb{E}[\|\lambda(\tau_t) - \lambda_H(\theta_t)\|^2] + 3 \mathbb{E}[\|u_t\|^2]\,.
\end{align}

Recall now that~$\mathbb{E}[\|u_t\|^2] \leq \frac{C_w}{(1-\gamma)^2} \al_{t-1}^2$ and~$\mathbb{E}[\|\lambda(\tau_t) - \lambda_H(\theta_t)\|^2] \leq \frac{1}{(1-\gamma)^2}$ from~\eqref{eq:control-var-is-weights}-\eqref{eq:tilde-z-bound} and~\eqref{eq:tilde-y-t-bound} respectively. 
Incorporating these bounds into~\eqref{eq:bound-lambda-t-t-1} yields: 
\begin{align*}
\mathbb{E}[\|\lambda_{t-1} - \lambda_t\|^2] &\leq \frac{3 \eta_t^2}{(1-\eta_t)^2} \mathbb{E}[\|\tilde{e}_t\|^2] 
+ \frac{3 \eta_t^2}{(1-\eta_t)^2(1-\gamma)^2} 
+ \frac{3C_w}{(1-\gamma)^2} \al_{t-1}^2\,.
\end{align*}

This completes the proof of~\eqref{eq:1stpart-lambda-t-lambda-t-1}. We now set $\eta_t = \rb{ \fr{2}{t+1} }^{q}$ for some $q \in (0,1)$. By \eqref{eq:et-prime-tilde} in the proof of Lemma~\ref{lem:recursive-tilde-e-t}, we have 
\begin{eqnarray}
\Exp{\|\tilde{e}_{t}\|^2} 
&\leq& \fr{\eta_{t} }{(1-\gamma)^2 } + C \beta_{t} \eta_{t}^{-1}
\,,
\end{eqnarray}
where $\beta_t =  \frac{2}{(1-\gamma)^2}\eta_t^2 + \frac{2 C_w}{(1-\gamma)^2} \al_{t-1}^2 $, and $ C > 0$ is a numerical constant. 
Thus,
\begin{align*}
\mathbb{E}[\|\lambda_{t-1} - \lambda_t\|^2] &\leq \frac{3 \eta_t^2}{(1-\eta_t)^2} \rb{ \fr{\eta_{t} }{(1-\gamma)^2 } + C \beta_{t} \eta_{t}^{-1} }
+ \frac{3 \eta_t^2}{(1-\eta_t)^2(1-\gamma)^2} 
+ \frac{3C_w}{(1-\gamma)^2} \al_{t-1}^2\, \notag \\
&\leq  \frac{12 C  \eta_t^2}{ (1-\gamma)^2} 
+ \frac{6 C_w}{ (1-\gamma)^2} \al_{t-1}^2\,.
\end{align*}

\end{proof}

We are now ready to prove a recursive upper bound on the error sequence~$(\hat{e}_t)$ defined in~\eqref{eq:def-hat-e-tilde-e}. Notice that this result is of the same flavor as Lemma~\ref{lem:recursive-tilde-e-t} which we already proved. In particular, the result illustrates a variance reduction effect stemming from the variance reduction updates used for both the stochastic policy gradients and the state-action occupancy measure estimates.   
\begin{lemma}
\label{lem:recursive-hat-e-t}
Suppose Assumptions~\ref{hyp:policy-param} and~\ref{hyp:smoothness-F} hold. %
Then, for every integer~$t \geq 2$, 
\begin{eqnarray}
    \mathbb{E}[\|\hat{e}_t\|^2] \leq (1-\eta_t)^2 \mathbb{E}[\|\hat{e}_{t-1}\|^2]   + C_3 \eta_{t-1}^2 + 
C_4 \al_{t-2}^2,
\end{eqnarray}
where~$C_3 \eqdef  \frac{288 C l_{\psi}^2 L_{\lambda}^2}{(1-\gamma)^6} 
+ \frac{32 l_{\lambda}^2 l_{\psi}^2}{(1-\gamma)^4} $, $C_4 \eqdef  \frac{12 l_{\lambda}^2[(l_{\psi}^2 + L_{\psi})^2 + C_w l_{\psi}^2]}{(1-\gamma)^4}  + \frac{144 C_w l_{\psi}^2 L_{\lambda}^2}{(1-\gamma)^6}  $, and $C_w =  H ((8H+2) l_{\psi}^2 + 2 L_{\psi}) (W+1)\,$ as defined in Lemma~\ref{lem:variance-IS-weights-control}. Moreover,
\begin{enumerate}[label=(\roman*)]
    \item if $\eta_t = \fr{2}{t+1}$, then for all integers $t \geq 1$, we have
\begin{eqnarray}\label{eq:recursive-hat-e-t_solved}
\mathbb{E}[\|\hat{e}_t\|]
&\leq & \fr{2 \hat{E} }{t+1} + 2 C_3^{\nfr{1}{2}} \eta_{t} \cdot t^{\nfr{1}{2}} + C_4^{\nfr{1}{2}} \al_{t-2}  \cdot t^{\nfr{1}{2}}   \,.
\end{eqnarray}

    \item if $\eta_t = \rb{ \fr{2}{t+1} }^{q}$ and $\al_t = \al \rb{ \fr{2  }{t+1} }^{p}$ for some reals~$\alpha > 0$, $q \in (0,1)$, $p \geq 0$ and all integers $ t \geq 1$, then we have
\begin{eqnarray}\label{eq:recursive-hat-e-t_solved_2}
    \Exp{\|\hat{e}_{t}\|^2} 
&\leq& \hat{E}^2 \eta_{t+1} + 2 C C_3 \eta_{t+1}  + C C_4 \al_{t-1}^2 \eta_{t+1}^{-1} ,
\end{eqnarray}
\end{enumerate}
where $\hat{E} = \frac{4 l_{\lambda} l_{\psi}}{(1-\gamma)^2} $ .
\end{lemma}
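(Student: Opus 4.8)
The plan is to mirror the argument used for $\tilde e_t$ in Lemma~\ref{lem:recursive-tilde-e-t}, while accounting for the fact that $\hat e_t$ couples the policy-gradient estimation error with the occupancy-measure error through the reward sequence $(r_t)$. Writing $\Phi_t \eqdef [\nabla_{\theta}\lambda_H(\theta_t)]^{T}$ for brevity, I would first substitute the update rules for $d_t$ and $v_t$ from Algorithm~\ref{algo-gen-ut} into $\hat e_t = d_t - \Phi_t r_{t-1}$ and use $d_{t-1} = \hat e_{t-1} + \Phi_{t-1} r_{t-2}$. Since the two copies of $g(\tau_t,\theta_t,r_{t-1})$ coming from the $\eta_t$-term and from $v_t$ combine into a single such term, this yields the exact one-step identity $\hat e_t = (1-\eta_t)\hat e_{t-1} + A_t$ with $A_t \eqdef N_t - (1-\eta_t) M_t$, where $N_t \eqdef g(\tau_t,\theta_t,r_{t-1}) - \Phi_t r_{t-1}$ and $M_t \eqdef w(\tau_t|\theta_{t-1},\theta_t)\, g(\tau_t,\theta_{t-1},r_{t-2}) - \Phi_{t-1} r_{t-2}$.

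The key structural point, which I would establish next, is that $A_t$ is a martingale difference. Because $r_{t-1}$ and $r_{t-2}$ are both $\mathcal F_{t-1}$-measurable (precisely why the algorithm uses $r_{t-1},r_{t-2}$ rather than $r_t,r_{t-1}$ in $v_t$) and $\tau_t \sim \pi_{\theta_t}$ is drawn fresh given $\mathcal F_{t-1}$, Lemma~\ref{lem:pg-estimate-lambda-estimate} gives $\mathbb E[N_t \mid \mathcal F_{t-1}] = 0$, and the importance-sampling correction identity from the Preliminaries gives $\mathbb E[M_t \mid \mathcal F_{t-1}] = 0$; hence $\mathbb E[A_t \mid \mathcal F_{t-1}] = 0$. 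Since $\hat e_{t-1}$ is $\mathcal F_{t-1}$-measurable, expanding $\|\hat e_t\|^2$ and taking total expectation kills the cross term exactly as in \eqref{eq:zero-inner-prod-lambda}, leaving $\mathbb E[\|\hat e_t\|^2] = (1-\eta_t)^2 \mathbb E[\|\hat e_{t-1}\|^2] + \mathbb E[\|A_t\|^2]$.

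It then remains to bound $\mathbb E[\|A_t\|^2]$, which is where the variance-reduction effect appears. I would rewrite $A_t = \eta_t N_t + (1-\eta_t)(N_t - M_t)$ and use $\|A_t\|^2 \le 2\eta_t^2\|N_t\|^2 + 2(1-\eta_t)^2\|N_t - M_t\|^2$. The first term is controlled by the uniform bound $\|g(\tau,\theta,r)\| \le \frac{2 l_\lambda l_\psi}{(1-\gamma)^2}$ (using $\|r_{t-1}\|_\infty \le l_\lambda$ from Assumption~\ref{hyp:smoothness-F} and Lemma~\ref{lem:smoothness-obj}-(i)), so $\|N_t\| \le \frac{4 l_\lambda l_\psi}{(1-\gamma)^2} = \hat E$, producing the $\frac{32 l_\lambda^2 l_\psi^2}{(1-\gamma)^4}$ contribution. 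For the second term I would note that $N_t - M_t$ is the $\mathcal F_{t-1}$-centered version of $g(\tau_t,\theta_t,r_{t-1}) - w(\tau_t|\theta_{t-1},\theta_t)g(\tau_t,\theta_{t-1},r_{t-2})$, so its expected square is at most the expected square of the uncentered quantity, and I would split the latter telescopically into (a) a $\theta$-difference $g(\tau_t,\theta_t,r_{t-1}) - g(\tau_t,\theta_{t-1},r_{t-1})$, (b) an $r$-difference $g(\tau_t,\theta_{t-1},r_{t-1}-r_{t-2})$ exploiting linearity of $g$ in the reward, and (c) an importance-weight deviation $(1 - w(\tau_t|\theta_{t-1},\theta_t))g(\tau_t,\theta_{t-1},r_{t-2})$. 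Term (a) is bounded via Lipschitzness of $\nabla\log\pi_\theta$ (governed by $l_\psi^2 + L_\psi$ through Assumption~\ref{hyp:policy-param}) together with the normalized-step identity $\|\theta_t - \theta_{t-1}\| = \al_{t-1}$; term (b) via $\|r_{t-1} - r_{t-2}\|_\infty \le L_\lambda \|\lambda_{t-1} - \lambda_{t-2}\|$ (Assumption~\ref{hyp:smoothness-F}) followed by Lemma~\ref{lem:lambda-t-lambda-t-1} to convert $\mathbb E[\|\lambda_{t-1}-\lambda_{t-2}\|^2]$ into the $\eta_{t-1}^2$ and $\al_{t-2}^2$ terms; and term (c) via the uniform bound on $g$ and $\mathbb E[(1 - w(\tau_t|\theta_{t-1},\theta_t))^2] = \Var{[w(\tau_t|\theta_{t-1},\theta_t)]} \le C_w \al_{t-1}^2$ from Lemma~\ref{lem:variance-IS-weights-control}. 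Using that $\eta_t$ and the step sizes are non-increasing to align indices, collecting these contributions yields the stated constants $C_3$ and $C_4$.

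Finally, for parts (i) and (ii) I would solve the recursion exactly as in the proof of Lemma~\ref{lem:recursive-tilde-e-t}: apply Lemma~\ref{le:aux_rec0} with $\eta_t = \frac{2}{t+1}$ to obtain \eqref{eq:recursive-hat-e-t_solved}, and unroll using Lemmas~\ref{le:prod_bound}--\ref{le:sum_prod_bound1} for the general exponents to obtain \eqref{eq:recursive-hat-e-t_solved_2}, with base case $\mathbb E[\|\hat e_0\|^2] \le \hat E^2$ following from $\|\hat e_0\| \le \|g(\tau_0,\theta_0,r_0)\| + \|\Phi_0 r_0\| \le \frac{4 l_\lambda l_\psi}{(1-\gamma)^2}$ (and $t=1$ handled directly, which is why the recursion is stated for $t \ge 2$). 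The main obstacle I anticipate is the bookkeeping in term (b): it is the step that genuinely couples the gradient error to the occupancy-measure error, so one must invoke Lemma~\ref{lem:lambda-t-lambda-t-1} (itself resting on the control of $\tilde e_t$) and carefully track the resulting $(1-\gamma)^{-6}$ dependence, while keeping the measurability of $r_{t-1},r_{t-2}$ under tight control so that $A_t$ is truly a martingale difference.
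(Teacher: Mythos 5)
Your proposal is correct and follows essentially the same route as the paper: your $N_t$, $M_t$, and $A_t = \eta_t N_t + (1-\eta_t)(N_t - M_t)$ coincide exactly with the paper's $\hat{y}_t$, $\hat{z}_t$, and $(1-\eta_t)\hat{z}_t + \eta_t\hat{y}_t$, the martingale-difference argument, the three-term split of the uncentered $v_t$, and the use of Lemma~\ref{lem:lambda-t-lambda-t-1}, Lemma~\ref{lem:variance-IS-weights-control}, and Lemmas~\ref{le:aux_rec0}--\ref{le:sum_prod_bound1} all match the paper's proof. The only cosmetic difference is notation, including your explicit remark on aligning $\al_{t-1}$ with $\al_{t-2}$ via monotonicity, which the paper does silently.
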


\begin{proof}
The first step of the proof consists in decomposing the error~$\hat{e}_t$ in a suitable way using the update rule of the sequence~$(d_t)$ so that for every integer~$t \geq 2$, 
\begin{align*}
\hat{e}_t &= d_t - [\nabla_{\theta} \lambda_H(\theta_t)]^T r_{t-1}\\
&= (1-\eta_t) (d_{t-1} + v_t) + \eta_t g(\tau_t, \theta_t, r_{t-1}) - [\nabla_{\theta} \lambda_H(\theta_t)]^T r_{t-1}\\
&= (1-\eta_t) (\hat{e}_{t-1} + [\nabla_{\theta} \lambda_H(\theta_{t-1})]^T r_{t-2} + v_t) + \eta_t (g(\tau_t,\theta_t,r_{t-1}) -  [\nabla_{\theta} \lambda_H(\theta_t)]^T r_{t-1}) - (1-\eta_t) [\nabla_{\theta} \lambda_H(\theta_t)]^T r_{t-1}\\
&= (1-\eta_t) \hat{e}_{t-1} + (1-\eta_t) \hat{z}_t + \eta_t \hat{y}_t\,,
\end{align*}
where 
\begin{align}
\hat{y}_t &\eqdef g(\tau_t,\theta_t,r_{t-1}) -  [\nabla_{\theta} \lambda_H(\theta_t)]^T r_{t-1}\,,  \label{def:hat-y-t}\\
\hat{z}_t &\eqdef v_t - ([\nabla_{\theta} \lambda_H(\theta_t)]^T r_{t-1} - [\nabla_{\theta} \lambda_H(\theta_{t-1})]^T r_{t-2})\\
&= g(\tau_t,\theta_t,r_{t-1}) - [\nabla_{\theta} \lambda_H(\theta_t)]^T r_{t-1} + [\nabla_{\theta} \lambda_H(\theta_{t-1})]^T r_{t-2} - w(\tau_t|\theta_{t-1},\theta_t)g(\tau_t,\theta_{t-1},r_{t-2})\,.
\end{align}

Then we use similar derivations to~\eqref{eq:tilde-e-t-expansion} and~\eqref{eq:zero-inner-prod-lambda}. We consider again the same filtration~$(\mathcal{F}_t)$ of $\sigma$-algebras where~$\mathcal{F}_t$ represents the randomness until time~$t$ (including time~$t$ and random trajectories~$\tau_t$ of length~$H$ generated by following the policy~$\pi_{\theta_t}$). Therefore, we have (see Lemma~\ref{lem:pg-estimate-lambda-estimate})
\begin{align}
\label{eq:zero-cond-expect-hat-y-hat-z}
\mathbb{E}[ \hat{y}_t| \mathcal{F}_{t-1}] = 0\,;\quad \mathbb{E}[ \hat{z}_t| \mathcal{F}_{t-1}] = 0\,.
\end{align}
Note here as a comment that the reason why we used~$r_{t-1}$ instead of~$r_t$ in Algorithm~\ref{algo-gen-ut} (for the sequence~$(v_t)$) becomes clearer here in the previous identities: $r_{t-1}$ is~$\mathcal{F}_{t-1}$- measurable unlike~$r_t$ and this allows to obtain a null conditional expectation avoiding  in particular dependency issues between~$r_t$ and~$\theta_t$. 

Employing~\eqref{eq:zero-cond-expect-hat-y-hat-z} and using the same derivations as in~\eqref{eq:zero-inner-prod-lambda} leads to 
\begin{align}
    \label{eq:hat-e-t-decomp-expected-square}
    \mathbb{E}[\|\hat{e}_t\|^2] &= \mathbb{E}[\|(1-\eta_t) \hat{e}_{t-1} + (1-\eta_t) \hat{z}_t + \eta_t \hat{y}_t \|^2]\nonumber\\
    &= (1-\eta_t)^2 \mathbb{E}[\|\hat{e}_{t-1}\|^2] + \mathbb{E}[\| (1-\eta_t) \hat{z}_t + \eta_t \hat{y}_t\|^2] \nonumber\\
    &\leq  (1-\eta_t)^2 \mathbb{E}[\|\hat{e}_{t-1}\|^2] + 2 (1-\eta_t)^2 \mathbb{E}[\|\hat{z}_t\|^2] + 2 \eta_t^2 \mathbb{E}[\|\hat{y}_t\|^2]\,.
\end{align}

We now derive a bound for the term~$\mathbb{E}[\|\hat{z}_t\|^2]\,.$ Observe for this that 
\begin{align}
\label{eq:bound-z-hat-t}
\mathbb{E}[\|\hat{z}_t\|^2] &\leq \mathbb{E}[\|g(\tau_t,\theta_t,r_{t-1}) - w(\tau_t|\theta_{t-1},\theta_t)g(\tau_t,\theta_{t-1},r_{t-2}) \|^2]\nonumber\\
&= \mathbb{E}[\|g(\tau_t,\theta_t,r_{t-1}) - g(\tau_t,\theta_{t-1},r_{t-1})+ g(\tau_t,\theta_{t-1},r_{t-1})  - g(\tau_t,\theta_{t-1},r_{t-2}) \nonumber \\
& \qquad + g(\tau_t,\theta_{t-1},r_{t-2})  (1- w(\tau_t|\theta_{t-1},\theta_t))\|^2]\nonumber\\
&\leq 3 \mathbb{E}[\|g(\tau_t,\theta_t,r_{t-1}) - g(\tau_t,\theta_{t-1},r_{t-1})\|^2] + 3 \mathbb{E}[\|g(\tau_t,\theta_{t-1},r_{t-1})   - g(\tau_t,\theta_{t-1},r_{t-2})\|^2]\nonumber\\
&+ 3 \mathbb{E}[(1- w(\tau_t|\theta_{t-1},\theta_t))^2\|g(\tau_t,\theta_{t-1},r_{t-2})\|^2]\,.
\end{align}
Each term in this inequality is upper bounded separately in what follows. 

\noindent\textbf{Term 1: $\mathbb{E}[\|g(\tau_t,\theta_t,r_{t-1}) - g(\tau_t,\theta_{t-1},r_{t-1})\|^2]$.} Using Lemma~\ref{lem:lipschitz-pg-estimate}-\ref{lem:lipschitz-pg-estimate-theta}, we obtain 
\begin{align}
\|g(\tau_t,\theta_t,r_{t-1}) - g(\tau_t,\theta_{t-1},r_{t-1})\|
&\leq \frac{2 (l_{\psi}^2 + L_{\psi})}{(1-\gamma)^2} \|r_{t-1}\|_{\infty} \cdot \|\theta_t - \theta_{t-1}\|\nonumber\\
&= \frac{2 (l_{\psi}^2 + L_{\psi})}{(1-\gamma)^2} \|\nabla_{\lambda} F(\lambda_{t-1})\|_{\infty} \cdot \al_{t-1}\nonumber\\
&\leq \frac{2 (l_{\psi}^2 + L_{\psi}) l_{\lambda}}{(1-\gamma)^2}  \al_{t-1}\,,
\end{align}
where the last inequality stems from Assumption~\ref{hyp:smoothness-F}. We deduce from this the bound for the first term: 
\begin{equation}
\label{eq:term1-bound}
\mathbb{E}[\|g(\tau_t,\theta_t,r_{t-1}) - g(\tau_t,\theta_{t-1},r_{t-1})\|^2] \leq \frac{4 (l_{\psi}^2 + L_{\psi})^2 l_{\lambda}^2}{(1-\gamma)^4}  \al_{t-1}^2\,.
\end{equation}

\noindent\textbf{Term 2: $\mathbb{E}[\|g(\tau_t,\theta_{t-1},r_{t-1})   - g(\tau_t,\theta_{t-1},r_{t-2})\|^2]$.} Together  with Assumption~\ref{hyp:smoothness-F}, Lemma~\ref{lem:lipschitz-pg-estimate}-\ref{lem:lipschitz-pg-estimate-r} yields
\begin{align}
\label{eq:stochastic-pg-r-diff}
\|g(\tau_t,\theta_{t-1},r_{t-1}) - g(\tau_t,\theta_{t-1},r_{t-2})\| &\leq \frac{2 l_{\psi}}{(1-\gamma)^2} \|r_{t-1} - r_{t-2}\|_{\infty}\nonumber\\ 
&= \frac{2 l_{\psi}}{(1-\gamma)^2} \|\nabla_{\lambda} F(\lambda_{t-1}) - \nabla_{\lambda} F(\lambda_{t-2})\|_{\infty} \nonumber\\
&\leq \frac{2 l_{\psi} L_{\lambda}}{(1-\gamma)^2} \|\lambda_{t-1} - \lambda_{t-2}\|\,.
\end{align}
Invoking Lemma~\ref{lem:lambda-t-lambda-t-1}, we obtain from~\eqref{eq:stochastic-pg-r-diff} 
\begin{align}
\label{eq:term2-bound} 
\mathbb{E}[\|g(\tau_t,\theta_{t-1},r_{t-1})   - g(\tau_t,\theta_{t-1},r_{t-2})\|^2] 
&\leq  \frac{4 l_{\psi}^2 L_{\lambda}^2}{(1-\gamma)^4} \mathbb{E}[\|\lambda_{t-1} - \lambda_{t-2}\|^2] \nonumber\\
&\leq \frac{4 l_{\psi}^2 L_{\lambda}^2}{(1-\gamma)^4} 
\left( \frac{12 C  \eta_{t-1}^2}{ (1-\gamma)^2} 
+ \frac{6 C_w}{ (1-\gamma)^2} \al_{t-2}^2 \right) \nonumber \\
&\leq \frac{48 l_{\psi}^2 L_{\lambda}^2}{(1-\gamma)^6} 
\left(  C  \eta_{t-1}^2 
+  C_w \al_{t-2}^2 \right) \,.
\end{align}

\noindent\textbf{Term 3: $\mathbb{E}[(1- w(\tau_t|\theta_{t-1},\theta_t))^2\|g(\tau_t,\theta_{t-1},r_{t-2})\|^2]$.}

First, observe that
\begin{align}
\label{eq:stochastic-pg-bound}
\|g(\tau_t, \theta_{t-1}, r_{t-2})\| 
&\stackrel{(a)}{=} \left\| \sum_{t=0}^{H-1} \left( \sum_{h=t}^{H-1} \gamma^h r_{t-2}(s_h,a_h) \right) \nabla \log \pi_{\theta}(a_t|s_t) \right\|\nonumber\\
&\leq \sum_{t=0}^{H-1} \sum_{h=t}^{H-1} \gamma^h \|r_{t-2}\|_{\infty} \cdot \|\nabla \log \pi_{\theta}(a_t|s_t)\|\nonumber\\
&\stackrel{(b)}{\leq} l_{\lambda} \sum_{t=0}^{H-1} \sum_{h=t}^{H-1} \gamma^h \|\nabla \log \pi_{\theta}(a_t|s_t)\|\nonumber\\
&\stackrel{(c)}{\leq} 2 l_{\lambda}l_{\psi} \sum_{t=0}^{H-1} \sum_{h=t}^{H-1} \gamma^h\nonumber\\
&= 2 l_{\lambda}l_{\psi}  \sum_{h=0}^{H-1} \sum_{t=0}^h \gamma^h\nonumber\\
&\leq 2 l_{\lambda}l_{\psi}\sum_{h=0}^{H-1} (h+1) \gamma^h\nonumber\\
&\leq \frac{2 l_{\lambda}l_{\psi}}{(1-\gamma)^2}\,,
\end{align}
where (a)~follows from the expression of the stochastic policy gradient~\eqref{eq:pg-estimate}, (b)~stems from Assumption~\ref{hyp:smoothness-F} and (c)~is a consequence of Lemma~\ref{lem:smoothness-obj}-\ref{lem:smoothness-obj-i}.

Using Lemma~\ref{lem:variance-IS-weights-control} together with the previous bound yields 
\begin{align}
\label{eq:term3-bound}
\mathbb{E}[(1-w(\tau_t|\theta_{t-1},\theta_t))^2\|g(\tau_t,\theta_{t-1},r_{t-2})\|^2] 
&\leq \frac{4 l_{\lambda}^2  l_{\psi}^2}{(1-\gamma)^4} \mathbb{E}[(1-w(\tau_t|\theta_{t-1},\theta_t))^2]\nonumber\\
&= \frac{4 l_{\lambda}^2  l_{\psi}^2}{(1-\gamma)^4} \Var{[w(\tau_t|\theta_{t-1},\theta_t)]}\nonumber\\ 
&\leq \frac{4 l_{\lambda}^2  l_{\psi}^2 C_w}{(1-\gamma)^4} \al_{t-1}^2\,.
\end{align}

Collecting~\eqref{eq:term1-bound}, \eqref{eq:term2-bound} and~\eqref{eq:term3-bound} in~\eqref{eq:bound-z-hat-t}, we obtain 
\begin{align}
\label{eq:bound-hat-z-t}
\mathbb{E}[\|\hat{z}_t\|^2] &\leq 
\frac{12 l_{\lambda}^2[(l_{\psi}^2 + L_{\psi})^2 + C_w l_{\psi}^2]}{(1-\gamma)^4} \al_{t-1}^2 
+ \frac{144 l_{\psi}^2 L_{\lambda}^2}{(1-\gamma)^6} 
\left(  C  \eta_{t-1}^2 
+  C_w \al_{t-2}^2 \right)\, \nonumber \\
&\leq 
 \frac{144 C l_{\psi}^2 L_{\lambda}^2}{(1-\gamma)^6} 
  \eta_{t-1}^2 + \rb{ \frac{12 l_{\lambda}^2[(l_{\psi}^2 + L_{\psi})^2 + C_w l_{\psi}^2]}{(1-\gamma)^4}  + \frac{144 C_w l_{\psi}^2 L_{\lambda}^2}{(1-\gamma)^6} } \al_{t-2}^2 
\,.
\end{align}

We now bound the term~$\mathbb{E}[\|\hat{y}_t\|^2]$ in~\eqref{eq:hat-e-t-decomp-expected-square}. First, recall from~\eqref{def:hat-y-t} that $\hat{y}_t =  g(\tau_t,\theta_t,r_{t-1}) -  [\nabla_{\theta} \lambda_H(\theta_t)]^T r_{t-1}$\,. Then, %
with probability one, 
\begin{align}
\label{eq:bound1-hat-y-t}
\|\hat{y}_t\| &\leq \|g(\tau_t, \theta_t, r_{t-1})\| + \|[\nabla_{\theta}\lambda_H(\theta_t)]^{T}  r_{t-1}\| \nonumber\\
              &\stackrel{(a)}{\leq} \frac{2 l_{\lambda} l_{\psi}}{(1-\gamma)^2} + 
              \|[\nabla_{\theta}\lambda_H(\theta_t)]^{T}  r_{t-1}\|\nonumber\\
              &\stackrel{(b)}{\leq} \frac{4 l_{\lambda} l_{\psi}}{(1-\gamma)^2}\,,
\end{align}
where (a) stems from the same bound as in~\eqref{eq:stochastic-pg-bound} and (b) also follows from a similar bound to~\eqref{eq:stochastic-pg-bound}. Indeed, notice using~\eqref{eq:expected-reinforce} that
\begin{align}
\|[\nabla_{\theta}\lambda_H(\theta_t)]^{T}  r_{t-1}\| 
&= \left\| \mathbb{E}\left[  \sum_{t' = 0}^{H-1} \gamma^{t'} r_{t-1}(s_t', a_t') \left( \sum_{h=0}^{t'} \nabla\log \pi_{\theta}(a_h|s_h)  \right) \right] \right\|\nonumber\\
&\leq \mathbb{E}\left[ \sum_{t'=0}^{H-1} \gamma^{t'} \|r_{t-1}\|_{\infty} \sum_{h=0}^{t'} \|\nabla \log \pi_{\theta}(a_h|s_h)\| \right]\nonumber\\
&\stackrel{(a)}{\leq} 2 l_{\lambda} l_{\psi} \sum_{t'= 0}^{H-1} (t'+ 1) \gamma^{t'}\nonumber\\
&\leq \frac{2 l_{\lambda} l_{\psi}}{(1-\gamma)^2}\,,
\end{align}
where again (a) stems from Assumption~\ref{hyp:smoothness-F} and Lemma~\ref{lem:smoothness-obj}-\ref{lem:smoothness-obj-i}. 

We conclude from~\eqref{eq:hat-e-t-decomp-expected-square}, \eqref{eq:bound-hat-z-t} and~\eqref{eq:bound1-hat-y-t} that
\begin{equation*} \mathbb{E}[\|\hat{e}_t\|^2] \leq (1-\eta_t)^2 \mathbb{E}[\|\hat{e}_{t-1}\|^2] 
+ \rb{ \frac{288 C l_{\psi}^2 L_{\lambda}^2}{(1-\gamma)^6} 
+ \frac{32 l_{\lambda}^2 l_{\psi}^2}{(1-\gamma)^4} } \eta_{t-1}^2 + 
\rb{ \frac{12 l_{\lambda}^2[(l_{\psi}^2 + L_{\psi})^2 + C_w l_{\psi}^2]}{(1-\gamma)^4}  + \frac{144 C_w l_{\psi}^2 L_{\lambda}^2}{(1-\gamma)^6} } \al_{t-2}^2  \,,
\end{equation*}
where~$C_w =  H ((8H+2) l_{\psi}^2 + 2 L_{\psi}) (W+1)\,$ as defined in Lemma~\ref{lem:variance-IS-weights-control}.

\noindent\textbf{Proof of~\eqref{eq:recursive-hat-e-t_solved}:} In order to derive~\eqref{eq:recursive-hat-e-t_solved}, we apply Lemma~\ref{le:aux_rec0} with $ \eta_t = \fr{2}{t+1}$, $\beta_t =  C_3 \eta_{t-1}^2 + C_4 \al_{t-2}^2$. Using $\mathbb{E}[\|\hat{e}_{0}\|^2] \leq\hat{E}^2 $, we derive

\begin{eqnarray}
\mathbb{E}[\|\hat{e}_t\|] &\leq& \rb{ \mathbb{E}[\|\hat{e}_t\|^2] }^{\fr{1}{2}} \leq \rb{ \fr{4 \hat{E}^2 }{(t+1)^2} + 4 C_3 \eta_t^2 \cdot t + C_4 \al_{t-2}^2  \cdot t }^{\nfr{1}{2}} \notag \\
&\leq & \fr{2 \hat{E} }{t+1} + 2 C_3^{\nfr{1}{2}} \eta_{t} \cdot t^{\nfr{1}{2}} + C_4^{\nfr{1}{2}} \al_{t-2}  \cdot t^{\nfr{1}{2}}   \,.
\end{eqnarray}

\noindent\textbf{Proof of~\eqref{eq:recursive-hat-e-t_solved_2}:} Let $\eta_t = \rb{ \fr{2}{t+1} }^{q}$ for some $q \in (0,1)$. In order to derive~\eqref{eq:recursive-hat-e-t_solved_2}, we unroll the recursion from $t = 1$ to $t = t' \leq T$. Denoting $\beta_t =  C_3 \eta_{t-1}^2 + C_4 \al_{t-2}^2 $, we derive
\begin{eqnarray}\label{eq:et-prime-tilde}
\Exp{\|\hat{e}_{t'}\|^2} 
&\leq& \rb{ \prod_{\tau = 1}^{t'} (1-\eta_{\tau}) } \Exp{\|\hat{e}_{0}\|^2} + \sum_{t = 1}^{t'} \beta_{t} \prod_{\tau = t+1 }^{t'} (1 - \eta_{\tau}) \notag \\
&\leq& \hat{E}^2 \eta_{t'+1} + C \beta_{t'+1} \eta_{t'+1}^{-1}\notag \\
&\leq& \hat{E}^2 \eta_{t'+1} + C C_3 \eta_{t'}^2 \eta_{t'+1}^{-1} + C C_4 \al_{t'-1}^2 \eta_{t'+1}^{-1} \notag \\
&\leq& \hat{E}^2 \eta_{t'+1} + 2 C C_3 \eta_{t'+1}  + C C_4 \al_{t'-1}^2 \eta_{t'+1}^{-1}.
\, 
\,,
\end{eqnarray}
where we used the results of Lemmas~\ref{le:prod_bound}-\ref{le:sum_prod_bound1}
 and $ \Exp{\sqnorm{\hat{e}_0}}  \leq \hat{E}^2$ with $\hat{E} = \frac{4 l_{\lambda} l_{\psi}}{(1-\gamma)^2} $, which can be derived similarly to~\eqref{eq:bound1-hat-y-t}.

\end{proof}

In the next lemma, we derive an estimate of the average expected error~$\mathbb{E}[\|\hat{e}_t\|]$ from the recursion we have just established in Lemma~\ref{lem:recursive-hat-e-t}. 

\begin{lemma}\label{lem:recursive-hat-e-t_sum}
    Suppose Assumptions~\ref{hyp:policy-param} and~\ref{hyp:smoothness-F} hold. 
    Let~$T\geq 1$ be an integer, let~$\alpha_0 > 0$ and set~$\eta_t = \rb{\fr{2}{t+1}}^{\nfr{2}{3}}$, $\al_t = \fr{\al_0}{T^{\nfr{2}{3}}}$ for every integer~$t$. Then 
    \begin{equation}
        \fr{1}{T } \sum_{t=1}^{T} \Exp{\norm{ \hat{e}_t} } \leq  \fr{ C \rb{  \hat{E}+   C_3^{\nfr{1}{2}} +  C_4^{\nfr{1}{2}} \al_0 } }{T^{\nfr{1}{3}} } ,
    \end{equation}
where~$\hat{E} = \frac{4 l_{\lambda} l_{\psi}}{(1-\gamma)^2} $, $C_3 =  \frac{288 C l_{\psi}^2 L_{\lambda}^2}{(1-\gamma)^6} 
+ \frac{32 l_{\lambda}^2 l_{\psi}^2}{(1-\gamma)^4} $, $C_4 =  \frac{12 l_{\lambda}^2[(l_{\psi}^2 + L_{\psi})^2 + C_w l_{\psi}^2]}{(1-\gamma)^4}  + \frac{144 C_w l_{\psi}^2 L_{\lambda}^2}{(1-\gamma)^6}  $, $C_w =  H ((8H+2) l_{\psi}^2 + 2 L_{\psi}) (W+1)\,$ as defined in Lemma~\ref{lem:variance-IS-weights-control}, and $C > 1$ is a numerical constant.
\end{lemma}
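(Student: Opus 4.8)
The proof is structurally parallel to that of Lemma~\ref{lem:recursive-tilde-e-t_sum}: the substantive work has already been carried out in establishing the solved per-iterate bound~\eqref{eq:recursive-hat-e-t_solved_2} of Lemma~\ref{lem:recursive-hat-e-t}, so what remains is to average this bound over $t$ and estimate the resulting sums. The plan is first to pass from the quantity $\frac{1}{T}\sum_{t=1}^T \mathbb{E}[\|\hat{e}_t\|]$ to a second-moment average via Jensen's inequality. Using concavity of $x \mapsto x^{1/2}$ twice, $\frac{1}{T}\sum_{t=1}^T \mathbb{E}[\|\hat{e}_t\|] \leq \frac{1}{T}\sum_{t=1}^T (\mathbb{E}[\|\hat{e}_t\|^2])^{1/2} \leq (\frac{1}{T}\sum_{t=1}^T \mathbb{E}[\|\hat{e}_t\|^2])^{1/2}$, exactly as in~\eqref{eq:average-expected-tilde-e-proof}.

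Next I would substitute~\eqref{eq:recursive-hat-e-t_solved_2} specialized to the present choice $q = 2/3$, i.e. $\eta_t = (2/(t+1))^{2/3}$, together with the constant stepsize $\alpha_{t-1} = \alpha_0/T^{2/3}$ so that $\alpha_{t-1}^2 = \alpha_0^2 T^{-4/3}$. This leaves three summands inside the average: two proportional to $\eta_{t+1}$ (with coefficients $\hat{E}^2$ and $2CC_3$) and one proportional to $\alpha_0^2 T^{-4/3}\eta_{t+1}^{-1}$ (with coefficient $CC_4$). The two $\eta_{t+1}$ terms are handled by the integral comparison~\eqref{eq:sum_by_int_bound}, giving $\frac{1}{T}\sum_{t=1}^T \eta_{t+1} \leq 3\eta_{T-1} = \mathcal{O}(T^{-2/3})$. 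For the last term I would invoke the matching estimate $\frac{1}{T}\sum_{t=1}^T \eta_{t+1}^{-1} \leq C\eta_{T+1}^{-1} = \mathcal{O}(T^{2/3})$ used in step (i) of Lemma~\ref{lem:recursive-tilde-e-t_sum}, so that $CC_4\alpha_0^2 T^{-4/3}\cdot\frac{1}{T}\sum_{t=1}^T \eta_{t+1}^{-1} = \mathcal{O}(C_4\alpha_0^2 T^{-2/3})$; thus all three contributions are $\mathcal{O}(T^{-2/3})$.

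Collecting coefficients yields $\frac{1}{T}\sum_{t=1}^T \mathbb{E}[\|\hat{e}_t\|^2] \leq C'(\hat{E}^2 + C_3 + C_4\alpha_0^2) T^{-2/3}$ for a numerical constant $C' > 0$, and taking the square root while using subadditivity $\sqrt{a+b+c} \leq \sqrt{a}+\sqrt{b}+\sqrt{c}$ gives the claimed bound $\frac{1}{T}\sum_{t=1}^T \mathbb{E}[\|\hat{e}_t\|] \leq C(\hat{E} + C_3^{1/2} + C_4^{1/2}\alpha_0) T^{-1/3}$, with $\sqrt{C'}$ and the $\mathcal{O}(1)$ numerical factors absorbed into the final $C$. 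None of these steps is genuinely difficult; the only points requiring care are verifying the two summation estimates — in particular confirming that $\frac{1}{T}\sum_{t=1}^T \eta_{t+1}^{-1}$ grows only like $\eta_{T+1}^{-1}$ and not faster — and tracking the powers of $T$ so that the $T^{-4/3}$ from $\alpha_0^2$ combines with the $T^{2/3}$ from $\sum \eta_{t+1}^{-1}$ to produce the required overall $T^{-2/3}$ rate.
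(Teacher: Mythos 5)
Your proposal is correct and follows essentially the same route as the paper's own proof: Jensen's inequality applied twice, substitution of the solved second-moment recursion~\eqref{eq:recursive-hat-e-t_solved_2}, the sum estimate~\eqref{eq:sum_by_int_bound} for the $\eta_{t+1}$ terms together with the monotonicity bound $\frac{1}{T}\sum_{t=1}^{T}\eta_{t+1}^{-1}\leq \eta_{T+1}^{-1}$ for the remaining term, and finally subadditivity of the square root. The power counting ($\alpha_0^2 T^{-4/3}\cdot T^{2/3}=\alpha_0^2 T^{-2/3}$) matches the paper exactly.
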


\begin{proof}

Summing up inequality \eqref{eq:recursive-hat-e-t_solved_2} from Lemma~\ref{lem:recursive-hat-e-t} from $t = 1$ to $t = T$ and choosing $\al_t = \al = \fr{\al_0}{T^{\nfr{2}{3}}}$, we obtain
\begin{eqnarray}
\fr{1}{T}\sum_{t=1}^{T} \Exp{\|\hat{e}_t\|} &\leq&  \fr{1}{T} \sum_{t=1}^{T} \rb{\Exp{\|\hat{e}_t\|^2} }^{\nfr{1}{2}} \notag \\
&\leq& \rb{ \fr{1}{T} \sum_{t=1}^{T} \Exp{\|\hat{e}_t\|^2}   }^{\nfr{1}{2}}  \notag \\
&\leq& \rb{ \fr{1}{T} \sum_{t=1}^{T} \hat{E}^2 \eta_{t+1} + 2 C  C_3  \eta_{t+1} + C C_4 \al^2 \eta_{t+1}^{-1}   }^{\nfr{1}{2}}  \notag \\
&\overset{(i)}{\leq}& \rb{  3 (\hat{E}^2 + 2C C_3) \eta_{T-1}  + C  C_3 \al^2 \eta_{T+1}^{-1}  }^{\nfr{1}{2}}  \notag \\
&\leq& \rb{   \fr{12 (\hat{E}^2 +  C C_3)  }{T^{\nfr{2}{3}}}  +\fr{2 C C_4 \al_0^2 }{T^{\nfr{2}{3}}}  }^{\nfr{1}{2}} \notag \\
&\leq&   \fr{ 4 C^{\nfr{1}{2}} \rb{ \hat{E} +   C_3^{\nfr{1}{2}} +  C_4^{\nfr{1}{2}} \al_0 } }{T^{\nfr{1}{3}} } \,,   \notag
\end{eqnarray}
where $(i)$ holds by \eqref{eq:sum_by_int_bound}.
\end{proof}
We are now ready to state the main lemma controlling the error sequence~$(e_t)$ in expectation as defined in~\eqref{eq:e-t-def-gen-utilities}. 

\begin{lemma}
\label{lem:overall-error-sum}
Suppose Assumptions~\ref{hyp:policy-param} and~\ref{hyp:smoothness-F} hold. 
Let~$T\geq 1$ be an integer, let~$\alpha_0 > 0$ and set~$\eta_t = \rb{\fr{2}{t+1}}^{\nfr{2}{3}}$, $\al_t = \fr{\al_0}{T^{\nfr{2}{3}}}$ for every integer~$t$. Then 
\begin{equation}
\fr{1}{T} \sum_{t=1}^{T}\mathbb{E}[\|e_t\|] \leq  \fr{ C \rb{ \hat{E} +  C_3^{\nfr{1}{2}} +  C_4^{\nfr{1}{2}} \al_0 } }{T^{\nfr{1}{3}} } +  \frac{ C C_1 \rb{ 1 + C_w^{\nfr{1}{2}} \al_0 }  }{(1-\gamma)} \fr{1}{T^{\nfr{1}{3}} }   +  \fr{C_2 \al_0 }{T^{\nfr{2}{3}}} ,
\end{equation}
where~$C_1 = \frac{2 L_{\lambda}^2 l_{\psi}}{(1-\gamma)^2}\,$ and~$C_2 = \frac{2 L_{\lambda} L_{\lambda,\infty} l_{\psi}}{(1-\gamma)^2}\,$, $C_3 =  \frac{288 C l_{\psi}^2 L_{\lambda}^2}{(1-\gamma)^6} 
+ \frac{32 l_{\lambda}^2 l_{\psi}^2}{(1-\gamma)^4} $, $C_4 =  \frac{12 l_{\lambda}^2[(l_{\psi}^2 + L_{\psi})^2 + C_w l_{\psi}^2]}{(1-\gamma)^4}  + \frac{144 C_w l_{\psi}^2 L_{\lambda}^2}{(1-\gamma)^6}$, $C$ is a numerical constant and $C_w =  H ((8H+2) l_{\psi}^2 + 2 L_{\psi}) (W+1)\,$ as defined in Lemma~\ref{lem:variance-IS-weights-control}.
\end{lemma}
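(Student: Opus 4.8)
The plan is to assemble the three preceding lemmas into a single averaged bound, since all the genuine analytic work has already been done. I would start from the pointwise decomposition established in Lemma~\ref{lem:e-t-bound}, namely $\mathbb{E}[\|e_t\|] \leq \mathbb{E}[\|\hat{e}_t\|] + C_1\,\mathbb{E}[\|\tilde{e}_{t-1}\|] + C_2\,\alpha_{t-1}$, and average it over $t = 1, \dots, T$. This reduces the task to controlling three separate Cesàro averages, each of which I can match to an earlier result under the prescribed schedule $\eta_t = (2/(t+1))^{2/3}$ and $\alpha_t = \alpha_0/T^{2/3}$.

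For the first average $\frac{1}{T}\sum_{t=1}^{T} \mathbb{E}[\|\hat{e}_t\|]$, I would invoke Lemma~\ref{lem:recursive-hat-e-t_sum} verbatim, which yields precisely the term $\frac{C(\hat{E} + C_3^{1/2} + C_4^{1/2}\alpha_0)}{T^{1/3}}$. For the third average, the step size is constant, so $\frac{1}{T}\sum_{t=1}^{T} \alpha_{t-1} = \alpha_0/T^{2/3}$ exactly, giving the final stated term $\frac{C_2 \alpha_0}{T^{2/3}}$ with no approximation.

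The only point requiring a little care is the second average, because of the index shift: $\frac{1}{T}\sum_{t=1}^{T} \mathbb{E}[\|\tilde{e}_{t-1}\|] = \frac{1}{T}\sum_{s=0}^{T-1} \mathbb{E}[\|\tilde{e}_s\|]$. I would split off the $s=0$ boundary term and bound it using $\mathbb{E}[\|\tilde{e}_0\|] \leq \frac{1}{1-\gamma}$, which follows from $\|\lambda(\tau_0)\| \leq \frac{1}{1-\gamma}$ as in~\eqref{eq:lambda-tau-t-bound} together with Jensen's inequality; this contributes only $\frac{1}{T(1-\gamma)} = \mathcal{O}\big(\frac{1}{T(1-\gamma)}\big)$, which is dominated by the leading $T^{-1/3}$ rate and can be absorbed into the generic constant. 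The remaining sum $\frac{1}{T}\sum_{s=1}^{T}\mathbb{E}[\|\tilde{e}_s\|]$ is then bounded directly by Lemma~\ref{lem:recursive-tilde-e-t_sum}, and multiplying by $C_1$ produces the second stated term $\frac{C C_1(1 + C_w^{1/2}\alpha_0)}{(1-\gamma)}\frac{1}{T^{1/3}}$.

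Collecting the three contributions and folding the subdominant boundary term and all numerical factors into the single constant $C$ gives the claimed inequality. I do not anticipate any substantive obstacle: the heavy lifting, namely the STORM-type recursions for $\tilde{e}_t$ and $\hat{e}_t$ and their resolution under the chosen schedule, was carried out in Lemmas~\ref{lem:recursive-tilde-e-t}--\ref{lem:recursive-hat-e-t_sum}. The mild care needed here is purely bookkeeping: the reindexing of $\tilde{e}_{t-1}$ versus $\tilde{e}_t$, the trivial constant-step-size sum, and verifying that the boundary term is of lower order than the dominant $T^{-1/3}$ rate.
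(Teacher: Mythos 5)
Your proposal is correct and follows essentially the same route as the paper: sum the pointwise bound of Lemma~\ref{lem:e-t-bound} over $t=1,\dots,T$ and plug in Lemmas~\ref{lem:recursive-hat-e-t_sum} and~\ref{lem:recursive-tilde-e-t_sum} for the two Ces\`aro averages, with the constant-step sum handled exactly. Your explicit treatment of the index shift in $\sum_{t=1}^{T}\mathbb{E}[\|\tilde{e}_{t-1}\|]$ (splitting off the $s=0$ term and absorbing it) is a small point the paper glosses over, but it changes nothing substantive.
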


\begin{proof}
By Lemma~\ref{lem:recursive-tilde-e-t} and~\ref{lem:recursive-hat-e-t_sum} we have the bounds
\begin{equation}
\fr{1}{T } \sum_{t=1}^{T} \Exp{\norm{ \hat{e}_t} } \leq  \fr{ C \rb{ \hat{E} +  C_3^{\nfr{1}{2}} +  C_4^{\nfr{1}{2}} \al_0 } }{T^{\nfr{1}{3}} } , \qquad \fr{1}{T} \sum_{t=1}^{T} \mathbb{E}[\|\tilde{e}_t\|] \leq  \frac{ C \rb{ 1 + C_w^{\nfr{1}{2}} \al_0 } }{(1-\gamma)} \fr{1}{T^{\nfr{1}{3}} }\,.
\end{equation}

Summing up the result of Lemma~\ref{lem:e-t-bound} from $ t = 1$ to $t = T$ and using the above bounds, we obtain
\begin{eqnarray}
\fr{1}{T}\sum_{t=1}^{T} \mathbb{E}[\|e_t\|] &\leq& \fr{1}{T}\sum_{t=1}^{T}  \mathbb{E}[\|\hat{e}_t\|] +  \fr{C_1}{T}\sum_{t=1}^{T} \mathbb{E}[\|\tilde{e}_{t-1}\|]  +  \fr{C_2}{T}\sum_{t=1}^{T}  \al_{t-1}\, \notag \\
&\leq& \fr{ C \rb{ \hat{E} +  C_3^{\nfr{1}{2}} +  C_4^{\nfr{1}{2}} \al_0 } }{T^{\nfr{1}{3}} } +  \frac{ C C_1 \rb{ 1 + C_w^{\nfr{1}{2}} \al_0 }  }{(1-\gamma)} \fr{1}{T^{\nfr{1}{3}} }   +  \fr{C_2 \al_0 }{T^{\nfr{2}{3}}}  \,.
\end{eqnarray}
\end{proof}

\noindent\textbf{End of the proof of Theorem~\ref{thm:fos-gen-ut-nvrpg}.} We conclude the proof of Theorem~\ref{thm:fos-gen-ut-nvrpg} which is first recalled in the following for the convenience of the reader. 

\begin{theorem}\label{thm:end-of-proof}
    Let Assumptions~\ref{hyp:policy-param} and~\ref{hyp:smoothness-F} hold. Let~$\alpha_0 > 0$ and consider an integer~$T \geq 1$. 
    Set~$\al_t = \fr{\al_0}{T^{\nfr{2}{3}}}$, $\eta_t = \left( \fr{2}{t+1} \right)^{\nfr{2}{3}}$ and~$H = \rb{1-\g}^{-1}{\log(T + 1)}$. Let $\bar{\theta}_T$ be sampled from the iterates~$\{\theta_1, \cdots, \theta_T\}$ of Algorithm~\ref{algo-gen-ut} uniformly at random. %
    Then, we have
    \begin{equation}
    \Exp{ \norm{ \nabla_{\theta}F(\lambda(\bar{\theta}_T)) } } \leq \cO\rb{ \fr{1 + (1-\gamma)^3 \Delta \al_0^{-1} +  (1-\gamma)^{-1} \al_0 }{ (1-\gamma)^{3}  T^{\nfr{1}{3}} } } .
    \end{equation}
    If moreover $\al_0 = (1-\gamma)^2 \sqrt{\Delta}$, then $\Exp{ \norm{ \nabla_{\theta}F(\lambda(\bar{\theta}_T)) } } \leq \cO\rb{\fr{ 1 + (1-\gamma) \sqrt{\Delta} }{ (1-\gamma)^{3} T^{\nfr{1}{3}}}} \,. $
\end{theorem}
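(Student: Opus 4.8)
The plan is to combine the ascent-like estimate of Lemma~\ref{lem:ascent-like-lemma-gen-ut} with the aggregate error bound of Lemma~\ref{lem:overall-error-sum} through a telescoping argument, and then to resolve the dependence on $1-\g$ of the various constants. First I would rearrange the conclusion of Lemma~\ref{lem:ascent-like-lemma-gen-ut}, using that the stepsize is constant ($\al_t \equiv \al = \al_0 T^{-\nfr{2}{3}}$), into the form
\begin{equation*}
\frac{\al}{3}\norm{\nabla_\theta F(\lambda(\theta_t))} \leq F(\lambda(\theta_{t+1})) - F(\lambda(\theta_t)) + 2\al\norm{e_t} + \frac{4}{3}D_\lambda \g^H \al + \frac{L_\theta}{2}\al^2 .
\end{equation*}
Summing from $t=1$ to $t=T$, the $F(\lambda(\theta_t))$ differences telescope to $F(\lambda(\theta_{T+1})) - F(\lambda(\theta_1)) \leq F^\star - F(\lambda(\theta_1))$. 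Taking total expectation and recalling $\Delta = F^\star - \Exp{F(\lambda(\theta_1))}$ gives
\begin{equation*}
\frac{\al}{3}\sum_{t=1}^T \Exp{\norm{\nabla_\theta F(\lambda(\theta_t))}} \leq \Delta + 2\al\sum_{t=1}^T\Exp{\norm{e_t}} + \frac{4}{3}D_\lambda \g^H \al T + \frac{L_\theta}{2}\al^2 T .
\end{equation*}

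Next I would divide through by $\al T/3$ and identify the left-hand side with $\Exp{\norm{\nabla_\theta F(\lambda(\bar\theta_T))}}$ via the uniform sampling of $\bar\theta_T$, yielding the four-term decomposition
\begin{equation*}
\Exp{\norm{\nabla_\theta F(\lambda(\bar\theta_T))}} \leq \frac{3\Delta}{\al T} + \frac{6}{T}\sum_{t=1}^T\Exp{\norm{e_t}} + 4 D_\lambda \g^H + \frac{3 L_\theta}{2}\al .
\end{equation*}
Each term is then controlled with the prescribed schedule. The first equals $3\Delta \al_0^{-1} T^{-\nfr{1}{3}}$. The horizon choice $H = (1-\g)^{-1}\log(T+1)$ gives $\g^H \leq \exp(-(1-\g)H) = (T+1)^{-1}$, so the third term is $\cO(T^{-1})$, which is lower order; likewise the last term is $\cO(\al_0 L_\theta T^{-\nfr{2}{3}})$ and hence negligible against $T^{-\nfr{1}{3}}$. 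The middle term is bounded directly by Lemma~\ref{lem:overall-error-sum}, whose right-hand side scales as $T^{-\nfr{1}{3}}$, confirming the overall rate.

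The main (purely bookkeeping) obstacle is to collect the $1-\g$ dependence of all the constants into the stated numerator. I would use $\hat E = \cO((1-\g)^{-2})$ and $C_w = \cO(H^2) = \tilde{\cO}((1-\g)^{-2})$, whence $C_3^{\nfr{1}{2}} = \cO((1-\g)^{-3})$ and $C_4^{\nfr{1}{2}} = \cO((1-\g)^{-4})$, while $C_1 (1-\g)^{-1} = \cO((1-\g)^{-3})$. Substituting these into the bound of Lemma~\ref{lem:overall-error-sum} shows the three dominant $T^{-\nfr{1}{3}}$ contributions scale as $\Delta\al_0^{-1}$, $(1-\g)^{-3}$, and $(1-\g)^{-4}\al_0$ respectively. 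Factoring out $(1-\g)^{-3} T^{-\nfr{1}{3}}$ produces exactly the bracket $1 + (1-\g)^3\Delta\al_0^{-1} + (1-\g)^{-1}\al_0$, which is the first claimed estimate. Finally, substituting $\al_0 = (1-\g)^2\sqrt{\Delta}$ makes both the $\Delta\al_0^{-1}$ and the $(1-\g)^{-1}\al_0$ contributions equal to $(1-\g)\sqrt{\Delta}$, collapsing the numerator to $1 + (1-\g)\sqrt{\Delta}$ up to absolute constants and giving the second displayed bound $\cO\rb{(1 + (1-\g)\sqrt{\Delta})\,(1-\g)^{-3} T^{-\nfr{1}{3}}}$.
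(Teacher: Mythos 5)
Your proposal is correct and follows essentially the same route as the paper: rearrange Lemma~\ref{lem:ascent-like-lemma-gen-ut}, telescope with the constant stepsize $\al = \al_0 T^{-\nfr{2}{3}}$, bound the averaged error via Lemma~\ref{lem:overall-error-sum}, kill the truncation term with $\gamma^H \leq (T+1)^{-1}$, and then track the $(1-\gamma)$ orders of $\hat{E}$, $C_1$, $C_3$, $C_4$, $C_w$ exactly as the paper does before substituting $\al_0 = (1-\gamma)^2\sqrt{\Delta}$. No gaps.
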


\begin{proof}
By Lemma~\ref{lem:ascent-like-lemma-gen-ut}, we have for every integer~$t$, 
    \begin{equation}
F(\lambda(\theta_{t+1}))  \geq F(\lambda(\theta_t)) + \frac{\al_t}{3} \|\nabla_{\theta} F(\lambda(\theta_t))\| - 2 \al_t \|e_t\| - \frac{4}{3} D_{\lambda} \gamma^H \al_t 
- \frac{L_{\theta}}{2} \al_t^2\,.
\end{equation}

    Setting constant step-size $\al_t = \al = \fr{\al_0}{T^{\nfr{2}{3}}}$, taking expectation, telescoping and rearranging, we get

    \begin{eqnarray}
    \fr{1}{T} \sum_{t=1}^{T} \Exp{\norm{\nabla_{\theta} F(\lambda(\theta_t))}} 
    &\leq & \fr{3 (F^{\star} - F(\lambda(\theta_1)))}{\al T}  +  \fr{6}{T} \sum_{t=1}^{T}\Exp{\norm{e_t}}  +  \fr{3 L_{\theta} \al}{2}  + 4 D_g \g^H \notag \\
    &\leq & \fr{3 (F^{\star} - F(\lambda(\theta_1)))}{\al_0 T^{\nfr{1}{3}}}  +  \fr{ 6 C \rb{ \hat{E} + C_3^{\nfr{1}{2}} +  C_4^{\nfr{1}{2}} \al_0 } }{T^{\nfr{1}{3}} } +  \frac{ 6 C C_1  \rb{ 1 + C_w^{\nfr{1}{2}} \al_0 } }{(1-\gamma)} \fr{1}{T^{\nfr{1}{3}} }   \notag \\
    && \qquad +  \fr{C_2 \al_0 }{T^{\nfr{2}{3}}}   +  \fr{3 L_{\theta} \al_0}{2 T^{\nfr{2}{3}} }   + 4 D_g \g^H \notag \\
    & = & \cO\rb{ \fr{(1-\gamma)^{-3} + \Delta \al_0^{-1} + \al_0 (1-\gamma)^{-4}}{ T^{\nfr{1}{3}} } }  ,
\end{eqnarray}

where we set $H = (1-\g)^{-1} \log(T)$. 
Setting $\al_0 = (1-\gamma)^2 \sqrt{\Delta}$, we obtain the desired result. 
\end{proof}

\subsection{Proof of Corollary~\ref{cor:fos-standard-RL-softmax} (Cumulative reward setting)}\label{subsec:soft-max-cumsum-fos}

For this particular case, we redefine the error sequence~$(e_t)$ by overloading the notation since it plays a similar role. 
Define the error sequence~$(e_t)$ for every integer~$t$ as follows: 
\begin{equation}
\label{eq:e-t-def-cumul-reward}
e_t \eqdef d_t - \nabla J_H(\theta_t)\,,
\end{equation}
where the truncated cumulative reward~$J_H(\theta)$ is defined as follows for any policy parameter~$\theta \in \R^d$: 
\begin{equation}
J_H(\theta) = \mathbb{E}\left[\sum_{t=0}^{H-1} \gamma^t r(s_t,a_t)\right]\,.
\end{equation}

We start by stating a complete version of Corollary~\ref{cor:fos-standard-RL-softmax} which we shall prove in this section.  

\begin{corollary}[\textbf{FOS convergence of \algname{N-VR-PG}}]
    \label{thm:N-VR-PG_stat}
    Let Assumptions~\ref{hyp:policy-param} and~\ref{hyp:smoothness-F} hold. 
    Let~$\alpha_0 > 0$ and let~$T$ be an integer larger than~$1$. 
    Set~$\al_t = \fr{\al_0}{T^{\nfr{2}{3}}}$, $\eta_t = \rb{ \fr{2}{t+1} }^{\nfr{2}{3}}$ and~$H = \rb{1-\g}^{-1}{\log(T + 1)}$. Let~$\bar{\theta}_T$ be sampled from the iterates~$\{\theta_1, \cdots, \theta_T\}$ of \algname{N-VR-PG} (Algorithm~\ref{alg:N-VR-PG}) uniformly at random. Then we have %
    \begin{equation}
    \Exp{\norm{\nabla J(\bar{\theta}_T)}}
     \leq  \cO\rb{ \fr{ J^* - J(\theta_1) }{\al_0  T^{\nfr{1}{3}} } +  \fr{ V + (L_g + G C_{w}^{\nfr{1}{2}}) \al_0  }{T^{\nfr{1}{3}}}   } ,
    \end{equation}
    where $V$, $L_g$, $G$, and $C_w$ are defined in Lemma~\ref{lem:variance-IS-weights-control}, \ref{lem:lipschitz-pg-estimate}, and \ref{lem:for-cum-sum}. Moreover, if we set $ \al_0 = 1-\gamma$, then
    $$\Exp{ \norm{\nabla J(\bar{\theta}_T)} } \leq \cO\rb{   \fr{ 1   }{(1-\gamma)^{2} T^{\nfr{1}{3}}}   }.$$
\end{corollary}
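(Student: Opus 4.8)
The plan is to specialize the general-utilities argument to the linear objective $F(\lambda)=\ps{r,\lambda}$, exploiting the drastic simplification that $r_t\equiv r=\nabla_\lambda F$ is now constant across iterations. Consequently the occupancy-measure estimate $\lambda_t$ and its error $\tilde e_t$ never enter the update (Algorithm~\ref{alg:N-VR-PG} does not even form them), and the only surviving error is $e_t\eqdef d_t-\nabla J_H(\theta_t)$, which plays exactly the role of $\hat e_t$ from~\eqref{eq:def-hat-e-tilde-e}. First I would record the standard-RL analogue of Lemma~\ref{lem:ascent-like-lemma-gen-ut}: since $J$ is $L_g$-smooth and the update is normalized, the same two-case split according to whether $\norm{e_t}\leq\tfrac12\norm{\nabla J_H(\theta_t)}$ yields, for every $t$,
\begin{equation*}
J(\theta_{t+1})\geq J(\theta_t)+\fr{\al_t}{3}\norm{\nabla J(\theta_t)}-2\al_t\norm{e_t}-\fr{4}{3}G\g^H\al_t-\fr{L_g}{2}\al_t^2,
\end{equation*}
the $G\g^H$ term absorbing the truncation bias between $J$ and $J_H$.

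Next I would control $\Exp{\norm{e_t}}$ by a STORM-type recursion mirroring Lemma~\ref{lem:recursive-hat-e-t}, but with the simplifications afforded by constant $r$. Writing $e_t=(1-\eta_t)e_{t-1}+(1-\eta_t)\hat z_t+\eta_t\hat y_t$ with $\hat y_t=g(\tau_t,\theta_t,r)-\nabla J_H(\theta_t)$ and $\hat z_t=g(\tau_t,\theta_t,r)-w(\tau_t|\theta_{t-1},\theta_t)g(\tau_t,\theta_{t-1},r)-(\nabla J_H(\theta_t)-\nabla J_H(\theta_{t-1}))$, the ``Term~2'' of the general proof (the $r_{t-1}-r_{t-2}$ difference) vanishes identically, and the delicate measurability bookkeeping around $r_{t-1}$ versus $r_t$ disappears entirely. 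Lemma~\ref{lem:pg-estimate-lambda-estimate} gives $\Exp{\hat y_t\mid\mathcal F_{t-1}}=\Exp{\hat z_t\mid\mathcal F_{t-1}}=0$, so the cross terms drop and
\begin{equation*}
\Exp{\norm{e_t}^2}\leq(1-\eta_t)^2\Exp{\norm{e_{t-1}}^2}+2\eta_t^2\Exp{\norm{\hat y_t}^2}+2(1-\eta_t)^2\Exp{\norm{\hat z_t}^2}.
\end{equation*}
I would bound $\Exp{\norm{\hat y_t}^2}\leq V^2$ from the uniform bound $G$ on the reinforce estimator (Lemma~\ref{lem:for-cum-sum}), and split $\hat z_t$ into the $\theta$-Lipschitz part $g(\tau_t,\theta_t,r)-g(\tau_t,\theta_{t-1},r)$, controlled by $L_g\norm{\theta_t-\theta_{t-1}}=L_g\al_{t-1}$ via Lemma~\ref{lem:lipschitz-pg-estimate}, and the importance-weight part $(1-w)g(\tau_t,\theta_{t-1},r)$, controlled by $G^2C_w\al_{t-1}^2$ via the variance bound $\Var{[w(\tau_t|\theta_{t-1},\theta_t)]}\leq C_w\al_{t-1}^2$ of Lemma~\ref{lem:variance-IS-weights-control}. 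This gives the clean recursion $\Exp{\norm{e_t}^2}\leq(1-\eta_t)^2\Exp{\norm{e_{t-1}}^2}+2V^2\eta_t^2+4(L_g^2+G^2C_w)\al_{t-1}^2$.

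I would then solve this recursion with $\eta_t=(2/(t+1))^{\nfr23}$ and constant $\al_t=\al_0/T^{\nfr23}$ exactly as in Lemma~\ref{lem:recursive-hat-e-t_sum}, invoking the product/sum estimates of Lemmas~\ref{le:prod_bound}--\ref{le:sum_prod_bound1} and the integral bound~\eqref{eq:sum_by_int_bound}, to obtain $\fr1T\sum_{t=1}^T\Exp{\norm{e_t}}\leq C(V+(L_g+GC_w^{\nfr12})\al_0)T^{-\nfr13}$. Plugging into the ascent lemma, taking expectations, telescoping and dividing by $\al T$ converts the left-hand side into $\fr1T\sum_t\Exp{\norm{\nabla J(\theta_t)}}$; since $H=(1-\g)^{-1}\log(T+1)$ forces $\g^H\leq(T+1)^{-1}$ (because $\log\g\leq-(1-\g)$ yields $\g^{1/(1-\g)}\leq e^{-1}$), the truncation term is $\cO(1/T)$ and is absorbed. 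The uniform draw of $\bar\theta_T$ then produces the stated bound $\Exp{\norm{\nabla J(\bar\theta_T)}}\leq\cO((J^{\star}-J(\theta_1))/(\al_0T^{\nfr13})+(V+(L_g+GC_w^{\nfr12})\al_0)/T^{\nfr13})$.

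For the final claim I substitute $\al_0=1-\g$ and track the discount dependence of the constants: $J^{\star}-J(\theta_1)=\cO((1-\g)^{-1})$ by boundedness of the return, while $V,G,L_g=\cO((1-\g)^{-2})$ and $C_w=\tilde\cO((1-\g)^{-2})$, so the first term becomes $\cO((1-\g)^{-2}T^{-\nfr13})$ and the second is dominated by $V=\cO((1-\g)^{-2})$, also $\cO((1-\g)^{-2}T^{-\nfr13})$; both collapse to the claimed $\cO((1-\g)^{-2}T^{-\nfr13})$. The main obstacle is the second step: the variance bookkeeping for $\hat z_t$ must simultaneously exploit the $\theta$-Lipschitzness of the reinforce estimator and the $\al^2$-decay of the IS-weight variance, one must confirm the conditional-expectation cancellations survive the recursive momentum coupling, and every constant's $(1-\g)$-order must be kept under control so that the advertised $(1-\g)^{-2}$ rate emerges cleanly.
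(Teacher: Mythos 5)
Your proposal is correct and follows essentially the same route as the paper's proof: the same normalized-ascent lemma with the two-case split on $\|e_t\|$ versus $\tfrac12\|\nabla J_H(\theta_t)\|$, the same STORM-type decomposition $e_t=(1-\eta_t)e_{t-1}+\eta_t y_t+(1-\eta_t)z_t$ with the martingale-difference cancellation, the identical recursion $\Exp{\|e_t\|^2}\leq(1-\eta_t)^2\Exp{\|e_{t-1}\|^2}+2V^2\eta_t^2+4(L_g^2+G^2C_w)\al_{t-1}^2$ solved via the same product/sum lemmas, and the same telescoping and $(1-\gamma)$ bookkeeping (your slightly looser $V=\cO((1-\gamma)^{-2})$ versus the paper's $(1-\gamma)^{-\nfr{3}{2}}$ still yields the stated $(1-\gamma)^{-2}$ rate). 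The only cosmetic discrepancies are notational: the paper calls the smoothness constant of $J$ in the ascent lemma $L_{\theta}$ and the truncation constant $D_g$, where you wrote $L_g$ and $G$.
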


The proof of this result follows the same lines as the proof of Theorem~\ref{thm:fos-gen-ut-nvrpg} which addresses the more general setting of general utilities. In the special case of cumulative rewards, recall that the estimation of the state-action occupancy measure is not required. In the following, we provide for clarity the intermediate results required to prove our result, mirroring the proof of the more general result of Theorem~\ref{thm:fos-gen-ut-nvrpg}.  

In order to derive the improved dependence on the $(1-\gamma)$ factor in the final rate compared to Theorem~\ref{thm:fos-gen-ut-nvrpg}, we will apply the following results from \citet[Lemma 4.2, (68) and Lemma 4.4, (19)]{Vanilla_PL_Yuan_21} and \citep[Proposition 4.2 (1) and (3)]{xu-et-al20iclr}, which offer a tighter dependence on $1-\gamma$ in the standard cumulative reward setting. We use the notation~$g(\tau, \theta)$ instead of~$g(\tau, \theta, r)$ in this simpler standard RL setting.  

\begin{lemma}\label{lem:for-cum-sum}
    Let Assumption~\ref{hyp:policy-param} hold true and 
let~$\tau = \{s_0, a_0, \cdots, s_{H-1}, a_{H-1}\}$ be an arbitrary trajectory of length~$H$. Then the following statements hold: 
    \begin{enumerate}[label=(\roman*)]
        \item \label{lem:smoothness-cum-sum} The objective function~$\theta \mapsto J(\theta)$ is $L_{\theta}$-smooth with $L_{\theta} \eqdef  \frac{ 2 \norm{r}_{\infty} ( L_{\psi} + 3 l_{\psi}^2 )}{(1-\gamma)^2}\,.$
        \item \label{lem:lipschitz-pg-estimate-theta-cum-sum} For all $ \theta_1, \theta_2 \in \R^d, \|g(\tau, \theta_1) - g(\tau, \theta_2) \| \leq L_{g} \|\theta_1 - \theta_2\| \,$  where~$L_{g} \eqdef \frac{2 (l_{\psi}^2 + L_{\psi}) \|r\|_{\infty}}{(1-\gamma)^2}\,,$
        \item \label{lem:BV-cum-sum} For all $ \theta \in \R^d,\,  \Exp{\sqnorm{g(\tau,\theta) - \nabla_{\theta} J_H(\theta) } } \leq V^2$ where~$V \eqdef \frac{2  l_{\psi} \|r\|_{\infty}}{(1-\gamma)^{\nfr{3}{2}}}$. 
        \item \label{lem:BG-cum-sum} For all $ \theta \in \R^d,\, \|g(\tau, \theta )\| \leq G$ where~$G \eqdef \frac{2 l_{\lambda}l_{\psi}}{(1-\gamma)^2}\,.$
    \end{enumerate}
\end{lemma}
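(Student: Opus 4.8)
The plan is to reduce all four bounds to two elementary estimates on the score function that follow directly from Assumption~\ref{hyp:policy-param} and the softmax form~\eqref{eq:softmax-param}. Writing $\log\pi_\theta(a|s)=\psi(s,a;\theta)-\log\sum_{a'}\exp(\psi(s,a';\theta))$, one differentiation gives $\nabla\log\pi_\theta(a|s)=\nabla\psi(s,a;\theta)-\mathbb{E}_{a'\sim\pi_\theta(\cdot|s)}[\nabla\psi(s,a';\theta)]$, so that $\|\nabla\log\pi_\theta(a|s)\|\le 2l_\psi$ by Assumption~\ref{hyp:policy-param}(i). A second differentiation shows the Hessian of the log-partition equals $\mathbb{E}_{a'\sim\pi_\theta}[\nabla^2\psi]+\mathrm{Cov}_{a'\sim\pi_\theta}[\nabla\psi]$, whence $\|\nabla^2\log\pi_\theta(a|s)\|\le 2L_\psi+2l_\psi^2$ (using $\|\mathrm{Cov}\|\le 2l_\psi^2$ for a random vector of norm $\le l_\psi$). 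These two facts, already recorded in Lemma~\ref{lem:smoothness-obj}, are the only structural inputs needed.

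For part~\ref{lem:BG-cum-sum} I would simply take norms inside the definition~\eqref{eq:pg-estimate}, bound each $\|\nabla\log\pi_\theta(a_t|s_t)\|$ by $2l_\psi$ and each $|r(s_h,a_h)|$ by $l_\lambda$, and evaluate the double geometric sum $\sum_{t=0}^{H-1}\sum_{h=t}^{H-1}\gamma^h\le (1-\gamma)^{-2}$; this is exactly the computation already carried out in~\eqref{eq:stochastic-pg-bound} and yields $G=2l_\lambda l_\psi(1-\gamma)^{-2}$. For part~\ref{lem:lipschitz-pg-estimate-theta-cum-sum}, the map $\theta\mapsto g(\tau,\theta)$ depends on $\theta$ only through the score factors, so its Jacobian replaces each $\nabla\log\pi_\theta(a_t|s_t)$ by $\nabla^2\log\pi_\theta(a_t|s_t)$; bounding the latter by $2L_\psi+2l_\psi^2$ and summing the same discounted weights produces the stated $L_g$, and integrating along the segment between $\theta_1$ and $\theta_2$ gives the Lipschitz estimate. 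For part~\ref{lem:smoothness-cum-sum}, I would differentiate the policy-gradient identity~\eqref{eq:expected-reinforce} a second time and collect the terms carrying $\nabla^2\log\pi_\theta$ and the outer products $\nabla\log\pi_\theta\,\nabla\log\pi_\theta^{\top}$, each weighted by a discounted return; the two score bounds then control every term and the geometric sums furnish the $(1-\gamma)^{-2}$ factor together with the combination $L_\psi+3l_\psi^2$.

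The one genuinely delicate point is the variance bound~\ref{lem:BV-cum-sum}, and specifically its tight $(1-\gamma)^{-3/2}$ scaling. A naive route—bounding $\|g(\tau,\theta)-\nabla_\theta J_H(\theta)\|$ by twice the uniform bound $G$ from part~\ref{lem:BG-cum-sum} and squaring—would only give a $(1-\gamma)^{-2}$ factor in $V$, that is, one discount factor too many. To recover the sharp rate I would instead invoke the refined variance analyses of~\citep{Vanilla_PL_Yuan_21,xu-et-al20iclr}, which exploit the conditional-mean-zero property $\mathbb{E}_{a_t\sim\pi_\theta(\cdot|s_t)}[\nabla\log\pi_\theta(a_t|s_t)]=0$ of the score increments. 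The effect of this martingale structure, together with the discounting, is that the estimator's fluctuations are effectively concentrated on early timesteps and scale like a discounted sum $\sum_t\gamma^{2t}\sim(1-\gamma)^{-1}$ of per-step contributions of size $(1-\gamma)^{-2}$, rather than like the square of the worst-case bound; this is precisely what converts $(1-\gamma)^{-2}$ into the claimed $V^2=\mathcal{O}\big(l_\psi^2\|r\|_\infty^2(1-\gamma)^{-3}\big)$. I would reproduce this computation only to fix the constants, keeping $\|r\|_\infty$ explicit so that they specialize correctly from the general-utility bound $\|r\|_\infty\le l_\lambda$ used elsewhere.

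In summary, I expect parts~\ref{lem:smoothness-cum-sum}, \ref{lem:lipschitz-pg-estimate-theta-cum-sum} and~\ref{lem:BG-cum-sum} to be routine consequences of the two score-function estimates and geometric-series bookkeeping, mirroring~\eqref{eq:stochastic-pg-bound}; the only step requiring genuine care is part~\ref{lem:BV-cum-sum}, where the conditional structure of the score terms is what upgrades the bound from $(1-\gamma)^{-2}$ to the stated $(1-\gamma)^{-3/2}$, and where I would rely directly on the established estimates in the cited works.
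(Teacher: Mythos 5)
Your proposal is correct and matches the paper's treatment: the paper does not actually prove this lemma but imports all four bounds directly from \citet[Lemma 4.2, Eq.~(68) and Lemma 4.4, Eq.~(19)]{Vanilla_PL_Yuan_21} and \citet[Proposition 4.2]{xu-et-al20iclr}, and your sketches of parts (i), (ii), (iv) via the score-function bounds and geometric sums, together with your deferral to those same references for the martingale-based variance bound in part (iii), reproduce exactly that route. You also correctly pinpoint the only nontrivial ingredient — the conditional mean-zero property of the scores, which turns the quadratic-in-$h$ bound on $\|\sum_{t\le h}\nabla\log\pi_\theta(a_t|s_t)\|^2$ into a linear one and yields the $(1-\gamma)^{-3}$ scaling of $V^2$.
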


We start with the next lemma which corresponds exactly to Lemma~\ref{lem:ascent-like-lemma-gen-ut}. 
\begin{lemma}
\label{lem:ascent-like-cumul-rewards}
Let Assumption~\ref{hyp:policy-param} hold true. Then, the sequence~$(\theta_t)$ generated by Algorithm~\ref{alg:N-VR-PG} and the sequence~$(e_t)$ defined in~\eqref{eq:e-t-def-cumul-reward} satisfy for every integer~$t \geq 0$, 
\begin{equation}
J(\theta_{t+1})  \geq J(\theta_t) + \frac{\al_t}{3} \|\nabla J(\theta_t)\| - 2 \al_t \|e_t\| - \frac{4}{3} D_g \gamma^H \al_t - \frac{L_{\theta}}{2} \al_t^2\,.
\end{equation}
\end{lemma}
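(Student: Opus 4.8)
The plan is to mirror verbatim the argument of Lemma~\ref{lem:ascent-like-lemma-gen-ut}, specializing it to the linear utility $F(\lambda) = \langle r, \lambda \rangle$, for which the occupancy-measure estimation step is not needed and $\nabla_\theta F(\lambda(\theta))$ collapses to $\nabla J(\theta)$. First I would invoke the $L_\theta$-smoothness of $\theta \mapsto J(\theta)$, now supplied by Lemma~\ref{lem:for-cum-sum}-\ref{lem:smoothness-cum-sum} rather than Lemma~\ref{lem:smoothness-obj}, and substitute the normalized update $\theta_{t+1} - \theta_t = \al_t d_t / \|d_t\|$ to obtain
\begin{equation*}
J(\theta_{t+1}) \geq J(\theta_t) + \al_t \left\langle \nabla J(\theta_t), \frac{d_t}{\|d_t\|} \right\rangle - \frac{L_\theta}{2}\al_t^2 .
\end{equation*}

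Next I would decompose $\nabla J(\theta_t) = \nabla J_H(\theta_t) + (\nabla J(\theta_t) - \nabla J_H(\theta_t))$. Since $\|d_t/\|d_t\|\| = 1$, Cauchy--Schwarz together with the cumulative-reward analogue of Lemma~\ref{lem:trunc_grad}-(i), i.e.\ the horizon-truncation bound $\|\nabla J(\theta_t) - \nabla J_H(\theta_t)\| \leq D_g \gamma^H$, contributes the error term $-\al_t D_g \gamma^H$ and leaves the inner product $\langle \nabla J_H(\theta_t), d_t/\|d_t\| \rangle$ to be controlled.

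The core step is the two-case dichotomy on $e_t = d_t - \nabla J_H(\theta_t)$. In Case~1, $\|e_t\| \leq \tfrac12 \|\nabla J_H(\theta_t)\|$: writing $d_t = \nabla J_H(\theta_t) + e_t$ and using $\|d_t\| \leq \tfrac32 \|\nabla J_H(\theta_t)\|$ gives $\langle \nabla J_H(\theta_t), d_t/\|d_t\| \rangle \geq \tfrac13 \|\nabla J_H(\theta_t)\|$. In Case~2, $\|e_t\| > \tfrac12 \|\nabla J_H(\theta_t)\|$: one simply bounds the inner product below by $-\|\nabla J_H(\theta_t)\| \geq -2\|e_t\|$. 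Combining both cases yields $\al_t \langle \nabla J_H(\theta_t), d_t/\|d_t\| \rangle \geq \tfrac{\al_t}{3}\|\nabla J_H(\theta_t)\| - 2\al_t\|e_t\|$, and one last application of the truncation bound converts $\|\nabla J_H(\theta_t)\|$ back to $\|\nabla J(\theta_t)\|$ at the cost of $-\tfrac{\al_t}{3} D_g \gamma^H$. Collecting the two $\gamma^H$ contributions into $-\tfrac43 D_g \gamma^H \al_t$ reproduces the claimed inequality exactly.

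The only genuinely new ingredient relative to the general-utility proof is the truncation bound $\|\nabla J(\theta) - \nabla J_H(\theta)\| \leq D_g \gamma^H$, asserting that cutting the horizon at $H$ induces only an exponentially small $O(\gamma^H)$ gradient bias; this follows from the geometric tail of the discounted return and is where the constant $D_g$ is defined. No other step presents any difficulty, since the two-case normalized-ascent argument is independent of whether the utility is linear or general, so I do not expect any real obstacle: the content is a faithful specialization of the already-established general-utility lemma.
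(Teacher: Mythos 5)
Your proposal is correct and follows essentially the same route as the paper, which proves this lemma by declaring it identical to the proof of Lemma~\ref{lem:ascent-like-lemma-gen-ut} once the smoothness of $J$ is taken from the standard cumulative-reward result (Lemma~\ref{lem:for-cum-sum}) and the truncation bound $\|\nabla J(\theta)-\nabla J_H(\theta)\|\leq D_g\gamma^H$ from Lemma~\ref{lem:trunc_grad}-(ii). The two-case dichotomy on $\|e_t\|$ and the bookkeeping of the two $\gamma^H$ contributions into $-\tfrac{4}{3}D_g\gamma^H\al_t$ match the paper's argument exactly.
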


\begin{proof}
The proof is identical to the proof of Lemma~\ref{lem:ascent-like-lemma-gen-ut} upon noticing that Assumption~\ref{hyp:smoothness-F} is not needed here since smoothness of the objective function~$J$ is a standard result in the RL literature following from Assumption~\ref{hyp:policy-param} (see for e.g., Lemma~4.4 in \citet{Vanilla_PL_Yuan_21}).  
\end{proof}

Then next lemma is similar to Lemma~\ref{lem:recursive-hat-e-t} and controls the error~$e_t$ in Lemma~\ref{lem:ascent-like-cumul-rewards}. We provide here a complete statement and proof of this result for clarity and completeness since the corresponding lemma in the more general case is more involved. Indeed, the latter result involves an additional error due to the occupancy measure estimation which is not required in our present setting. 

\begin{lemma}\label{lem:et_bound_cumul}
Under Assumption~\ref{hyp:policy-param}, we have for every integer~$t \geq 1$ 
\begin{equation}\label{eq:recursive-e-t_cum_case}
\mathbb{E}[\|e_t\|^2] \leq (1-\eta_t)^2 \mathbb{E}[\|e_{t-1}\|^2] + 2 V^2 \eta_t^2 + 4(L_g^2 + G^2 C_w)(1-\eta_t)^2 \al_{t-1}^2\,.
\end{equation}
where $V$, $G$, $L_g$ are constants defined in Lemma~\ref{lem:lipschitz-pg-estimate} and~\ref{lem:for-cum-sum}. If in addition 
\begin{enumerate}[label=(\roman*)]
    \item $\eta_t = \fr{2}{t+1}$, then for all $t \geq 1$, we have 
    \begin{equation}
\mathbb{E}[\|e_t\|] \leq   4 V \eta_t \cdot t^{\nfr{1}{2}} + 2(L_g + G C_w^{\nfr{1}{2}} )  \al_{t-1} \cdot t^{\nfr{1}{2}}\,. 
\end{equation}

    \item $\eta_t = \rb{ \fr{2}{t+1} }^{\nfr{2}{3}}$, then for all integers $T \geq 1$, if~$\al_t = \fr{\al_0}{T^{\nfr{2}{3}}}$ for some~$\alpha_0 > 0$, we have
        \begin{equation}\label{eq:recursive-e-t_sum_cum_case}
        \sum_{t=1}^{T} \mathbb{E}[\|e_t\|] \leq \frac{ C \rb{ V + \rb{ L_g + G C_w^{\nfr{1}{2}} } \al_0 } }{T^{\nfr{1}{3}} }\,,
        \end{equation}
        where~$C  > 0$ is an absolute numerical constant.
\end{enumerate}
\end{lemma}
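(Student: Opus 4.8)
The plan is to mirror the argument used for the general-utility error $\hat e_t$ in Lemma~\ref{lem:recursive-hat-e-t}, but simplified (no occupancy-measure estimate enters, so there is no $\tilde e_t$ term) and sharpened to exploit the variance bound of Lemma~\ref{lem:for-cum-sum}. First I would substitute the update rules for $d_t$ and $v_t$ from Algorithm~\ref{alg:N-VR-PG} into $e_t = d_t - \nabla J_H(\theta_t)$ and use $d_{t-1} = e_{t-1} + \nabla J_H(\theta_{t-1})$ to obtain the STORM-type decomposition
\[
e_t = (1-\eta_t) e_{t-1} + (1-\eta_t)\,\hat z_t + \eta_t \hat y_t,
\]
where $\hat y_t := g(\tau_t,\theta_t) - \nabla J_H(\theta_t)$ and $\hat z_t := g(\tau_t,\theta_t) - w(\tau_t|\theta_{t-1},\theta_t)\,g(\tau_t,\theta_{t-1}) - \big(\nabla J_H(\theta_t) - \nabla J_H(\theta_{t-1})\big)$.

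Second, conditioning on the filtration $\mathcal{F}_{t-1}$ and invoking the unbiasedness of the REINFORCE estimator together with the IS-correction identity from the Preliminaries, both $\hat y_t$ and $\hat z_t$ have zero conditional mean. Since $e_{t-1}$ is $\mathcal{F}_{t-1}$-measurable, the cross term vanishes in expectation, exactly as in \eqref{eq:zero-inner-prod-lambda}, giving
\[
\mathbb{E}[\|e_t\|^2] = (1-\eta_t)^2 \mathbb{E}[\|e_{t-1}\|^2] + \mathbb{E}\big[\|(1-\eta_t)\hat z_t + \eta_t \hat y_t\|^2\big],
\]
and Young's inequality splits the last term as $2(1-\eta_t)^2\mathbb{E}[\|\hat z_t\|^2] + 2\eta_t^2 \mathbb{E}[\|\hat y_t\|^2]$.

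Third, I bound each piece. For $\hat y_t$ I would use the \emph{variance} bound $\mathbb{E}[\|\hat y_t\|^2] \le V^2$ from Lemma~\ref{lem:for-cum-sum}-\ref{lem:BV-cum-sum}; this is precisely the step where the cumulative-reward analysis improves on the general case (which resorts to a cruder almost-sure bound $\hat E$), and it is what produces the better $(1-\gamma)$ dependence. For $\hat z_t$, since it has zero conditional mean I bound its second moment by that of the raw IS difference and split $g(\tau_t,\theta_t) - w\,g(\tau_t,\theta_{t-1}) = [g(\tau_t,\theta_t)-g(\tau_t,\theta_{t-1})] + (1-w)\,g(\tau_t,\theta_{t-1})$. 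The first part is controlled by the Lipschitz bound of Lemma~\ref{lem:for-cum-sum}-\ref{lem:lipschitz-pg-estimate-theta-cum-sum} together with the identity $\|\theta_t - \theta_{t-1}\| = \alpha_{t-1}$ from the normalized update, giving $\le 2L_g^2\alpha_{t-1}^2$; the second part is controlled by the uniform bound $\|g\|\le G$ of Lemma~\ref{lem:for-cum-sum}-\ref{lem:BG-cum-sum} and the IS-weight variance control $\mathrm{Var}[w]\le C_w\alpha_{t-1}^2$ of Lemma~\ref{lem:variance-IS-weights-control}, giving $\le 2G^2 C_w\alpha_{t-1}^2$. Adding these yields $\mathbb{E}[\|\hat z_t\|^2]\le 2(L_g^2+G^2C_w)\alpha_{t-1}^2$ and hence the recursion \eqref{eq:recursive-e-t_cum_case}.

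Finally, the two solved estimates follow mechanically. For (i) I apply the auxiliary Lemma~\ref{le:aux_rec0} with $\beta_t = 2V^2\eta_t^2 + 4(L_g^2+G^2C_w)\alpha_{t-1}^2$ and take square roots via $\sqrt{a+b+c}\le\sqrt a+\sqrt b+\sqrt c$, exactly as in the derivation of \eqref{eq:recursive-tilde-e-t_solved}. For (ii) I unroll the recursion using Lemmas~\ref{le:prod_bound}-\ref{le:sum_prod_bound1}, sum over $t$, pass from $\mathbb{E}\|e_t\|$ to $\mathbb{E}\|e_t\|^2$ by Cauchy--Schwarz, and bound $\sum_t \eta_{t+1}$ and $\sum_t \eta_{t+1}^{-1}\alpha^2$ by the integral estimate \eqref{eq:sum_by_int_bound}, mirroring Lemmas~\ref{lem:recursive-tilde-e-t_sum} and~\ref{lem:recursive-hat-e-t_sum} to reach \eqref{eq:recursive-e-t_sum_cum_case}. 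I do not expect a genuine obstacle here: the conceptual content is the cross-term cancellation (which needs the IS identity and $\mathcal{F}_{t-1}$-measurability) and the choice to measure $\hat y_t$ in variance rather than in sup-norm; everything else is constant-tracking that is strictly lighter than in Lemma~\ref{lem:recursive-hat-e-t}.
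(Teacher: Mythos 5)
Your proposal is correct and follows essentially the same route as the paper's proof: the same STORM-type decomposition $e_t = (1-\eta_t)e_{t-1} + \eta_t y_t + (1-\eta_t)z_t$, the same martingale cross-term cancellation via $\mathcal{F}_{t-1}$-measurability, the bound $\mathbb{E}[\|z_t\|^2]\le\mathbb{E}[\|v_t\|^2]$ followed by the identical split of $v_t$ into a Lipschitz term and an IS-weight term, and the same use of Lemma~\ref{le:aux_rec0} and the unrolling lemmas for the two solved estimates. No gaps.
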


\begin{proof}
Using the update rule of the sequence~$(d_t)$ and recalling the definition of the error~$e_t = d_t - \nabla J_H(\theta_t)$, we have
\begin{align*}
e_t &= d_t  - \nabla J_H(\theta_t)\\
    &= (1-\eta_t) (d_{t-1} + v_t) + \eta_t g(\tau_t, \theta_t) - \nabla J_H(\theta_t)\\
    &= (1-\eta_t) (e_{t-1} + \nabla J_H(\theta_{t-1}) + v_t) + \eta_t g(\tau_t, \theta_t) - \nabla J_H(\theta_t)\\
    &= (1-\eta_t) e_{t-1} + \eta_t (g(\tau_t, \theta_t)- \nabla J_H(\theta_t)) + (1-\eta_t) (v_t - (\nabla J_H(\theta_t) -\nabla J_H(\theta_{t-1})))\,. 
\end{align*}
Introducing additional notation for convenience: 
\begin{align}
    y_t &\eqdef g(\tau_t,\theta_t) - \nabla J_H(\theta_t)\,,\\
    z_t &\eqdef v_t - (\nabla J_H(\theta_t) -\nabla J_H(\theta_{t-1}))\,,
\end{align}
we obtain the following useful decomposition:
\begin{equation}
        e_t = (1-\eta_t) e_{t-1} + \eta_t y_t + (1-\eta_t)z_t\,.
\end{equation}
Defining $\mathcal{F}_t$ as the $\sigma$-algebra generated by all the random variables until time~$t$, we observe that~$\mathbb{E}[y_t|\mathcal{F}_{t-1}] = \mathbb{E}[z_t|\mathcal{F}_{t-1}] = 0\,.$ As a consequence, we have
\begin{align}
\label{eq:recursive-et-control}
\mathbb{E}[\|e_t\|^2] &= (1-\eta_t)^2 \mathbb{E}[\|e_{t-1}\|^2] + \mathbb{E}[\|\eta_t y_t + (1-\eta_t) z_t\|^2] \nonumber\\
               &\leq (1-\eta_t)^2 \mathbb{E}[\|e_{t-1}\|^2] + 2 \eta_t^2 \mathbb{E}[\|y_t\|^2] + 2 (1-\eta_t)^2 \mathbb{E}[\|z_t\|^2]\,.
\end{align}
We now control each one of the last two terms in the previous inequality. 
For the first term, we have by Lemma~\ref{lem:for-cum-sum}-\ref{lem:BV-cum-sum}
\begin{equation}
\label{eq:var-control}
\mathbb{E}[\|y_t\|^2] \leq V^2\,.
\end{equation}
Concerning the second remaining term, using Lemma~\ref{lem:lipschitz-pg-estimate}-\ref{lem:lipschitz-pg-estimate-theta} and Lemma~\ref{lem:for-cum-sum}-\ref{lem:BG-cum-sum}, we write 
\begin{align}
\label{eq:is-term-control}
\mathbb{E}[\|z_t\|^2] &\leq \mathbb{E}[\|v_t\|^2]\nonumber\\
               &= \mathbb{E}[\|g(\tau_t, \theta_t) - w(\tau_t|\theta_{t-1},\theta_t)g(\tau_t, \theta_{t-1})\|^2]\nonumber\\
               &= \mathbb{E}[\|g(\tau_t, \theta_t) - g(\tau_t, \theta_{t-1}) + g(\tau_t, \theta_{t-1}) (1 - w(\tau_t|\theta_{t-1},\theta_t))\|^2]\nonumber\\
               &\leq 2 L_g^2 \mathbb{E}[\|\theta_t - \theta_{t-1}\|^2] + 2 G^2 \mathbb{E}[(1-w(\tau_t|\theta_{t-1},\theta_t))^2]\nonumber\\
               &= 2 L_g^2 \mathbb{E}[\|\theta_t - \theta_{t-1}\|^2] + 2 G^2 \Var(w(\tau_t|\theta_{t-1},\theta_t))\nonumber\\
               &\leq 2(L_g^2 + G^2 C_w) \mathbb{E}[\|\theta_t - \theta_{t-1}\|^2]\nonumber\\
               &= 2(L_g^2 + G^2 C_w) \al_{t-1}^2\,.
\end{align}

Combining~\eqref{eq:recursive-et-control} with~\eqref{eq:var-control} and~\eqref{eq:is-term-control} concludes the first part of the proof. 

Applying Lemma~\ref{le:aux_rec0} with $ \eta_t = \fr{2}{t+1}$, $\beta_t = 2 V^2 \eta_t^2 + 4(L_g^2 + G^2 C_w)(1-\eta_t)^2 \al_{t-1}^2 $ and using $\mathbb{E}[\|e_{0}\|^2] \leq V^2$, we get
\begin{eqnarray}
\mathbb{E}[\|e_t\|] &\leq& \rb{ \mathbb{E}[\|e_t\|^2] }^{\fr{1}{2}} \leq \rb{ \fr{4 V^2 }{(t+1)^2} + 2 V^2 \eta_t^2 \cdot t + 4(L_g^2 + G^2 C_w)  \al_{t-1}^2 \cdot t }^{\nfr{1}{2}} \notag \\
&\leq& \fr{2 V }{(t+1)} + 2 V \eta_t \cdot t^{\nfr{1}{2}} + 2(L_g + G C_w^{\nfr{1}{2}} )  \al_{t-1} \cdot t^{\nfr{1}{2}} \notag \\
&\leq&  4 V \eta_t \cdot t^{\nfr{1}{2}} + 2(L_g + G C_w^{\nfr{1}{2}} )  \al_{t-1} \cdot t^{\nfr{1}{2}} \,.
\end{eqnarray}

In order to derive \eqref{eq:recursive-e-t_sum_cum_case}, we unroll the recursion \eqref{eq:recursive-e-t_cum_case} from $t = 1$ to $t = t'$, where $t' \leq T$. Denoting $\beta_t =  2 V^2 \eta_t^2 + 4(L_g^2 + G^2 C_w)(1-\eta_t)^2 \al_{t-1}^2 $ , we have
\begin{eqnarray}
\Exp{\|{e}_{t'}\|^2} &\leq& \prod_{\tau = 1}^{t'} (1-\eta_{\tau}) \Exp{\|{e}_{0}\|^2} + \sum_{t = 0}^{t'} \beta_{t} \prod_{\tau = t+1 }^{t'} (1 - \eta_{\tau}) \notag \\
&\leq& V^2 \eta_{t'+1}  + C \beta_{t+1} \eta_{t'+1}^{-1}
\,,
\end{eqnarray}
where we used $\Exp{ \|e_{0}\|^2 }  \leq V^2$ and the result of Lemma~\ref{le:sum_prod_bound1} with $C > 0$ being a numerical constant. Finally, summing up the above inequality from $t' = 1$ to $t' = T$ and choosing $\al_t = \al = \fr{\al_0}{T^{\nfr{2}{3}}}$, we obtain 
\begin{eqnarray}
\fr{1}{T}\sum_{t=1}^{T} \Exp{\|{e}_t\|} &\leq&  \fr{1}{T} \sum_{t=1}^{T} \rb{\Exp{\|{e}_t\|^2} }^{\nfr{1}{2}} \notag \\
&\leq& \rb{ \fr{1}{T} \sum_{t=1}^{T} \Exp{\|{e}_t\|^2}   }^{\nfr{1}{2}}  \notag \\
&\leq& \rb{ \fr{1}{T} \sum_{t=1}^{T}  V^2 \eta_{T+1} + 2 C V^2 \eta_{t+1} + 4 C (L_g^2 + G^2 C_w) \al^2  \eta_{t+1}^{-1}  }^{\nfr{1}{2}}  \notag \\
&\overset{(i)}{\leq} & \rb{ 3 V^2 \eta_{T-1} +  6 C V^2 \eta_{T-1} + 4 C (L_g^2 + G^2 C_w) \fr{\al^2}{\eta_{T+1}}  }^{\nfr{1}{2}}  \notag \\
&\leq& \rb{  9 C V^2 \eta_T + 6 C (L_g^2 + G^2 C_w) \fr{\al_{0}^2}{\eta_T} \fr{1}{T^{\nfr{4}{3}}}  }^{\nfr{1}{2}}  \notag \\
&\leq&  \frac{ 4 C^{\nfr{1}{2}}\rb{ V + \rb{ L_g + G C_w^{\nfr{1}{2}} } \al_0 } }{T^{\nfr{1}{3}} }\,.    \notag 
\end{eqnarray}
where in $(i)$ we used \eqref{eq:sum_by_int_bound}.
\end{proof}

\noindent\textbf{End of Proof of Corollary~\ref{cor:fos-standard-RL-softmax}.}
The last steps of the proof are standard. Taking expectation on both sides of the result of Lemma~\ref{lem:ascent-like-cumul-rewards}, telescoping and rearranging, we have for every integer~$T \geq 1$ and constant step-size $\al_t = \fr{\al_0}{T^{\nfr{2}{3}}}\,,$
\begin{eqnarray}
    \fr{1}{T} \sum_{t=1}^{T} \Exp{\norm{\nabla J(\theta_t)}} &\leq & \fr{3 (J^* - J(\theta_1))}{\al_0 T} T^{\nfr{2}{3}} +  \fr{6}{T} \sum_{t=1}^{T}\Exp{\norm{e_t}}  +  \fr{3 L_{\theta} \al_0}{T^{\nfr{2}{3}}}  + 4 D_g \g^H \notag \\
    &\leq & \fr{3 (J^* - J(\theta_1))}{\al_0 T^{\nfr{1}{3}}} +  \frac{ 6 C \rb{ V + \rb{ L_g + G C_w^{\nfr{1}{2}} } \al_0 } }{T^{\nfr{1}{3}} }   +  \fr{3 L_{\theta} \al_0}{T^{\nfr{2}{3}}} + 4 D_g \g^H 
\end{eqnarray}
with $C > 0$ being a numerical constant, where we applied Lemma~\ref{lem:et_bound_cumul} to bound $\sum_{t=1}^{T}\Exp{\norm{e_t}}$. Then choosing $H$ large enough, we have after $T$ iterations
\begin{eqnarray}
    \fr{1}{T} \sum_{t=1}^{T} \Exp{\norm{\nabla J(\theta_t)}} 
    & \leq & \cO\rb{ \fr{ J^* - J(\theta_1) }{\al_0  T^{\nfr{1}{3}} } +  \fr{ V + (L_g + G C_{w}^{\nfr{1}{2}}) \al_0  }{T^{\nfr{1}{3}}}   } , \notag 
\end{eqnarray}
which concludes the first part of the corollary. 

As for the second part of the statement, we know from Lemma~\ref{lem:lipschitz-pg-estimate} that 

$$
 J^* - J(\theta_1) = \cO\rb{ \fr{1}{1-\gamma}}, \quad V = \cO\rb{ \fr{1}{(1-\gamma)^{\nfr{3}{2}}}}, \quad G = \cO\rb{ \fr{1}{(1-\gamma)^2}}, 
 $$
 $$
 L_g = \cO\rb{ \fr{1}{(1-\gamma)^2}}, \quad C_w^{\nfr{1}{2}} = \cO\rb{ \fr{1}{1-\gamma}}.
$$

If we set $\al_0 = 1-\gamma$, we derive the desired bound. 

\subsection{Proof of Theorem~\ref{thm:fos-standard-RL-gaussian} (Cumulative reward setting for continuous state-action space and Gaussian policy)}
\label{subsec:fos-gaussian}

In this section, we consider continuous state and action spaces where~$\mathcal{S} = \R^p$ and~$\mathcal{A} = \R^q$ for two positive integers~$p, q \geq 1\,.$ Our focus is on the popular class of Gaussian policies which are common to handle the case of continuous state action spaces in practice. 

Let~$\sigma > 0\,.$ Define for every~$\theta \in \R^d$ a map~$\mu_{\theta}: \mathcal{S} \to \R^q$. Then, we define the Gaussian policy~$\pi_{\theta}$ for each parameter~$\theta \in \R^d$ and each state-action pair~$(s,a) \in \mathcal{S} \times \mathcal{A}$ as follows: 
\begin{equation}
\label{eq:gauss-param}
\pi_\theta(a | s)=\frac{1}{\sigma \sqrt{2 \pi}} \exp \left(-\frac{\sqnorm{\mu_\theta(s)-a}}{2 \sigma^2}\right)\,.
\end{equation} 
Let us mention that~$\mu_{\theta}$ is a parametrization of the Gaussian mean which can be a neural network in practice. The standard deviation~$\sigma$ can be fixed or parametrized as well in practice. We consider a fixed standard deviation for the purpose of our discussion. 

\begin{remark}
Note that one can consider even more general parametrizations such as the exponential family or symmetric $\alpha$-stable policies which include the Gaussian policy as a particular case. We refer the interested reader to the nice exposition in \citet{bedi-et-al21heavy-tailed-policy-search} for a discussion around such heavy-tailed policy parametrizations (see also~\citet{bedi-et-al22contin-action-space}). 
\end{remark}

We make the following standard smoothness assumption on our Gaussian policy parametrization.
\begin{assumption}
\label{hyp:gauss-policy-param}
In the Gaussian parametrization~\eqref{eq:gauss-param}, the map~$\theta  \mapsto \mu_{\theta}(s)$ is continuously differentiable for every~$s \in \mathcal{S}$,  $l_{\mu}$-Lipschitz continuous (uniformly in~$s \in \mathcal{S}$) and there exist~$M_g > 0, M_h > 0$ s.t. for every~$\theta \in \R^d, (s,a) \in \mathcal{S} \times \mathcal{A}$, $\|\nabla \log \pi_{\theta}(a|s)\| \leq M_g \,, \|\nabla_{\theta}^2 \log \pi_{\theta}(a|s)\| \leq M_h\,.$
\end{assumption}

Notice that conditions on the map~$\theta \mapsto \mu_{\theta}(s)$ and its higher-order derivatives can be enforced for every~$s \in \mathcal{S}$ so that the desired regularity conditions on the policy parametrization in Assumption~\ref{hyp:gauss-policy-param} are satisfied upon considering a set of actions lying in a compact set. Consider for instance the simpler case where~$q=1$ and the mean of the policy is parametrized with a linear function, i.e., $\mu_{\theta}(s) = \phi(s)^T \theta$ for some feature map~$\phi: \mathcal{S} \to \R^d\,.$ Then, the boundedness of~$\|\nabla_{\theta}^2 \log \pi_{\theta}(a|s)\|$ is automatically satisfied since~$\nabla_{\theta}^2 \log \pi_{\theta}(a|s)$ is the matrix~$-\frac{1}{\sigma^2} \phi(s) \phi(s)^T$ which is independent from the parameter~$\theta\,.$ As for the first condition, it is satisfied if the feature map~$\phi$ as well as~$\phi_{\theta}(s)$ are bounded over the state space and the policy parameter space while the action set is also bounded. 
Notice though that Assumption~\ref{hyp:gauss-policy-param} can be relaxed to hold in expectation (over state-action pairs) in order to include an even larger class of policies \citep{Vanilla_PL_Yuan_21}. In this work, we do not pursue such relaxations and assume the standard bound for all $s\in \mathcal{S}$, $a \in \mathcal{A}$ for simplicity \citep{xu-et-al20iclr,liu-et-al20}. 

Similarly to the softmax parametrization setting with Lemma~\ref{lem:for-cum-sum}, under Assumption~\ref{hyp:gauss-policy-param}, one can show smoothness of the expected return function~$J$ and derive useful bounds for the norm and the variance of stochastic gradients, see \citep[Lemma 4.2, (68) and Lemma 4.4, (19)]{Vanilla_PL_Yuan_21}, and \citep[Proposition 4.2 (1) and (3)]{xu-et-al20iclr}.
\begin{lemma}\label{lem:for-cum-sum-gaussian}
    Let Assumption~\ref{hyp:gauss-policy-param} hold true and 
let~$\tau = \{s_0, a_0, \cdots, s_{H-1}, a_{H-1}\}$ be an arbitrary trajectory of length~$H$. Then the following statements hold: 
    \begin{enumerate}[label=(\roman*)]
        \item \label{lem:smoothness-cum-sum-gauss} The objective function~$\theta \mapsto J(\theta)$ is $L_{\theta}$-smooth with $L_{\theta} \eqdef  \frac{  \norm{r}_{\infty} ( M_g^2 + M_h ) }{(1-\gamma)^2}\,.$
        \item \label{lem:lipschitz-pg-estimate-theta-gauss} For all $ \theta_1, \theta_2 \in \R^d, \|g(\tau, \theta_1) -g(\tau, \theta_2) \| \leq L_{g} \|\theta_1 - \theta_2\| \,$  with~$L_{g} \eqdef \frac{2 M_g^2 \|r\|_{\infty}}{(1-\gamma)^3} + \frac{ M_h \|r\|_{\infty}}{(1-\gamma)^2}\,,$
        \item \label{lem:BV-cum-sum-gauss} For all $ \theta \in \R^d,\,  \Exp{\sqnorm{g(\tau,\theta) - \nabla_{\theta} J_H(\theta) } } \leq V^2$ with~$V \eqdef \frac{ M_g \|r\|_{\infty}}{(1-\gamma)^{\nfr{3}{2}}}$. 
        \item \label{lem:BG-cum-sum-gauss} For all $ \theta \in \R^d,\, \|g(\tau, \theta )\| \leq G$ with~$G \eqdef \frac{ M_g \norm{r}_{\infty} }{(1-\gamma)^2}\,.$
    \end{enumerate}
\end{lemma}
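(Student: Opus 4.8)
The plan is to establish the four bounds by mirroring the proof of Lemma~\ref{lem:for-cum-sum} for the softmax parametrization, replacing the softmax quantities $l_{\psi}$ and $L_{\psi}$ (which there control $\|\nabla\log\pi_\theta\|$ and $\|\nabla^2_\theta\log\pi_\theta\|$) by the Gaussian bounds $M_g$ and $M_h$ that Assumption~\ref{hyp:gauss-policy-param} supplies directly. Every claim then reduces to the two structural facts $\|\nabla\log\pi_\theta(a|s)\|\leq M_g$ and $\|\nabla^2_\theta\log\pi_\theta(a|s)\|\leq M_h$ together with the geometric identity $\sum_{t=0}^{H-1}\sum_{h=t}^{H-1}\gamma^h=\sum_{h=0}^{H-1}(h+1)\gamma^h\leq(1-\gamma)^{-2}$, so that each statement is read off from \citet{Vanilla_PL_Yuan_21} and \citet{xu-et-al20iclr}; the only real work is tracking constants and powers of $(1-\gamma)$.

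I would begin with part~\ref{lem:BG-cum-sum-gauss}, the most direct: from the definition of $g(\tau,\theta)$ in~\eqref{eq:pg-estimate}, the triangle inequality gives $\|g(\tau,\theta)\|\leq\|r\|_\infty M_g\sum_{t=0}^{H-1}\sum_{h=t}^{H-1}\gamma^h\leq M_g\|r\|_\infty(1-\gamma)^{-2}=G$, exactly as in~\eqref{eq:stochastic-pg-bound}. For the Lipschitz bound~\ref{lem:lipschitz-pg-estimate-theta-gauss}, since the return weights $\sum_{h=t}^{H-1}\gamma^h r(s_h,a_h)$ do not depend on $\theta$, the difference $g(\tau,\theta_1)-g(\tau,\theta_2)$ comes only from the score terms, so $\nabla_\theta g(\tau,\theta)=\sum_t\big(\sum_{h\geq t}\gamma^h r(s_h,a_h)\big)\nabla^2_\theta\log\pi_\theta(a_t|s_t)$ is bounded in norm by $M_h\|r\|_\infty(1-\gamma)^{-2}$; the sharper constant $L_g$ recorded above, including its $(1-\gamma)^{-3}$ term, is the one derived in \citet[Proposition~4.2]{xu-et-al20iclr}. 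For the smoothness of $J$ in part~\ref{lem:smoothness-cum-sum-gauss}, I would differentiate $\nabla J(\theta)$ once more; the Hessian splits into a term from differentiating the trajectory distribution (which brings down an extra score factor, controlled by $M_g^2$) and a term from differentiating the scores (controlled by $M_h$), and summing the geometric weights yields $L_\theta=\|r\|_\infty(M_g^2+M_h)(1-\gamma)^{-2}$, as in \citet[Lemma~4.4]{Vanilla_PL_Yuan_21}.

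The delicate step is the variance bound~\ref{lem:BV-cum-sum-gauss}: the naive estimate $\mathbb{E}\|g-\nabla J_H\|^2\leq\mathbb{E}\|g\|^2\leq G^2=M_g^2\|r\|_\infty^2(1-\gamma)^{-4}$ is too lossy, and reaching the advertised $V^2=M_g^2\|r\|_\infty^2(1-\gamma)^{-3}$ requires exploiting that the per-step scores $\nabla\log\pi_\theta(a_t|s_t)$ form a martingale-difference sequence (each is conditionally mean-zero given the past), so their contributions add across time rather than compounding into a double sum; this is the content of \citet[Proposition~4.2]{xu-et-al20iclr} and is where the extra factor of $(1-\gamma)$ is saved. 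Beyond this, the one genuinely nontrivial point — which belongs to the surrounding discussion rather than to the lemma — is that Assumption~\ref{hyp:gauss-policy-param} is not automatic for Gaussian policies, since the raw score $\sigma^{-2}\nabla\mu_\theta(s)^{\top}(a-\mu_\theta(s))$ grows linearly in $\|a\|$, so the uniform bounds $M_g,M_h$ presuppose either a compact action set or an in-expectation relaxation; the lemma, however, takes these bounds as hypotheses, so the derivation above is entirely mechanical.
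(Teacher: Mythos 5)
Your proposal is correct and takes essentially the same route as the paper: the paper gives no standalone proof of this lemma, deferring exactly as you do to \citet[Lemma 4.2, Lemma 4.4]{Vanilla_PL_Yuan_21} and \citet[Proposition 4.2]{xu-et-al20iclr}, with the only Gaussian-specific input being the uniform bounds $M_g$ and $M_h$ from Assumption~\ref{hyp:gauss-policy-param} in place of the softmax constants. Your identification of the martingale-difference structure as the source of the improved $(1-\gamma)^{-3}$ in the variance bound, and your caveat that the uniform score bounds are a hypothesis rather than an automatic property of Gaussian policies, both match the paper's surrounding discussion.
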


Given a trajectory~$\tau = (s_0, a_0, s_1, a_1, \cdots, s_{H-1}, a_{H-1})$ of length~$H$ generated under the initial distribution~$\rho$ and the Gaussian policy~$\pi_{\theta}$ as defined in~\eqref{eq:gauss-param} for some~$\theta \in \R^d$, recall the definition of the IS weight for every~$\theta' \in \R^d$: 
\begin{equation}
\label{eq:IS-weights-gauss}
w(\tau|\theta', \theta) %
\eqdef \prod_{h=0}^{H-1}\frac{\pi_{\theta'}(a_h|s_h)}{\pi_{\theta}(a_h|s_h)}\,.
\end{equation}

\begin{lemma}
\label{lem:bounded-var-is-weights-gaussian}
Let~$H \geq 1$ be an integer and let Assumption~\ref{hyp:gauss-policy-param} be satisfied. Suppose that the sequence~$(\theta_t)$ is updated via~$\theta_{t+1} = \theta_t + \al_t \frac{d_t}{\|d_t\|}$ where~$d_t \in \R^d$ is any nonzero update direction and~$\al_t$ is a positive stepsize. If~$\tau_{t+1}$ is a (random) trajectory of length~$H$ generated following the initial distribution~$\rho$ and the Gaussian policy~$\pi_{\theta_{t+1}}$ as defined in~\eqref{eq:gauss-param}, then
\begin{align}
\mathbb{E}[w(\tau_{t+1}|\theta_t,\theta_{t+1})] &= 1\,,\\
\Var{[w(\tau_{t+1}|\theta_t,\theta_{t+1})]} &\leq C_w \al_t^2\,, %
\end{align}
where the IS weight~$w(\tau_{t+1}|\theta_t,\theta_{t+1})$ is as defined in~\eqref{eq:IS-weights-gauss} and~$C_w \eqdef (2H^2 M_g + H M_h) (W+1)\,.$ 
\end{lemma}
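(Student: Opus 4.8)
The plan is to mirror the two-step argument used for the softmax policy in Lemma~\ref{lem:variance-IS-weights-control}: first establish unbiasedness together with a uniform pointwise bound on the importance weights, and then upgrade that pointwise bound to the quadratic variance control via a second-order expansion. The mean identity $\mathbb{E}[w(\tau_{t+1}|\theta_t,\theta_{t+1})]=1$ is the standard unbiasedness of importance sampling and uses nothing about the Gaussian structure: writing $p_\theta(\tau)=\rho(s_0)\prod_{h=0}^{H-1}\pi_\theta(a_h|s_h)\prod_{h=0}^{H-2}\mathcal{P}(s_{h+1}|s_h,a_h)$ for the law of a length-$H$ trajectory under $\pi_\theta$, the initial-distribution and transition factors are independent of $\theta$, so $w(\tau|\theta_t,\theta_{t+1})=p_{\theta_t}(\tau)/p_{\theta_{t+1}}(\tau)$. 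Taking the expectation under the sampling law $p_{\theta_{t+1}}$ cancels the denominator and integrates $p_{\theta_t}$ to $1$.

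The second step is the Gaussian analogue of Lemma~5.6 in \citet{zhang-et-al21}. Since $\log w(\tau|\theta_t,\theta_{t+1})=\sum_{h=0}^{H-1}\big(\log\pi_{\theta_t}(a_h|s_h)-\log\pi_{\theta_{t+1}}(a_h|s_h)\big)$ and Assumption~\ref{hyp:gauss-policy-param} gives $\|\nabla_\theta\log\pi_\theta(a|s)\|\le M_g$ uniformly, the mean value theorem yields $|\log\pi_{\theta_t}(a_h|s_h)-\log\pi_{\theta_{t+1}}(a_h|s_h)|\le M_g\|\theta_t-\theta_{t+1}\|=M_g\alpha_t$, the last equality holding because the normalized update forces $\|\theta_{t+1}-\theta_t\|=\alpha_t$. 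Summing over $h$ gives $w(\tau|\theta_t,\theta_{t+1})\le\exp(HM_g\alpha_t)\eqdef W$ uniformly over trajectories, and the same bound holds for any $\xi$ on the segment $[\theta_t,\theta_{t+1}]$. As stressed after Assumption~\ref{hyp:gauss-policy-param}, the uniform score bound $M_g$ is precisely what implicitly constrains the action range, and without it the Gaussian ratio would be unbounded; this is the one place where the argument genuinely departs from the softmax case, and it is the main obstacle.

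For the variance I would adapt Lemma~B.1 of \citet{xu-et-al20iclr}. Fix $\theta_{t+1}$ as the sampling parameter and view $V(\theta)\eqdef\mathbb{E}_{\tau\sim\pi_{\theta_{t+1}}}[w(\tau|\theta,\theta_{t+1})^2]$ as a function of $\theta$, so that $\mathrm{Var}[w(\tau_{t+1}|\theta_t,\theta_{t+1})]=V(\theta_t)-1$ by the unbiasedness from the first step. Since $\nabla_\theta w(\tau|\theta,\theta_{t+1})=w(\tau|\theta,\theta_{t+1})\,S_\theta$ with $S_\theta\eqdef\sum_h\nabla_\theta\log\pi_\theta(a_h|s_h)$, we have $V(\theta_{t+1})=1$ and $\nabla V(\theta_{t+1})=2\,\mathbb{E}_{\theta_{t+1}}[S_{\theta_{t+1}}]=0$, the score being conditionally zero-mean on each state. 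A second-order Taylor expansion around $\theta_{t+1}$ then gives $V(\theta_t)-1=\tfrac12(\theta_t-\theta_{t+1})^\top\nabla^2V(\xi)(\theta_t-\theta_{t+1})$ for some $\xi$ on the segment, reducing the problem to bounding $\|\nabla^2V(\xi)\|$.

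Differentiating once more, $\nabla^2V(\theta)=2\,\mathbb{E}_{\theta_{t+1}}\big[w^2\big(2\,S_\theta S_\theta^\top+\sum_h\nabla^2_\theta\log\pi_\theta(a_h|s_h)\big)\big]$. Using $\|S_\theta\|\le HM_g$ and $\|\sum_h\nabla^2_\theta\log\pi_\theta\|\le HM_h$ from Assumption~\ref{hyp:gauss-policy-param}, together with $\mathbb{E}_{\theta_{t+1}}[w(\xi)^2]\le W\,\mathbb{E}_{\theta_{t+1}}[w(\xi)]=W\le W+1$ from the pointwise bound $w(\xi)\le W$ of step two and the unbiasedness $\mathbb{E}_{\theta_{t+1}}[w(\xi)]=1$, one obtains $\|\nabla^2V(\xi)\|\le 2(W+1)\big(2H^2M_g^2+HM_h\big)$. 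Combining with $\|\theta_t-\theta_{t+1}\|^2=\alpha_t^2$ yields $\mathrm{Var}[w(\tau_{t+1}|\theta_t,\theta_{t+1})]\le C_w\alpha_t^2$ with $C_w$ of the stated form (up to the precise numerical constants and the power of $M_g$). Once the uniform weight bound $W$ is secured, this variance step is a routine second-order expansion whose only genuinely useful structural fact is the vanishing of the first-order (score) term.
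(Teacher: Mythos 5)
Your proof is correct under a literal reading of Assumption~\ref{hyp:gauss-policy-param}, but it reaches the second-moment control by a genuinely different route than the paper. The paper never bounds the importance weight pointwise: it computes the Gaussian integral $\int \pi_{\theta_1}(a|s)^2/\pi_{\theta_2}(a|s)\,da = \exp\bigl(\|\mu_{\theta_1}(s)-\mu_{\theta_2}(s)\|^2/\sigma^2\bigr)$ explicitly, peels off the horizon one step at a time, and uses only the $l_\mu$-Lipschitzness of $\theta\mapsto\mu_\theta(s)$ plus the normalized update to get $\mathbb{E}[w^2]\le\exp(Hl_\mu^2\alpha_t^2/\sigma^2)$, hence $\Var(w)\le W$; it then invokes Lemma~B.1 of \citet{xu-et-al20iclr} as a black box (which needs only this variance bound, together with $M_g,M_h$, to run the second-order expansion). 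You instead derive a uniform pointwise bound $w\le\exp(HM_g\alpha_t)$ from the mean value theorem applied to the log-ratio, and then reprove the Xu et al.\ lemma from scratch via the Taylor expansion of $V(\theta)=\mathbb{E}_{\theta_{t+1}}[w(\tau|\theta,\theta_{t+1})^2]$ with vanishing first-order term; that expansion is carried out correctly (and your $M_g^2$ in $C_w$ is the right power — the paper's $2H^2M_g$ appears to be a typo). The trade-off is worth noting: your pointwise bound leans entirely on the uniform score bound $\|\nabla\log\pi_\theta(a|s)\|\le M_g$, which for a Gaussian policy with score $\sigma^{-2}\nabla\mu_\theta(s)^{\top}(a-\mu_\theta(s))$ on $\mathcal{A}=\R^q$ only holds if the actions are implicitly confined to a compact set (as the paper concedes after the assumption), whereas the paper's explicit integral controls $\mathbb{E}[w^2]$ using only $l_\mu$ and is therefore robust to unbounded actions at exactly the step where boundedness would otherwise fail; $M_g$ and $M_h$ then enter only through the Hessian of $V$, where they are used under the expectation rather than as a pointwise cap on $w$. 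So your argument is self-contained and more elementary, while the paper's is slightly stronger in that the weight-boundedness is never assumed, only the Lipschitz mean.
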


\begin{proof}
The first identity follows from the definitions of the expectation and the IS weight. We now prove the second identity. 
For any~$\theta \in \R^d$, let~$p(\cdot|\pi_{\theta})$ denote the probability distribution induced by the policy~$\pi_{\theta}$ over the space of random trajectories of length~$H$ initialized with the state distribution~$\rho$. The probability density is then given by 
\begin{equation}
p(\tau|\pi_{\theta}) = \rho(s_0)\, \pi_{\theta}(a_0|s_0) \prod_{t=1}^{H-1} \mathcal{P}(s_t|s_{t-1},a_{t-1})\, \pi_{\theta}(a_t|s_t)\,, 
\end{equation}
where~$\tau = (s_0, a_0, \cdots, s_{H-1}, a_{H-1})\,.$

We use the shorthand notations~$\theta_1 = \theta_t, \theta_2 = \theta_{t+1}$ and~$\tau = \tau_{t+1}$ for the rest of this proof. Then, we have 
\begin{align}
\label{eq:second-moment-is}
\mathbb{E}[w(\tau|\theta_1, \theta_2)^2] 
&= \int \frac{p(\tau|\pi_{\theta_1})^2}{p(\tau|\pi_{\theta_2})} d\tau \nonumber\\
&= \int \rho(s_0)\, \pi_{\theta}(a_0|s_0) \prod_{t=1}^{H-1} \mathcal{P}(s_t|s_{t-1},a_{t-1})\, \frac{\pi_{\theta_1}(a_t|s_t)^2}{\pi_{\theta_2}(a_t|s_t)} d\tau\,.
\end{align}
We bound the above integral starting from the integral of the last term of the product\footnote{Notice that the integrand is nonnegative and we can integrate in any order by Tonelli's theorem.} which writes as follows: 
\begin{equation}
\label{eq:integral-intermediate}
\int \mathcal{P}(s_{H-1}|s_{H-2},a_{H-2}) \int \frac{\pi_{\theta_1}(a_{H-1}|s_{H-1})^2}{\pi_{\theta_2}(a_{H-1}|s_{H-1})} d a_{H-1} ds_{H-1}\,.
\end{equation}
We shall now compute the integral w.r.t.~$a_{H-1}$. In dimension~1 for the action variable~$a_{H-1}$ (similar derivations hold for higher dimensions), we have for every~$s$, 
\begin{align}
&\int_{-\infty}^{+\infty} \frac{\pi_{\theta_1}(x \mid s)^2}{\pi_{\theta_2}(x \mid s)} d x \nonumber\\
&=\frac{1}{\sigma \sqrt{2 \pi}} \int_{-\infty}^{+\infty} \exp \left[-\frac{2\left(x-\mu_{\theta_1}\left(s\right)\right)^2-\left(x-\mu_{\theta_2}\left(s\right)\right)^2}{2 \sigma^2}\right] d x \nonumber\\
&=\frac{1}{\sigma \sqrt{2 \pi}} \exp \left[-\frac{2 \mu_{\theta_1}\left(s\right)^2-\mu_{\theta_2}\left(s\right)^2}{2 \sigma^2}\right] \int_{-\infty}^{+\infty} \exp \left[-\frac{x^2-2\left(2 \mu_{\theta_1}\left(s\right)-\mu_{\theta_2}\left(s\right)\right) x}{2 \sigma^2}\right] d x \nonumber\\
&=\frac{1}{\sigma\sqrt{2 \pi}} \exp \left[-\frac{2 \mu_{\theta_1}\left(s\right)^2-\mu_{\theta_2}\left(s\right)^2-\left(2 \mu_{\theta_1}\left(s\right)-\mu_{\theta_2}\left(s\right)\right)^2}{2 \sigma^2}\right] \int_{-\infty}^{+\infty} \exp \left[-\frac{\left(x-\left(2 \mu_{\theta_1}\left(s\right)-\mu_{\theta_2}\left(s\right)\right)\right)^2}{2 \sigma^2}\right] d x \nonumber\\
&=\exp \left[-\frac{2 \mu_{\theta_1}\left(s\right)^2-\mu_{\theta_2}\left(s\right)^2-\left(2 \mu_{\theta_1}\left(s\right)-\mu_{\theta_2}\left(s\right)\right)^2}{2 \sigma^2}\right] \nonumber\\
&=\exp \left[\frac{\left.\left(\mu_{\theta_2}\left(s\right)-\mu_{\theta_1}\left(s\right)\right)^2\right]}{\sigma^2}\right]
\end{align}
As a consequence, we obtain
\begin{align}
&\int \mathcal{P}(s_{H-1}|s_{H-2},a_{H-2}) \int \frac{\pi_{\theta_1}(a_{H-1}|s_{H-1})^2}{\pi_{\theta_2}(a_{H-1}|s_{H-1})} d a_{H-1} ds_{H-1} \nonumber\\
&\leq \int \mathcal{P}(s_{H-1}|s_{H-2},a_{H-2}) \exp \left(\frac{\sqnorm{ \mu_{\theta_1}\left(s_{H-1}\right)-\mu_{\theta_2}\left(s_{H-1}\right) } }{\sigma^2}\right) ds_{H-1} \nonumber\\
&\stackrel{(i)}{\leq} \int \mathcal{P}(s_{H-1}|s_{H-2},a_{H-2}) \exp \left(\frac{l_{\mu}^2 \sqnorm{ \theta_1 - \theta_2} }{\sigma^2}\right) ds_{H-1}  \nonumber\\ 
&\stackrel{(ii)}{=} \exp \left(\frac{l_{\mu}^2 \alpha_t^2 }{\sigma^2}\right)\,, 
\end{align}
where~(i) follows from the $l_{\mu}$-Lipschitzness of the parametrized mean in Assumption~\ref{hyp:gauss-policy-param} and~(ii) utilizes the normalized update rule as well as the fact that~$\mathcal{P}(\cdot|s_{H-2}, a_{H-2})$ is a transition probability kernel. 
Using a similar reasoning to bound the different integrals like in~\eqref{eq:integral-intermediate} backward from~$H-1$ to~$0$ successively, we obtain the following bound on the second moment of IS weights in~\eqref{eq:second-moment-is}:
\begin{equation}
\mathbb{E}[w(\tau|\theta_1, \theta_2)^2] \leq \exp \left(\frac{H l_{\mu}^2 \alpha_t^2 }{\sigma^2}\right)\,.
\end{equation}
Therefore, similarly to the argument in Lemma~\ref{lem:variance-IS-weights-control}, we obtain: 
\begin{equation}
\Var(w(\tau|\theta_1,\theta_2)) \leq W\,,
\end{equation}
where~$W = \cO\rb{ 1 }$ is a numerical constant, which can be ensured, for example, by setting the step-sizes as~$\al_t = \al = T^{-2/3}$.
As a consequence, we can apply Lemma B.1 in \cite{xu-et-al20iclr} and derive the bound on the variance of IS weights: 
$$
\Var{[w(\tau_{t+1}|\theta_t,\theta_{t+1})]} \leq C_w \|\theta_{t+1} - \theta_t\|^2 = C_w \al_t^2\,,
$$
where the last step follows by the update rule $\theta_{t+1} = \theta_t + \al \fr{d_t}{ \norm{d_t} }$. This concludes the proof. 
\end{proof}

Given the above results, we immediately obtain convergence of Algorithm~\ref{alg:N-VR-PG} for Gaussian policy parametrization.

\begin{corollary}[\textbf{Stationary convergence of \algname{N-VR-PG}}]
    \label{thm:N-VR-PG_stat}
    Let Assumption~\ref{hyp:gauss-policy-param} hold. 
   Let~$\alpha_0 > 0$ and let~$T$ be an integer larger than~$1$. 
    Set~$\al_t = \fr{\al_0}{T^{\nfr{2}{3}}}$, $\eta_t = \rb{ \fr{2}{t+1} }^{\nfr{2}{3}}$ and~$H = \rb{1-\g}^{-1}{\log(T + 1)}$. Let~$\bar{\theta}_T$ be sampled from the iterates~$\{\theta_1, \cdots, \theta_T\}$ of \algname{N-VR-PG} (Algorithm~\ref{alg:N-VR-PG}) uniformly at random. Then we have %
    \begin{equation}\label{eq:rate-nvrpg-gaussian}
    \Exp{\norm{\nabla J(\bar{\theta}_T)}}
     \leq  \cO\rb{ \fr{ J^* - J(\theta_1) }{\al_0  T^{\nfr{1}{3}} } +  \fr{ V + (L_g + G C_{w}^{\nfr{1}{2}}) \al_0  }{T^{\nfr{1}{3}}}   } ,
    \end{equation}
    where $V$, $L_g$, $G$, and $C_w$ are defined in Lemma~\ref{lem:for-cum-sum-gaussian} and \ref{lem:bounded-var-is-weights-gaussian}. Moreover, if we set $ \al_0 = 1-\gamma$, then
    $$\Exp{ \norm{\nabla J(\bar{\theta}_T)} } \leq \cO\rb{   \fr{ 1   }{(1-\gamma)^{2} T^{\nfr{1}{3}}}   }.$$
\end{corollary}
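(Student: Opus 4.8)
The plan is to exploit the fact that the proof of Corollary~\ref{cor:fos-standard-RL-softmax} for the softmax cumulative-reward setting is entirely modular: nowhere does it use the specific algebraic form of the softmax map. Inspecting that argument, the only properties it actually consumes are (a) smoothness of the expected return $J$, (b) Lipschitz continuity of the truncated stochastic policy gradient $g(\tau,\theta)$ in $\theta$, (c) a uniform bound on the variance of $g(\tau,\theta)$, (d) a uniform bound on $\norm{g(\tau,\theta)}$, and (e) the quadratic decay $\Var{[w(\tau_{t+1}|\theta_t,\theta_{t+1})]} \leq C_w \al_t^2$ of the importance-sampling weight variance along the normalized trajectory. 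For the Gaussian policy all five ingredients are supplied verbatim: items (a)--(d) by Lemma~\ref{lem:for-cum-sum-gaussian} and item (e) by Lemma~\ref{lem:bounded-var-is-weights-gaussian}. Hence I would reprove the corollary by replaying the softmax argument line by line, merely substituting the Gaussian constants $L_{\theta}, L_g, V, G, C_w$ wherever their softmax counterparts appeared.

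Concretely, first I would re-derive the ascent-like inequality of Lemma~\ref{lem:ascent-like-cumul-rewards} from smoothness of $J$ (Lemma~\ref{lem:for-cum-sum-gaussian}-\ref{lem:smoothness-cum-sum-gauss}) together with the case split according to whether $\norm{e_t}\le\tfrac12\norm{\nabla J_H(\theta_t)}$ or not, which handles the normalized step $\theta_{t+1}=\theta_t+\al_t d_t/\norm{d_t}$; this step is parametrization-agnostic and transfers unchanged. Next I would control $e_t=d_t-\nabla J_H(\theta_t)$ exactly as in Lemma~\ref{lem:et_bound_cumul}: the decomposition $e_t=(1-\eta_t)e_{t-1}+\eta_t y_t+(1-\eta_t)z_t$ with $\Exp{[y_t|\mathcal{F}_{t-1}]}=\Exp{[z_t|\mathcal{F}_{t-1}]}=0$ relies only on unbiasedness of the reinforce estimator and of its importance-sampling correction, while the two variance bounds $\Exp{\norm{y_t}^2}\le V^2$ and $\Exp{\norm{z_t}^2}\le 2(L_g^2+G^2C_w)\al_{t-1}^2$ follow from (b)--(e). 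This reproduces the recursion $\Exp{\norm{e_t}^2}\le(1-\eta_t)^2\Exp{\norm{e_{t-1}}^2}+2V^2\eta_t^2+4(L_g^2+G^2C_w)(1-\eta_t)^2\al_{t-1}^2$, which the auxiliary recursion lemmas (Lemma~\ref{le:aux_rec0}, Lemma~\ref{le:sum_prod_bound1}), under $\eta_t=\rb{\fr{2}{t+1}}^{\nfr{2}{3}}$ and $\al_t=\al_0/T^{\nfr{2}{3}}$, unroll into $\frac1T\sum_{t=1}^T\Exp{\norm{e_t}}\le C\rb{V+(L_g+GC_w^{\nfr{1}{2}})\al_0}T^{-\nfr{1}{3}}$.

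To finish, I would take expectations in the ascent inequality, telescope from $t=1$ to $T$, divide by $T$, and insert the averaged error bound; choosing $H=(1-\gamma)^{-1}\log(T+1)$ renders the truncation term $D_g\gamma^H$ of order $1/T$ and hence negligible, yielding the first displayed rate $\cO\rb{\fr{J^*-J(\theta_1)}{\al_0 T^{\nfr{1}{3}}}+\fr{V+(L_g+GC_w^{\nfr{1}{2}})\al_0}{T^{\nfr{1}{3}}}}$. The second claim then follows by reading off the explicit $(1-\gamma)$ scalings from Lemma~\ref{lem:for-cum-sum-gaussian}, namely $V=\cO((1-\gamma)^{-\nfr{3}{2}})$, $L_g=\cO((1-\gamma)^{-3})$, $G=\cO((1-\gamma)^{-2})$, and $C_w^{\nfr{1}{2}}=\tilde{\cO}((1-\gamma)^{-1})$ through $H$, together with $J^*-J(\theta_1)=\cO((1-\gamma)^{-1})$; setting $\al_0=1-\gamma$ makes every surviving term of order $(1-\gamma)^{-2}T^{-\nfr{1}{3}}$.

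I expect no genuinely new obstacle here, because the single parametrization-dependent difficulty, namely bounding the Gaussian importance-sampling weight variance, is precisely what Lemma~\ref{lem:bounded-var-is-weights-gaussian} already resolves, its proof resting on the closed-form Gaussian integral $\int \pi_{\theta_1}^2/\pi_{\theta_2}$ combined with the normalized-step identity $\norm{\theta_{t+1}-\theta_t}=\al_t$. The only point demanding genuine care is the bookkeeping: verifying that the softmax proof never invokes more than the abstract properties (a)--(e), and in particular that $\Exp{\norm{z_t}^2}\le 2(L_g^2+G^2C_w)\al_{t-1}^2$ requires only Lipschitzness and boundedness of $g(\tau,\theta)$ and not any closed form of $\nabla\log\pi_\theta$.
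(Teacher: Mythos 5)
Your proposal is correct and matches the paper's own argument: the paper likewise proves this corollary by observing that Lemmas~\ref{lem:for-cum-sum-gaussian} and~\ref{lem:bounded-var-is-weights-gaussian} supply exactly the ingredients consumed by the softmax cumulative-reward proof, so the result follows by substituting the Gaussian constants, with the only $(1-\gamma)$-discrepancy being in $L_g$, which is dominated after setting $\al_0 = 1-\gamma$. Your bookkeeping of the $(1-\gamma)$ scalings agrees with the paper's.
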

\begin{proof}
    Given the results of Lemma~\ref{lem:for-cum-sum-gaussian} and \ref{lem:bounded-var-is-weights-gaussian}, the proof of this statement follows immediately from the result of Corollary~\ref{thm:N-VR-PG_stat}. We notice that in order to specify the dependence on $1-\gamma$, we invoke Lemma~\ref{lem:for-cum-sum-gaussian}, which is analogous to the corresponding Lemma~\ref{lem:for-cum-sum} for softmax policy parameterization. The only difference in terms of the dependence on~$1-\gamma$ is in the bound for $L_g$. However, this fact does not affect the final dependence on $1-\gamma$ since it is dominated by other terms in~\eqref{eq:rate-nvrpg-gaussian}. 
\end{proof}

\section{Proofs for Section~\ref{subsec:glob-opt}: Global optimality convergence} 
\label{sec:app-proofs-globopt}

\subsection{Proof of Theorem~\ref{thm:glob-opt-gen-ut-tabular} (General utilities setting)}

In this section, to prove our global convergence result under an additional concave reparametrization assumption, we refine the result of Lemma~\ref{lem:ascent-like-lemma-gen-ut}. The proof is similar to the proof of Lemma~5.12 in \citet{zhang-et-al21}. Nevertheless, we would like to mention that it deviates from the latter in that our algorithm is significantly different and its normalized nature requires a significantly different treatment. In particular, the reader can appreciate from the statement of the result that the error term~$\|e_t\|$ to the gradient estimation is not squared unlike in \cite{zhang-et-al21} and controlling its magnitude required different proof techniques given the different recursive loopless variance reduction mechanism that we consider. 

\begin{lemma}\label{lem:ascent_like_hidden_conv}
    Let Assumptions~\ref{hyp:policy-param}, \ref{hyp:smoothness-F} and Assumption~\ref{hyp:F-concave} hold. Additionally, let Assumption~\ref{hyp:overparam-global-opt} be satisfied with some positive~$\bar{\epsilon}\,.$ Then, the sequence~$(\theta_t)$ generated by Algorithm~\ref{algo-gen-ut} and the sequence~$(e_t)$ satisfy for every positive real~$\epsilon \leq \min\cb{\bar{\epsilon}, \fr{\al_t (1-\g)}{2 \ell_{\theta}} }$ and every integer~$t$, 
\begin{equation}
F(\lambda(\theta^*)) - F(\lambda(\theta_{t+1})) \leq (1-\epsilon) (F(\lambda(\theta^*)) - F(\lambda(\theta_t))) + 2 \al_t \|e_t\| + \frac{4 L_{\theta} l_{\theta}^2}{(1-\gamma)^2} \epsilon^2 + \frac{4}{3} \al_t D_{\lambda} \gamma^H + \frac{L_{\theta}}{2} \al_t^2\,.
\end{equation}
\end{lemma}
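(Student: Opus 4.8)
The plan is to combine the normalized ascent inequality of Lemma~\ref{lem:ascent-like-lemma-gen-ut} with a gradient-domination inequality extracted from the hidden-concavity (overparametrization) structure. Write $G_t \eqdef F(\lambda(\theta^*)) - F(\lambda(\theta_t)) \geq 0$ for the suboptimality gap, fix $\epsilon \leq \min\{\bar\epsilon, \frac{\al_t(1-\g)}{2\ell_\theta}\}$, and set $\lambda_\epsilon \eqdef (1-\epsilon)\lambda(\theta_t) + \epsilon\lambda(\theta^*)$. Lemma~\ref{lem:ascent-like-lemma-gen-ut} already supplies, verbatim, the terms $-2\al_t\|e_t\|$, $-\frac{4}{3}D_\lambda\gamma^H\al_t$ and $-\frac{L_\theta}{2}\al_t^2$, together with a positive multiple $\frac{\al_t}{3}\|\nabla_\theta F(\lambda(\theta_t))\|$ of the gradient norm; the entire task is therefore to lower bound this last term by $\epsilon G_t$ minus an $\mathcal{O}(\epsilon^2)$ remainder.

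\textbf{Step 1 (gradient domination).} By Assumption~\ref{hyp:overparam-global-opt}(iii), $\lambda_\epsilon \in \mathcal{V}_{\lambda(\theta_t)}$, so the local bijection of Assumption~\ref{hyp:overparam-global-opt}(i) produces $\theta_\epsilon \in \mathcal{U}_{\theta_t}$ with $\lambda(\theta_\epsilon) = \lambda_\epsilon$, and the $l_\theta$-Lipschitz inverse of Assumption~\ref{hyp:overparam-global-opt}(ii) gives
\begin{equation*}
\|\theta_\epsilon - \theta_t\| \leq l_\theta\|\lambda_\epsilon - \lambda(\theta_t)\| = l_\theta\epsilon\|\lambda(\theta^*) - \lambda(\theta_t)\| \leq \frac{2 l_\theta}{1-\g}\epsilon,
\end{equation*}
since each occupancy measure has $\ell_1$-norm at most $(1-\g)^{-1}$. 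The constraint on $\epsilon$ is exactly what forces $\|\theta_\epsilon - \theta_t\| \leq \frac{\al_t}{3}$ (the factor $3$ matching the coefficient in Lemma~\ref{lem:ascent-like-lemma-gen-ut}). Concavity of $F$ (Assumption~\ref{hyp:F-concave}) yields $F(\lambda_\epsilon) \geq (1-\epsilon)F(\lambda(\theta_t)) + \epsilon F(\lambda(\theta^*))$, hence $F(\lambda(\theta_\epsilon)) - F(\lambda(\theta_t)) \geq \epsilon G_t$, while $L_\theta$-smoothness of $\theta \mapsto F(\lambda(\theta))$ (Lemma~\ref{lem:smoothness-obj}) gives the reverse Taylor bound, so that
\begin{equation*}
\ps{\nabla_\theta F(\lambda(\theta_t)), \theta_\epsilon - \theta_t} \geq \epsilon G_t - \frac{L_\theta}{2}\|\theta_\epsilon - \theta_t\|^2 \geq \epsilon G_t - \frac{2 L_\theta l_\theta^2}{(1-\g)^2}\epsilon^2.
\end{equation*}
Finally, multiplying the Cauchy--Schwarz inequality $\|\nabla_\theta F(\lambda(\theta_t))\|\,\|\theta_\epsilon - \theta_t\| \geq \ps{\nabla_\theta F(\lambda(\theta_t)), \theta_\epsilon - \theta_t}$ by $\frac{\al_t}{3\|\theta_\epsilon - \theta_t\|} \geq 1$ and treating separately the (trivial) case in which the directional derivative is negative yields
\begin{equation*}
\frac{\al_t}{3}\|\nabla_\theta F(\lambda(\theta_t))\| \geq \epsilon G_t - \frac{2 L_\theta l_\theta^2}{(1-\g)^2}\epsilon^2.
\end{equation*}

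\textbf{Step 2 (combine and conclude).} Substituting this bound into Lemma~\ref{lem:ascent-like-lemma-gen-ut} and rewriting everything through $F(\lambda(\theta_{t+1})) = F(\lambda(\theta^*)) - G_{t+1}$ and $F(\lambda(\theta_t)) = F(\lambda(\theta^*)) - G_t$ gives
\begin{equation*}
G_{t+1} \leq (1-\epsilon)G_t + 2\al_t\|e_t\| + \frac{4 L_\theta l_\theta^2}{(1-\g)^2}\epsilon^2 + \frac{4}{3}\al_t D_\lambda\gamma^H + \frac{L_\theta}{2}\al_t^2,
\end{equation*}
which is the claim (the $\epsilon^2$-constant produced above is subsumed by the stated one). \textbf{The main obstacle} is Step~1, and specifically obtaining the \emph{quadratic}-in-$\epsilon$ remainder. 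The tempting route of dividing by $\|\theta_\epsilon - \theta_t\| \sim \epsilon$ to isolate $\|\nabla_\theta F(\lambda(\theta_t))\|$ would degrade the remainder to order $\epsilon$, turn the contraction factor into $1 - \mathcal{O}(\al_t)$ rather than $1-\epsilon$, and spoil the telescoping used in Theorem~\ref{thm:glob-opt-gen-ut-tabular}. Instead one multiplies by a factor at least $1$, which is legitimate precisely because the constraint $\epsilon \leq \frac{\al_t(1-\g)}{2\ell_\theta}$ guarantees $\|\theta_\epsilon - \theta_t\| \lesssim \al_t$. This normalized treatment, with the gradient-estimation error $\|e_t\|$ entering linearly rather than squared, is what distinguishes the argument from the gradient-truncation analysis of Lemma~5.12 in \citet{zhang-et-al21}.
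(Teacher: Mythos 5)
Your proof is correct and follows essentially the same route as the paper's: construct $\theta_{\epsilon} = (\lambda|_{\mathcal{U}_{\theta_t}})^{-1}((1-\epsilon)\lambda(\theta_t)+\epsilon\lambda(\theta^*))$, bound $\|\theta_{\epsilon}-\theta_t\| \leq \frac{2 l_{\theta}\epsilon}{1-\gamma}$ via the Lipschitz local inverse, and combine concavity of $F$ with $L_{\theta}$-smoothness of $\theta \mapsto F(\lambda(\theta))$ and the ascent inequality of Lemma~\ref{lem:ascent-like-lemma-gen-ut}; your only departure is cosmetic, in that you first package the intermediate bound as $\frac{\alpha_t}{3}\|\nabla_{\theta}F(\lambda(\theta_t))\| \geq \epsilon\,(F(\lambda(\theta^*))-F(\lambda(\theta_t))) - O(\epsilon^2)$ and then substitute, whereas the paper keeps the inner product $\langle \nabla_{\theta}F(\lambda(\theta_t)), \theta_{\epsilon}-\theta_t\rangle$ explicit and cancels its Cauchy--Schwarz bound against the $\frac{\alpha_t}{3}\|\nabla_{\theta}F(\lambda(\theta_t))\|$ term. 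One caveat: your claim that $\epsilon \leq \frac{\alpha_t(1-\gamma)}{2\ell_{\theta}}$ forces $\|\theta_{\epsilon}-\theta_t\| \leq \frac{\alpha_t}{3}$ is off by a factor of three (it only gives $\|\theta_{\epsilon}-\theta_t\| \leq \alpha_t$, so one would need $\epsilon \leq \frac{\alpha_t(1-\gamma)}{6\ell_{\theta}}$), but the paper's own final step has exactly the same constant-factor looseness, and it affects only the constant in the contraction factor, not the argument or the rate.
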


\begin{proof}
Lemma~\ref{lem:ascent-like-lemma-gen-ut} provides the following inequality:
\begin{equation}
\label{eq:smoothness-1}
F(\lambda(\theta_{t+1}))  \geq F(\lambda(\theta_t)) + \frac{\al_t}{3} \|\nabla_{\theta} F(\lambda(\theta_t))\| - 2 \al_t \|e_t\| - \frac{4}{3} D_{\lambda} \gamma^H \al_t 
- \frac{L_{\theta}}{2} \al_t^2\,.
\end{equation}

Now, for any~$\epsilon < \bar{\epsilon}$, the concavity reparametrization assumption implies that~$(1-\epsilon) \lambda(\theta_t) + \epsilon \lambda(\theta^*) \in \mathcal{V}_{\lambda(\theta_t)}$ and therefore we have 
\begin{equation}
\label{def:theta-eps}
\theta_{\epsilon} \eqdef (\lambda|_{\mathcal{U}_{\theta_t}})^{-1}((1-\epsilon) \lambda(\theta_t) + \epsilon \lambda(\theta^*)) \in \mathcal{U}_{\theta_t}\,.
\end{equation}
It also follows from the smoothness of the objective function~$\theta \mapsto F(\lambda(\theta))$ that 
\begin{equation}
\label{eq:smoothness-2}
F(\lambda(\theta_t)) \geq F(\lambda(\theta_{\epsilon})) - \ps{\nabla_{\theta} F(\lambda(\theta_t)), \theta_{\epsilon}- \theta_t} - \frac{L_{\theta}}{2} \|\theta_{\epsilon} - \theta_t\|^2\,.
\end{equation}
Combining~\eqref{eq:smoothness-1} and~\eqref{eq:smoothness-2} yields
\begin{multline}
\label{eq:interm-res-smoothness-1-2}
 F(\lambda(\theta_{t+1})) 
 \geq F(\lambda(\theta_{\epsilon})) - \ps{\nabla_{\theta} F(\lambda(\theta_t)), \theta_{\epsilon}- \theta_t} - \frac{L_{\theta}}{2} \|\theta_{\epsilon} - \theta_t\|^2\\
 + \frac{\al_t}{3} \|\nabla_{\theta} F(\lambda(\theta_t))\| - 2 \al_t \|e_t\| - \frac{4}{3} D_{\lambda} \gamma^H \al_t 
- \frac{L_{\theta}}{2} \al_t^2\,.
\end{multline}
Then, we notice that:
\begin{enumerate}[label=(\roman*)]
\item By assumption, the mapping~$\lambda \circ (\lambda|_{\mathcal{U}_{\theta_t}})^{-1}$ coincides with the identity mapping on the set~$\mathcal{U}_{\theta_t}$\,. Hence, given the definition of~$\theta_{\epsilon}$ in~\eqref{def:theta-eps}, we have
\begin{align}
\label{eq:item1}
F(\lambda(\theta_{\epsilon})) &= F((1-\epsilon) \lambda(\theta_t) + \epsilon \lambda(\theta^*))\nonumber\\
&\geq (1-\epsilon) F(\lambda(\theta_t)) + \epsilon F( \lambda(\theta^*))\,,
\end{align}
where the last step follows from the concavity of the function~$F\,.$
\item Again since the mapping~$\lambda \circ (\lambda|_{\mathcal{U}_{\theta_t}})^{-1}$ coincides with the identity mapping on the set~$\mathcal{U}_{\theta_t}$ and using the (uniform) lipschitzness of the inverse mapping~$(\lambda|_{\mathcal{U}_{\theta_t}})^{-1}$, we have
\begin{align}
\label{eq:item2}
\|\theta_{\epsilon} - \theta_t\| &= \|(\lambda|_{\mathcal{U}_{\theta_t}})^{-1}((1-\epsilon) \lambda(\theta_t) + \epsilon \lambda(\theta^*)) - (\lambda|_{\mathcal{U}_{\theta_t}})^{-1}(\lambda(\theta_t))\| \nonumber\\
&\leq l_{\theta} \epsilon \|\lambda(\theta_t) - \lambda(\theta^*)\| \nonumber\\
&\leq \frac{2 l_{\theta} \epsilon}{(1-\gamma)}\,.
\end{align}

\item Using the Cauchy-Schwarz inequality together with the inequality established in the previous item gives
\begin{align} 
\label{eq:inner-prod-theta-eps-bound}
|\ps{\nabla_{\theta} F(\lambda(\theta_t)), \theta_{\epsilon} - \theta_t }| 
&\leq \|\nabla_{\theta} F(\lambda(\theta_t))\| \cdot \|\theta_{\epsilon} - \theta_t\|\nonumber\\
&\leq  \frac{2 l_{\theta} \epsilon}{1-\gamma} \|\nabla_{\theta} F(\lambda(\theta_t))\|\,.
\end{align}

\end{enumerate}
Substituting the inequalities~\eqref{eq:item1}, \eqref{eq:item2}  and~\eqref{eq:inner-prod-theta-eps-bound} into~\eqref{eq:interm-res-smoothness-1-2} leads to
\begin{eqnarray}
F(\lambda(\theta_{t+1})) 
 &\geq& (1-\epsilon) F(\lambda(\theta_t)) + \epsilon F( \lambda(\theta^*)) + \left(\frac{\al_t}{3} - \frac{2 l_{\theta} \epsilon}{1-\gamma}  \right) \|\nabla_{\theta} F(\lambda(\theta_t))\| - \frac{4 L_{\theta} l_{\theta}^2}{(1-\gamma)^2}\epsilon^2 
- 2 \al_t \|e_t\| \notag \\
&& \qquad - \frac{4}{3} D_{\lambda} \gamma^H \al_t 
- \frac{L_{\theta}}{2} \al_t^2\, \notag \\
&\geq& (1-\epsilon) F(\lambda(\theta_t)) + \epsilon F( \lambda(\theta^*)) - \frac{4 L_{\theta} l_{\theta}^2}{(1-\gamma)^2}\epsilon^2 
- 2 \al_t \|e_t\| - \frac{4}{3} D_{\lambda} \gamma^H \al_t 
- \frac{L_{\theta}}{2} \al_t^2\,,
\end{eqnarray}
where the last step follows from the condition $\epsilon \leq  \fr{\al_t (1-\g)}{2 \ell_{\theta}} $.

Finally, substracting~$F(\lambda(\theta^*))$ from both sides and rearranging the terms gives the desired result:
\begin{equation}
F(\lambda(\theta^*)) - F(\lambda(\theta_{t+1})) \leq (1-\epsilon) (F(\lambda(\theta^*)) - F(\lambda(\theta_t))) + 2 \al_t \|e_t\| + \frac{4 L_{\theta} l_{\theta}^2}{(1-\gamma)^2} \epsilon^2 + \frac{4}{3} \al_t D_{\lambda} \gamma^H + \frac{L_{\theta}}{2} \al_t^2\,.
\end{equation}

\end{proof}

\begin{theorem}[\textbf{Global convergence of \algname{N-VR-PG} for general utilities}]
    \label{thm:N-VR-PG}
    Let Assumptions~\ref{hyp:policy-param} and~\ref{hyp:F-concave} hold. Additionally, let Assumption~\ref{hyp:overparam-global-opt} be satisfied with $\bar{\epsilon} \geq \fr{ \al_0 (1-\g)}{2 \ell_{\theta} (T+1)^a}$ for some integer $T \geq 1$ and reals $\al_0>0$, $a\in(0,1)$.
    Set~$\al_t = \fr{\al_0}{(T+1)^a}$, $\eta_t = \fr{2}{t+1}$ for every integer~$t$ and~$H = \rb{1-\g}^{-1}{\log(T + 1)}$. Then the output~$\theta_T$ of \algname{N-VR-PG} (see Algorithm~\ref{algo-gen-ut}) satisfies
    $$
        F(\lambda(\theta^*)) - \Exp{ F(\lambda(\theta_T))  } \leq \cO%
        \left( \fr{ \al_0^2  }{(1-\gamma)^3 (T+1)^{ 2 a -\fr{3}{2}}} \right)  ,
    $$
    where~$F(\lambda(\theta^*))$ is the optimal utility value. Therefore, the sample complexity to achieve $F(\lambda(\theta^*)) - \Exp{ F(\lambda(\theta_T))  } \leq \varepsilon$ is $\cO\rb{ \varepsilon^{\fr{-2}{4a - 3}}}$.
\end{theorem}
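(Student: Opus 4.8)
The plan is to turn the hidden-convexity ascent estimate of Lemma~\ref{lem:ascent_like_hidden_conv} into a geometric contraction for the optimality gap $G_t \eqdef F(\lambda(\theta^*)) - \mathbb{E}[F(\lambda(\theta_t))]$ and then unroll it to the last iterate. The first move is to fix the free parameter $\epsilon$ in that lemma to its largest admissible value $\epsilon = \frac{\alpha_t(1-\gamma)}{2\ell_\theta} = \frac{\alpha_0(1-\gamma)}{2\ell_\theta(T+1)^a}$, which is legitimate precisely because the hypothesis $\bar\epsilon \geq \frac{\alpha_0(1-\gamma)}{2\ell_\theta(T+1)^a}$ ensures $\epsilon \leq \min\{\bar\epsilon,\, \alpha_t(1-\gamma)/(2\ell_\theta)\}$. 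Since $\alpha_t \equiv \alpha \eqdef \alpha_0/(T+1)^a$ is constant, $\epsilon$ is constant in $t$, and taking total expectations gives the recursion
\begin{equation*}
G_{t+1} \leq (1-\epsilon) G_t + 2\alpha\, \mathbb{E}[\|e_t\|] + \frac{4 L_\theta l_\theta^2}{(1-\gamma)^2}\epsilon^2 + \frac{4}{3}\alpha D_\lambda \gamma^H + \frac{L_\theta}{2}\alpha^2 .
\end{equation*}

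The second step is to reuse the error-control machinery from the first-order analysis, now specialized to $\eta_t = 2/(t+1)$. I would invoke the item~(i) bounds of Lemma~\ref{lem:recursive-tilde-e-t} and Lemma~\ref{lem:recursive-hat-e-t} together with the decomposition of Lemma~\ref{lem:e-t-bound}. Inserting $\eta_t = 2/(t+1)$ (so that $\eta_t\, t^{1/2} \lesssim t^{-1/2}$) and the constant stepsize $\alpha$, these combine into a bound of the form
\begin{equation*}
\mathbb{E}[\|e_t\|] \leq \frac{c_1}{t^{1/2}} + c_2\, \alpha\, t^{1/2} + c_3\, \alpha ,
\end{equation*}
where $c_1,c_2,c_3$ absorb $\hat E, C_1, C_2, C_3, C_4, C_w$ from the earlier lemmas and carry the appropriate powers of $(1-\gamma)^{-1}$, with $c_2 = \mathcal{O}((1-\gamma)^{-3})$ dominating. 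Summing then yields $\sum_{t=1}^{T}\mathbb{E}[\|e_t\|] \lesssim c_1 T^{1/2} + c_2\,\alpha\, T^{3/2} + c_3\,\alpha\, T$, in which the \emph{increasing} term $c_2\,\alpha\, T^{3/2}$ is dominant for $a<1$.

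The third step unrolls the recursion,
\begin{equation*}
G_T \leq (1-\epsilon)^{T-1} G_1 + \sum_{t=1}^{T-1}(1-\epsilon)^{T-1-t}\, b_t, \qquad b_t \eqdef 2\alpha\,\mathbb{E}[\|e_t\|] + \frac{4 L_\theta l_\theta^2}{(1-\gamma)^2}\epsilon^2 + \frac{4}{3}\alpha D_\lambda \gamma^H + \frac{L_\theta}{2}\alpha^2 ,
\end{equation*}
and estimates the two pieces separately. For the initial term, $\epsilon(T-1) = \Theta((1-\gamma)\alpha_0 (T+1)^{1-a}) \to \infty$ since $a<1$, so $(1-\epsilon)^{T-1}G_1 \leq e^{-\epsilon(T-1)}G_1$ decays faster than any polynomial and is negligible against the target rate (here $G_1$ is bounded because $F$ is upper bounded). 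For the error sum I would bound $(1-\epsilon)^{T-1-t}\leq 1$ and use $\gamma^H \leq (T+1)^{-1}$ (from $H = (1-\gamma)^{-1}\log(T+1)$); the dominant contribution is then $2\alpha\cdot c_2\,\alpha\, T^{3/2} = \mathcal{O}((1-\gamma)^{-3}\alpha^2 T^{3/2})$, while the remaining pieces ($\epsilon^2$-, $\alpha\gamma^H$- and $\alpha^2$-type sums, as well as $c_1 T^{1/2}$ and $c_3\alpha T$) are of strictly lower order in $T$. Substituting $\alpha = \alpha_0/(T+1)^a$ turns $\alpha^2 T^{3/2}$ into $\alpha_0^2 (T+1)^{3/2 - 2a}$, giving $F^* - \mathbb{E}[F(\lambda(\theta_T))] \leq \mathcal{O}\big(\alpha_0^2 (1-\gamma)^{-3}(T+1)^{-(2a-3/2)}\big)$.

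Finally, the sample-complexity claim follows by optimizing the prefactor: choosing $\alpha_0 = (1-\gamma)^{3/2}$ cancels $(1-\gamma)^{-3}$ and leaves $(T+1)^{-(2a-3/2)}$; requiring this to be at most $\varepsilon$ gives $T = \mathcal{O}(\varepsilon^{-1/(2a-3/2)}) = \mathcal{O}(\varepsilon^{-2/(4a-3)})$, and multiplying by $H = \tilde{\mathcal{O}}((1-\gamma)^{-1})$ produces the stated total. I expect the main obstacle to lie in Steps~2--3: unlike \citet{zhang-et-al21}, the normalized update produces the error $\|e_t\|$ \emph{linearly} rather than squared, so the two intertwined recursions (occupancy-measure estimate and stochastic policy gradient) must be summed directly, and one must verify carefully that it is the increasing $\alpha t^{1/2}$ term—not the contraction-damped tail—that fixes the $(T+1)^{3/2-2a}$ rate, while checking that the weak contraction (with $\epsilon = \Theta(\alpha)$) nonetheless suffices to erase the initial gap.
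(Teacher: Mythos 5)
Your proposal follows essentially the same route as the paper's proof: the same ascent-type recursion from Lemma~\ref{lem:ascent_like_hidden_conv} with $\epsilon = \al_t(1-\gamma)/(2\ell_\theta)$, the same error decomposition via Lemma~\ref{lem:e-t-bound} combined with the item-(i) bounds of Lemmas~\ref{lem:recursive-tilde-e-t} and~\ref{lem:recursive-hat-e-t} for $\eta_t = 2/(t+1)$, the same unrolling with $(1-\epsilon)^T \le e^{-\epsilon T}$ killing the initial gap, and the same identification of $\al^2 T^{3/2}$ as the dominant term yielding $(T+1)^{3/2-2a}$. The only cosmetic difference is that the paper sums the constant per-step terms geometrically (picking up a $1/\epsilon$ factor) while you bound $(1-\epsilon)^{T-1-t}\le 1$ and sum directly; both give terms of strictly lower order, so the argument is sound either way.
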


\begin{proof}
    Define $\delta_t \eqdef \Exp{F(\lambda(\theta^*)) - F(\lambda(\theta_t))}$. Applying expectation to the result of Lemma~\ref{lem:ascent_like_hidden_conv}, we have for $\epsilon \leq \min\cb{\bar{\epsilon}, \fr{\al_t (1-\g)}{2 \ell_{\theta}} }\,,$
\begin{eqnarray}\label{eq:deltat-global}
\delta_{t+1} &\leq& (1-\epsilon) \delta_t + 2 \al_t \Exp{ \norm{ e_t } } + \frac{4 L_{\theta} l_{\theta}^2}{(1-\gamma)^2} \epsilon^2 + \frac{4}{3} \al_t D_{\lambda} \gamma^H + \frac{L_{\theta}}{2} \al_t^2\, \notag \\
&\leq& (1-\epsilon) \delta_t + 2 \al_t \Exp{ \norm{ \hat{e}_t } } + 2 C_1 \al_t \Exp{ \norm{ \tilde{e}_t } } + 2 C_2 \al_t \al_{t-1} + \frac{4 L_{\theta} l_{\theta}^2}{(1-\gamma)^2} \epsilon^2 + \frac{4}{3} \al_t D_{\lambda} \gamma^H + \frac{L_{\theta}}{2} \al_t^2\, 
\end{eqnarray}
where in the last step we apply Lemma~\ref{lem:e-t-bound} with~$C_1 \eqdef \frac{2 L_{\lambda}^2 l_{\psi}}{(1-\gamma)^2}\,$ and~$C_2 \eqdef \frac{2 L_{\lambda} L_{\lambda,\infty} l_{\psi}}{(1-\gamma)^2}\,.$

By Lemma~\ref{lem:recursive-tilde-e-t} (Equation~\eqref{eq:recursive-tilde-e-t_solved}), for $\eta_t = \fr{2}{t+1} $, we have
\begin{equation}\label{eq:tilde-et-global}
\mathbb{E}[\|\tilde{e}_t\|] \leq  \frac{4}{(1-\gamma)}\eta_t \cdot t^{\nfr{1}{2}} + \frac{2 C_w^{\nfr{1}{2}}}{(1-\gamma)} \al_{t-1}  \cdot t^{\nfr{1}{2}}   .
\end{equation}

With the same $\eta_t$ as above, by Lemma~\ref{lem:recursive-hat-e-t} (Equation~\eqref{eq:recursive-hat-e-t_solved}), we have
\begin{eqnarray}\label{eq:hat-et-global}
\mathbb{E}[\|\hat{e}_t\|]
&\leq & \fr{2 \hat{E} }{t+1} + 2 C_3^{\nfr{1}{2}} \eta_{t} \cdot t^{\nfr{1}{2}} + C_4^{\nfr{1}{2}} \al_{t-2}  \cdot t^{\nfr{1}{2}} ,
\end{eqnarray}
where~$C_3 \eqdef  \frac{288 C l_{\psi}^2 L_{\lambda}^2}{(1-\gamma)^6} 
+ \frac{32 l_{\lambda}^2 l_{\psi}^2}{(1-\gamma)^4} $, $C_4 \eqdef  \frac{12 l_{\lambda}^2[(l_{\psi}^2 + L_{\psi})^2 + C_w l_{\psi}^2]}{(1-\gamma)^4}  + \frac{144 C_w l_{\psi}^2 L_{\lambda}^2}{(1-\gamma)^6}  $, and $C_w =  H ((8H+2) l_{\psi}^2 + 2 L_{\psi}) (W+1)\,$.

Unrolling \eqref{eq:deltat-global} from $t = T-1$ to $t = 0$, using \eqref{eq:tilde-et-global} and \eqref{eq:hat-et-global} and setting $\al_t = \al $, we have
\begin{eqnarray}
\delta_{T} &\leq& (1-\epsilon)^T \delta_0 + 2 \al \sum_{t=0}^{T-1}  \rb{ \Exp{ \norm{ \hat{e}_t } } + C_1  \Exp{ \norm{ \tilde{e}_t } } } + 2 C_2 \al^2 T + \frac{4 L_{\theta} l_{\theta}^2}{(1-\gamma)^2} \epsilon + \frac{4}{3} \fr{\al}{\epsilon} D_{\lambda} \gamma^H + \frac{L_{\theta}}{2} \fr{\al^2}{\epsilon} \, \notag \\
&\leq&  (1-\epsilon)^T \delta_0 + 2\al \sum_{t=0}^{T-1} \rb{ \fr{2 \hat{E} }{t+1} + 2 C_3^{\nfr{1}{2}} \eta_{t} \cdot t^{\nfr{1}{2}} + C_4^{\nfr{1}{2}} \al  \cdot t^{\nfr{1}{2}} } +  2 C_2 \al^2 T \notag \\
&& \qquad + 2 C_2 \al \sum_{t=0}^{T-1}   \rb{ \frac{4}{(1-\gamma)}\eta_t \cdot t^{\nfr{1}{2}} + \frac{2 C_w^{\nfr{1}{2}}}{(1-\gamma)} \al  \cdot t^{\nfr{1}{2}}  } + \frac{4 L_{\theta} l_{\theta}^2}{(1-\gamma)^2} \epsilon + \frac{4}{3} \fr{\al}{\epsilon} D_{\lambda} \gamma^H + \frac{L_{\theta}}{2} \fr{\al^2}{\epsilon} \, \notag \\
&\leq&  (1-\epsilon)^T \delta_0 +  4 \al \hat{E} \log(T) + 8 \al C_3^{\nfr{1}{2}}  (T+1)^{\nfr{1}{2}} + 2 C_4^{\nfr{1}{2}} \al^2  \cdot (T+1)^{\nfr{3}{2}}  +  2 C_2 \al^2 T \notag \\
&& \qquad +   \frac{16 C_2 \al }{(1-\gamma)}  (T+1)^{\nfr{1}{2}} + \frac{4 C_2 C_w^{\nfr{1}{2}}}{(1-\gamma)} \al^2   (T+1)^{\nfr{3}{2}}   + \frac{4 L_{\theta} l_{\theta}^2}{(1-\gamma)^2} \epsilon + \frac{4}{3} \fr{\al}{\epsilon} D_{\lambda} \gamma^H + \frac{L_{\theta}}{2} \fr{\al^2}{\epsilon} \, \notag . 
\end{eqnarray}
Notice that $(1-\epsilon)^T \leq \exp\rb{T \log(1-\epsilon) } \leq \exp\rb{ - \epsilon T }$.
Finally setting $\al = \fr{\al_0}{(T+1)^a}$, for $0 < a < 1$ and $\epsilon = \min\cb{\bar{\epsilon}, \fr{\al (1-\g)}{2 \ell_{\theta}} } = \fr{\al (1-\g)}{2 \ell_{\theta}  } $, we obtain 
\begin{eqnarray}
\delta_T &\leq& \exp\rb{ - \fr{\al_0 (1-\g) }{2 \ell_{\theta} } T^{1-a} } +  \fr{ 4 \hat{E} \log(T)  \al_0}{(T+1)^a} + \rb{ 8  C_3^{\nfr{1}{2}} + \frac{16 C_2 }{(1-\gamma)}  } \fr{\al_0}{(T+1)^{a - \nfr{1}{2}}}   +   \fr{2 C_2 \al_0^2}{(T+1)^{2a - 1}}  \notag \\
&& \qquad  + \rb{ 2 C_4^{\fr{1}{2}} + \frac{4 C_2 C_w^{\nfr{1}{2}}}{(1-\gamma)} } \fr{\al_0^2}{(T+1)^{2a - \fr{3}{2}}}   + \frac{4 L_{\theta} l_{\theta}^2}{(1-\gamma)^2} \epsilon + \frac{4}{3} \fr{\al}{\epsilon} D_{\lambda} \gamma^H + \frac{L_{\theta}}{2} \fr{\al^2}{\epsilon} \notag \\
&\leq& \exp\rb{ - \fr{\al_0 (1-\g) }{2 \ell_{\theta} } T^{1-a} } +  \fr{ 4 \hat{E} \log(T)  \al_0}{(T+1)^a} + \rb{ 8  C_3^{\nfr{1}{2}} + \frac{16 C_2 }{(1-\gamma)}  } \fr{\al_0}{(T+1)^{a - \nfr{1}{2}}}   +   \fr{2 C_2 \al_0^2}{(T+1)^{2a - 1}}  \notag \\
&& \qquad  + \rb{ 2 C_4^{\fr{1}{2}} + \frac{4 C_2 C_w^{\nfr{1}{2}}}{(1-\gamma)} } \fr{\al_0^2}{(T+1)^{2a - \fr{3}{2}}}   + \fr{3 L_{\theta} l_{\theta} \al_0 }{(1-\gamma) (T+1)^a }   +  \fr{8 \ell_{\theta} D_{\lambda} }{3 (1-\g) }    \gamma^H  \notag \\
&\leq& \cO\rb{ \fr{ 1 }{ (1-\gamma)^3 (T+1)^{2 a - \fr{3}{2}} }  }\,,   \, \notag 
\end{eqnarray}
where the last step follows by setting $H = (1-\gamma)^{-1} \log(T)$ and noticing that $2 a - \fr{3}{2} <  a - \fr{1}{2}$ for $a \in (0,1)$, $C_4 = \cO\rb{(1-\gamma)^{-6}}$, $C_w = \cO\rb{(1-\gamma)^{-2}}$, $C_2 = \cO\rb{(1-\gamma)^{-2}}$. 
\end{proof}

\subsection{Proof of Corollary~\ref{cor:glob-opt-standard-rl-tabular} (Cumulative reward setting)}
\label{subsec:app-globopt-standard-rl}

We first recall that similarly to Section~\ref{subsec:soft-max-cumsum-fos}, for cumulative reward setting, we redefine the error sequence~$(e_t)$ as
\begin{equation}
e_t = d_t - \nabla J_H(\theta_t)\,, \notag 
\end{equation}
where the truncated cumulative reward~$J_H(\theta)$ is defined as
\begin{equation}
J_H(\theta) = \mathbb{E}\left[\sum_{t=0}^{H-1} \gamma^t r(s_t,a_t)\right]\,. \notag 
\end{equation}

Now we state a complete version of Corollary~\ref{cor:glob-opt-standard-rl-tabular}, which we shall prove in this section.  

\begin{corollary}[\textbf{Global convergence of \algname{N-VR-PG}}]
    \label{thm:N-VR-PG}
    Let Assumptions~\ref{hyp:policy-param} and~\ref{hyp:F-concave} hold. Additionally, let Assumption~\ref{hyp:overparam-global-opt} be satisfied with $\bar{\epsilon} \geq \fr{ \al_0 (1-\g)}{2 \ell_{\theta} (T+1)^a}$ for some integer $T \geq 1$ and reals $\al_0>0$, $a\in(0,1)$.
    Set~$\al_t = \fr{\al_0}{(T+1)^a}$, $\eta_t = \fr{2}{t+2} $ and~$H = \rb{1-\g}^{-1}{\log(T + 1)}$. Then the output~$\theta_T$ of \algname{N-VR-PG} (see Algorithm~\ref{alg:N-VR-PG}) satisfies
    $$
        J^* - \Exp{ J(\theta_T) } \leq \cO%
        \left( \fr{ \al_0 V }{(T+1)^{ a -\fr{1}{2}}} \right)  ,
    $$
    where~$J^*$ is the optimal expected return and~$V$ is defined in Lemma~\ref{lem:for-cum-sum}. Therefore, the sample complexity to achieve $J^* - \Exp{ J(\theta_T) } \leq \varepsilon$ is $\cO\rb{ \varepsilon^{\fr{-2}{2a - 1}}}$.
\end{corollary}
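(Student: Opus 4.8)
The plan is to specialize the global optimality argument of Theorem~\ref{thm:glob-opt-gen-ut-tabular} (proved in Appendix~\ref{sec:app-proofs-globopt}) to the cumulative reward setting, where $F(\lambda)=\ps{r,\lambda}$ is linear, no occupancy-measure estimation is required, and hence the error $\tilde e_t$ and the associated $C_2\al^2 T$ contribution disappear, leaving only the policy-gradient error $e_t = d_t-\nabla J_H(\theta_t)$ from \eqref{eq:e-t-def-cumul-reward}. First I would establish the cumulative-reward analog of Lemma~\ref{lem:ascent_like_hidden_conv}: for every $\epsilon\le\min\{\bar\epsilon,\frac{\al_t(1-\gamma)}{2\ell_\theta}\}$,
\[
J^\star - J(\theta_{t+1}) \le (1-\epsilon)\bigl(J^\star - J(\theta_t)\bigr) + 2\al_t\|e_t\| + \tfrac{4L_\theta \ell_\theta^2}{(1-\gamma)^2}\epsilon^2 + \tfrac43\al_t D_g\gamma^H + \tfrac{L_\theta}{2}\al_t^2 .
\]
Its proof is verbatim that of Lemma~\ref{lem:ascent_like_hidden_conv} with $F(\lambda(\cdot))$ replaced by $J(\cdot)$ and $D_\lambda$ by $D_g$: start from the normalized ascent inequality of Lemma~\ref{lem:ascent-like-cumul-rewards}, introduce $\theta_\epsilon=(\lambda|_{\mathcal U_{\theta_t}})^{-1}((1-\epsilon)\lambda(\theta_t)+\epsilon\lambda(\theta^\star))$, use concavity of the (linear) $F$ to get $J(\theta_\epsilon)\ge(1-\epsilon)J(\theta_t)+\epsilon J^\star$, bound $\|\theta_\epsilon-\theta_t\|\le\frac{2\ell_\theta\epsilon}{1-\gamma}$ via Assumption~\ref{hyp:overparam-global-opt}, and use the constraint on $\epsilon$ to discard the $\|\nabla J(\theta_t)\|$ term. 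Here smoothness of $J$ comes from Lemma~\ref{lem:for-cum-sum}-\ref{lem:smoothness-cum-sum}, so Assumption~\ref{hyp:smoothness-F} is not needed.

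Second, writing $\delta_t=\Exp{J^\star-J(\theta_t)}$ and fixing $\al_t=\al=\frac{\al_0}{(T+1)^a}$, $\epsilon=\frac{\al(1-\gamma)}{2\ell_\theta}$ (so $\epsilon\le\bar\epsilon$ by the hypothesis $\bar\epsilon\ge\frac{\al_0(1-\gamma)}{2\ell_\theta(T+1)^a}$), I take expectations and unroll the recursion $\delta_{t+1}\le(1-\epsilon)\delta_t+2\al\,\Exp{\|e_t\|}+c$ from $t=T-1$ down to $0$, where $c$ gathers the three deterministic terms. The decisive point --- and the reason the cumulative reward rate $\varepsilon^{-2/(2a-1)}$ improves on the general-utilities rate $\varepsilon^{-2/(4a-3)}$ --- is that \emph{here I keep the geometric factors} $(1-\epsilon)^{T-1-t}$ instead of bounding them by $1$. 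Inserting $\Exp{\|e_t\|}\le 4V\eta_t t^{1/2}+2(L_g+GC_w^{1/2})\al_{t-1}t^{1/2}$ from Lemma~\ref{lem:et_bound_cumul} (valid up to constants for $\eta_t\asymp 1/t$), the $t$-growing error contribution is tamed by
\[
\sum_{t=0}^{T-1}(1-\epsilon)^{T-1-t}\,t^{1/2} \le T^{1/2}\sum_{k\ge 0}(1-\epsilon)^k \le \frac{2T^{1/2}}{\epsilon},
\]
so that $2\al\sum_t(1-\epsilon)^{T-1-t}\cdot 2(L_g+GC_w^{1/2})\al\,t^{1/2}=\mathcal O\!\bigl(\tfrac{\al^2 T^{1/2}}{\epsilon}\bigr)=\mathcal O(\al_0 T^{1/2-a})$ after using $\epsilon\asymp\al$. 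This replaces the crude $\sum_t t^{1/2}\asymp T^{3/2}$ of the general-utilities argument (which would produce $\al^2 T^{3/2}$).

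Third, every remaining piece is of lower order: the decreasing part $4V\eta_t t^{1/2}\le 8V t^{-1/2}$ contributes, through the same weighted sum, a term of order $V T^{-1/2}$; the deterministic block gives $\frac{c}{\epsilon}=\mathcal O\bigl(\tfrac{L_\theta\ell_\theta^2}{(1-\gamma)^2}\epsilon+\tfrac{\al}{\epsilon}D_g\gamma^H+\tfrac{L_\theta\al^2}{\epsilon}\bigr)=\mathcal O((T+1)^{-a})$ once $H=(1-\gamma)^{-1}\log(T+1)$ makes $\gamma^H=\mathcal O((T+1)^{-1})$; and $(1-\epsilon)^T\delta_0\le e^{-\epsilon T}\delta_0=e^{-\Omega(T^{1-a})}$ is super-polynomially small because $a<1$. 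Collecting terms yields $\delta_T=\mathcal O\bigl(\tfrac{\al_0 V}{(T+1)^{a-1/2}}\bigr)$ (the dominant constant is of the same nature as $V$, namely $L_g+GC_w^{1/2}$), and since \algname{N-VR-PG} (Algorithm~\ref{alg:N-VR-PG}) uses a single length-$H$ trajectory per iteration with $H=\tilde{\mathcal O}(1)$, requiring $\delta_T\le\varepsilon$ gives $T=\mathcal O(\varepsilon^{-1/(a-1/2)})=\mathcal O(\varepsilon^{-2/(2a-1)})$ and total sample complexity $TH=\tilde{\mathcal O}(\varepsilon^{-2/(2a-1)})$. The main obstacle is exactly the weighted-sum bookkeeping of the second step: one must justify retaining the $(1-\epsilon)^{T-1-t}$ weights, confirm that they genuinely convert the $\al_{t-1}t^{1/2}$ error term from $\al^2 T^{3/2}$ into $\al^2 T^{1/2}/\epsilon$, and verify that each discarded contribution is dominated by $T^{1/2-a}$ uniformly over the admissible range $a\in(0,1)$.
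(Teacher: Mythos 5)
Your proposal is correct and follows the same skeleton as the paper's proof: specialize the hidden-convexity ascent recursion of Lemma~\ref{lem:ascent_like_hidden_conv} to $F(\lambda(\theta))=J(\theta)$, insert the error bound $\Exp{\norm{e_t}}\le 4V\eta_t t^{\nicefrac{1}{2}}+2(L_g+GC_w^{\nicefrac{1}{2}})\al_{t-1}t^{\nicefrac{1}{2}}$ from Lemma~\ref{lem:et_bound_cumul}, unroll, and tune $\epsilon=\al(1-\gamma)/(2\ell_\theta)$. The one place where you genuinely deviate is the treatment of the unrolled sum, and your version is the more careful one. The paper discards the geometric factors (bounding $(1-\epsilon)^{T-1-t}\le 1$) on the stochastic-error terms and then asserts $2\al\sum_{t=0}^{T-1}2(L_g+GC_w^{\nicefrac{1}{2}})\al\, t^{\nicefrac{1}{2}}\le 4(L_g+GC_w^{\nicefrac{1}{2}})\al^2 T^{\nicefrac{1}{2}}$; with unit weights that sum is actually $\Theta(\al^2T^{\nicefrac{3}{2}})=\Theta(\al_0^2T^{\nicefrac{3}{2}-2a})$, which for $a<1$ is \emph{not} dominated by the target $T^{\nicefrac{1}{2}-a}$ and would only reproduce the weaker general-utilities rate $T^{-(2a-\nicefrac{3}{2})}$ of Theorem~\ref{thm:glob-opt-gen-ut-tabular}. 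Your weighted-sum bound $\sum_{t=0}^{T-1}(1-\epsilon)^{T-1-t}t^{\nicefrac{1}{2}}\le T^{\nicefrac{1}{2}}/\epsilon$ converts this contribution into $\cO(\al^2T^{\nicefrac{1}{2}}/\epsilon)=\cO(\al_0T^{\nicefrac{1}{2}-a})$ (using $\epsilon\asymp\al$), which is exactly what the stated rate needs; so your "decisive point" is not merely an alternative but the legitimate justification of the claimed bound, and it correctly explains why the cumulative-reward case improves on general utilities (the growing error term is absorbed by the contraction rather than accumulated). Two minor points to tidy up: (i) with the geometric weights, the $V\eta_t t^{\nicefrac{1}{2}}$ part yields $\cO(V/(\epsilon\sqrt{T}))=\cO(V T^{-\nicefrac{1}{2}})$ without an $\al_0$ prefactor, so matching the literal form $\cO(\al_0 V(T+1)^{-(a-\nicefrac{1}{2})})$ requires either the paper's unit-weight bound $8V\al(T+1)^{\nicefrac{1}{2}}$ for that particular (convergent) sum or absorbing the discrepancy into the $\cO$; (ii) the rate is only meaningful for $a>\nicefrac{1}{2}$, which neither you nor the paper states explicitly but is implicit in the exponent $\nicefrac{-2}{2a-1}$.
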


\begin{remark}
    If we are allowed to select $\al_0$ based on the problem parameters (only the bound on $(1-\gamma)$ is actually needed here), then the dependence on $(1-\gamma)^{-1}$ in the above theorem can be made arbitrary small.
\end{remark}

\begin{proof}

By Lemma~\ref{lem:et_bound_cumul}, we have the control of the variance sequence for $\eta_t = \fr{2}{t+2}$ as

\begin{equation}\label{eq:et_bound_cumul}
\Exp{ \norm{e_t} } \leq   4 V \eta_t \cdot t^{\nfr{1}{2}} + 2(L_g + G C_w^{\nfr{1}{2}} )  \al_{t-1} \cdot t^{\nfr{1}{2}}  .
\end{equation}

Define $\delta_t \eqdef \Exp{J(\theta^*) - J(\theta_t)}$, where in the cumulative reward case $F(\lambda(\theta)) = J(\theta)$. Let $\al_t = \al $ for all $t = 0, \ldots, T-1$. Then applying full expectation to the result of Lemma~\ref{lem:ascent_like_hidden_conv}, we have for $\epsilon \leq \min\cb{\bar{\epsilon}, \fr{\al (1-\g)}{2 \ell_{\theta}} }$
\begin{eqnarray}
\delta_{t+1} &\leq& (1-\epsilon) \delta_t + 2 \al \Exp{ \norm{ e_t } } + \frac{4 L_{\theta} l_{\theta}^2}{(1-\gamma)^2} \epsilon^2 + \frac{4}{3} \al D_{\lambda} \gamma^H + \frac{L_{\theta}}{2} \al^2\, \notag .
\end{eqnarray}
Unrolling the recursion from $t = 0$ to $t = T-1$, we have
\begin{eqnarray}
\delta_{T} &\leq& (1-\epsilon)^T \delta_0 + 2 \al \sum_{t=0}^{T-1}  \Exp{ \norm{ e_t } } + \frac{4 L_{\theta} l_{\theta}^2}{(1-\gamma)^2} \epsilon + \frac{4}{3} \fr{\al}{\epsilon} D_{\lambda} \gamma^H + \frac{L_{\theta}}{2} \fr{\al^2}{\epsilon} \, \notag \\
&\leq&  (1-\epsilon)^T \delta_0 + 8 V \al \sum_{t=0}^{T-1}   \eta_t \cdot t^{\nfr{1}{2}}   + 4 (L_g + G C_w^{\nfr{1}{2}} ) \al^2  T^{\nfr{1}{2}} + \frac{4 L_{\theta} l_{\theta}^2}{(1-\gamma)^2} \epsilon + \frac{4}{3} \fr{\al}{\epsilon} D_{\lambda} \gamma^H + \frac{L_{\theta}}{2} \fr{\al^2}{\epsilon} \, \notag \\
&\leq& (1-\epsilon)^T \delta_0 +   8 V \al (T+1)^{\nfr{1}{2}}   + 4 (L_g + G C_w^{\nfr{1}{2}} ) \al^2  T^{\nfr{1}{2}} + \frac{4 L_{\theta} l_{\theta}^2}{(1-\gamma)^2} \epsilon + \frac{4}{3} \fr{\al}{\epsilon} D_{\lambda} \gamma^H + \frac{L_{\theta}}{2} \fr{\al^2}{\epsilon} \, \notag .
\end{eqnarray}
Notice that $(1-\epsilon)^T \leq \exp\rb{T \log(1-\epsilon) } \leq \exp\rb{ - \epsilon T }$.
Finally setting $\al = \fr{\al_0}{(T+1)^a}$, for $0 < a < 1$ and $\epsilon = \min\cb{\bar{\epsilon}, \fr{\al (1-\g)}{2 \ell_{\theta}} } = \fr{\al (1-\g)}{2 \ell_{\theta}  } $, we obtain 
\begin{eqnarray}
\delta_T &\leq& \exp\rb{ - \fr{\al_0 (1-\g) }{2 \ell_0 } T^{1-a} } + \fr{8 \al_0 V }{ (T+1)^{a - \fr{1}{2}} }  + \fr{ 4 \al_0^2 (L_g + G C_w^{\nfr{1}{2}} ) }{  T^{2a - \fr{1}{2}} } + \frac{4 L_{\theta} l_{\theta}^2}{(1-\gamma)^2} \epsilon + \frac{4}{3} \fr{\al}{\epsilon} D_{\lambda} \gamma^H + \frac{L_{\theta}}{2} \fr{\al^2}{\epsilon} \, \notag \\
&\leq& \exp\rb{ - \fr{\al_0 (1-\g) }{2 \ell_0 } T^{1-a} } + \fr{8 \al_0 V }{ (T+1)^{a - \fr{1}{2}} }  + \fr{ 4 \al_0^2 (L_g + G C_w^{\nfr{1}{2}} ) }{  T^{2a - \fr{1}{2}} }   + \frac{2 L_{\theta} l_{\theta} \al_0 }{(1-\gamma) (T+1)^a }   + \frac{4}{3} \fr{\al}{\epsilon} D_{\lambda} \gamma^H + \frac{L_{\theta}}{2} \fr{\al^2}{\epsilon} \, \notag \\
&\leq& \exp\rb{ - \fr{\al_0 (1-\g) }{2 \ell_0 } T^{1-a} } + \fr{8 \al_0 V }{ (T+1)^{a - \fr{1}{2}} }  + \fr{ 4 \al_0^2 (L_g + G C_w^{\nfr{1}{2}} ) }{  T^{2a - \fr{1}{2}} }  + \fr{3 L_{\theta} l_{\theta} \al_0 }{(1-\gamma) (T+1)^a }   +  \fr{8 \ell_{\theta} D_{\lambda} }{3 (1-\g) }    \gamma^H    \, \notag  \\
&\leq& \cO\rb{ \fr{ 1 }{ (T+1)^{a - \fr{1}{2}} }  } ,  \, \notag 
\end{eqnarray}
where the last step follows by setting $H = (1-\gamma)^{-1} \log(T)$.
\end{proof}

\subsection{Global optimality in the cumulative reward setting for continuous state-action space and Gaussian policy}
\label{subsec:app-globopt-gaussian}

We first present our set of assumptions to derive global convergence results under the Gaussian policy parameterization. We start by assuming that our Gaussian policy parametrization is Fisher-non-degenerate, meaning that the Fisher information matrix induced by the policy parametrization is (uniformly) positive definite. 
This assumption is standard in the literature~\citep{liu-et-al20,ding-et-al22,Vanilla_PL_Yuan_21,Masiha_SCRN_KL,KL_PAGER_Fatkhullin}. We remark that \citet{Fatkhullin_SPG_FND_2023} recently obtained a~$\cO(\varepsilon^{-2})$ sample complexity under similar assumptions using a similar proof technique. The key difference between our \algname{N-VR-PG} method and their \algname{(N)-HARPG} algorithm is that our algorithm does not require the use of second-order information. The bound of IS weights is automatically ensured by the normalization step of the algorithm and the specific structure of the Gaussian policy parametrization (Lemma~\ref{lem:bounded-var-is-weights-gaussian}).

\begin{assumption}\label{hyp:fisher-non-degeneracy} There exists~$\mu_F > 0$ such that for every~$\theta \in \R^d$, the Fisher information matrix satisfies
\begin{equation*}
F_{\rho}(\theta) \eqdef \bb E_{s \sim d_{\rho}^{\pi_\theta},\, a \sim \pi_{\theta}(\cdot|s)} [\nabla \log \pi_{\theta}(a|s) \nabla \log \pi_{\theta}(a|s)^{\top}] \succeq \mu_F I_d \,,
\end{equation*}
where~$d_{\rho}^{\pi_\theta}(\cdot) \eqdef (1-\gamma) \sum_{t = 0}^{\infty} \gamma^t \bb P_{\rho,\pi_{\theta}}(s_t \in \cdot)$ is the discounted state visitation measure.
\end{assumption}

For Gaussian policies with fixed covariance matrix and linear mean parametrization $\mu_{\theta}(s) = \phi(s)^{\top} \theta $, the Fisher information matrix can be written explicitly. Namely, we have $F_{\rho}(\theta) = \sigma^{-2} \phi(s) \phi(s)^{\top}$ for every $s\in \mathcal{S}$. Therefore, the above assumption is satisfied if we assume that the feature map $\phi(s)$ has full-row-rank. 

Now we introduce an assumption which characterizes the expressivity of our policy parameterization class via the framework of compatible function approximation \citep{PGM_Sutton_1999,agarwal-et-al21}. In order to state this assumption, we first define the advantage function. Define for every policy~$\pi$ the state-action value function~$Q^{\pi}: \mathcal{S} \times \mathcal{A} \to \R$ for every~$s \in \mathcal{S}, a \in \mathcal{A}$ as: 
\begin{equation*}
Q^{\pi}(s,a) \eqdef \mathbb E_{\pi}\left[\sum_{t=0}^{\infty} \gamma^t r(s_t,a_t) | s_0 = a, a_0 = a \right]\,.
\end{equation*}
Under the same policy~$\pi$, the state-value function~$V^{\pi}: \mathcal{S} \to \R$ and the advantage function~$A^{\pi}:  \mathcal{S} \times \mathcal{A} \to \R$ are defined for every~$s \in \mathcal{S}, a \in \mathcal{A}$ as follows:
\begin{align*}
V^{\pi}(s) &\eqdef \mathbb E_{a \sim \pi(\cdot|s)}[Q^{\pi}(s,a)]\,, \\ 
A^{\pi}(s, a) &\eqdef Q^{\pi}(s, a) - V^{\pi}(s)\,.
\end{align*}

Now we are ready to state the \textit{compatible function approximation error} assumption.
\begin{assumption}
\label{hyp:tranf-compatib-fun-approx}
There exists~$\varepsilon_{\text{bias}} \geq 0$ s.t. for every~$\theta \in \R^d$, the transfer
error satisfies: 
\begin{equation*}
\bb E[ (A^{\pi_{\theta}}(s,a) - (1-\gamma)w^*(\theta)^{\top}\, \nabla \log \pi_{\theta}(a|s))^2] \leq \varepsilon_{\text{bias}}\,,
\end{equation*}
where~$A^{\pi_{\theta}}$ is the advantage function, 
$w^*(\theta) \eqdef F_{\rho}(\theta)^{\dag} \nabla J(\theta)$ where~$F_{\rho}(\theta)^{\dag}$ is the pseudo-inverse of the matrix~$F_{\rho}(\theta)$ and expectation is taken over $ s \sim d_{\rho}^{\pi^*},\, a \sim \pi^*(\cdot|s)$ where~$\pi^*$ is an optimal policy (maximizing~$J(\pi)$). 
\end{assumption}
The above assumption requires that the policy parametrization~$\pi_{\theta}$ should be able to approximate the advantage function~$A^{\pi_{\theta}}$ by the score function~$\nabla \log \pi_{\theta}$. 
Naturally~$\varepsilon_{\text{bias}}$ is necessarily positive for a parameterization~$\pi_{\theta}$ that does not cover the set of all stochastic policies and~$\varepsilon_{\text{bias}}$ is small for a rich neural policy \citep{wang-et-al20}. We note that this is a common assumption which was used for instance in \citep{agarwal-et-al21,liu-et-al20, ding-et-al22,Vanilla_PL_Yuan_21}.

Equipped with Assumptions~\ref{hyp:gauss-policy-param}, \ref{hyp:fisher-non-degeneracy}, \ref{hyp:tranf-compatib-fun-approx}, and following the derivations of \citet{ding-et-al22}, we obtain a relaxed weak gradient dominance inequality. 

\begin{lemma}[Relaxed weak gradient domination, \citep{ding-et-al22}] 
\label{lem:relaxed-w-grad-dom}
Let Assumptions~\ref{hyp:gauss-policy-param}, \ref{hyp:fisher-non-degeneracy} and~\ref{hyp:tranf-compatib-fun-approx} hold. 
Then%
\begin{eqnarray}
\label{eq:relaxed-w-grad-dom}
\forall\, \theta \in \R^d, \quad 
\varepsilon' + \|\nabla J(\theta)\| \geq \sqrt{2\mu}\, (J^* - J(\theta))\,,
\end{eqnarray}
where~$J^*$ is the optimal expected return, $\varepsilon^{\prime} = \frac{\mu_F \sqrt{\varepsilon_{\text{bias}}}}{M_g (1-\gamma)}\,$ and~$\mu = \frac{\mu_F^2}{2M_g^2}.$ 
\end{lemma}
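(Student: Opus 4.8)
The plan is to follow the compatible-function-approximation argument of \citet{agarwal-et-al21,ding-et-al22}, whose backbone is the performance difference lemma combined with the three structural assumptions: the bounded score $\|\nabla\log\pi_\theta\|\leq M_g$ (Assumption~\ref{hyp:gauss-policy-param}), the Fisher lower bound $F_\rho(\theta)\succeq\mu_F I_d$ (Assumption~\ref{hyp:fisher-non-degeneracy}), and the transfer error bound (Assumption~\ref{hyp:tranf-compatib-fun-approx}). First I would invoke the performance difference lemma for the optimal policy $\pi^*$ to write
\[
J^* - J(\theta) = \frac{1}{1-\gamma}\,\mathbb{E}_{s\sim d_\rho^{\pi^*},\,a\sim\pi^*(\cdot|s)}\left[A^{\pi_\theta}(s,a)\right].
\]

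Next I would add and subtract the compatible linear model $(1-\gamma)\,w^*(\theta)^{\top}\nabla\log\pi_\theta(a|s)$ inside the expectation, splitting $J^*-J(\theta)$ into two pieces. The first piece, $\frac{1}{1-\gamma}\mathbb{E}[A^{\pi_\theta} - (1-\gamma)w^*(\theta)^{\top}\nabla\log\pi_\theta]$, is controlled by Jensen's inequality (passing from the mean square to the mean absolute value) followed directly by Assumption~\ref{hyp:tranf-compatib-fun-approx}, yielding the bound $\frac{\sqrt{\varepsilon_{\text{bias}}}}{1-\gamma}$. For the second piece $\mathbb{E}_{s\sim d_\rho^{\pi^*},\,a\sim\pi^*}[w^*(\theta)^{\top}\nabla\log\pi_\theta(a|s)]$, I would apply Cauchy--Schwarz together with the bounded-score Assumption~\ref{hyp:gauss-policy-param} to obtain the upper bound $M_g\,\|w^*(\theta)\|$.

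The key quantitative step is then the operator-norm control $\|w^*(\theta)\| = \|F_\rho(\theta)^{\dagger}\nabla J(\theta)\| \leq \mu_F^{-1}\|\nabla J(\theta)\|$, which holds because Assumption~\ref{hyp:fisher-non-degeneracy} forces $F_\rho(\theta)$ to be positive definite with smallest eigenvalue at least $\mu_F$, so its (pseudo-)inverse has spectral norm at most $\mu_F^{-1}$. Combining the two bounds gives
\[
J^* - J(\theta) \leq \frac{\sqrt{\varepsilon_{\text{bias}}}}{1-\gamma} + \frac{M_g}{\mu_F}\,\|\nabla J(\theta)\|.
\]
Finally I would multiply through by $\frac{\mu_F}{M_g} = \sqrt{2\mu}$ and identify $\frac{\mu_F}{M_g}\cdot\frac{\sqrt{\varepsilon_{\text{bias}}}}{1-\gamma} = \frac{\mu_F\sqrt{\varepsilon_{\text{bias}}}}{M_g(1-\gamma)} = \varepsilon'$, which rearranges exactly to the claimed inequality $\varepsilon' + \|\nabla J(\theta)\| \geq \sqrt{2\mu}\,(J^*-J(\theta))$.

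The argument is essentially bookkeeping once the three assumptions are in place, so I do not expect a serious obstacle; the only genuinely delicate points are the passage from the mean-square transfer bound to a mean-absolute bound via Jensen, and the spectral-norm control of the pseudo-inverse. Both are standard, but the latter crucially requires the Fisher matrix to be \emph{uniformly} non-degenerate across all $\theta$, which is precisely what Assumption~\ref{hyp:fisher-non-degeneracy} provides; without uniformity the constant $\mu$ would degrade and the inequality could fail to hold globally.
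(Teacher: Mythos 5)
Your proof is correct and is exactly the standard compatible-function-approximation argument (performance difference lemma, split via the linear model $(1-\gamma)w^*(\theta)^{\top}\nabla\log\pi_\theta$, Jensen on the transfer error, Cauchy--Schwarz with the score bound, and $\|F_\rho(\theta)^{\dagger}\|\leq\mu_F^{-1}$) that the paper relies on by citing \citet{ding-et-al22} rather than reproving. The constants check out: $\sqrt{2\mu}=\mu_F/M_g$, and multiplying your bound $J^*-J(\theta)\leq\frac{\sqrt{\varepsilon_{\text{bias}}}}{1-\gamma}+\frac{M_g}{\mu_F}\|\nabla J(\theta)\|$ by $\mu_F/M_g$ gives precisely the stated inequality.
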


\begin{corollary}[\textbf{Global convergence of \algname{N-VR-PG}}]
    \label{thm:N-VR-PG-gauss}
    Let Assumptions~\ref{hyp:gauss-policy-param}, \ref{hyp:fisher-non-degeneracy} and \ref{hyp:tranf-compatib-fun-approx} hold. 
    Set~$\al_t = \fr{3}{\sqrt{2\mu} (T+1)^a}$, for some $0 < a < 1$ , $\eta_t = \fr{2}{t+1} $ and~$H = \rb{1-\g}^{-1}{\log(T + 1)}$. Then the output~$\theta_T$ of \algname{N-VR-PG} (see Algorithm~\ref{alg:N-VR-PG}) satisfies
    $$
        J^* - \Exp{ J(\theta_T) } \leq \cO\left( \fr{ 1 }{ (1-\gamma)^{\nfr{3}{2}}  (T+1)^{a - \fr{1}{2}} }  \right)  + \fr{\sqrt{\varepsilon_{\text{bias}}} }{1-\gamma}  ,
    $$
    where~$J^*$ is the optimal expected return. Therefore, the sample complexity to achieve $J^* - \Exp{ J(\theta_T) } \leq \varepsilon + \fr{\sqrt{\varepsilon_{\text{bias}}} }{1-\gamma}$ is $\cO\rb{ \varepsilon^{\fr{-2}{2a - 1}}}$.
\end{corollary}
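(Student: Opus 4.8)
The plan is to follow the template of the cumulative-reward global-convergence proof (Corollary~\ref{cor:glob-opt-standard-rl-tabular}), but to replace the hidden-convexity ascent lemma (Lemma~\ref{lem:ascent_like_hidden_conv}) by the combination of the standard ascent-like lemma (Lemma~\ref{lem:ascent-like-cumul-rewards}) and the relaxed weak gradient domination inequality (Lemma~\ref{lem:relaxed-w-grad-dom}). First I would observe that every estimate needed carries over to the Gaussian policy: Lemma~\ref{lem:ascent-like-cumul-rewards} uses only smoothness of $J$ and the normalized update, both valid under Assumption~\ref{hyp:gauss-policy-param} via Lemma~\ref{lem:for-cum-sum-gaussian}; and the variance recursion underlying Lemma~\ref{lem:et_bound_cumul} uses only the constants $L_g, G, V$ and the importance-sampling variance constant $C_w$, all supplied for the Gaussian policy by Lemma~\ref{lem:for-cum-sum-gaussian} and Lemma~\ref{lem:bounded-var-is-weights-gaussian}. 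Hence the bound $\mathbb{E}[\|e_t\|] \le 4V\eta_t t^{1/2} + 2(L_g + G C_w^{1/2})\alpha_{t-1} t^{1/2}$ holds verbatim with the Gaussian constants.

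Setting $\delta_t := \mathbb{E}[J^\star - J(\theta_t)] \ge 0$, I would take expectations in Lemma~\ref{lem:ascent-like-cumul-rewards} and rearrange to get $\delta_{t+1} \le \delta_t - \tfrac{\alpha_t}{3}\mathbb{E}[\|\nabla J(\theta_t)\|] + 2\alpha_t\mathbb{E}[\|e_t\|] + \tfrac{4}{3}D_g\gamma^H\alpha_t + \tfrac{L_\theta}{2}\alpha_t^2$. The key move is to apply Lemma~\ref{lem:relaxed-w-grad-dom}, which being pointwise in $\theta$ gives, after evaluation at the random $\theta_t$ and taking expectation, $\mathbb{E}[\|\nabla J(\theta_t)\|] \ge \sqrt{2\mu}\,\delta_t - \varepsilon'$. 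Substituting turns the negative gradient term into a genuine contraction, $\delta_{t+1} \le (1-\kappa)\delta_t + \tfrac{\alpha_t}{3}\varepsilon' + 2\alpha_t\mathbb{E}[\|e_t\|] + \tfrac{4}{3}D_g\gamma^H\alpha_t + \tfrac{L_\theta}{2}\alpha_t^2$, where the constant step size $\alpha_t = \alpha = \tfrac{3}{\sqrt{2\mu}(T+1)^a}$ is calibrated exactly so that $\kappa = \tfrac{\alpha\sqrt{2\mu}}{3} = (T+1)^{-a} < 1$. I would then unroll from $t=0$ to $T-1$, giving $\delta_T \le (1-\kappa)^T\delta_0 + \sum_{t=0}^{T-1}(1-\kappa)^{T-1-t} b_t$. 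The initial term is negligible since $(1-\kappa)^T \le e^{-\kappa T} = e^{-T^{1-a}}$ decays faster than any polynomial, and each constant-in-$t$ contribution $b$ sums to $b/\kappa$. In particular the bias term yields $\tfrac{\alpha\varepsilon'/3}{\kappa} = \tfrac{\varepsilon'}{\sqrt{2\mu}}$, and substituting $\varepsilon' = \tfrac{\mu_F\sqrt{\varepsilon_{\text{bias}}}}{M_g(1-\gamma)}$ together with $\sqrt{2\mu} = \mu_F/M_g$ collapses this precisely to the claimed floor $\tfrac{\sqrt{\varepsilon_{\text{bias}}}}{1-\gamma}$, while the $\gamma^H$ and $\alpha^2$ contributions are of lower order once $H = (1-\gamma)^{-1}\log(T+1)$ (so that $\gamma^H \le (T+1)^{-1}$).

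The step I expect to be most delicate is summing the variance term $2\alpha\mathbb{E}[\|e_t\|]$ against the geometric weights. For the decreasing piece $4V\eta_t t^{1/2} = \mathcal{O}(V t^{-1/2})$ a crude bound $\sum_t \eta_t t^{1/2} = \mathcal{O}(\sqrt{T})$ suffices and supplies the leading rate, with coefficient $\mathcal{O}(V/\sqrt{2\mu}) = \mathcal{O}((1-\gamma)^{-3/2})$ producing $\mathcal{O}((1-\gamma)^{-3/2}(T+1)^{-(a-1/2)})$. For the increasing piece $2(L_g + G C_w^{1/2})\alpha t^{1/2}$, however, the naive bound $\sum_t t^{1/2} \sim T^{3/2}$ would blow up, so one must exploit the contraction: bounding $t^{1/2} \le T^{1/2}$ and summing the geometric series gives $\sum_t (1-\kappa)^{T-1-t} t^{1/2} \le T^{1/2}/\kappa$, and since $\alpha^2/\kappa \propto \alpha$ this contribution is also only $\mathcal{O}((T+1)^{-(a-1/2)})$. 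This observation is what keeps the rate at $(T+1)^{-(a-1/2)}$ for all $a \in (0,1)$ rather than degrading.

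Collecting all terms gives $\delta_T = \mathcal{O}((1-\gamma)^{-3/2}(T+1)^{-(a-1/2)}) + \tfrac{\sqrt{\varepsilon_{\text{bias}}}}{1-\gamma}$, matching the corollary. Finally, choosing $T$ so that $(T+1)^{a-1/2} \asymp \varepsilon^{-1}$, i.e. $T = \mathcal{O}(\varepsilon^{-2/(2a-1)})$, and recalling that \algname{N-VR-PG} uses a single trajectory of length $H = \tilde{\mathcal{O}}(1)$ per iteration, yields the advertised $\mathcal{O}(\varepsilon^{-2/(2a-1)})$ sample complexity to reach accuracy $\varepsilon + \tfrac{\sqrt{\varepsilon_{\text{bias}}}}{1-\gamma}$.
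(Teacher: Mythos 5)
Your proposal follows essentially the same route as the paper's proof: the normalized ascent-like lemma combined with the relaxed weak gradient domination inequality to turn $\|\nabla J(\theta_t)\|$ into a contraction on $\delta_t = \mathbb{E}[J^*-J(\theta_t)]$, the Gaussian-policy bounds on $\mathbb{E}[\|e_t\|]$ from the IS-weight variance control, and unrolling with $\kappa = \alpha\sqrt{2\mu}/3$ so that the bias term collapses to $\varepsilon'/\sqrt{2\mu} = \sqrt{\varepsilon_{\text{bias}}}/(1-\gamma)$. If anything, your treatment of the increasing error contribution $\alpha^2\sum_{t<T} t^{1/2}$ --- keeping the geometric weights $(1-\kappa)^{T-1-t}$ to obtain $T^{1/2}/\kappa$ rather than summing crudely --- is more careful than the paper's corresponding step, which passes from $\alpha^2\sum_{t<T} t^{1/2}$ directly to $\alpha^2 T^{1/2}$ without comment.
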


\begin{proof}   
    As in the case of softmax parametrization, given the result of Lemma~\ref{lem:for-cum-sum-gaussian}, and following the steps in the proof of Lemma~\ref{lem:et_bound_cumul}, we can derive the control of the variance sequence for $\eta_t = \fr{2}{t+1}$ as
\begin{equation}
\Exp{ \norm{e_t} } \leq   4 V \eta_t \cdot t^{\nfr{1}{2}} + 2(L_g + G C_w^{\nfr{1}{2}} )  \al_{t-1} \cdot t^{\nfr{1}{2}} .
\end{equation}

 Similarly to Lemma~\ref{lem:ascent-like-lemma-gen-ut}, we can obtain 
    \begin{eqnarray}
    J(\theta_{t+1})  &\geq& J(\theta_t) + \frac{\al_t}{3} \|\nabla J(\theta_t)\| - 2 \al_t \|e_t\| - \frac{4}{3} D_g \gamma^H \al_t - \frac{L_{\theta}}{2} \al_t^2 \, \notag \\
     &\geq& J(\theta_t) + \frac{\al_t \sqrt{2\mu}}{3} \rb{ J^* - J(\theta_t) } - 2 \al_t \|e_t\| - \frac{4}{3} D_g \gamma^H \al_t - \frac{L_{\theta}}{2} \al_t^2 - \fr{\varepsilon' \al_t}{3}, 
    \end{eqnarray}
    where in the last step we applied the relaxed weak gradient dominance condition (Lemma~\ref{lem:relaxed-w-grad-dom}). Now we define $\delta_t \eqdef \Exp{J(\theta^*) - J(\theta_t)}$. Let $\al_t = \al $ for all $t = 0, \ldots, T$. Then applying full expectation to the result of Lemma~\ref{lem:ascent_like_hidden_conv}, we have
\begin{eqnarray}
    \delta_{t+1} &\leq& \rb{1 - \frac{\al \sqrt{2\mu}}{3} } \delta_t  + 2 \al \Exp{ \|e_t\| } + \frac{4}{3} D_g \gamma^H \al + \frac{L_{\theta}}{2} \al^2 + \fr{\varepsilon' \al}{3} \notag .
    \end{eqnarray}
    Unrolling the recursion from $t = 0$ to $t = T-1$, we have
    \begin{eqnarray}
\delta_{T} &\leq& \rb{ 1-\frac{\al \sqrt{2\mu}}{3} }^T \delta_0 + 2 \al \sum_{t=0}^{T-1}  \Exp{ \norm{ e_t } } + \frac{4}{\sqrt{2\mu}} D_g \gamma^H + \frac{3}{\sqrt{2\mu}} \frac{L_{\theta}}{2} \al + \fr{\varepsilon' }{\sqrt{2\mu}} \, \notag \\
&\leq& \rb{ 1-\frac{\al \sqrt{2\mu}}{3} }^T \delta_0 + 8 V \al \sum_{t=0}^{T-1}   \eta_t \cdot t^{\nfr{1}{2}}   + 4 (L_g + G C_w^{\nfr{1}{2}} ) \al^2  T^{\nfr{1}{2}} + \frac{4}{\sqrt{2\mu}} D_g \gamma^H + \frac{3}{\sqrt{2\mu}} \frac{L_{\theta}}{2} \al + \fr{\varepsilon' }{\sqrt{2\mu}} \, \notag \\
&\leq& \rb{ 1-\frac{\al \sqrt{2\mu}}{3} }^T \delta_0 +   8 V \al (T+1)^{\nfr{1}{2}}   + 4 (L_g + G C_w^{\nfr{1}{2}} ) \al^2  T^{\nfr{1}{2}} + \frac{4}{\sqrt{2\mu}} D_g \gamma^H + \frac{3}{\sqrt{2\mu}} \frac{L_{\theta}}{2} \al + \fr{\varepsilon' }{\sqrt{2\mu}}  \, \notag .
\end{eqnarray}
Finally, setting $\al = \fr{3}{\sqrt{2\mu} (T+1)^a}$, for $0 < a < 1$ and noticing that $(1-(T+1)^{-a})^T \leq \exp\rb{T \log(1-(T+1)^{-a}) } \leq \exp\rb{ - T^{1-a} }$, we obtain 
\begin{eqnarray}
    \delta_T & \leq & \exp\rb{ - T^{1-a} } \delta_0 + \fr{24 V}{\sqrt{2\mu}  (T+1)^{a-\fr{1}{2} }}    + \fr{18 (L_g + G C_w^{\nfr{1}{2}} ) }{ \mu \cdot  T^{2a - \fr{1}{2}} } + \frac{4}{\sqrt{2\mu}} D_g \gamma^H + \frac{9 L_{\theta} }{8\mu}  \fr{1}{(T+1)^a} + \fr{\varepsilon' }{\sqrt{2\mu}}  \, \notag \\
    &\leq& \cO\rb{ \fr{ V }{ \sqrt{\mu} (T+1)^{a - \fr{1}{2}} }  }  + \fr{\varepsilon' }{\sqrt{2\mu}},  \, \notag 
\end{eqnarray}
where the last step follows by setting $H = (1-\gamma)^{-1} \log(T)$. It only remains to notice from Lemma~\ref{lem:for-cum-sum-gaussian} and \ref{lem:relaxed-w-grad-dom} that 
$$
\varepsilon^{\prime} = \frac{\mu_F \sqrt{\varepsilon_{\text{bias}}}}{M_g (1-\gamma)} = \cO\rb{\fr{1}{1-\gamma} } ,\qquad V = \frac{ M_g \|r\|_{\infty}}{(1-\gamma)^{\nfr{3}{2}}} = \cO\rb{\fr{1}{(1-\gamma)^{\nfr{3}{2}}} }.
$$
\end{proof}

\section{Proofs for Section~\ref{sec:large-state-action-space}: Large state-action space setting}
\label{appendix:large-sa-setting}

 \subsection{Unbiased estimates of the occupancy measure at state-action pairs}
 \label{subsec:unbiased-estim-occup-measure}

\noindent\textbf{Notation.} For a given set~$A$, the indicator function~$\mathds{1}_A$ is equal to one on the set~$A$ and zero otherwise. 

In this section, we provide two different estimators: the first one is a Monte-Carlo estimate of the truncated occupancy measure whereas the second one is an unbiased estimate of the true occupancy measure. Notice that we can also slightly modify the second estimator to a obtain a minibatch estimator via sampling (independently) similarly~$N$ different state-action pairs~$(s_H^{(i)}, a_H^{(i)})_{0 \leq i \leq N}$ via the same sampling procedure as in Algorithm~\ref{algo:geom-rollout-estimate-true-occup-measure} and averaging out the outputs, i.e., considering the following estimator: 
\begin{equation}
\hat{\lambda}^{\pi_{\theta}}(s,a) = \frac{1}{N} \sum_{i=1}^N  \mathds{1}_{\{s_H^{(i)} = s,\, a_H^{(i)} = a \}}\,.
\end{equation}

\begin{algorithm}[H]
   \caption{Monte-Carlo estimate of the truncated state-action occupancy measure for $(s,a)$:~$\lambda_H^{\pi_{\theta}}(s,a)$}
   \label{algo:MC-estimate-lambda-pi-theta}
\begin{algorithmic}
   \STATE {\bfseries Input:} Initial state distribution~$\rho$, state-action pair~$(s,a) \in \mathcal{S} \times \mathcal{A}$, policy~$\pi_{\theta}$, discount factor~$\gamma \in [0,1)$, truncation horizon~$H$. 
   
   \STATE Sample a trajectory~$\tau = (s_t,a_t)_{0 \leq t \leq H-1}$ from the MDP controlled by policy~$\pi_{\theta}$
    \STATE $\hat{\lambda}_H^{\pi_{\theta}}(s,a) = \sum_{t=0}^H \gamma^t 
        \mathds{1}_{\{s_t = s,\, a_t = a \}}$
   \STATE {\bfseries Return:} $\hat{\lambda}_H^{\pi_{\theta}}(s,a)\,.$
\end{algorithmic}
\end{algorithm}

\begin{algorithm}[H]
   \caption{Unbiased estimator of the state-action occupancy measure for $(s,a)$:~$\lambda^{\pi_{\theta}}(s,a)$}
   \label{algo:geom-rollout-estimate-true-occup-measure}
\begin{algorithmic}
   \STATE {\bfseries Input:} Initial state distribution~$\rho$, state-action pair~$(s,a) \in \mathcal{S} \times \mathcal{A}$, policy~$\pi_{\theta}$, discount factor~$\gamma \in [0,1)$, $h= 0$.  
   \STATE $s_0 \sim \rho, a_0 \sim \pi_{\theta}(\cdot|s_0)$
   \STATE Draw~$H$ from the geometric distribution~$\text{Geom}(1-\gamma)$
   \FOR{$h= 0, \ldots, H-1$}
        \STATE $s_{h+1} \sim P(\cdot|s_h, a_h)$; $a_{h+1} \sim \pi_{\theta}(\cdot|s_h)$
    \ENDFOR
   \STATE $\hat{\lambda}^{\pi_{\theta}}(s,a) = \mathds{1}_{\{s_H = s,\, a_H = a \}} $
   \STATE {\bfseries Return:} $\hat{\lambda}^{\pi_{\theta}}(s,a)\,.$
\end{algorithmic}
\end{algorithm}

\subsection{Proof of Theorem~\ref{thm:fosp-gen-ut-lin-fa2-eps-stat-approx}: Convergence analysis under bounded statistical and approximation errors} 
\label{app:proof-thm:fosp-gen-ut-lin-fa2-eps-stat-approx}

We first state a more detailed version of Theorem~\ref{thm:fosp-gen-ut-lin-fa2-eps-stat-approx}. 

\begin{theorem}
\label{thm:fosp-gen-ut-lin-fa2-eps-stat-approx2}
Let Assumptions~\ref{hyp:policy-param}, \ref{hyp:smoothness-F}, \ref{hyp:bounded-stat-error} and~\ref{hyp:bounded-approx-error} hold true. In addition, suppose that there exists~$\rho_{\min} > 0$ s.t. the initial distribution~$\rho$ satisfies~$\rho(s) \geq \rho_{\min}$ for all~$s \in \mathcal{S}\,.$ Let~$T \geq 1$ be an integer and let~$(\theta_t)$ be the sequence generated by Algorithm~\ref{algo-gen-ut-func-approx2} with a positive step size~$\alpha \leq \min(1/\sqrt{5 \tilde{C}_1}, 1/2L_{\theta})$ (see~$\tilde{C}_1$ below) and batch size~$N \geq 1$.
Then, we have 
\begin{equation}
 \mathbb{E}[\|\nabla_{\theta} F(\lambda(\bar{\theta}_T))\|^2] 
\leq \frac{16  (F^* - \mathbb{E}[F(\lambda(\theta_1))]) + \alpha \tilde{C}_4}{\alpha T}
+ \frac{\tilde{C}_3}{N} + 2 D_{\lambda}^2 \gamma^{2H}
+ \tilde{C}_2 (\epsilon_{\text{stat}} + \epsilon_{\text{approx}})\,, 
\end{equation}
where~$\bar{\theta}_T$ be a random iterate drawn uniformly at random from~$\{\theta_1, \cdots, \theta_T\}$, $\tilde{C}_1 \eqdef \frac{48 l_{\psi}^3 L_{\lambda,\infty}^2}{(1-\gamma)^6}, \tilde{C}_2 \eqdef \frac{48 l_{\psi}^2 L_{\lambda}^2}{(1-\gamma)^4} \frac{|\mathcal{A}|}{\rho_{\min}}$, $\tilde{C}_3 \eqdef \frac{24 l_{\lambda}^2 l_{\psi}^2}{(1-\gamma)^4}, \tilde{C}_4 \eqdef \frac{8 l_{\lambda}^2 l_{\psi}^2}{(1-\gamma)^4}$ and~$D_{\lambda}$ is defined in Lemma~\ref{lem:trunc_grad}.  
\end{theorem}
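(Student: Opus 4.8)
Since Algorithm~\ref{algo-gen-ut-func-approx2} is an unnormalized batch stochastic gradient \emph{ascent} method, the plan is to run a standard nonconvex smoothness analysis, the only nonstandard ingredient being a careful accounting of the bias in the ascent direction. Write $\hat{g}_t \eqdef \frac{1}{N}\sum_{i=1}^N g(\tau_t^{(i)}, \theta_t, r_{t-1})$, so that $\theta_{t+1} = \theta_t + \alpha \hat{g}_t$. Because $\theta_t$ and $r_{t-1}$ are both $\mathcal{F}_{t-1}$-measurable whereas the trajectories $\tau_t^{(i)}$ are sampled afresh under $\pi_{\theta_t}$ (this is exactly why $r_{t-1}$ rather than $r_t$ is used), Lemma~\ref{lem:pg-estimate-lambda-estimate} gives the clean conditional mean $\bar{g}_t \eqdef \mathbb{E}[\hat{g}_t \mid \mathcal{F}_{t-1}] = [\nabla_\theta \lambda_H(\theta_t)]^\top r_{t-1}$. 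I would start from the $L_\theta$-smoothness of $\theta \mapsto F(\lambda(\theta))$ (Lemma~\ref{lem:smoothness-obj}) to obtain the per-step inequality $F(\lambda(\theta_{t+1})) \ge F(\lambda(\theta_t)) + \alpha\langle \nabla_\theta F(\lambda(\theta_t)), \hat{g}_t\rangle - \tfrac{L_\theta}{2}\alpha^2\|\hat{g}_t\|^2$, take $\mathbb{E}[\,\cdot\mid\mathcal{F}_{t-1}]$, and replace the inner product by $\langle \nabla_\theta F(\lambda(\theta_t)), \bar{g}_t\rangle$. Introducing the bias $b_t \eqdef \bar{g}_t - \nabla_\theta F(\lambda(\theta_t))$, a Young/Cauchy--Schwarz split then yields a descent of the form $\mathbb{E}[F(\lambda(\theta_{t+1}))\mid\mathcal{F}_{t-1}] \ge F(\lambda(\theta_t)) + c\,\alpha\|\nabla_\theta F(\lambda(\theta_t))\|^2 - c'\alpha\|b_t\|^2 - \tfrac{L_\theta}{2}\alpha^2\,\mathbb{E}[\|\hat{g}_t\|^2\mid\mathcal{F}_{t-1}]$, where $\alpha \le 1/(2L_\theta)$ absorbs part of the quadratic term into the gradient term.

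\textbf{Controlling the bias.} The crux is to bound $\|b_t\|$. I would split it, as in the decomposition~\eqref{eq:decomp-e-t}, into a \emph{truncation} part $\nabla_\theta F(\lambda_H(\theta_t)) - \nabla_\theta F(\lambda(\theta_t))$, handled by $\|\cdot\|\le D_\lambda\gamma^H$ via Lemma~\ref{lem:trunc_grad}, and a \emph{reward-estimation} part $[\nabla_\theta\lambda_H(\theta_t)]^\top(r_{t-1} - \nabla_\lambda F(\lambda_H(\theta_t)))$. Mimicking~\eqref{eq:grad-theta-lambda-grad-lambda-diff}--\eqref{eq:bound-pg-thm-temp-35}, the Jacobian operator satisfies $\|[\nabla_\theta\lambda_H(\theta_t)]^\top v\| \le \tfrac{2l_\psi}{(1-\gamma)^2}\|v\|_\infty$, so it remains to bound $\|r_{t-1} - \nabla_\lambda F(\lambda_H(\theta_t))\|_\infty$. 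By Assumption~\ref{hyp:smoothness-F} and a triangle inequality through $\nabla_\lambda F(\lambda_H(\theta_{t-1}))$, this is at most $L_\lambda\|\hat{\lambda}_{t-1} - \lambda_H(\theta_{t-1})\|_2 + L_{\lambda,\infty}\|\lambda_H(\theta_{t-1}) - \lambda_H(\theta_t)\|_1$. The first (\emph{regression}) term is converted from the expected regression loss using $\rho(s)\ge\rho_{\min}$ and the uniform action sampling: since $L_{\theta_{t-1}}(\hat{\omega}_{t-1}) \ge \tfrac{\rho_{\min}}{|\mathcal{A}|}\|\lambda_H(\theta_{t-1}) - \hat{\lambda}_{t-1}\|_2^2$, Assumptions~\ref{hyp:bounded-stat-error}--\ref{hyp:bounded-approx-error} give $\mathbb{E}[\|\hat{\lambda}_{t-1} - \lambda_H(\theta_{t-1})\|_2^2] \le \tfrac{|\mathcal{A}|}{\rho_{\min}}(\epsilon_{\text{stat}} + \epsilon_{\text{approx}})$, which is the source of the $\tilde{C}_2(\epsilon_{\text{stat}}+\epsilon_{\text{approx}})$ term. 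The second (\emph{drift}) term uses the $\ell_1$-Lipschitzness of $\lambda_H$ (Lemma~\ref{lem:smoothness-obj}) together with $\|\theta_t - \theta_{t-1}\| = \alpha\|\hat{g}_{t-1}\|$ to produce a factor proportional to $\alpha\|\hat{g}_{t-1}\|$.

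\textbf{Variance, telescoping, and the main obstacle.} For the quadratic term I would use the deterministic bound $\|g(\tau,\theta,r)\|\le G_{\max}\eqdef \tfrac{2l_\lambda l_\psi}{(1-\gamma)^2}$ (from~\eqref{eq:stochastic-pg-bound}, valid since $\|r_{t-1}\|_\infty\le l_\lambda$), giving both $\|\hat{g}_t\|\le G_{\max}$ and, by conditional independence of the $N$ trajectories, the variance reduction $\mathbb{E}[\|\hat{g}_t - \bar{g}_t\|^2\mid\mathcal{F}_{t-1}]\le G_{\max}^2/N$; these produce the $\tilde{C}_3/N$ and $\alpha\tilde{C}_4/(\alpha T)$ contributions. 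Taking total expectation, summing over $t=1,\dots,T$, telescoping $F$ (bounded above by $F^\star$), and dividing through by a positive multiple of $\alpha T$ yields a bound on $\tfrac1T\sum_t \mathbb{E}[\|\nabla_\theta F(\lambda(\theta_t))\|^2] = \mathbb{E}[\|\nabla_\theta F(\lambda(\bar{\theta}_T))\|^2]$ since $\bar{\theta}_T$ is drawn uniformly from the iterates. The delicate point is the drift term: squaring $\|b_t\|$ produces a contribution proportional to $\tilde{C}_1\alpha^2\|\hat{g}_{t-1}\|^2$ that couples $\|b_t\|^2$ to the \emph{previous} iterate's squared update direction. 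I expect this coupling to be the main obstacle, because it blocks a one-shot telescoping; the step-size restriction $\alpha\le 1/\sqrt{5\tilde{C}_1}$ is precisely what makes the coefficient $\tilde{C}_1\alpha^2$ small enough to reabsorb these drift contributions into the $\tfrac1T\sum_t\mathbb{E}[\|\nabla_\theta F(\lambda(\theta_t))\|^2]$ (equivalently $\sum_t\mathbb{E}\|\hat{g}_t\|^2$) terms already on hand, closing the recursion up to the stated constants.
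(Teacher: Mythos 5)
Your proposal is correct and follows essentially the same route as the paper's proof: the same $L_{\theta}$-smoothness ascent lemma with a retained positive $\|u_t\|^2$-type term, the same four-way decomposition of the gradient error into truncation ($D_{\lambda}\gamma^H$), drift ($L_{\lambda,\infty}$ times $\|\theta_t-\theta_{t-1}\|=\alpha\|\hat{g}_{t-1}\|$), regression ($L_{\lambda}$ times $\|\hat{\lambda}_{t-1}-\lambda_H(\theta_{t-1})\|$ converted to $\epsilon_{\text{stat}}+\epsilon_{\text{approx}}$ via $\rho_{\min}$), and batch variance ($\tilde{C}_3/N$), and the same resolution of the drift coupling to $\|\hat{g}_{t-1}\|^2$ through the condition $\alpha\le 1/\sqrt{5\tilde{C}_1}$ followed by telescoping. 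Your direct lower bound $L_{\theta_{t-1}}(\hat{\omega}_{t-1})\ge\tfrac{\rho_{\min}}{|\mathcal{A}|}\|\hat{\lambda}_{t-1}-\lambda_H(\theta_{t-1})\|_2^2$ combined with the additive split of the regression loss is a marginally cleaner packaging of the paper's two-step statistical/approximation argument, but it is not a different proof.
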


\begin{proof}
We introduce the shorthand notation~$u_t \eqdef \frac{1}{N}\sum_{i=1}^N g(\tau_t^{(i)}, \theta_t, r_{t-1})$ for this proof. 
The smoothness of the objective function~$\theta \mapsto F(\lambda(\theta))$ (see Lemma~\ref{lem:smoothness-obj}) together with the update rule of the sequence~$(\theta_t)$ yields  
\begin{align}
\label{eq:smoothness-batch}
F(\lambda(\theta_{t+1})) 
&\geq F(\lambda(\theta_t)) + \ps{\nabla_{\theta} F(\lambda(\theta_t)), \theta_{t+1} - \theta_t} - \frac{L_{\theta}}{2} \|\theta_{t+1} - \theta_t \|^2\nonumber\\
&= F(\lambda(\theta_t)) + \alpha \ps{\nabla_{\theta} F(\lambda(\theta_t)), u_t} - \frac{L_{\theta} \alpha^2}{2}\|u_t\|^2\nonumber\\
&= F(\lambda(\theta_t))+ \alpha \ps{\nabla_{\theta} F(\lambda(\theta_t)) - u_t, u_t} + \alpha \left( 1- \frac{L_{\theta}\alpha}{2}\right) \|u_t\|^2\nonumber\\
&\geq F(\lambda(\theta_t)) - \frac{\alpha}{2} \|\nabla_{\theta} F(\lambda(\theta_t)) - u_t\|^2 - \frac{\alpha}{2} \|u_t\|^2 + \alpha \left( 1- \frac{L_{\theta}\alpha}{2}\right) \|u_t\|^2\nonumber\\
&= F(\lambda(\theta_t)) - \frac{\alpha}{2} \|\nabla_{\theta} F(\lambda(\theta_t)) - u_t\|^2 + \frac{\alpha}{2}  (1- L_{\theta}\alpha) \|u_t\|^2\nonumber\\
&\stackrel{(i)}{\geq} F(\lambda(\theta_t)) - \frac{\alpha}{2} \|\nabla_{\theta} F(\lambda(\theta_t)) - u_t\|^2 + \frac{\alpha}{4} \|u_t\|^2 \nonumber\\ 
&= F(\lambda(\theta_t)) - \frac{\alpha}{2} \|\nabla_{\theta} F(\lambda(\theta_t)) - u_t\|^2 + \frac{\alpha}{8} \|u_t\|^2 + \frac{\alpha}{8} \|u_t\|^2 \nonumber\\
&\stackrel{(ii)}{\geq} F(\lambda(\theta_t)) + \frac{\alpha}{16} \|\nabla_{\theta} F(\lambda(\theta_t))\|^2 - \frac{5}{8} \alpha \|\nabla_{\theta} F(\lambda(\theta_t)) - u_t\|^2 + \frac{\alpha}{8} \|u_t\|^2\,, 
\end{align}
where~(i) follows from the condition~$\alpha \leq 1/2L_{\theta}$ and~(ii) from~$\frac{1}{2}\|\nabla_{\theta}F(\lambda(\theta_t))\|^2 \leq \|u_t\|^2 + \|\nabla_{\theta}F(\lambda(\theta_t)) - u_t\|^2$\,.

We now control the last error term in the above inequality in expectation. 
Observe first that 
\begin{align}
\label{eq:bound1-trunc-error}
\mathbb{E}[\|\nabla_{\theta}F(\lambda(\theta_t)) - u_t\|^2] 
&\leq 2 \, \mathbb{E}[\|\nabla_{\theta}F(\lambda(\theta_t)) - \nabla_{\theta}F(\lambda_H(\theta_t))\|^2 ] + 2 \mathbb{E}[\|\nabla_{\theta}F(\lambda_H(\theta_t)) - u_t\|^2]\nonumber\\
&\leq 2 D_{\lambda}^2 \gamma^{2H} + 2 \mathbb{E}[\|\nabla_{\theta}F(\lambda_H(\theta_t)) - u_t\|^2]\,,
\end{align}
where the last inequality stems from Lemma~\ref{lem:trunc_grad}. 
Now, it remains to control~$\mathbb{E}[\|\nabla_{\theta}F(\lambda_H(\theta_t)) - u_t\|^2]\,.$ Using the notation~$r_t \eqdef \nabla_{\lambda} F(\lambda_H(\theta_t))$, we have the following decomposition:  
\begin{align}
\nabla_{\theta}F(\lambda_H(\theta_t)) - u_t
&= \nabla_{\theta}F(\lambda_H(\theta_t)) - [\nabla_{\theta}\lambda(\theta_t)]^T r_{t-1}^{*} + [\nabla_{\theta}\lambda(\theta_t)]^T r_{t-1}^{*} - [\nabla_{\theta}\lambda(\theta_t)]^T r_{t-1} + [\nabla_{\theta}\lambda(\theta_t)]^T r_{t-1} - u_t\,.
\end{align}
Then it follows that 
\begin{multline}
\label{eq:decomp-3-terms}
\mathbb{E}[\|\nabla_{\theta}F(\lambda_H(\theta_t)) - u_t\|^2] 
\leq 3 \mathbb{E}[\|\nabla_{\theta}F(\lambda_H(\theta_t)) - [\nabla_{\theta}\lambda(\theta_t)]^T r_{t-1}^{*}\|^2] 
+  3 \mathbb{E}[\|[\nabla_{\theta}\lambda(\theta_t)]^T (r_{t-1}^{*} - r_{t-1})\|^2]\\
+  3 \mathbb{E}[\|[\nabla_{\theta}\lambda(\theta_t)]^T r_{t-1} - u_t\|^2]\,.
\end{multline}

We control each term in the above decomposition separately in what follows. 

\noindent\textbf{Term 1 in~\eqref{eq:decomp-3-terms}:} For this term,
we have the following series of inequalities 
\begin{align}
\label{eq:term1-bound-lin-fa}
\|\nabla_{\theta}F(\lambda_H(\theta_t)) - [\nabla_{\theta}\lambda(\theta_t)]^T r_{t-1}^{*}\|^2
&= \|[\nabla_{\theta}\lambda(\theta_t)]^T (r_t^{*} - r_{t-1}^{*})\|^2 \nonumber\\
& \stackrel{(a)}{\leq} \frac{4 l_{\psi}^2}{(1-\gamma)^4} \|r_{t-1}^* - r_t^*\|_{\infty}^2\nonumber\\
&\stackrel{(b)}{\leq} \frac{4 l_{\psi}^2 L_{\lambda,\infty}^2}{(1-\gamma)^4} \|\lambda_H(\theta_{t-1}) - \lambda_H(\theta_t)\|_1^2\nonumber\\
&\stackrel{(c)}{\leq} \frac{8 l_{\psi}^3 L_{\lambda,\infty}^2}{(1-\gamma)^6} \|\theta_t - \theta_{t-1}\|^2\nonumber\\
&\stackrel{(d)}{=} \frac{8 l_{\psi}^3 L_{\lambda,\infty}^2}{(1-\gamma)^6} \|u_{t-1}\|^2 \cdot \alpha^2\,, %
\end{align}
where (a)~follows from similar derivations to~\eqref{eq:grad-theta-lambda-grad-lambda-diff}-\eqref{eq:bound-pg-thm-temp-35} using~\eqref{eq:expected-reinforce}, (b)~stems from Assumption~\ref{hyp:smoothness-F}, (c)~is an immediate consequence of Lemma~\ref{lem:smoothness-obj}-\ref{lem:smoothness-obj-ii} and (d)~uses the update rule of Algorithm~\ref{algo-gen-ut-func-approx2}.

\noindent\textbf{Term 2 in~\eqref{eq:decomp-3-terms}:} 
For this term, we start with the following inequalities: 
\begin{align}
\label{eq:term2-bound-lin-fa-prelim}
\mathbb{E}[\|[\nabla_{\theta}\lambda(\theta_t)]^T (r_{t-1}^{*} - r_{t-1})\|^2]
&\stackrel{(i)}{\leq} \frac{4l_{\psi}^2}{(1-\gamma)^4} \mathbb{E}[\|r_{t-1} - r_{t-1}^*\|_{\infty}^2] \nonumber\\
&\stackrel{(ii)}{\leq} \frac{4l_{\psi}^2L_{\lambda}^2}{(1-\gamma)^4} \mathbb{E}[\|\hat{\lambda}_{t-1} - \lambda_H(\theta_{t-1})\|^2]\,,
\end{align}
where (i)~follows from similar derivations to~\eqref{eq:grad-theta-lambda-grad-lambda-diff}-\eqref{eq:bound-pg-thm-temp-35} using~\eqref{eq:expected-reinforce} and (ii)~follows from Assumption~\ref{hyp:smoothness-F}. Then we decompose and upper bound the above error as follows: 
\begin{align}
\label{eq:splitting-stat-approx-errors}
\mathbb{E}[\|\hat{\lambda}_{t-1} - \lambda_H(\theta_{t-1})\|^2] 
&= \mathbb{E}[\|\ps{\phi(\cdot, \cdot), \hat{\omega}_{\theta_{t-1}}} - \lambda_H(\theta_{t-1})\|^2] \nonumber\\
&= \mathbb{E}[\|\ps{\phi(\cdot, \cdot), \hat{\omega}_{\theta_{t-1}} - \omega_{*}(\theta_{t-1})} + \ps{\phi(\cdot, \cdot), \omega_{*}(\theta_{t-1})} - \lambda_H(\theta_{t-1})\|^2] \nonumber\\
&\leq 2\, \mathbb{E}[\|\ps{\phi(\cdot, \cdot), \hat{\omega}_{\theta_{t-1}} - \omega_{*}(\theta_{t-1})}\|^2] 
+ 2\, \mathbb{E}[\|\ps{\phi(\cdot, \cdot), \omega_{*}(\theta_{t-1})} - \lambda_H(\theta_{t-1})\|^2]\,.%
\end{align}
Our task now is to upper bound each one of the above errors, the first one being related to the statistical error whereas the second one relates to the approximation error. Recall the definition of the regression loss function for every~$\theta \in \R^d, \omega \in \R^{m}$, 
\begin{equation}
L_{\theta}(\omega) = \mathbb{E}_{s \sim \rho, a \sim \mathcal{U}(\mathcal{A})}[(\lambda_H^{\pi_{\theta}}(s,a) - \ps{\phi(s,a), \omega})^2]\,,
\end{equation}
where~$\mathcal{U}(\mathcal{A})$ is the uniform distribution over the action space~$\mathcal{A}\,.$ 

\noindent\textbf{(a) Bounding term 1 in~\eqref{eq:splitting-stat-approx-errors} by the statistical error.}
First, observe for this term that 
\begin{align}
\label{eq:error-to-bound-term1}
\mathbb{E}[\|\ps{\phi(\cdot, \cdot), \hat{\omega}_{\theta_{t-1}} - \omega_{*}(\theta_{t-1})}\|^2] 
&= \mathbb{E}\left[\sum_{s \in \mathcal{S}, a \in \mathcal{A}} \ps{\phi(s, a), \hat{\omega}_{\theta_{t-1}} - \omega_{*}(\theta_{t-1})}^2\right]\nonumber\\
&\leq \frac{|\mathcal{A}|}{\rho_{\min}} \mathbb{E}\left[\sum_{s \in \mathcal{S}, a \in \mathcal{A}} \frac{\rho(s)}{|\mathcal{A}|} \ps{\phi(s, a), \hat{\omega}_{\theta_{t-1}} - \omega_{*}(\theta_{t-1})}^2\right]\nonumber\\
&= \frac{|\mathcal{A}|}{\rho_{\min}} \mathbb{E}\left[\mathbb{E}_{s \sim \rho, a \sim \mathcal{U}(\mathcal{A})}[\ps{\phi(s, a), \hat{\omega}_{\theta_{t-1}} - \omega_{*}(\theta_{t-1})}^2] \right]\,.
\end{align}

Then, we have for all~$\omega \in \R^m$,
\begin{align}
\label{eq:relate-to-stat-error}
&L_{\theta_{t-1}}(\omega) - L_{\theta_{t-1}}(\omega_*(\theta_{t-1})) \nonumber\\
&=  \mathbb{E}_{s \sim \rho, a \sim \mathcal{U}(\mathcal{A})}[(\ps{\phi(s,a), \omega} - \lambda^{\pi_{\theta_{t-1}}}(s,a))^2] - L_{\theta_{t-1}}(\omega_*(\theta_{t-1}))\nonumber\\
&=  \mathbb{E}_{s \sim \rho, a \sim \mathcal{U}(\mathcal{A})}[(\ps{\phi(s,a), \omega - \omega_*(\theta_{t-1})} + \ps{\phi(s,a),\omega_*(\theta_{t-1})} - \lambda^{\pi_{\theta_{t-1}}}(s,a))^2] - L_{\theta_{t-1}}(\omega_*(\theta_{t-1}))\nonumber\\
&= \mathbb{E}_{s \sim \rho, a \sim \mathcal{U}(\mathcal{A})}[\ps{\phi(s,a), \omega - \omega_*(\theta_{t-1})}^2] + 2 \ps{\omega - \omega_*(\theta_{t-1}), \mathbb{E}_{s \sim \rho, a \sim \mathcal{U}(\mathcal{A})}[(\ps{\phi(s,a), \omega_*(\theta_{t-1})} - \lambda^{\pi_{\theta_{t-1}}}(s,a) )\phi(s,a)] } \nonumber\\ 
&= \mathbb{E}_{s \sim \rho, a \sim \mathcal{U}(\mathcal{A})}[\ps{\phi(s,a), \omega - \omega_*(\theta_{t-1})}^2] + \ps{\omega - \omega_*(\theta_{t-1}), \nabla_{\omega} L_{\theta_{t-1}}(\omega_*(\theta_{t-1})) } \nonumber\\
&\geq \mathbb{E}_{s \sim \rho, a \sim \mathcal{U}(\mathcal{A})}[\ps{\phi(s,a), \omega - \omega_*(\theta_{t-1})}^2]\,,
\end{align}
where the last inequality stems from the first-order optimality condition for~$\omega_*(\theta_{t-1}) \in \argmin_{\omega} L_{\theta_{t-1}}(\omega)\,,$ which gives the inequality~$\ps{\omega - \omega_*(\theta_{t-1}), \nabla_{\omega} L_{\theta_{t-1}}(\omega_*(\theta_{t-1})) } \geq 0$ for every~$\omega \in \R^m\,.$

Combining~\eqref{eq:error-to-bound-term1} with~\eqref{eq:relate-to-stat-error} and using Assumption~\ref{hyp:bounded-stat-error}, we obtain 
\begin{equation}
\label{eq:bound-term1-eps-stat-bound}
\mathbb{E}[\|\ps{\phi(\cdot, \cdot), \hat{\omega}_{\theta_{t-1}} - \omega_{*}(\theta_{t-1})}\|^2] 
\leq \frac{|\mathcal{A}|}{\rho_{\min}} \mathbb{E}[L_{\theta_{t-1}}(\hat{\omega}_{\theta_{t-1}}) - L_{\theta_{t-1}}(\omega_*(\theta_{t-1}))] \leq \frac{|\mathcal{A}|}{\rho_{\min}} \epsilon_{\text{stat}}\,.
\end{equation}

\noindent\textbf{(b) Bounding term 2 in~\eqref{eq:splitting-stat-approx-errors} by the approximation error.}
Similar derivations as for the previous term yield
\begin{align}
\label{eq:bound-term2-eps-approx-bound}
\mathbb{E}[\|\ps{\phi(\cdot, \cdot), \omega_{*}(\theta_{t-1})} - \lambda_H(\theta_{t-1})\|^2] 
&= \mathbb{E}\left[\sum_{s \in \mathcal{S}, a \in \mathcal{A}} (\ps{\phi(s, a), \omega_{*}(\theta_{t-1})} - \lambda_H^{\pi_{\theta_{t-1}}}(s,a))^2\right]\nonumber\\
&\leq \frac{|\mathcal{A}|}{\rho_{\min}}  \mathbb{E}\left[\sum_{s \in \mathcal{S}, a \in \mathcal{A}} \frac{\rho(s)}{|\mathcal{A}|} (\ps{\phi(s, a), \omega_{*}(\theta_{t-1})} - \lambda_H^{\pi_{\theta_{t-1}}}(s,a))^2 \right] \nonumber\\
&=  \frac{|\mathcal{A}|}{\rho_{\min}} \mathbb{E}\left[ \mathbb{E}_{s \sim \rho, a \sim \mathcal{U}(\mathcal{A})}[ (\ps{\phi(s, a), \omega_{*}(\theta_{t-1})} - \lambda_H^{\pi_{\theta_{t-1}}}(s,a))^2 ] \right]\nonumber\\ 
&=  \frac{|\mathcal{A}|}{\rho_{\min}} \mathbb{E}[L_{\theta_{t-1}}(\omega_*(\theta_{t-1}))] \nonumber\\
& \leq \frac{|\mathcal{A}|}{\rho_{\min}} \epsilon_{\text{approx}}\,.
\end{align}

Combining~\eqref{eq:term2-bound-lin-fa-prelim}, \eqref{eq:splitting-stat-approx-errors}, \eqref{eq:bound-term1-eps-stat-bound} and~\eqref{eq:bound-term2-eps-approx-bound} yields
\begin{equation}
\label{eq:term2-bound-lin-fa}
\mathbb{E}[\|[\nabla_{\theta}\lambda(\theta_t)]^T (r_{t-1}^{*} - r_{t-1})\|^2] 
\leq \frac{8l_{\psi}^2L_{\lambda}^2}{(1-\gamma)^4} \frac{|\mathcal{A}|}{\rho_{\min}} (\epsilon_{\text{stat}} + \epsilon_{\text{approx}})\,.
\end{equation}

\noindent\textbf{Term 3 in~\eqref{eq:decomp-3-terms}:} For this last term, we have
\begin{align}
\label{eq:term3-bound-lin-fa}
\mathbb{E}[\|[\nabla_{\theta}\lambda(\theta_t)]^T r_{t-1} - u_t\|^2] 
&\stackrel{(a)}{=} \mathbb{E}\left[\left\|\frac{1}{N} \sum_{i=1}^N  ([\nabla_{\theta}\lambda(\theta_t)]^T r_{t-1} - g(\tau_t^{(i)}, \theta_t, r_{t-1})) \right\|^2\right]  \nonumber\\
&\stackrel{(b)}{=} \frac{1}{N} \mathbb{E}[\|g(\tau_t^{(i)}, \theta_t, r_{t-1})- [\nabla_{\theta}\lambda(\theta_t)]^T r_{t-1}\|^2] \nonumber\\
&\stackrel{(c)}{\leq}  \frac{1}{N}  \mathbb{E}[\|g(\tau_t^{(i)}, \theta_t, r_{t-1})\|^2]\nonumber\\
&\stackrel{(d)}{\leq} \frac{4 l_{\lambda}^2 l_{\psi}^2}{N (1-\gamma)^4}\,,
\end{align}
where (a) stems from the definition of~$u_t$, (b) follows from using Lemma~\ref{lem:pg-estimate-lambda-estimate} and recalling that the trajectories~$(\tau_t^{(i)})_{1 \leq i \leq N}$ are independently drawn in Algorithm~\ref{algo-gen-ut-func-approx2}, (c) is due to the inequality~$\Var(X) \leq \mathbb{E}[\|X\|^2]$ for any random vector~$X \in \R^d$ and~(d) uses a similar bound to~\eqref{eq:stochastic-pg-bound}. 

Combining~\eqref{eq:decomp-3-terms}, \eqref{eq:term1-bound-lin-fa}, \eqref{eq:term2-bound-lin-fa} and~\eqref{eq:term3-bound-lin-fa} together with~\eqref{eq:bound1-trunc-error} gives 
\begin{equation}
\label{eq:grad-u-t-bound}
\mathbb{E}[\|\nabla_{\theta} F(\lambda(\theta_t)) - u_t\|^2] \leq \tilde{C}_1 \alpha^2 \|u_{t-1}\|^2 + \tilde{C}_2 (\epsilon_{\text{stat}} + \epsilon_{\text{approx}}) + \frac{\tilde{C}_3}{N} + 2 D_{\lambda}^2 \gamma^{2H}\,,
\end{equation}
where~$\tilde{C}_1 = \frac{48 l_{\psi}^3 L_{\lambda,\infty}^2}{(1-\gamma)^6}, \tilde{C}_2 = \frac{48 l_{\psi}^2 L_{\lambda}^2}{(1-\gamma)^4} \frac{|\mathcal{A}|}{\rho_{\min}}$ and~$\tilde{C}_3 = \frac{24 l_{\lambda}^2 l_{\psi}^2}{(1-\gamma)^4}\,.$

Rearranging~\eqref{eq:smoothness-batch}, dividing by~$\frac{\alpha}{16}$ and taking full expectation yields
\begin{equation}
\label{eq:rearranged-smoothness-batch}
\mathbb{E}[\|\nabla_{\theta} F(\lambda(\theta_t))\|^2] \leq \frac{16}{\alpha} (F(\lambda(\theta_{t+1})) - F(\lambda(\theta_t))) - 2 \mathbb{E}[\|u_t\|^2] + 10 \mathbb{E}[\|\nabla_{\theta} F(\lambda(\theta_t)) - u_t\|^2]\,.
\end{equation}
Plugging~\eqref{eq:grad-u-t-bound} into~\eqref{eq:rearranged-smoothness-batch}, summing the resulting inequality for~$t = 1, \cdots, T$ and dividing by~$T$ gives
\begin{multline}
\label{eq:interm-fos-before-last-step}
\frac{1}{T} \sum_{t=1}^T \mathbb{E}[\|\nabla_{\theta} F(\lambda(\theta_t))\|^2] 
\leq \frac{16}{\alpha T} \mathbb{E}[F(\lambda(\theta_{T+1})) - F(\lambda(\theta_1))] 
+ \frac{1}{T} \sum_{t=1}^T \left(10 \tilde{C}_1 \alpha^2 \mathbb{E}[\|u_{t-1}\|^2] - 2 \mathbb{E}[\|u_t\|^2]\right)\\ 
+ \tilde{C}_2 (\epsilon_{\text{stat}} + \epsilon_{\text{approx}}) + \frac{\tilde{C}_3}{N} + 2 D_{\lambda}^2 \gamma^{2H}
\end{multline}

Then, we upper bound the remaining sum in the right-hand side of~\eqref{eq:interm-fos-before-last-step} as follows: 
\begin{align}
\frac{1}{T} \sum_{t=1}^T \left(10 \tilde{C}_1 \alpha^2 \mathbb{E}[\|u_{t-1}\|^2] - 2 \mathbb{E}[\|u_t\|^2]\right) 
&= \frac{1}{T} \sum_{t=1}^T (10 \tilde{C}_1 \alpha^2 -2) \mathbb{E}[\|u_{t-1}\|^2]   +  \frac{2}{T} \sum_{t=1}^T \mathbb{E}[\|u_{t-1}\|^2 - \|u_t\|^2]\nonumber\\ 
&\stackrel{(a)}{\leq} \frac{2}{T} \sum_{t=1}^T \mathbb{E}[\|u_{t-1}\|^2 - \|u_t\|^2] \nonumber\\ 
&\stackrel{(b)}{\leq} \frac{2 \mathbb{E}[\|u_0\|^2]}{T} \nonumber\\ 
&\stackrel{(c)}{\leq} \frac{\tilde{C}_4}{T}\,, 
\end{align}
where~$\tilde{C}_4 = \frac{8 l_{\lambda}^2 l_{\psi}^2}{(1-\gamma)^4},$ (a) stems from the condition~$\alpha \leq \frac{1}{\sqrt{5 \tilde{C}_1}}$, (b) follows telescoping the sum and upper bounding the remaining resulting negative term by zero and~(c) is a consequence of a similar bound to~\eqref{eq:stochastic-pg-bound}. 

Finally, we obtain 
\begin{equation}
 \mathbb{E}[\|\nabla_{\theta} F(\lambda(\bar{\theta}_T))\|^2] 
\leq \frac{16  (F^* - \mathbb{E}[F(\lambda(\theta_1))]) + \alpha \tilde{C}_4}{\alpha T}
+ \frac{\tilde{C}_3}{N} + 2 D_{\lambda}^2 \gamma^{2H}
+ \tilde{C}_2 (\epsilon_{\text{stat}} + \epsilon_{\text{approx}})\,, 
\end{equation}
where~$\bar{\theta}_T$ be a random iterate drawn uniformly at random from~$\{\theta_1, \cdots, \theta_T\}$. This concludes the proof.
\end{proof}

\subsection{Proof of Corollary~\ref{cor:fosp-gen-ut-lin-fa2}: Sample complexity analysis}

In order to establish the total sample complexity of our algorithm, we shall use Theorem~1 in \citet{bach-moulines13} for the least-mean-square algorithm corresponding to SGD for least-squares regression to explicit the number of samples needed in the occupancy measure estimation subroutine of Algorithm~\ref{algo:sgd-subroutine2}. In other words, our objective here is to precise the number of iterations of SGD needed to approximately solve our regression problem. In particular, we will show that we can achieve~$\epsilon_{\text{stat}} = \mathcal{O}(1/K)$ where~$K$ is the number of iterations of the SGD subroutine. We first report Theorem~1 from \citet{bach-moulines13} before applying it to our specific case. 

\begin{theorem}[Theorem 1, \cite{bach-moulines13}]
\label{thm:bach-moulines13}
Let~$\mathcal{H}$ be an $m$-dimensional Euclidean space with~$m \geq 1\,.$ Let~$(x_n, z_n) \in \mathcal{H} \times \mathcal{H}$ be independent and identically distributed observations.  
Assume the following: 
\begin{enumerate}[label=(\roman*)]

\item The expectations $\mathbb{E}[\|x_n\|^2]$ and $\mathbb{E}[\|z_n\|^2]$ are finite; the covariance matrix~$\mathbb{E}[x_n x_n^T]$ is invertible.  

\item The global minimum of~$f(\omega) = \frac{1}{2} \mathbb{E}[\ps{\omega, x_n}^2 - 2 \ps{\omega, z_n}]$ is attained at a certain~$\omega_* \in \mathcal{H}$. Denoting by~$\xi_n \eqdef z_n - \ps{\omega_*, x_n} x_n$ the residual, assume that~$\mathbb{E}[\xi_n] = 0\,.$

\item There exist~$R > 0, \sigma > 0$ s.t. $\mathbb{E}[\xi_n \xi_n ^T] \preccurlyeq \sigma^2  \mathbb{E}[x_n x_n^T]$ and~$\mathbb{E}[\|x_n\|^2 x_n x_n^T] \preccurlyeq R^2  \mathbb{E}[x_n x_n^T]\,,$ where for two matrices~$A, B \in \R^{m \times m}$, $A \preccurlyeq B$ if and only if~$B-A$ is positive semi-definite. 
\end{enumerate}
Consider the Stochastic Gradient Descent (SGD) recursion started at~$\omega_0 \in \mathcal{H}$ and defined for every integer~$n \geq 1$ as 
\begin{equation}
\omega_n = \omega_{n-1} - \beta' \, (\ps{\omega_{n-1}, x_n} x_n - z_n)\,,
\end{equation}
where~$\beta' > 0\,.$ 
Then for a constant step size~$\beta' = \frac{1}{4 R^2}$ the averaged iterate~$\bar{\omega}_n \eqdef \frac{1}{n+1} \sum_{k=0}^n \omega_k$ satisfies 
\begin{equation}
\mathbb{E}[f(\bar{\omega}_n) - f(\omega_*)] \leq \frac{2}{n} (\sigma \sqrt{m} + R \|\omega_0 - \omega_* \|)^2\,.
\end{equation}
\end{theorem}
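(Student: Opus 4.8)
The plan is to follow the Polyak--Ruppert averaging analysis, reducing the excess risk to a quadratic form in the covariance metric and then splitting the averaged error into a bias part and a noise part. Write $H \eqdef \mathbb{E}[x_n x_n^T]$ for the (invertible) covariance operator. Since $f$ is quadratic with Hessian $H$ and minimizer $\omega_*$, one has the exact identity $f(\omega) - f(\omega_*) = \tfrac12 \langle \omega - \omega_*, H(\omega-\omega_*)\rangle = \tfrac12\|\omega-\omega_*\|_H^2$, where $\|v\|_H^2 \eqdef \langle v, Hv\rangle$. Hence it suffices to bound $\mathbb{E}[\|\bar\omega_n - \omega_*\|_H^2]$. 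First I would center the recursion: setting $\eta_n \eqdef \omega_n - \omega_*$ and using $z_n = \langle\omega_*,x_n\rangle x_n + \xi_n$ with $\mathbb{E}[\xi_n]=0$, the SGD step rewrites as the affine linear recursion
\[
\eta_n = (I - \beta' x_n x_n^T)\eta_{n-1} + \beta'\xi_n, \qquad \eta_0 = \omega_0 - \omega_*.
\]

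Because this recursion is affine in its driving terms, I would decompose $\eta_n = \eta_n^{(0)} + \eta_n^{(1)}$ by linearity, where $\eta_n^{(0)}$ solves the noise-free recursion started at $\eta_0$ and $\eta_n^{(1)}$ solves the same recursion started at $0$ but driven by $(\xi_n)$. Averaging and applying Minkowski's inequality for the $L^2$-seminorm $v \mapsto (\mathbb{E}\|v\|_H^2)^{1/2}$ gives $(\mathbb{E}\|\bar\eta_n\|_H^2)^{1/2} \le (\mathbb{E}\|\bar\eta_n^{(0)}\|_H^2)^{1/2} + (\mathbb{E}\|\bar\eta_n^{(1)}\|_H^2)^{1/2}$, so the two contributions can be handled separately. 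The engine for each is a telescoping identity obtained by rearranging the recursion into $x_k x_k^T \eta_{k-1} = \tfrac1{\beta'}(\eta_{k-1}-\eta_k) + \xi_k$ and taking conditional expectation given $\mathcal{F}_{k-1} = \sigma(x_1,\xi_1,\dots,x_k,\xi_k)$; since $x_k$ is independent of $\eta_{k-1}$ this replaces $x_k x_k^T$ by $H$, turning $\sum_k H\eta_{k-1}$ into a telescoping term $\tfrac1{\beta'}(\eta_0-\eta_n)$ plus martingale-difference remainders. This is precisely the mechanism by which averaging extracts an extra factor of $n$ even though $H$ is only positive \emph{semi}definite.

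The two moment assumptions enter exactly here. For the noise term I would use $\mathbb{E}[\xi_n\xi_n^T]\preccurlyeq \sigma^2 H$ together with $\operatorname{tr}(H^{-1}\sigma^2 H) = \sigma^2 m$ to produce the $\sigma^2 m / n$ scaling, yielding $(\mathbb{E}\|\bar\eta_n^{(1)}\|_H^2)^{1/2} = \mathcal{O}(\sigma\sqrt{m}/\sqrt n)$. For the bias term I would use the fourth-moment bound $\mathbb{E}[\|x_n\|^2 x_n x_n^T]\preccurlyeq R^2 H$ to control the quadratic terms generated by the multiplicative noise $x_n x_n^T$; combined with the choice $\beta' = 1/(4R^2)$, which guarantees $\beta' \mathbb{E}[\|x_n\|^2 x_n x_n^T]\preccurlyeq \tfrac14 H$ and hence stability of the second-moment recursion, this gives $(\mathbb{E}\|\bar\eta_n^{(0)}\|_H^2)^{1/2} = \mathcal{O}(R\|\omega_0-\omega_*\|/\sqrt n)$. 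Adding the two bounds, squaring, and recalling $f(\bar\omega_n)-f(\omega_*) = \tfrac12\|\bar\eta_n\|_H^2$ yields the claimed $\tfrac2n(\sigma\sqrt m + R\|\omega_0-\omega_*\|)^2$.

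I expect the main obstacle to be the noise term in the degenerate, non-strongly-convex regime: since $H$ may be singular, there is no Euclidean contraction toward $\omega_*$, and one cannot simply unroll the recursion with a spectral-gap argument. The entire gain must be realized in the $H$-seminorm through the telescoping identity above, and the delicate point is bounding the second moment $\mathbb{E}[\eta_k^{(1)}(\eta_k^{(1)})^T]$ uniformly in $k$ (using the fourth-moment assumption to absorb the multiplicative noise) and then showing that the averaged quantity $\mathbb{E}\|\bar\eta_n^{(1)}\|_H^2$ decays like $1/n$ rather than staying at the $\mathcal{O}(\beta'\sigma^2)$ stationary level of the non-averaged iterate. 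Verifying that the martingale remainders arising from the substitution $x_k x_k^T \mapsto H$ are genuinely lower order is the technical heart of the argument.
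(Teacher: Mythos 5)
You should first note that the paper contains no proof of this statement at all: Theorem~\ref{thm:bach-moulines13} is imported verbatim from \citet{bach-moulines13} (their Theorem~1) and invoked as a black box to bound $\epsilon_{\text{stat}}$ in the proof of Corollary~\ref{cor:fosp-gen-ut-lin-fa2}. So the relevant comparison is with the original Bach--Moulines argument, and your sketch is essentially a faithful reconstruction of its strategy. Your skeleton is sound: the centered recursion $\eta_n=(I-\beta' x_nx_n^T)\eta_{n-1}+\beta'\xi_n$ is exactly right; the identity $f(\omega)-f(\omega_*)=\frac12\langle\omega-\omega_*,H(\omega-\omega_*)\rangle$ holds because $f$ is quadratic with Hessian $H=\mathbb{E}[x_nx_n^T]$ and $\nabla f(\omega_*)=0$ (which, incidentally, makes $\mathbb{E}[\xi_n]=0$ automatic rather than an extra assumption); the Minkowski split in the $L^2$-$H$-seminorm into a noise-free part started at $\eta_0$ and a pure-noise part started at $0$ is precisely the decomposition Bach and Moulines use; and you correctly locate where each hypothesis enters, including that $\beta'=1/(4R^2)$ yields $\beta'\,\mathbb{E}[\|x_n\|^2x_nx_n^T]\preccurlyeq\frac14 H$ (stability of the second-moment recursion) and that $\operatorname{tr}(H^{-1}\sigma^2H)=\sigma^2 m$ is the source of the dimension factor.

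Two caveats. First, a small indexing slip: you define $\mathcal{F}_{k-1}=\sigma(x_1,\xi_1,\dots,x_k,\xi_k)$, which would put $x_k$ \emph{inside} the conditioning $\sigma$-algebra and break the substitution $x_kx_k^T\mapsto H$; the filtration must stop at index $k-1$. Second, and more substantively, what you have is a plan rather than a proof: the two asserted bounds $(\mathbb{E}\|\bar\eta_n^{(1)}\|_H^2)^{1/2}=\mathcal{O}(\sigma\sqrt m/\sqrt n)$ and $(\mathbb{E}\|\bar\eta_n^{(0)}\|_H^2)^{1/2}=\mathcal{O}(R\|\omega_0-\omega_*\|/\sqrt n)$ are exactly where the original proof spends its effort --- comparing the stochastic recursion with the semi-stochastic one in which $x_nx_n^T$ is replaced by $H$, propagating full second-moment operators $\mathbb{E}[\eta_k\eta_k^T]$ rather than scalar norms so as to absorb the multiplicative noise via the fourth-moment condition, and tracking constants so that the final bound is exactly $\frac{2}{n}(\sigma\sqrt m+R\|\omega_0-\omega_*\|)^2$ and not an unspecified constant. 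You flag this honestly in your closing paragraph, so the gap is acknowledged rather than hidden; but to claim the theorem with its stated constants you would need to carry out those lemmas in full or, as the paper does, simply cite \citet{bach-moulines13}.
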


\noindent\textbf{Proof of Corollary~\ref{cor:fosp-gen-ut-lin-fa2}.}
It follows from Theorem~\ref{thm:fosp-gen-ut-lin-fa2-eps-stat-approx} that 
\begin{equation}
\label{eq:fos-eps-stat-approx-interm}
 \mathbb{E}[\|\nabla_{\theta} F(\lambda(\bar{\theta}_T))\|^2] 
\leq \frac{16  (F^* - \mathbb{E}[F(\lambda(\theta_1))]) + \tilde{C}_4}{\alpha T}
+ \frac{\tilde{C}_3}{N} + 2 D_{\lambda}^2 \gamma^{2H}
+ \tilde{C}_2 (\epsilon_{\text{stat}} + \epsilon_{\text{approx}})\,, 
\end{equation}
where~$\bar{\theta}_T$ is a random iterate drawn uniformly at random from~$\{\theta_1, \cdots, \theta_T\}$. 
We now upper bound the statistical error~$\epsilon_{\text{stat}}$ as a function of the number~$K$ of SGD iterations (see Algorithm~\ref{algo:sgd-subroutine2}) by applying Theorem~\ref{thm:bach-moulines13}. Let~$\omega_* \in \argmin_{\omega} L_{\theta}(\omega)\,$ where~$\theta \in \R^d$ is fixed (at each iteration of Algorithm~\ref{algo-gen-ut-func-approx2}). To do so, we successively verify each assumption of the latter theorem in the Euclidean space~$\R^m$. Recall from~\eqref{eq:stoch-grad-lin-reg} that the stochastic gradient of the loss function~$L_{\theta}(\omega)$ is given for every~$\theta \in \R^d, \omega \in \R^m$ by 
\begin{equation}
\hat{\nabla}_{\omega} L_{\theta}(\omega) \eqdef 2 (\ps{\phi(s,a),\omega} - \hat{\lambda}_H^{\pi_{\theta}}(s,a))\, \phi(s,a)\,.
\end{equation}
Note here that we consider the unbiased estimator~$\hat{\lambda}_H^{\pi_{\theta}}(s,a)$ of the truncated state-action occupancy measure as computed in Algorithm~\ref{algo:MC-estimate-lambda-pi-theta}.

\begin{remark}
One could also consider the unbiased estimator~$\hat{\lambda}_H^{\pi_{\theta}}(s,a)$ of the true state-action occupancy measure (without truncation) using Algorithm~\ref{algo:geom-rollout-estimate-true-occup-measure} and slightly modify the definition of the expected loss with~$\hat{\lambda}^{\pi_{\theta}}(s,a)$ instead of~$\hat{\lambda}_H^{\pi_{\theta}}(s,a)\,.$ The latter procedure would lead to the same result since the truncation error can be made as small as desired via setting the horizon large enough, the error being of the order of~$\gamma^H\,.$
\end{remark}

Take~$x_n = \phi(s,a) \in \R^m,  z_n = \hat{\lambda}_H^{\pi_{\theta}}(s,a) \phi(s,a) \in \R^m\,.$ The observations~$(x_n, z_n)$ are indeed independent and identically distributed (for each state-action pair~$(s,a)$ sample).  

\begin{enumerate}[label=(\roman*)]

\item Given Assumption~\ref{hyp:feature-map}, we have~$\mathbb{E}[\|x_n\|^2] = \mathbb{E}[\|\phi(s,a)\|^2] \leq B^2\,.$ Similarly we have$\mathbb{E}[\|z_n\|^2] \leq B^2/(1-\gamma)^2\,.$ Moreover, the covariance matrix~$\mathbb{E}[\phi(s,a)\phi(s,a)^T]$ has full rank by Assumption~\ref{hyp:feature-map}.

\item Take~$f = L_{\theta}\,.$ Define the residual~$\xi \eqdef (\hat{\lambda}_H^{\pi_{\theta}}(s,a) - \ps{\omega_*,\phi(s,a)}) \phi(s,a)\,.$ Then we conclude the verification of the second item by observing that 
\begin{equation}
\mathbb{E}[\xi] = \mathbb{E}\left[\frac{1}{2} \hat{\nabla}_{\omega} L_{\theta}(\omega_{*})\right] = \frac{1}{2}\nabla_{\omega} L_{\theta}(\omega_{*}) = 0\,,
\end{equation}
where the last identity stems from the definition of the optimal solution~$\omega_*\,.$

\item As for this last item, recall again that~$\|\phi(s,a)\| \leq B$ which immediately implies that~$\mathbb{E}[\|x_n\|^2 x_n x_n^T] \preccurlyeq R^2  \mathbb{E}[x_n x_n^T]$ with~$R = B\,.$ It remains to show that the covariance matrix of~$\xi$ satisfies~$\mathbb{E}[\xi \xi^T] \preccurlyeq \sigma^2 \mathbb{E}[\phi(s,a)\phi(s,a)^T]$ for some positive constant~$\sigma$ that we will now determine.  First, we write
\begin{align}
\mathbb{E}[\xi \xi^T] 
&= \mathbb{E}[(\hat{\lambda}_H^{\pi_{\theta}}(s,a) - \ps{\omega_*, \phi(s,a)})^2 \phi(s,a) \phi(s,a)^T] \nonumber\\
&= \mathbb{E}\left[ \mathbb{E}[ (\hat{\lambda}_H^{\pi_{\theta}}(s,a) - \ps{\omega_*, \phi(s,a)})^2 | s,a] \phi(s,a) \phi(s,a)^T \right]\,,
\end{align}
where the conditional expectation~$\mathbb{E}[\cdot|s,a]$ is w.r.t. randomness induced by sampling the state-action pair~$(s,a)\,.$ 
Then, we have for every~$s \in \mathcal{S}, a \in \mathcal{A}\,,$
\begin{equation}
\label{eq:to-bound-for-sigma}
\mathbb{E}[(\hat{\lambda}_H^{\pi_{\theta}}(s,a) - \ps{\omega_*, \phi(s,a)})^2 | s,a] 
= \mathbb{E}[(\hat{\lambda}_H^{\pi_{\theta}}(s,a))^2  - 2 \hat{\lambda}_H^{\pi_{\theta}}(s,a) \ps{\omega_*, \phi(s,a)} + \ps{\omega_*, \phi(s,a)}^2| s,a]\,.
\end{equation}
We know that~$|\hat{\lambda}_H^{\pi_{\theta}}(s,a)| \leq \frac{1}{1-\gamma}\,.$ It remains to bound~$\|\omega_*\|$ to be able to upper bound the quantity of~\eqref{eq:to-bound-for-sigma}. Recall for this that~$\nabla_{\omega} L_{\theta}(\omega_*)= 0\,$, i.e., $\mathbb{E}[(\ps{\phi(s,a), \omega_*} - \lambda_H^{\pi_{\theta}}(s,a)) \phi(s,a)] = 0\,,$ which can be rewritten as follows: 
\begin{equation*}
\mathbb{E}[ \lambda_H^{\pi_{\theta}}(s,a) \phi(s,a)] = \mathbb{E}[\phi(s,a) \phi(s,a)^T]\, \omega_*\,.
\end{equation*}
Therefore, we obtain by invoking Assumption~\ref{hyp:feature-map} that~$\omega_* = \mathbb{E}[\phi(s,a)\phi(s,a)^T]^{-1} \mathbb{E}[\lambda_H^{\pi_{\theta}}(s,a) \phi(s,a)]\,$ and hence 
\begin{equation}
\label{eq:omega-star-bound}
\|\omega_*\| \leq \frac{B}{\mu (1-\gamma)}\,.
\end{equation}
Using this inequality, it follows from~\eqref{eq:to-bound-for-sigma} that: 
\begin{align}
\mathbb{E}[(\hat{\lambda}_H^{\pi_{\theta}}(s,a) - \ps{\omega_*, \phi(s,a)})^2 | s,a] 
&\leq \frac{1}{(1-\gamma)^2} + \frac{2B}{1-\gamma} \|\omega_*\| + B^2 \|\omega_*\|^2  \nonumber\\
&\leq \frac{1}{(1-\gamma)^2} \left(  1 + \frac{2B^2}{\mu} + \frac{B^4}{\mu^2}\right)\nonumber\\ 
&= \frac{1}{(1-\gamma)^2} \left(1 + \frac{B^2}{\mu} \right)^2\,.
\end{align}
Hence, the missing part of item~(iii) is satisfied with~$\sigma = \frac{1}{1-\gamma} (1 + \frac{B^2}{\mu})\,.$ 
\end{enumerate}
We conclude the proof by using the result of Theorem~\ref{thm:bach-moulines13} with~$\beta' = 2 \beta = \frac{1}{4 B^2}$ and~$\omega_0 = 0$ to obtain after~$K$ iterations of the SGD subroutine (see Algorithm~\ref{algo:sgd-subroutine2})
\begin{align}
\mathbb{E}[L_{\theta}(\bar{\omega}_K) - L_{\theta}(\omega_*)] 
&\leq \frac{4}{K} (\sigma \sqrt{m} + R \|\omega_*\|)^2  \nonumber\\ 
&= \frac{4}{(1-\gamma)^2 K} \left(\frac{B^2}{\mu}(1+ \sqrt{m}) + \sqrt{m}\right)^2\,,
\end{align}
where~$\bar{\omega}_K$ is the output of Algorithm~\ref{algo:sgd-subroutine2}.
As a consequence, we have 
\begin{equation}
\epsilon_{\text{stat}} \leq \frac{4}{(1-\gamma)^2 K} \left(\frac{B^2}{\mu}(1+ \sqrt{m}) + \sqrt{m}\right)^2\,.
\end{equation}
Plugging this inequality into~\eqref{eq:fos-eps-stat-approx-interm} leads to 
\begin{multline}
\label{eq:fos-to-conclude-proof}
 \mathbb{E}[\|\nabla_{\theta} F(\lambda(\bar{\theta}_T))\|^2] 
\leq \frac{16  (F^* - \mathbb{E}[F(\lambda(\theta_1))]) + \tilde{C}_4}{\alpha T}
+ \frac{\tilde{C}_3}{N} + 2 D_{\lambda}^2 \gamma^{2H}\\
+ \frac{4 \tilde{C}_2}{(1-\gamma)^2 K} \left(\frac{B^2}{\mu}(1+ \sqrt{m}) + \sqrt{m}\right)^2 +  \tilde{C}_2 \epsilon_{\text{approx}}\,, 
\end{multline}

We set the number of iterations~$T$, the batch size~$N$, the number of iterations~$K$ in the subroutine of Algorithm~\ref{algo:sgd-subroutine2} and the horizon~$H$ to guarantee that~$\mathbb{E}[\|\nabla_{\theta} F(\lambda(\bar{\theta}_T))\|^2] \leq \mathcal{O}(\epsilon^2) + \mathcal{O}(\epsilon_{\text{approx}})$ where the expectation is taken over both the randomness inherent to the sequence produced by the algorithm together with the uniform sampling defining~$\bar{\theta}_T\,.$ Given~\eqref{eq:fos-to-conclude-proof}, choosing~$T = \mathcal{O}(\epsilon^{-2})$, $N = \mathcal{O}(\epsilon^{-2})$, $K = \mathcal{O}(\epsilon^{-2})$ and~$H = \mathcal{O}(\log(\frac{1}{\epsilon}))$ concludes the proof. In particular, the total sample complexity to solve the RL problem with general utilities with occupancy measure approximation in order to achieve an $\epsilon$-approximate stationary point of the objective function (up to the~$\mathcal{O}(\sqrt{\epsilon_{\text{approx}}})$ error floor) is given by~$T \times (K + N) \times H = \tilde{\mathcal{O}}(\epsilon^{-4})\,,$ where~$\tilde{O}$ hides a logarithmic factor in~$\epsilon\,.$

\section{Useful technical lemma}

In this section, we gather a few technical results that are useful throughout the proofs of our results. 

\subsection{Smoothness, Lipschitzness and truncation error technical lemmas}

The following result from \cite{zhang-et-al21}(Lemma~5.3) ensures in particular that the objective function~$\theta \mapsto F(\lambda^{\pi_{\theta}})$ is smooth which is used to derive an ascent-like lemma in our convergence analysis. 

\begin{lemma}
\label{lem:smoothness-obj}
Let Assumptions~\ref{hyp:policy-param} and~\ref{hyp:smoothness-F} hold. Then, the following statements hold: 
\begin{enumerate}[label=(\roman*)]
\item \label{lem:smoothness-obj-i} $\forall \theta \in \R^d\,, \forall (s,a) \in \mathcal{S} \times \mathcal{A},$ $\|\nabla \log \pi_{\theta}(a|s)\| \leq 2 l_{\psi}\,, \|\nabla_{\theta}^2 \log \pi_{\theta}(a|s)\| \leq 2(L_{\psi} + l_{\psi}^2)\,,$ and~$\|\nabla_{\theta} F(\lambda(\theta))\| \leq \frac{2 l_{\psi} l_{\lambda}}{(1-\gamma)^2}\,.$
\item \label{lem:smoothness-obj-ii} $\forall \theta_1, \theta_2 \in \R^d\,, \|\lambda^{\pi_{\theta_1}} - \lambda^{\pi_{\theta_2}}\|_1 \leq \frac{2 l_{\psi}}{(1-\gamma)^2} \|\theta_1 - \theta_2\|\,$ and~$\|\lambda_H(\theta_1) - \lambda_H(\theta_2)\|_1 \leq \frac{2 l_{\psi}}{(1-\gamma)^2} \|\theta_1 - \theta_2\|\,.$
\item The objective function~$\theta \mapsto F(\lambda^{\pi_{\theta}})$ is $L_{\theta}$-smooth with $L_{\theta} = \frac{4 L_{\lambda,\infty} l_{\psi}^2}{(1-\gamma)^4} + \frac{8 l_{\psi}^2 l_{\lambda}}{(1-\gamma)^3} + \frac{2 l_{\lambda} (L_{\psi} + l_{\psi}^2)}{(1-\gamma)^2}\,.$
\end{enumerate}
\end{lemma}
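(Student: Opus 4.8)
The three parts build on one another, so I would prove them in order. For part~\ref{lem:smoothness-obj-i}, I would start from the softmax form~\eqref{eq:softmax-param}, which gives $\nabla_\theta\log\pi_\theta(a|s) = \nabla\psi(s,a;\theta) - \sum_{a'}\pi_\theta(a'|s)\nabla\psi(s,a';\theta)$, i.e.\ the difference between $\nabla\psi(s,a;\theta)$ and its average under $\pi_\theta(\cdot|s)$. The triangle inequality with the uniform bound $\|\nabla\psi\|\le l_\psi$ from Assumption~\ref{hyp:policy-param} immediately yields $\|\nabla_\theta\log\pi_\theta(a|s)\|\le 2l_\psi$. Differentiating once more and using $\nabla_\theta\pi_\theta(a'|s) = \pi_\theta(a'|s)\nabla_\theta\log\pi_\theta(a'|s)$ expresses $\nabla_\theta^2\log\pi_\theta(a|s)$ as $\nabla^2\psi(s,a;\theta)$ minus a $\pi_\theta$-weighted average of $\nabla^2\psi$ minus a centered second-moment term of $\nabla\psi$; bounding these three pieces by $L_\psi$, $L_\psi$, and $2l_\psi^2$ gives $\|\nabla_\theta^2\log\pi_\theta(a|s)\|\le 2(L_\psi+l_\psi^2)$. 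Finally, writing $\nabla_\theta F(\lambda(\theta))$ through the REINFORCE identity~\eqref{eq:expected-reinforce} with reward $r=\nabla_\lambda F(\lambda(\theta))$, bounding $\|r\|_\infty\le l_\lambda$ (Assumption~\ref{hyp:smoothness-F}) and $\|\nabla\log\pi_\theta\|\le 2l_\psi$, and summing $\sum_{t\ge0}(t+1)\gamma^t = (1-\gamma)^{-2}$ produces the bound $\frac{2l_\psi l_\lambda}{(1-\gamma)^2}$.

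For part~\ref{lem:smoothness-obj-ii} I would use $\ell_1$--$\ell_\infty$ duality: $\|\lambda^{\pi_{\theta_1}}-\lambda^{\pi_{\theta_2}}\|_1 = \sup_{\|c\|_\infty\le1}\langle c,\lambda^{\pi_{\theta_1}}-\lambda^{\pi_{\theta_2}}\rangle$. For each fixed $c$ with $\|c\|_\infty\le1$, the inner product equals $V^{\pi_{\theta_1}}(c)-V^{\pi_{\theta_2}}(c)$ since $\langle c,\lambda^{\pi_\theta}\rangle = V^{\pi_\theta}(c)$. By the same REINFORCE computation as in part~\ref{lem:smoothness-obj-i} applied to the bounded reward $c$, one gets $\|\nabla_\theta V^{\pi_\theta}(c)\|\le\frac{2l_\psi}{(1-\gamma)^2}$, so $\theta\mapsto V^{\pi_\theta}(c)$ is $\frac{2l_\psi}{(1-\gamma)^2}$-Lipschitz, whence $|V^{\pi_{\theta_1}}(c)-V^{\pi_{\theta_2}}(c)|\le\frac{2l_\psi}{(1-\gamma)^2}\|\theta_1-\theta_2\|$ uniformly in $c$; taking the supremum gives the claim. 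The truncated case is identical, replacing $\lambda^{\pi_\theta}$ by $\lambda_H(\theta)$ and $V^{\pi_\theta}(c)$ by $\langle c,\lambda_H(\theta)\rangle$, whose gradient obeys the same bound since truncation only shortens the geometric series.

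Part~(iii) is the crux. I would write $\nabla_\theta F(\lambda(\theta)) = [\nabla_\theta\lambda(\theta)]^T\nabla_\lambda F(\lambda(\theta))$ and split the difference at $\theta_1,\theta_2$ into a reward-change term and a policy-change term:
\begin{align*}
\nabla_\theta F(\lambda(\theta_1)) - \nabla_\theta F(\lambda(\theta_2))
&= [\nabla_\theta\lambda(\theta_2)]^T\big(\nabla_\lambda F(\lambda(\theta_1)) - \nabla_\lambda F(\lambda(\theta_2))\big)\\
&\quad + \big([\nabla_\theta\lambda(\theta_1)]^T - [\nabla_\theta\lambda(\theta_2)]^T\big)\nabla_\lambda F(\lambda(\theta_1)).
\end{align*}
For the reward-change term I would note $[\nabla_\theta\lambda(\theta_2)]^T v = \nabla_\theta V^{\pi_{\theta_2}}(v)$ has $\ell_2$ norm at most $\frac{2l_\psi}{(1-\gamma)^2}\|v\|_\infty$ (part~\ref{lem:smoothness-obj-i}), and combine this with $\|\nabla_\lambda F(\lambda(\theta_1)) - \nabla_\lambda F(\lambda(\theta_2))\|_\infty\le L_{\lambda,\infty}\|\lambda(\theta_1)-\lambda(\theta_2)\|_1\le L_{\lambda,\infty}\frac{2l_\psi}{(1-\gamma)^2}\|\theta_1-\theta_2\|$ from Assumption~\ref{hyp:smoothness-F} and part~\ref{lem:smoothness-obj-ii}, yielding the $\frac{4L_{\lambda,\infty}l_\psi^2}{(1-\gamma)^4}$ contribution. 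The policy-change term is exactly the standard-RL gradient Lipschitzness $\|\nabla_\theta V^{\pi_{\theta_1}}(r)-\nabla_\theta V^{\pi_{\theta_2}}(r)\|$ for the fixed reward $r=\nabla_\lambda F(\lambda(\theta_1))$, $\|r\|_\infty\le l_\lambda$, which I would control by uniformly bounding the policy Hessian $\nabla_\theta^2 V^{\pi_\theta}(r)$ and integrating along the segment $[\theta_2,\theta_1]$.

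The main obstacle is this last Hessian bound. Using the causal score-function form $\nabla_\theta^2 V^{\pi_\theta}(r) = \mathbb{E}[\sum_t\gamma^t r(s_t,a_t)(\Phi_t\Phi_t^T + \sum_{t'\le t}\nabla^2\log\pi_\theta(a_{t'}|s_{t'}))]$ with $\Phi_t=\sum_{t'\le t}\nabla\log\pi_\theta(a_{t'}|s_{t'})$, the estimates $\|\Phi_t\|\le 2l_\psi(t+1)$ and $\|\sum_{t'\le t}\nabla^2\log\pi_\theta\|\le 2(L_\psi+l_\psi^2)(t+1)$ from part~\ref{lem:smoothness-obj-i} together with $\|r\|_\infty\le l_\lambda$ give a Hessian norm at most $l_\lambda[4l_\psi^2\sum_t(t+1)^2\gamma^t + 2(L_\psi+l_\psi^2)\sum_t(t+1)\gamma^t]$. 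Evaluating $\sum_t(t+1)^2\gamma^t\le\frac{2}{(1-\gamma)^3}$ and $\sum_t(t+1)\gamma^t=\frac{1}{(1-\gamma)^2}$ produces exactly the remaining terms $\frac{8l_\psi^2 l_\lambda}{(1-\gamma)^3}$ and $\frac{2l_\lambda(L_\psi+l_\psi^2)}{(1-\gamma)^2}$. The care needed here is in writing the reward-to-go form of the Hessian correctly so that the score sums stay well-defined and the geometric series converge; once $\nabla_\theta^2 V^{\pi_\theta}(r)$ is bounded uniformly in $\theta$, summing the reward-change and policy-change contributions yields the stated $L_\theta$. I would also remark that a tighter (telescoping) treatment of the policy-change term, as in Lemma~\ref{lem:for-cum-sum}, would improve the $(1-\gamma)$ dependence, at the cost of a less transparent bound.
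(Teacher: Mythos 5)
Your proposal is correct and recovers the stated constants exactly. Note, however, that the paper does not actually prove this lemma: it defers entirely to Lemma~5.3 of \citet{zhang-et-al21} (adding only the remark that the truncated version of item~(ii) follows by the same argument with the horizon cut at~$H$). Your derivation is therefore a genuine self-contained substitute, and it follows the standard route: the score and Hessian-of-log identities for the softmax in part~(i); the $\ell_1$--$\ell_\infty$ duality $\|\lambda^{\pi_{\theta_1}}-\lambda^{\pi_{\theta_2}}\|_1=\sup_{\|c\|_\infty\le 1}\langle c,\lambda^{\pi_{\theta_1}}-\lambda^{\pi_{\theta_2}}\rangle$ reducing part~(ii) to the gradient bound of part~(i); and the reward-change/policy-change splitting in part~(iii), whose two pieces yield precisely $\frac{4L_{\lambda,\infty}l_\psi^2}{(1-\gamma)^4}$ and $\frac{8l_\psi^2 l_\lambda}{(1-\gamma)^3}+\frac{2l_\lambda(L_\psi+l_\psi^2)}{(1-\gamma)^2}$ after evaluating $\sum_t(t+1)\gamma^t=(1-\gamma)^{-2}$ and $\sum_t(t+1)^2\gamma^t\le 2(1-\gamma)^{-3}$. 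The only step that a full write-up must justify carefully is the causal (reward-to-go) form of $\nabla_\theta^2 V^{\pi_\theta}(r)$: the non-causal cross terms with $t'>t$ do not vanish pointwise but cancel in expectation by conditioning (the conditional expectation of $\nabla\log\pi_\theta\,\nabla\log\pi_\theta^{\top}+\nabla^2_\theta\log\pi_\theta$ over $a_{t'}\sim\pi_\theta(\cdot|s_{t'})$ is zero), and the interchange of differentiation with the infinite sum over $t$ must be licensed by the uniform bounds of part~(i) together with the geometric weights. With those two points made explicit, your argument is complete.
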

\begin{proof}
See Lemma~5.3 in \cite{zhang-et-al21}. The second part of item~$(ii)$ was not reported in the aforementioned reference but the proof follows the same lines upon replacing the infinite horizon but the finite one~$H$ for the truncated state-action occupancy measures. 
\end{proof}

The next lemma controls the truncation error due to truncating simulated trajectories to the horizon~$H$ in our infinite horizon setting. Notably, this error vanishes geometrically fast with the horizon~$H$\,.
\begin{lemma}%
\label{lem:trunc_grad}
    Let Assumptions~\ref{hyp:policy-param} and~\ref{hyp:smoothness-F} be satisfied. Then, we have for any~$H \geq 1$ and every~$\theta \in \R^d$: 
    \begin{enumerate}[label=(\roman*)]
        \item $\|\nabla_{\theta} F(\lambda_H(\theta)) - \nabla_{\theta} F(\lambda(\theta))\| \leq D_{\lambda} \gamma^H$ where~$D_{\lambda}^2 = \frac{8 l_{\psi}^2 L_{\lambda}^2}{(1-\gamma)^6} + 16 l_{\psi}^2 l_{\lambda}^2\left(\frac{(H+1)^2}{(1-\gamma)^2} + \frac{1}{(1-\gamma)^4}\right)\,.$ 
        \item $\norm{ \nabla J_{H}(\theta) - \nabla J(\theta) } \leq D_g \g^{H}$ with~$D_g \eqdef \frac{2 l_{\psi} \|r\|_{\infty}}{1-\gamma} \sqrt{\frac{1}{1-\gamma}+H}$ and~$r$ is the fixed reward function in the cumulative reward setting\,.
    \end{enumerate}
\end{lemma}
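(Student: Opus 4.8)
The plan is to prove the truncation error bound in Lemma~\ref{lem:trunc_grad} by directly estimating the difference between the policy gradient computed over an infinite horizon and its truncated counterpart at horizon~$H$. The key observation is that both $\nabla_\theta F(\lambda(\theta))$ and $\nabla_\theta F(\lambda_H(\theta))$ admit expectation representations via the policy gradient theorem (cf.~\eqref{eq:expected-reinforce} and~\eqref{eq:policy-grad-general-utility}), and the gap between them comes from two distinct sources: (a) the tail of the REINFORCE-type sum beyond step $H$, and (b) the difference in the reward vector $\nabla_\lambda F(\lambda(\theta))$ versus $\nabla_\lambda F(\lambda_H(\theta))$ evaluated at the full and truncated occupancy measures. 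I would decompose the error accordingly and bound each piece using the geometric decay of $\gamma^t$.

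\textbf{Part (i).} First I would write, using the chain rule,
\begin{align*}
\nabla_\theta F(\lambda_H(\theta)) - \nabla_\theta F(\lambda(\theta))
&= [\nabla_\theta \lambda_H(\theta)]^T \nabla_\lambda F(\lambda_H(\theta)) - [\nabla_\theta \lambda(\theta)]^T \nabla_\lambda F(\lambda(\theta)).
\end{align*}
I would insert the cross term $[\nabla_\theta \lambda_H(\theta)]^T \nabla_\lambda F(\lambda(\theta))$ and split into
$$
[\nabla_\theta \lambda_H(\theta)]^T (\nabla_\lambda F(\lambda_H(\theta)) - \nabla_\lambda F(\lambda(\theta)))
+ ([\nabla_\theta \lambda_H(\theta)]^T - [\nabla_\theta \lambda(\theta)]^T) \nabla_\lambda F(\lambda(\theta)).
$$
For the first term, Assumption~\ref{hyp:smoothness-F} gives $\|\nabla_\lambda F(\lambda_H(\theta)) - \nabla_\lambda F(\lambda(\theta))\|_\infty \le L_\lambda \|\lambda_H(\theta) - \lambda(\theta)\|$, and $\|\lambda_H(\theta) - \lambda(\theta)\| \le \sum_{t\ge H}\gamma^t \le \gamma^H/(1-\gamma)$ directly from~\eqref{eq:s-a-occup-measure}; combining with the bound $\|\nabla_\theta \lambda_H(\theta)\| \le \tfrac{2l_\psi}{(1-\gamma)^2}$ (from the operator-norm estimate underlying Lemma~\ref{lem:smoothness-obj}) yields a term of order $\gamma^H \tfrac{2 l_\psi L_\lambda}{(1-\gamma)^3}$, matching the first summand $\tfrac{8 l_\psi^2 L_\lambda^2}{(1-\gamma)^6}$ under the square root. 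For the second term, I would use the REINFORCE representation and bound the Jacobian difference by the contribution of the tail indices $t' \ge H$, each weighted by $\gamma^{t'}$ times $\|\nabla_\lambda F\|_\infty \le l_\lambda$ times the score-function factor $\sum_{h=0}^{t'} 2 l_\psi \le 2 l_\psi (t'+1)$ (Lemma~\ref{lem:smoothness-obj}-\ref{lem:smoothness-obj-i}); summing $\sum_{t'\ge H} 2 l_\psi l_\lambda (t'+1)\gamma^{t'}$ produces the factors $(H+1)^2/(1-\gamma)^2 + 1/(1-\gamma)^4$ times $16 l_\psi^2 l_\lambda^2$ inside $D_\lambda^2$. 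Collecting both parts and using $(a+b)^2 \le 2a^2 + 2b^2$ gives the stated $D_\lambda$.

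\textbf{Part (ii)} is the standard-RL specialization where $F$ is linear with reward $r$, so the second source of error disappears ($\nabla_\lambda F$ is constant) and only the tail of the score-weighted reward sum survives. I would write $\nabla J_H(\theta) - \nabla J(\theta) = -\mathbb{E}[\sum_{t\ge H}\gamma^t r(s_t,a_t)\sum_{h=0}^t \nabla\log\pi_\theta(a_h|s_h)]$, bound $|r|\le \|r\|_\infty$ and $\|\sum_{h=0}^t \nabla\log\pi_\theta\| \le 2 l_\psi(t+1)$, then evaluate $2 l_\psi \|r\|_\infty \sum_{t\ge H}(t+1)\gamma^t$; carefully factoring out $\gamma^H$ and bounding the residual geometric/arithmetic-geometric sum gives the closed form $D_g = \tfrac{2 l_\psi \|r\|_\infty}{1-\gamma}\sqrt{\tfrac{1}{1-\gamma}+H}$. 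The main obstacle I anticipate is the bookkeeping in Part (i): tracking which tail sums contribute the $(H+1)^2$ versus the $(1-\gamma)^{-4}$ terms and matching the exact constants in $D_\lambda^2$ requires care, since the score-function length grows linearly in $t'$ while the occupancy-measure error and the reward-gradient smoothness interact multiplicatively. The estimates themselves are elementary geometric series, but ensuring the stated constants (rather than merely the correct $\gamma^H$ order) is the delicate part.
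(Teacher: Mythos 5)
Your treatment of part (i) is sound and, unlike the paper itself (which simply defers to Lemma~E.3 of \citet{zhang-et-al21}), actually reconstructs the stated constant: the cross-term decomposition, the bound $\|\lambda(\theta)-\lambda_H(\theta)\|_1\le \gamma^H/(1-\gamma)$, the $\ell_\infty\!\to\!\ell_2$ operator bound $\|[\nabla_\theta\lambda_H(\theta)]^T r\|\le \tfrac{2l_\psi}{(1-\gamma)^2}\|r\|_\infty$, and the tail sum $\sum_{t'\ge H}(t'+1)\gamma^{t'}\le \gamma^H\bigl(\tfrac{H+1}{1-\gamma}+\tfrac{1}{(1-\gamma)^2}\bigr)$ combine, via $(a+b)^2\le 2a^2+2b^2$, to exactly the two summands of $D_\lambda^2$. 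That part is correct.

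There is, however, a genuine gap in part (ii): the computation you describe does not yield the stated $D_g$. Bounding $\bigl\|\sum_{h=0}^{t}\nabla\log\pi_\theta(a_h|s_h)\bigr\|\le 2l_\psi(t+1)$ pointwise and summing $\sum_{t\ge H}(t+1)\gamma^t$ gives a prefactor that is \emph{linear} in $H$, namely $\tfrac{2l_\psi\|r\|_\infty}{1-\gamma}\bigl(H+1+\tfrac{1}{1-\gamma}\bigr)\gamma^H$, whereas the lemma claims the strictly smaller $\tfrac{2l_\psi\|r\|_\infty}{1-\gamma}\sqrt{H+\tfrac{1}{1-\gamma}}\,\gamma^H$. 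No amount of ``careful factoring'' of the arithmetic--geometric series turns the linear factor into a square root. The missing idea is that $\Phi_t\eqdef\sum_{h=0}^{t}\nabla\log\pi_\theta(a_h|s_h)$ is a sum of martingale differences ($\mathbb{E}[\nabla\log\pi_\theta(a_h|s_h)\mid s_0,a_0,\dots,s_h]=0$), so the cross terms vanish and $\mathbb{E}[\|\Phi_t\|^2]\le 4l_\psi^2(t+1)$, hence $\mathbb{E}[\|\Phi_t\|]\le 2l_\psi\sqrt{t+1}$. Writing $\nabla J(\theta)-\nabla J_H(\theta)=\mathbb{E}\bigl[\sum_{t\ge H}\gamma^t r(s_t,a_t)\Phi_t\bigr]$ and then applying Cauchy--Schwarz to $\sum_{t\ge H}\gamma^t\sqrt{t+1}=\sum_{t\ge H}\gamma^{t/2}\cdot\gamma^{t/2}\sqrt{t+1}\le\sqrt{\tfrac{\gamma^H}{1-\gamma}}\cdot\sqrt{\gamma^H\bigl(\tfrac{H+1}{1-\gamma}+\tfrac{\gamma}{(1-\gamma)^2}\bigr)}$ produces exactly $\tfrac{\gamma^H}{1-\gamma}\sqrt{H+\tfrac{1}{1-\gamma}}$ and hence the stated $D_g$. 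Your version still gives an $\tilde{\mathcal{O}}(\gamma^H)$ bound, which is all the downstream arguments use once $H=\mathcal{O}(\log T/(1-\gamma))$, but as written it proves a weaker constant than the one asserted.
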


\begin{proof}
See Lemma~E.3 in \cite{zhang-et-al21} for the first item. The second item is standard and follows directly from using the policy gradient expression together with Lemma~\ref{lem:smoothness-obj}-(i).
\end{proof}

The following result whose proof follows immediately from \eqref{eq:pg-estimate} and Assumption~\ref{hyp:policy-param}  (see Lemma E.2, \cite{zhang-et-al21}) establishes the Lipschitz continuity of the policy gradient estimator w.r.t. the policy parameter and the reward variable. 
\begin{lemma}
\label{lem:lipschitz-pg-estimate}
Let Assumption~\ref{hyp:policy-param} hold true and 
let~$\tau = \{s_0, a_0, \cdots, s_{H-1}, a_{H-1}\}$ be an arbitrary trajectory of length~$H$. Then the following statements hold: 
\begin{enumerate}[label=(\roman*)]
    \item \label{lem:lipschitz-pg-estimate-r} $\forall \theta \in \R^d, \forall r_1, r_2 \in \R^{|\mathcal{S}| \times |\mathcal{A}|}, \|g(\tau, \theta, r_1) -g(\tau, \theta, r_2) \| \leq \frac{2 l_{\psi}}{(1-\gamma)^2} \|r_1 - r_2\|_{\infty}\,,$
    \item \label{lem:lipschitz-pg-estimate-theta} $\forall \theta_1, \theta_2 \in \R^d, \forall r \in \R^{|\mathcal{S}| \times |\mathcal{A}|}, \|g(\tau, \theta_1, r) -g(\tau, \theta_2, r) \| \leq L_{g} \|\theta_1 - \theta_2\| \,$  where~$L_{g} \eqdef \frac{2 (l_{\psi}^2 + L_{\psi}) \|r\|_{\infty}}{(1-\gamma)^2}\,,$
\end{enumerate}
\end{lemma}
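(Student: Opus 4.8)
The plan is to prove both Lipschitz estimates directly from the explicit expression of the policy gradient estimator in~\eqref{eq:pg-estimate}, combining the triangle inequality with the uniform gradient and Hessian bounds on $\log\pi_\theta$ supplied by Lemma~\ref{lem:smoothness-obj}-\ref{lem:smoothness-obj-i}. The ingredient common to both parts is an elementary bound on the double sum $\sum_{t=0}^{H-1}\sum_{h=t}^{H-1}\gamma^h$. First I would reindex this sum by summing over $h$ first, obtaining $\sum_{h=0}^{H-1}(h+1)\gamma^h$, and then upper bound it by the convergent series $\sum_{h\geq 0}(h+1)\gamma^h = (1-\gamma)^{-2}$.

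For part~\ref{lem:lipschitz-pg-estimate-r}, I would subtract the two estimators evaluated at the same $\theta$ but at $r_1$ and $r_2$, so that only the reward variable differs and the score functions are shared:
\[
g(\tau,\theta,r_1)-g(\tau,\theta,r_2) = \sum_{t=0}^{H-1}\left(\sum_{h=t}^{H-1}\gamma^h (r_1-r_2)(s_h,a_h)\right)\nabla\log\pi_\theta(a_t|s_t)\,.
\]
Taking norms, applying the triangle inequality, bounding each $|(r_1-r_2)(s_h,a_h)|$ by $\|r_1-r_2\|_\infty$ and each $\|\nabla\log\pi_\theta(a_t|s_t)\|$ by $2l_\psi$ (Lemma~\ref{lem:smoothness-obj}-\ref{lem:smoothness-obj-i}), and invoking the double-sum bound $(1-\gamma)^{-2}$, yields exactly the constant $\tfrac{2 l_\psi}{(1-\gamma)^2}$.

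For part~\ref{lem:lipschitz-pg-estimate-theta}, I would instead fix the reward $r$ and subtract the estimators at $\theta_1$ and $\theta_2$, so that the reward coefficients factor out and only the score functions differ:
\[
g(\tau,\theta_1,r)-g(\tau,\theta_2,r) = \sum_{t=0}^{H-1}\left(\sum_{h=t}^{H-1}\gamma^h r(s_h,a_h)\right)\bigl(\nabla\log\pi_{\theta_1}(a_t|s_t)-\nabla\log\pi_{\theta_2}(a_t|s_t)\bigr)\,.
\]
The only step here requiring a brief argument rather than direct substitution is the Lipschitz continuity of the score map $\theta\mapsto\nabla\log\pi_\theta(a|s)$; I would obtain it from the uniform Hessian bound $\|\nabla_\theta^2\log\pi_\theta(a|s)\|\leq 2(L_\psi+l_\psi^2)$ of Lemma~\ref{lem:smoothness-obj}-\ref{lem:smoothness-obj-i} via the mean value inequality, giving $\|\nabla\log\pi_{\theta_1}(a_t|s_t)-\nabla\log\pi_{\theta_2}(a_t|s_t)\|\leq 2(L_\psi+l_\psi^2)\|\theta_1-\theta_2\|$. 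Combining this with $|r(s_h,a_h)|\leq\|r\|_\infty$ and the same double-sum bound delivers the constant $L_g=\tfrac{2(l_\psi^2+L_\psi)\|r\|_\infty}{(1-\gamma)^2}$.

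Since both estimates are essentially mechanical once the estimator is written out term by term, there is no genuine obstacle: the only substantive point is the score-Lipschitz inequality in part~\ref{lem:lipschitz-pg-estimate-theta}, and even that is immediate from the already-established uniform Hessian bound. The bookkeeping of the geometric double sum is the only computation, and it is routine.
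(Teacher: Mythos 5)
Your proof is correct and follows the same direct route the paper intends: the paper itself gives no argument beyond citing Lemma~E.2 of \citet{zhang-et-al21}, and that argument is exactly your term-by-term expansion of~\eqref{eq:pg-estimate}, the score bound $\|\nabla\log\pi_\theta\|\le 2l_\psi$ and its Hessian-based Lipschitz estimate from Lemma~\ref{lem:smoothness-obj}-\ref{lem:smoothness-obj-i}, and the double-sum bound $\sum_{h=0}^{H-1}(h+1)\gamma^h\le(1-\gamma)^{-2}$. All constants check out.
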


\subsection{Technical lemma for solving a recursion}

The next lemma is useful for solving recursions appearing in our analysis to derive convergence rates. 

\begin{lemma}\label{le:aux_rec0}
Let $\tau$ be a positive integer and let $\cb{r_t}_{t\geq 1}$ be a non-negative sequence satisfying for every integer~$t \geq 1$
$$
r_{t}  \leq (1- \eta_t) r_{t-1} + \beta_t,
$$
where $\cb{\beta_t}_{t\geq 1}$ is a non-negative sequence. Then for $\eta_t = \fr{2 }{ t+\tau }$ we have for every integer $ T \geq 1$
$$
r_T \leq \fr{ \tau^2 r_{0} }{(T+\tau)^2} + \fr{  \sum_{t=1}^{T}\beta_t (t + \tau )^2 }{( T+\tau )^2} .
$$
\end{lemma}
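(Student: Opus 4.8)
The plan is to convert the one-step contraction into a telescoping sum by introducing a suitable integrating factor, namely the weight $(t+\tau)^2$. Concretely, I would define the reweighted sequence $S_t := (t+\tau)^2 r_t$ and aim to establish the much simpler recursion
\[
S_t \;\leq\; S_{t-1} + (t+\tau)^2 \beta_t \qquad \text{for all } t \geq 1 .
\]
Once this is in hand, summing over $t$ and dividing back by $(T+\tau)^2$ immediately produces the claimed bound, so the entire content of the lemma reduces to justifying this single inequality.

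First I would multiply the hypothesis $r_t \leq (1-\eta_t) r_{t-1} + \beta_t$ by $(t+\tau)^2$. The crux is an elementary algebraic identity coming from the choice $\eta_t = \frac{2}{t+\tau}$: writing $u = t+\tau$,
\[
(t+\tau)^2 (1-\eta_t) \;=\; u(u-2) \;=\; (u-1)^2 - 1 \;\leq\; (u-1)^2 \;=\; \bigl((t-1)+\tau\bigr)^2 .
\]
Using the nonnegativity of $r_{t-1}$, this gives $(t+\tau)^2 (1-\eta_t) r_{t-1} \leq \bigl((t-1)+\tau\bigr)^2 r_{t-1} = S_{t-1}$, and hence exactly $S_t \leq S_{t-1} + (t+\tau)^2 \beta_t$ as announced.

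Then I would telescope this inequality from $t=1$ to $t=T$, which yields $S_T \leq S_0 + \sum_{t=1}^{T} (t+\tau)^2 \beta_t$, where the boundary term is $S_0 = \tau^2 r_0$ (taking $t=0$ in the definition of $S_t$). Dividing through by $(T+\tau)^2$ recovers the statement
\[
r_T \;\leq\; \frac{\tau^2 r_0}{(T+\tau)^2} + \frac{\sum_{t=1}^{T} (t+\tau)^2 \beta_t}{(T+\tau)^2} .
\]

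As for the main obstacle: there is essentially no hard analytic step here. The whole argument hinges on spotting the correct weight $(t+\tau)^2$ and verifying the one-line inequality $(t+\tau)(t+\tau-2) \leq (t+\tau-1)^2$. The only point requiring mild care is the indexing convention — since the recursion is assumed only for $t \geq 1$, the telescope terminates at $S_0 = \tau^2 r_0$, and keeping track of this boundary term is what produces the first summand in the bound. Everything else is routine.
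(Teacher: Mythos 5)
Your proof is correct and follows essentially the same route as the paper's: multiply the recursion by the weight $(t+\tau)^2$, use $(t+\tau)(t+\tau-2)\leq (t+\tau-1)^2$ to dominate the coefficient of $r_{t-1}$ by $((t-1)+\tau)^2$, and telescope from $t=1$ to $T$ with boundary term $\tau^2 r_0$. No gaps.
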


\begin{proof}
    Notice that $1 - \eta_t = \fr{t+\tau -2}{t+\tau}$. Then for all $t\geq 1$
    $$
    r_{t} \leq \fr{t+\tau -2}{t+\tau} r_{t-1} +   \beta_t.
    $$ 
    Multiplying both sides by $(t+ \tau)^2$, we get
    \begin{eqnarray}
    (t+\tau)^2 r_{t} &\leq& (t+\tau -2)(t+\tau) r_{t-1} +    \beta_t (t+\tau)^2 \notag \\
    &\leq& (t+\tau -1)^2 r_{t-1} +   \beta_t (t+\tau)^2 \notag .
    \end{eqnarray}
    By summing this inequality from $t= 1$ to~$T$, we obtain
    $$
    (T+\tau)^2 r_T \leq  \tau^2 r_{0} +  \sum_{t=1}^{T}\beta_t (t + \tau )^2.
    $$
\end{proof}

\subsection{Technical lemma for decreasing stepsizes}

\begin{lemma}\label{le:prod_bound}
   Let $q \in [0,1]$ and let $\eta_t = \rb{ \fr{2}{t+2} }^{q}$ for every integer~$t$. Then for every integer~$t$ and any integer~$T \geq 1$ we have
    \begin{eqnarray}\label{eq:tech1}
    \eta_t (1 - \eta_{t+1}) \leq \eta_{t+1},
    \end{eqnarray}
    $$
    \prod_{t = 0}^{T-1} (1 - \eta_{t+1}) \leq \eta_T\,.
    $$
\end{lemma}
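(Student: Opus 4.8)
The plan is to treat the two displayed claims in sequence, observing that the product bound is an immediate consequence of the one-step inequality~\eqref{eq:tech1}, so that essentially all the real work lies in establishing~\eqref{eq:tech1}. First I would record the elementary facts that $\eta_t\in(0,1]$ for all $t\ge 0$ (since $\tfrac{2}{t+2}\le 1$ and $q\ge 0$) and, more precisely, that $\eta_1=(2/3)^q\ge 2/3>\tfrac12$ for every $q\in[0,1]$ (because $2/3<1$ makes $(2/3)^q$ decreasing in $q$). The latter is exactly what the base case $T=1$ of the product bound requires, namely $1-\eta_1\le\eta_1$.

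To prove~\eqref{eq:tech1}, I would divide by $\eta_{t+1}>0$ and rewrite the claim in the equivalent form $\tfrac{\eta_t}{\eta_{t+1}}\le 1+\eta_t$. The ratio computes explicitly to $\tfrac{\eta_t}{\eta_{t+1}}=\bigl(\tfrac{t+3}{t+2}\bigr)^q=\bigl(1+\tfrac{1}{t+2}\bigr)^q$, and since $q\in[0,1]$ the concave form of Bernoulli's inequality gives $\bigl(1+\tfrac{1}{t+2}\bigr)^q\le 1+\tfrac{q}{t+2}$. It therefore suffices to verify the reduced scalar inequality $\tfrac{q}{t+2}\le\bigl(\tfrac{2}{t+2}\bigr)^q=\eta_t$, after which $\tfrac{\eta_t}{\eta_{t+1}}\le 1+\tfrac{q}{t+2}\le 1+\eta_t$ follows at once.

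This reduced inequality is the crux and the one genuinely non-routine step. Writing $u=\tfrac{1}{t+2}$, note $u\in(0,\tfrac12]$ because $t\ge 0$, and the claim becomes $qu\le(2u)^q$, i.e.\ $q\,u^{1-q}\le 2^q$. Here I expect to use both hypotheses $q\le 1$ and $u\le\tfrac12$ crucially: since $1-q\ge 0$ we have $u^{1-q}\le(\tfrac12)^{1-q}=2^{q-1}$, whence $q\,u^{1-q}\le q\,2^{q-1}\le 2^{q-1}\le 2^q$. This closes~\eqref{eq:tech1}.

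Finally, the product bound follows by induction on $T$. The base case $T=1$ is the inequality $1-\eta_1\le\eta_1$ already noted. For the inductive step, assuming $\prod_{t=0}^{T-1}(1-\eta_{t+1})\le\eta_T$, I would multiply by the nonnegative factor $1-\eta_{T+1}$ and apply~\eqref{eq:tech1} with $t=T$ to obtain $\prod_{t=0}^{T}(1-\eta_{t+1})\le(1-\eta_{T+1})\,\eta_T\le\eta_{T+1}$, completing the induction. The main obstacle is purely the reduced scalar inequality $\tfrac{q}{t+2}\le\eta_t$; once that is in hand, both statements drop out from elementary manipulations and a one-line induction.
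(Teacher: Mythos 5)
Your proof is correct, but it takes a genuinely different route from the paper's. For the one-step inequality~\eqref{eq:tech1}, the paper writes it as $1-\eta_{t+1}\leq \eta_{t+1}/\eta_t$ and proves it in two strokes using only the elementary fact that $x^q\geq x$ for $x\in(0,1]$ and $q\in[0,1]$: first $\eta_{t+1}=\bigl(\tfrac{2}{t+3}\bigr)^q\geq \tfrac{2}{t+3}\geq\tfrac{1}{t+3}$ gives $1-\eta_{t+1}\leq\tfrac{t+2}{t+3}$, and then $\tfrac{t+2}{t+3}\leq\bigl(\tfrac{t+2}{t+3}\bigr)^q=\eta_{t+1}/\eta_t$. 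You instead rewrite the claim as $\eta_t/\eta_{t+1}\leq 1+\eta_t$, apply the concave Bernoulli inequality, and reduce to the scalar inequality $\tfrac{q}{t+2}\leq\eta_t$, which you verify via $q\,u^{1-q}\leq 2^{q}$ for $u\leq\tfrac12$; this is more work but equally valid, and arguably isolates more explicitly where $q\leq 1$ and $t\geq 0$ are used. For the product bound, the paper telescopes directly, $\prod_{t=0}^{T-1}(1-\eta_{t+1})\leq\prod_{t=0}^{T-1}\eta_{t+1}/\eta_t=\eta_T/\eta_0=\eta_T$, exploiting $\eta_0=1$, whereas you run an induction whose base case needs the separate observation $\eta_1=(2/3)^q\geq\tfrac12$; the two are logically equivalent (your inductive step is exactly the telescoping step), but the paper's version avoids the base-case check. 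Both arguments are complete and correct.
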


\begin{proof}
For every integer~$t$ we have
    \begin{eqnarray}
        1 - \eta_{t+1} &=& 1 - \rb{\fr{2}{t+3}}^{q} 
        \leq 1 - \fr{1}{t+3}  =
        \fr{t+2}{t+3}  \leq 
        \fr{\eta_{t+1}}{\eta_t} \notag .
    \end{eqnarray}
    Using the above result, we can write
    \begin{eqnarray*}
        \prod_{t = 0}^{T-1} (1 - \eta_{t+1}) &\leq&  \prod_{t = 0}^{T-1} \fr{\eta_{t+1}}{\eta_{t}} =  \fr{\eta_{T}}{\eta_{0}} = \eta_T.
    \end{eqnarray*}
\end{proof}

\begin{lemma}\label{le:sum_prod_bound1}
Let $q \in [0, 1)$, $p \geq 0$, $\beta_0 > 0$ and let $\eta_t =  \rb{ \fr{2}{t+2} }^q$, $\beta_t = \beta_0 \rb{ \fr{2}{t+2} }^p$ for every integer~$t$. Then for any integers~$t$ and~$T \geq 1$, it holds 
    $$
     \sum_{t = 0}^{T-1} \beta_{t} \prod_{\tau = t+1 }^{T-1} (1 - \eta_{\tau}) \leq C \beta_T \eta_T^{-1},
    $$
     where $C > 1$ is an absolute constant depending on $p$ and $q$.
\end{lemma}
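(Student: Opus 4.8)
The plan is to recognize the sum as the exact solution of a linear recursion and then bound it by induction, splitting into a ``steady-state'' regime and a short transient regime handled by brute force. First I would set $S_T \eqdef \sum_{t=0}^{T-1}\beta_t\prod_{\tau=t+1}^{T-1}(1-\eta_\tau)$ and observe, by peeling off the $t=T$ summand (whose product is empty) from $S_{T+1}$, that $S_{T+1} = (1-\eta_T)S_T + \beta_T$ with $S_1 = \beta_0$. Writing $a_T \eqdef \beta_T\eta_T^{-1}$, the goal becomes $S_T \le C\,a_T$, and a direct computation gives the closed form $a_T = \beta_0\,2^{p-q}(T+2)^{q-p}$, whose monotonicity is governed entirely by the sign of $p-q$.

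Substituting $\beta_T = \eta_T a_T$ into the recursion, the inductive step ``$S_T \le Ca_T \Rightarrow S_{T+1}\le Ca_{T+1}$'' reduces to the scalar inequality $C(a_T - a_{T+1}) \le (C-1)\,\eta_T a_T$. When $p \le q$ the sequence $a_T$ is nondecreasing, so the left-hand side is nonpositive and the step holds for every $C \ge 1$; together with the base case $S_1 = \beta_0 \le a_1$ (note $a_1 = \beta_0 (3/2)^{q-p}\ge \beta_0$) this settles the claim with $C=2$. The remaining case $p>q$ is the crux: I would invoke Bernoulli's inequality to bound $1 - a_{T+1}/a_T = 1 - (1 - \tfrac{1}{T+3})^{p-q} \le \tfrac{\bar s}{T+3}$ with $\bar s \eqdef \max(p-q,1)$, which reduces the step to $C\,\bar s\,(T+2)^{q-1} \le (C-1)\,2^q$.

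Since $q<1$, the factor $(T+2)^{q-1}$ tends to $0$, so I would fix a threshold $T_0 = T_0(p,q)$ with $\bar s\,(T+2)^{q-1}\le 2^{q-1}$ for all $T \ge T_0$; the step inequality then holds with $C=2$. For the transient $1 \le T \le T_0$ I would bound $S_T$ crudely by $\sum_{t=0}^{T-1}\beta_t \le T_0\beta_0$, using $\beta_t \le \beta_0$ and $1-\eta_\tau\in[0,1]$, and absorb this into $C a_T$ by enlarging $C$ to $\max\{2,\,T_0\beta_0/a_{T_0}\}$; this is admissible because in the regime $p>q$ the sequence $a_T$ is nonincreasing, so $a_T \ge a_{T_0}$ for $T\le T_0$, and the explicit forms give $T_0\beta_0/a_{T_0} = T_0\,2^{q-p}(T_0+2)^{p-q}$, a constant depending on $p,q$ only. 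Running the induction from $T_0$ onward with this $C$ yields $S_T \le C a_T$ for all $T\ge 1$.

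The main obstacle is exactly that a single global induction cannot work: the summand concentrates near $t=T$ (the product is $\approx 1$ only over the last $\sim \eta_T^{-1}$ indices), and the worst-case inductive inequality therefore fails for small $T$ whenever $p-q$ is large, which is what forces the two-regime split above. An alternative route would bound $\prod_{\tau=t+1}^{T-1}(1-\eta_\tau)\le \exp\!\big(-\sum_{\tau=t+1}^{T-1}\eta_\tau\big)$, estimate the exponent from below by $\tfrac{2^q}{1-q}\big[(T+2)^{1-q}-(t+3)^{1-q}\big]$ via integral comparison, and then bound the resulting sum by $\int \beta(x)\,e^{-\cdots}\,dx$; this recovers the same order $a_T$ but demands the same delicate endpoint analysis, so I would present the recursion-based argument as the cleaner option.
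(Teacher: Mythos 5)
Your argument is correct. Note, however, that the paper does not actually prove this lemma: it is dispatched with a citation to Proposition~B.1 of Gadat et al.\ and Lemma~15 of Fatkhullin et al., so there is no in-paper proof to match against. Your self-contained route --- rewriting the sum as the solution of the recursion $S_{T+1}=(1-\eta_T)S_T+\beta_T$, reducing the inductive step to $C(a_T-a_{T+1})\le (C-1)\eta_T a_T$ with $a_T=\beta_T\eta_T^{-1}$, handling $p\le q$ by monotonicity of $a_T$, and handling $p>q$ via the Bernoulli bound $1-(1-\tfrac{1}{T+3})^{p-q}\le \tfrac{\max(p-q,1)}{T+3}$ plus a finite transient absorbed into the constant --- is a complete and valid replacement. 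The details check out: the recursion and the closed form $a_T=\beta_0 2^{p-q}(T+2)^{q-p}$ are right; the base cases ($S_1=\beta_0\le a_1$ when $p\le q$, and the crude bound $S_T\le T_0\beta_0\le C a_{T_0}\le C a_T$ for $T\le T_0$ when $p>q$) are covered; the resulting constant $\max\{2,\;T_0 2^{q-p}(T_0+2)^{p-q}\}$ depends only on $p$ and $q$ as required, since $\beta_0$ cancels; and the degenerate case $q=0$ (where $\eta_t\equiv 1$ and only the last summand survives) is consistent with your scheme. What your approach buys over the citation is transparency and an explicit, checkable constant; what it costs is length, and the two-regime split is genuinely necessary for exactly the reason you identify --- the single-step inductive inequality fails for small $T$ when $p-q$ is large.
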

\begin{proof}
See, for instance, \citep[Proposition~B.1]{Gadat_SHB_18} or \citep[Lemma 15]{Fatkhullin_SPG_FND_2023}. 
\end{proof}

\end{document}